\documentclass[10pt, letterpaper]{article}
\usepackage{arxiv}

\title{\Large\bf Robust Assortment Optimization from Observational Data}
\author{
Miao Lu\thanks{Department of Management Science \& Engineering, Stanford University. Email: \texttt{\{miaolu, jose.blanchet\}@stanford.edu}} \quad Yuxuan Han\thanks{Stern School of Business, New York University. Email: \texttt{\{yh6061, zzhou\}@stern.nyu.edu}} \quad Han Zhong\thanks{Center for Data Science, Peking University. Email: 
 \texttt{hanzhong@stu.pku.edu.cn}} \quad Zhengyuan Zhou\footnotemark[2] \quad Jose Blanchet\footnotemark[1]
}

\date{\today}

\begin{document}


\maketitle

\begin{abstract}
    Assortment optimization is a fundamental challenge in modern retail and recommendation systems, where the goal is to select a subset of products that maximizes expected revenue under complex customer choice behaviors. 
    While recent advances in data-driven methods have leveraged historical data to learn and optimize assortments, these approaches typically rely on strong assumptions -- namely, the stability of customer preferences and the correctness of the underlying choice models. 
    However, such assumptions frequently break in real-world scenarios due to preference shifts and model misspecification, leading to poor generalization and revenue loss. 
    Motivated by this limitation, we propose a robust framework for data-driven assortment optimization that accounts for potential distributional shifts in customer choice behavior. 
    Our approach models potential preference shift from a nominal choice model that generates data and seeks to maximize worst-case expected revenue. 
    We first establish the computational tractability of robust assortment planning when the nominal model is known, then advance to the data-driven setting, where we design statistically optimal algorithms that minimize the data requirements while maintaining robustness. 
    Our theoretical analysis provides both upper bounds and matching lower bounds on the sample complexity, offering theoretical guarantees for robust generalization. 
    Notably, we uncover and identify the notion of ``robust item-wise coverage'' as the minimal data requirement to enable sample-efficient robust assortment learning.
    Our work bridges the gap between robustness and statistical efficiency in assortment learning, contributing new insights and tools for reliable assortment optimization under uncertainty.
\end{abstract}

\noindent
\textbf{Keywords:} Robust assortment optimization, data-driven decision-making, item-wise data coverage

\newpage
\tableofcontents


\newpage

\section{Introduction}

In modern retail operations, e-commerce, and recommendation systems, the seller needs to optimize a limited subset of all products presented to the customers in order to maximize the expected revenues. 
Mathematically, this is known as assortment optimization, which is a challenging problem due to complex patterns of customer preference, product substitution, and revenue implications \citep{luce1959individual, mcfadden1978modeling, mcfadden2000mixed, asuncion2007uci, train2009discrete, daganzo2014multinomial, davis2014assortment, alptekinouglu2016exponomial,blanchet2016markov,berbeglia2022comparative}.
Over the past years, following the emergence of data-driven decision making paradigms, researches on assortment optimization have also witnessed a flourish of learning-based approaches, which propose to learn the choice patterns of the customers from historical data, either collected interactively \citep{saure2013optimal,agrawal2017thompson,agrawal2019mnl,saha2024stop} or purely observational \citep{dong2023pasta, han2025learning}, and then optimize the assortment based on the learned choice pattern. 

The rationale of such a learning-based assortment optimization pipeline heavily relies on the assumption that the historical data should truthfully reflect the customers' choice patterns in the future when the learned assortment is deployed. 
However, this can easily break in practice.
In the real-world situations, the customers' preference patterns might shift from time to time due to hidden factors unmeasured by the seller, causing a mismatch between the choice pattern encoded in the data and the choice patterns from future costumers.
Consequently, existing data-driven algorithms could easily suffer from overfitting to the historical data and fail to generalize to the changing choice patterns, resulting in poor expected revenues of the learned assortments.
We illustrate such degenerated performance of standard data-driven approaches in Figure~\ref{fig: robust comparison}.
Motivated by the insufficiency of existing algorithms in terms of robust generalization, in this work, we study how to design efficient algorithms for data-driven  assortment optimization robust to choice pattern shifts.

Studying this problem necessitates a new theoretical framework and new algorithm design for data-driven assortment optimization. 
Towards such a goal, we approach the assortment optimization problem from a (distributionally) robust optimization perspective to achieve robust generalization from historical data to potential varying customer choice patterns. 
Specially, we propose a unified framework tailored for data-driven robust assortment optimization from pre-collected offline data of customer choices.
These customer choice data can be either collected actively or just observational.
In this paper, we refer to the choice model by which the offline dataset is generated as the nominal choice model.
The distributionally robust approach we take is to maximize the expected revenue of the assortment when the customer choice distribution adversarially shifts around the nominal choice model.
By maximizing the worst case expected return of the assortment learned from fixed and finite data, our approach guarantees its performance even when the future customer choice pattern is different from the past pattern encoded in the data. We illustrate this in Figure~\ref{fig: robust comparison}.

As our playground of studying data-driven robust assortment learning, we first design polynomial-time computational approaches of the  proposed framework when the nominal choice model is known as a prior, also dubbed as the planning stage.
This shows the computational tractability of the formulation.
Then we aim to study the following fundamental question underlying data-driven robust assortment optimization: for the data-driven scenario where the nominal choice pattern needs to be learned from pre-collected offline data, 
\begin{center}
    \emph{How to design statistically optimal algorithms to learn the optimal robust assortment \\  with minimal data requirement and in a computationally tractable fashion?}
\end{center}
Answering the above question is of particular importance since it informs the practitioners the minimal data requirement such that robustly generalizable assortment learning is statistically possible. 
During the past few years, understanding the minimal data requirement for provably sample-efficient policy learning and decision making has been gathering increasing attention. 
This line of works features the study of the minimal data requirement and corresponding optimal algorithm design for reinforcement learning in Markov decision processes (MDP) \citep{jin2021pessimism, rashidinejad2021bridging, xie2021bellman, uehara2021pessimistic}, policy learning in causal inference \citep{jin2022policy, lu2023pessimism}, and also data-driven assortment optimization from offline data \citep{dong2023pasta, han2025learning}.
It turns out that for those decision-making models, the minimal requirement on the data is to contain enough information regarding the optimal decision policy, which hugely releases the typically adopted uniform coverage assumption or overlapping assumption in related literature.
Recently, \cite{han2025learning} demonstrates that for (non-robust) assortment optimization problem with multinomial logit (MNL) choice model \citep{mcfadden1978modeling}, it suffices for the data to contain enough information for each item in the optimal assortment separately. This greatly improves the previous result that needs the observation of the optimal assortment as a whole \citep{dong2023pasta}, a still impractical condition due to the combinatorial nature of the assortments.

Moving from standard data-driven assortment optimization setup \citep{dong2023pasta, han2025learning} to the robust framework we propose, it is then natural to ask and study the following fundamental problems: (i) \emph{what is the minimal data requirement to learn an optimal robust assortment that could robustly generalize}, and (ii) \emph{how to design sample-optimal algorithms that can leverage the data with minimal coverage assumption}.

Solving these problems face coupled challenges. 
Firstly, for robust assortment learning, the learner faces a doubled source of uncertainty: the uncertainty inherent to the customer choice preference and the statistical uncertainty of the data.
This casts new challenges different to the problem setups mentioned above to ensure minimal data usage, where therein the learner only faces the statistical uncertainty of the data.
Secondly, due to the combinatorial nature of the robust assortment optimization problem, the algorithm design to handle the coupled sources of uncertainty needs to take essential consideration to keep its computational tractability.

In this paper, we give thorough investigation to the above problems. We answer the question affirmatively with computationally trackable algorithm design together with statistical lower bounds.
The lower bounds demonstrate the minimal data requirement achieved by our algorithms. 
We summarize in more detail the main contributions of this work in the following section.

\begin{figure}[t]
        \hspace{-5mm}
       \begin{minipage}[b]{0.49\textwidth}
        \centering
        \includegraphics[width=1.12\linewidth]{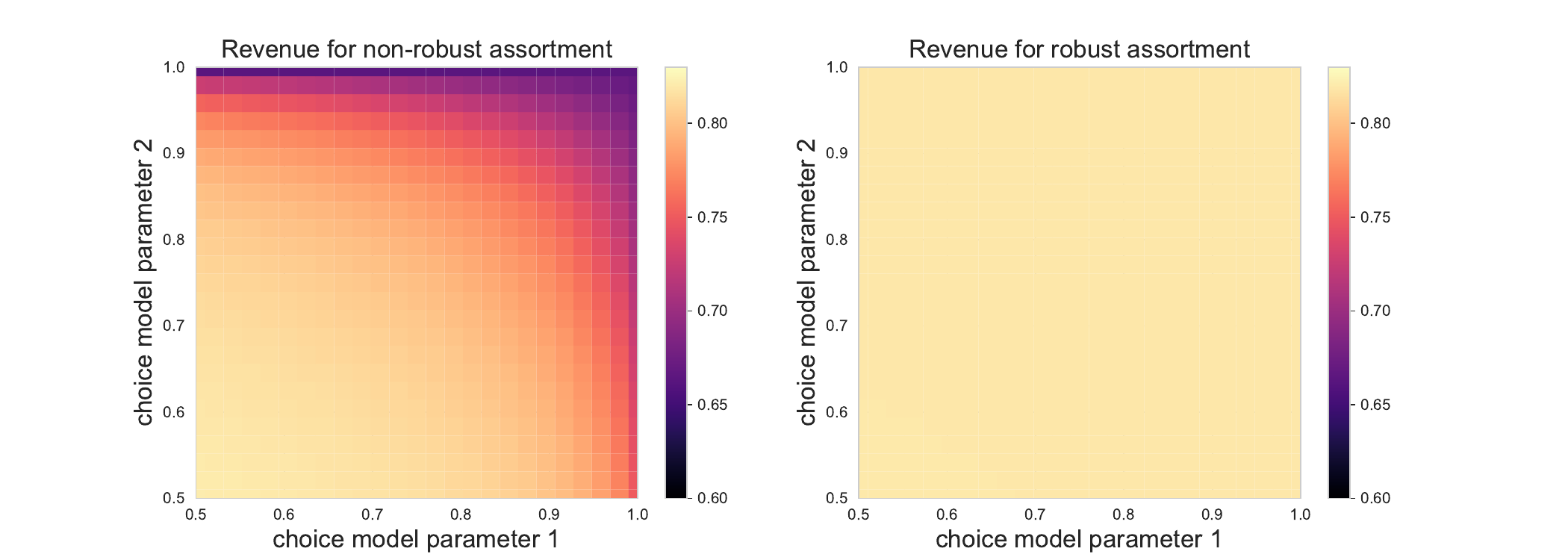}
    \end{minipage}
    \begin{minipage}[b]{0.24\textwidth}
        \centering
        \includegraphics[width=1.05\textwidth]{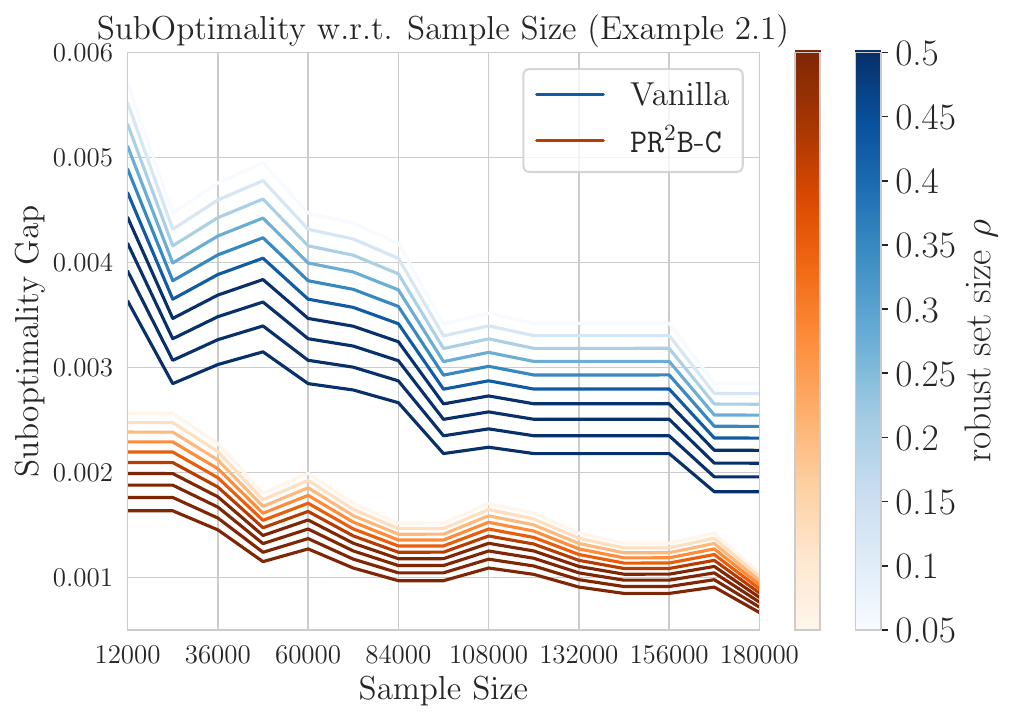}   
    \end{minipage}
    \hspace{1mm}
    \begin{minipage}[b]{0.24\textwidth}
        \centering
        \includegraphics[width=1.05\textwidth]{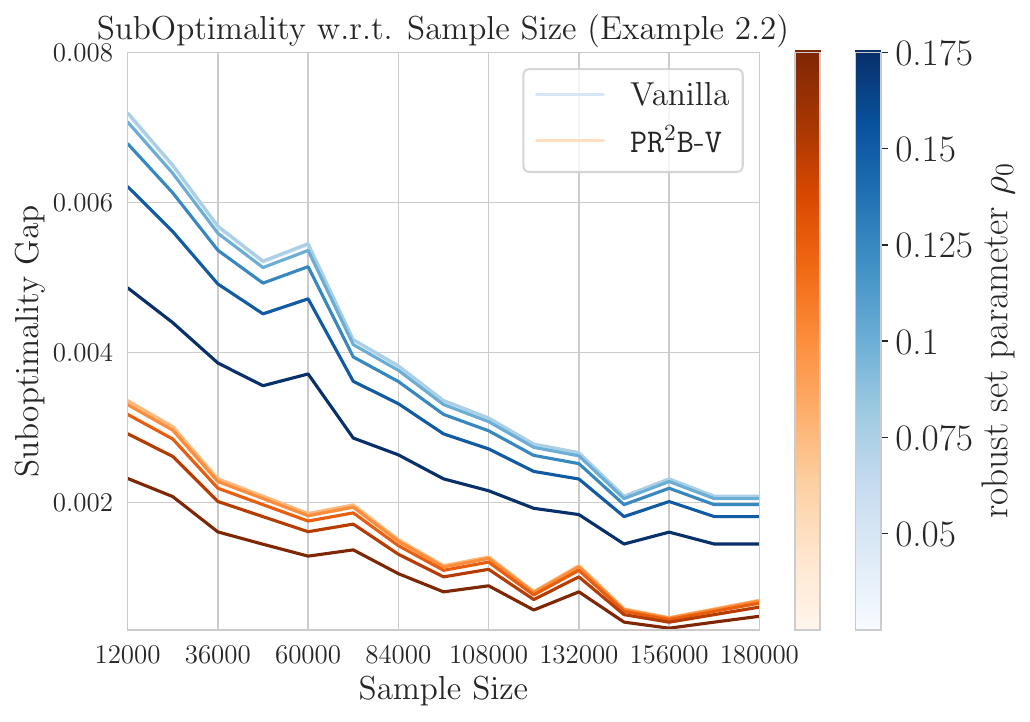}   
    \end{minipage}
    \caption{
    \small {\bf Left two figures:} Illustration of degeneration of expected revenue value  under customer choice pattern shifts. We consider a simple MNL model with $6$ items and a randomly generated nominal choice distribution $\mathbb{P}$.
    The non-robust assortment (left) is solved for the MNL model under $\mathbb{P}$, while the robust assortment (right) is solved for the robust assortment framework we consider in this paper (by Example~\ref{exp: new}). 
    The two axis's represent two parameters $\alpha_1$, $\alpha_2\in[0,1]$ that control the shift of the choice distribution $\mathbb{P}_{\alpha_1,\alpha_2}$ from the nominal model $\mathbb{P}$ to two adversarial choice models. 
    The KL divergence between the shifted model and the nominal model is within $0.1$. 
    {\bf Right two figures:} The suboptimality gap in terms of robust expected revenue of different algorithms under two concrete examples of the framework we consider the paper. 
    See Section~\ref{sec: experiments}. Our algorithms P$\mathrm{R}^2$B-(C/V) enjoy superior sample efficiency.
    }
    \label{fig: robust comparison}
\end{figure}

\subsection{Contributions}

\paragraph{Problem formulations.}
Firstly, in order to handle the fail of robust generalization facing existing data-driven assortment optimization approaches, we propose a generic framework of \emph{data-driven robust assortment optimization}, which seeks to learn from historical data an optimal robust assortment that can ensure optimized expected revenue under adversarial customer choice pattern shift. 
Mathematically, we propose to optimize  
\begin{tcolorbox}[ams align, colback=gray!2, boxrule=0.10mm]
     S^{\star}=\argsup_{\substack{S\subseteq[N],\,|S|\leq K}}\inf_{\substack{\mathbb{Q}_{S_+}\in\cP(S_+)\\D_{\mathrm{KL}}(\mathbb{Q}_{S_+}\|\mathbb{P}(\cdot|S))\leq \rho(S;\mathbb{P}(\cdot|S))}} \Big\{R(S;\QQ_{S+})\Big\}.\label{eq: robust revenue intro}
\end{tcolorbox}
\noindent
We provide a comprehensive introduction of the theoretical setup of assortment optimization and our robust learning framework based on \eqref{eq: robust revenue intro} in Section~\ref{sec: robust assortment optimization}. 
Here we provide a high level explanation.
Here $R(S;\QQ_{S_+})$ denotes the expected revenue of an assortment $S$ under a given choice pattern $\QQ_{S_+}$, a probability distribution over $S_+$ ($S$ accompanied with the non-purchasing behavior, denoted as $S_+$).
Then \eqref{eq: robust revenue intro} proposes to find an optimal robust assortment $S^{\star}$ in the sense that the worst case of its expected revenue is maximized when the customer choice pattern varies. 
The collection of the alternatives of the preference pattern is depicted by a Kullback–Leibler (KL) divergence ball centered at a nominal choice probability distribution $\PP(\cdot|S)$ that gives the customer choice pattern in the historical data. 

Notably, the general framework \eqref{eq: robust revenue intro} allows for adaptive size of the distributional ball across different assortments, characterized by the function $\rho(\cdot;\cdot)$. 
This allows for a customized design of the robust set size based upon the pattern of the nominal choice model. 
To concentrate the scope of this work, we focus on two specific instances of the general framework \eqref{eq: robust revenue intro}, both with MNL choice model as the nominal choice model, but with different choices of the robust set size $\rho(\cdot;\cdot)$. 
The first case (Example~\ref{exp: jin}) adopts a constant robust set size, recovering the model proposed by \cite{jin2022distributionally}, while the second case (Example~\ref{exp: new}) adopts a varying robust set size that depends on the MNL parameters associated with specific assortments, featuring higher risk aversion against no purchasing behaviors. 
When the nominal choice model is given as a prior, we provide computationally tractable algorithms to solve the optimal robust assortment for both Examples with $\widetilde{\cO}(N^2)$ running time, with $N$ the number of all items.
See Section~\ref{subsec: computation} for details.

Based on the framework \eqref{eq: robust revenue intro}, we propose the data-driven robust assortment optimization problem, which seeks to find the optimal robust assortment $S^{\star}$ purely from historical data generated by the nominal customer choice pattern $\PP$. 
Each data point is in the form of assortment-choice pair $(S, i)$, where the choice $i$ is sampled from the nominal model $\PP(\cdot|S)$.
The goal is to design algorithms to learn from data an approximately optimal assortment such that the gap between its robust expected revenue and that of $S^{\star}$ is minimized.

\begin{table}[t]
    \centering
    \begin{tabular}{|c|c|c|c|}
        \hline
         & Setting & Upper Bound & Lower Bound \\
        \hline 
        \cite{dong2023pasta} &\makecell[c]{Non-uniform reward, \\non-robust}& $\widetilde{\cO}\left(\sqrt{\frac{d}{n_{S^\star_{\mathrm{non}}} \min_{i\in [N],\lvert S \rvert = K} p_i(S)}}\right)$ & N/A \\
        \hline
         
 &  \makecell[c]{Non-uniform reward, \\non-robust} & $\widetilde{\cO}\left(\frac{K}{\sqrt{\min_{i \in S^\star_{\mathrm{non}}} n_i}}\right)$ &$\Omega\left(\frac{K}{\sqrt{\min_{i \in S^\star_{\mathrm{non}}} n_i}}\right)$ \\
        \cline{2-4}
         \multirow{-2.6}{*}{\cite{han2025learning}} &\makecell[c]{Uniform reward, \\non-robust}&$\widetilde{\cO}\left(\sqrt{\frac{K}{\min_{i\in S^\star_{\mathrm{non}}} n_i}}\right)$  &$\Omega\left(\sqrt{\frac{K}{\min_{i\in S^\star_{\mathrm{non}}} n_i}}\right)$ \\
        \hline
        \hline
        \cellcolor{tianqin!5}&  \cellcolor{tianqin!5}Non-uniform reward, robust & \cellcolor{tianqin!5}$\widetilde{\cO}_{\rho}\left(\frac{K}{\sqrt{\min_{i \in S^\star} n_i}}\right)$ &\cellcolor{tianqin!5} $\Omega_{\rho}\left(\frac{K}{\sqrt{\min_{i \in S^\star} n_i}}\right)$ \\
        \cline{2-4}
         \multirow{-2.6}{*}{\cellcolor{tianqin!5} This work} &\cellcolor{tianqin!5} Uniform reward, robust &\cellcolor{tianqin!5} $\widetilde{\cO}_{\rho}\left(\sqrt{\frac{K}{\min_{i\in S^\star} n_i}}\right)$  &\cellcolor{tianqin!5} $\Omega_{\rho}\left(\sqrt{\frac{K}{\min_{i\in S^\star} n_i}}\right)$ \\
         \hline
    \end{tabular}
  \caption{\small Comparison of offline data-driven assortment optimization sample complexities.
    Here $S^{\star}_{\mathrm{non}}$ denotes the non-robust optimal assortment under MNL model, $S^{\star}$ denotes the optimal robust assortment under our robust framework \eqref{eq: robust revenue intro} with MNL nominal choice model, $n_i$ denotes the number of times that item $i$ in the optimal assortment appears in the offline data, $n_{S}$ denotes the number of times that the whole assortment $S$ appears in the offline data, $K$ denotes the size constraint of the assortment, and $p_i(S)$ denotes the nominal choice probability of item $i$ given assortment $S$.
    The algorithm proposed by \citet{dong2023pasta} is applicable to the  $d$-dimensional linear MNL setting, where $d = N$ when it is reduced to the discrete MNL choice model. For the results of our work, we use notations $\widetilde{\cO}_{\rho}(\cdot)$ and $\Omega_{\rho}(\cdot)$ to hide multiplicative factors depending on the robust set size quantities $\rho$.
    Details are referred to Section~\ref{sec: theory}.}
    \label{tab:comparison}
\end{table}

\paragraph{Algorithm design, analysis, and technical novelties.} 
For both Example~\ref{exp: jin} and Example~\ref{exp: new}, we propose computationally efficient algorithms to solve the data-driven robust assortment optimization problem in a statistically optimal fashion. 
See Section~\ref{sec:_algorithms} for the detail algorithm design.
Our algorithm and its analysis have three key features.
Firstly, it adopts the core principle of ``double pessimism'' \citep{blanchet2023double, shi2024distributionally} to handle the key challenge of coupled source of uncertainty in robust offline learning problems. 
Secondly, in spite of the complicated optimization procedures of the original ``double pessimism'' objective proposed in \cite{blanchet2023double}, we explore the structure of the MNL nominal choice model and customize a tractable objective that reduces the computational task to the planning problem we have studied that only has quadratic running complexity.
This is made possible because of a novel ``monotonicity'' argument developed specific to the models we consider. See Section~\ref{sec: sketch} for discussions.
Finally, both the algorithms for Example~\ref{exp: jin} and Example~\ref{exp: new} feature a nearly-minimax optimal sample complexity (Section~\ref{sec: theory}).
Especially, the corresponding sample complexity requires minimal data coverage, a condition we propose as the \emph{robust item-wise coverage condition}. 
As is shown by our proposed lower bounds, the robust assortment optimization problem requires and only requires that each single item of the optimal robust assortment $S^{\star}$ is observed for sufficiently many times.
Comparing to existing results, this extends the item-wise coverage condition proposed by \cite{han2025learning} for non-robust offline assortment learning to its robust counterpart.
Also, this result relates to the robust partial/single-policy coverage condition considered by \cite{shi2024distributionally} and  \cite{blanchet2023double} for offline reinforcement learning in robust MDPs. 
Different from the conditions therein, here we further emphasize on the item-wise property, which decomposes the need to cover the optimal robust assortment into the requirement of observing each single item in the optimal robust assortment separately.

Another interesting statistical property is that there exists a gap of order $\cO(\sqrt{K})$ between the minimax-optimal sample complexities of the general case and the uniform revenue case. 
Here $K$ refers to the cardinality constraint of the assortment to be optimized, and the uniform revenue case means that each item has the same revenue value. 
This special case corresponds to user engagement or click through rate in recommendation systems for digital content platforms.
A similar gap has been previously observed in the non-robust assortment optimization learning problems for MNLs \citep{han2025learning}, and we show that such a gap still exists for the robust assortment optimization problems with MNL nominal choice model.
To help present the state of the art of relevant results under different setups, we refer to Table~\ref{tab:comparison}.

\paragraph{Empirical studies.}
To demonstrate the sound theoretical properties our proposed data-driven algorithms enjoy, and to showcase the effectiveness of the framework in terms of robust generalization in the face of choice distribution shifts, we conduct extensive simulation studies.
In specific, we first verify the superior sample efficiency of our proposed algorithms compared to naive baselines (see Section~\ref{sec: experiments}) in terms of learning an optimal robust assortment. 
See comparison of results in Figure~\ref{fig: robust comparison} (the two right figures).
Then, we simulate customer choice pattern shifts and show that our robust assortment learning methods can learn assortments that perform reasonably well even under these shifts for most of the time, where the assortments learned by standard non-robust data-driven methods degenerate. 
Through randomly generated customer choice pattern, we plot empirical distributions of the improved revenue when using robust methods.
Finally, we investigate the influence of the cardinality constraint on the suboptimality of the learned assortment, verifying our theory on the existence of statistical gap between the general case and the special case with uniform revenue.

\subsection{Related Works}

\paragraph{Robust assortment optimization.} Robust assortment optimization has been previously studied under several different contexts. 
For the MNL choice model, \cite{rusmevichientong2012robust} formulate a maximin optimization problem over an uncertainty set of the MNL parameters, solving for the assortment that maximizes worst‐case revenue. 
They prove that for the unconstrained robust assortment optimization setup, the optimal robust assortment is still a revenue‐ordered one (the products with revenue higher than a threshold).
We notice that our framework (with MNL choice model as the nominal choice model) is more general than that of \cite{rusmevichientong2012robust} by allowing the robust set to depend on specific assortment $S$.
See our formulation in Section~\ref{sec: robust assortment optimization} and see Example~\ref{exp: new}.
Subsequent works extend this paradigm to richer models. 
In nonparametric (ranking‐based) choice, \cite{farias2013nonparametric} and \cite{sturt2025value} consider the worst‐case revenue over all the ranking‐based models consistent with historical data.
They demonstrate that these robust problems reduce to tractable LPs or MIPs and that the optimal assortments again have a simple revenue‐ordered structure. 
For Markov chain choice model \citep{blanchet2016markov}, \cite{desir2024robust} propose the corresponding robust optimization framework and prove the maximin duality of the optimization problem.
Since the Markov chain choice model strictly generalizes the MNL choice model, their results extend that of \cite{rusmevichientong2012robust} to a broader class of choice behaviors. 
Besides, \cite{wang2024randomized} prove that the randomized assortment policies—mixing over assortments according to a designed distribution—can strictly improve worst‐case revenue under the MNL choice model, Markov chain choice model, or preference ranking models. 
\cite{jin2022distributionally} is the most related work to our paper.
They consider a specific formulation of distributionally robust assortment optimization.
They present a thorough investigation of the properties of the corresponding optimal assortment, the most adversarial choice pattern, and the computational algorithms to solve the optimal robust assortment. See Sections~\ref{sec: robust assortment optimization} and \ref{subsec: computation} for discussions of their formulations and our framework.
Our general framework subsumes their formulation as a specific instance.  

However, despite this line of previous research on robust assortment optimization, the design of statistically efficient algorithms and the corresponding sample complexity as well as the minimal data requirements remain untouched. 
Our work makes the first step towards understanding the statistical foundations of this problem.

\paragraph{Data-driven assortment optimization.} The data-driven assortment optimization problem under the MNL choice model has been extensively studied in the revenue management literature. 
However, most prior work focuses on the online setting, where the seller can interact with customers sequentially over time~\citep{caro2007dynamic, rusmevichientong2010dynamic, saure2013optimal, agrawal2017thompson, agrawal2019mnl, chen2021optimal, saha2024stop}. 
In contrast, our work on data-driven assortment optimization focuses on the offline counterpart, where online exploration is not available and the seller can only access a pre-collected dataset. 
In this setting, a robust formulation naturally arises due to uncertainty in the data collection process and potential distributional shifts in the target environment. 
This scenario remains relatively underexplored, with only a few recent works in the non-robust setting~\citep{dong2023pasta,  dong2025pasta, han2025learning}. 
Specifically, \citet{dong2025pasta} propose the PASTA framework for learning the optimal assortments from historical data. 
While the earlier version focuses on the MNL model~\citep{dong2023pasta}, the latest version extends PASTA to a general framework applicable to a broad class of choice models while maintaining the assumption that historical observations consist of optimal assortments. In contrast, \citet{han2025learning} focus on reducing the required assumptions for the MNL model, showing that it suffices to observe the individual items contained in the optimal assortments. 
Our work extends the item-wise partial coverage condition identified by \cite{han2025learning} to the robust assortment learning setup.

\paragraph{Offline policy learning and the pessimism principle.} 
Our work is also related to the offline policy learning problem, where the learner aims to find the optimal policy from purely offline data without interaction with environments. 
A recent line of works \citep{jin2021pessimism,rashidinejad2021bridging,xie2021bellman,uehara2021pessimistic,zhong2022pessimistic,liu2022welfare,xiong2023nearly,lu2023pessimism,liu2024provably, han2025learning} designs pessimism principle based algorithms in the context of bandits, Markov decision processes (MDPs), and assortment optimization problems. 
They prove that these algorithms only require minimal data coverage assumptions. 
Building on this line of works, \citet{blanchet2023double} and \citet{shi2024distributionally} propose the ``double pessimism'' learning principle for distributionally robust offline Markov decision processes. 
Their algorithms can identify near-optimal policies under a minimal robust partial coverage data assumption, significantly improving upon previous works \citep{zhou2021finite,si2023distributionally} that require full coverage data.
Our work leverages the double pessimism principle developed by \citet{blanchet2023double} and \citet{shi2024distributionally}, but applies it to the assortment optimization domain. 
Due to fundamental differences in problem structure between robust bandit/MDP \citep{iyengar2005robust,el2005robust} and robust assortment optimization, our work --- the first to address offline learning for robust assortment optimization --- tackles the unique challenges of the assortment domain and introduces novel models and algorithms requiring minimal data assumptions. 
In other words, while our approach shares the same offline learning spirit with previous work on robust bandit and robust MDP \citep{zhou2021finite,yang2022toward,panaganti2022sample,kuang2022learning,blanchet2023double,wang2023finite,shi2024distributionally,si2023distributionally,liu2024distributionally,liu2024minimax,wang2024sample,lu2024distributionally}, our work represents a parallel contribution as we address domain-specific challenges distinct from those tackled in prior research.

\subsection{Notations}

For an integer $N\in\NN$, we denote $[N]$ as the set $\{1,\cdots, N\}$, and denote $[N]_+$ as the extended set $\{0,1,\cdots,N\}$.
For any set $A$, we denote $2^A$ as its power set that contains all the subsets of $A$.
For any two sets $A$ and $B$, we denote $\Delta(A,B):=|(A\cup B)\setminus (A\cap B)|$. 
For any measurable space $\cX$, we define $\cP(\cX)$ as the space of probability measures supported on $\cX$. 
For any two distributions $\PP$ and $\QQ$ over $[N]_+$, we define the Kullback–Leibler (KL)-divergence between $\PP$ and $\QQ$ as $D_{\mathrm{KL}}(\PP \| \QQ) = \sum_{i \in [N]_+} \PP(i) \cdot \log (\PP(i)/\QQ(i))$.

\section{Preliminaries of Robust Assortment Optimization}\label{sec: robust assortment optimization}

We begin with the problem setup. 
In Section~\ref{subsec: setup}, we give mathematical definitions of assortments, choice models, and expected revenue. 
Then we propose our definition of the \emph{distributionally robust expected revenue}, a general framework to model worst case revenue under distributional shifts of choice patterns.
In Section~\ref{subsec: example}, we provide two examples of this framework, which we concentrate on in the sequel of this paper.
In Section~\ref{subsec: data driven}, we formulate our main goal, that is, to design computationally and statistically efficient algorithms to learn an optimal robust assortment from observational data of customer choices.

\subsection{Problem Setups}\label{subsec: setup}

We study the assortment optimization problem which models the interaction between a seller and a customer.
The seller has $N$ different products or items to sell, denoted by $[N]:=\{1,2,\cdots,N\}$.
An assortment $S\subset[N]$ is a subset of the whole item set that the seller offers to the customer. 
Upon seeing the assortment set $S$, the customer chooses an item $i\in S$ to purchase or leaves without buying anything, denoted as $i=0$.
Each single item $i\in[N]$ is associated with a revenue $r_i\in[0,r_{\max}]$ when it is purchased by the customer, and the non-purchasing situation is associated with a vanishing revenue $r_0=0$. 
We let $S_+ := S \cup \{0\}$ represent the complete set of customer choices, including the items in assortment $S$ and the option of making no purchase.

An important special case is when the revenues $\{r_i\}_{i\in[N]}$ are uniform, i.e., $r_i=r_j$ for any item $i,j\in[N]$. 
In such a uniform revenue case, revenue maximization (introduced later) can be corresponded to maximizing the user engagement or click through rate in recommendation systems for digital content platforms.

\paragraph{Choice model and expected revenue.}
The behavior of the customer when deciding which product to purchase is characterized by a \emph{choice model}.
Specifically, a choice model is a function $\PP(\cdot|\cdot) : [N]_+ \times 2^{[N]} \rightarrow [0,1]$, such that $\PP(\cdot | S)$ is a probability mass function supported on $S_+$ for any set $S \in 2^{[N]}$.
That is, $\PP(\cdot|S)$ specifies the probability of purchasing different items from $S$ (and non-purchasing) conditioned on seeing the assortment.

A simple yet powerful example of choice model is the multinomial logit (MNL) model \citep{mcfadden1978modeling}, which is defined as 
\begin{align}
    \PP(i|S) := \frac{v_i}{\sum_{j\in S_+}v_i},\quad \forall i\in S_+,\quad \forall S\subset[N].\label{eq:_mnl}
\end{align}
Here $\bv := (v_0,v_1,\ldots,v_N)$ is the vector of parameters that fully specifies the MNL model.
Each $v_i$ represents the attraction parameter for purchasing item $i$ (or the non-purchasing choice denoted by $i = 0$). 
Without loss of generality, for MNL models, we let the attraction parameter for non-purchasing behavior $v_0=1$, and denote $v_{\max}:=\max_{j\in[N]}v_j$ as the maximum attraction parameter value over all the items.

With the choice model that specifies the behavior of the customer, the seller is interested in maximizing their \emph{expected revenue}. 
Specifically, given any assortment set $S\subseteq [N]$ and a probability mass function $\QQ_{S_+}(\cdot)$ supported on the set $S_+$, the expected revenue of $S$ under choice probability $\QQ_{S+}$ is defined as
\begin{align}
    R(S;\QQ_{S_+}):= \mathbb{E}_{j\sim \mathbb{Q}_{S_+}(\cdot)}[r_j] = \sum_{j\in S_+}\QQ_{S_+}(j)\cdot r_j.\label{eq: revenue}
\end{align}
Then in non-robust assortment optimization problems, the goal is to find the optimal assortment such that the expected revenue of that assortment under a specific choice model $\PP(\cdot|\cdot)$ is maximized, i.e.,  
\begin{align}
    S^{\star}_{\mathrm{non}\text{-}\mathrm{robust}}:= \argmax_{S\subseteq [N], |S|\leq K} \,R\big(S;\PP(\cdot|S)\big),\label{eq: non robust target}
\end{align}
where the revenue $R(S;\PP(\cdot|S))$ is defined by \eqref{eq: revenue}, and $K\in\NN$ represents the size constraint of the assortment. When $K = N$, it corresponds to the unconstrained setting.
In a standard data-driven assortment optimization pipeline, the choice model $\PP$ is typically taken to be the one that generates the observational choice data.

\paragraph{Robust expected revenue.}

We aim to further robustify the assortment in the face of customer choice shifts from a \emph{nominal choice model} $\PP$ corresponding to historical choice patterns.
Mathematically, we introduce the following (distributionally) robust expected revenue of an assortment $S$ associated with $\PP$,
\begin{align}
    R_{\rho(\cdot;\cdot)}(S;\PP) := \inf_{\mathbb{Q}_{S_+}\in\cP(S_+),\,\,D_{\mathrm{KL}}(\mathbb{Q}_{S_+}\|\mathbb{P}(\cdot|S))\leq \rho(S;\mathbb{P}(\cdot|S))} \Big\{R(S;\QQ_{S_+})\Big\},\quad \forall S\subset[N],\label{eq: robust revenue}
\end{align}
where $R(S;\QQ_{S_+})$ is the standard expected revenue of assortment $S$ under choice probability $\QQ_{S_+}$ (see \eqref{eq: revenue}), and $\rho(\cdot;\cdot):2^{[N]}\times\mathcal{P}([N]_+)\mapsto\mathbb{R}_{\geq 0}$ maps an assortment $S$ and a choice probability over $S$ to a non-negative real number that specifies the radius of the distributional set associated with this assortment.

Intuitively, the robust expected revenue of an assortment $S$ characterizes the worst case expected revenue of $S$ when the choice distribution deviates from the nominal choice probability $\PP(\cdot|S)$ given assortment $S$.
We highlight two properties of the robust expected revenue defined in \eqref{eq: robust revenue}.
Firstly, it allows different choices of the nominal choice model.
Secondly, the robust level $\rho(\cdot;\cdot)$ can depend on the assortment and the nominal choice probability, which allows for flexible design of the distributional set that utilizes the structure of the nominal choice model.
Both two features demonstrate great generality of the definition \eqref{eq: robust revenue}. 

Based on \eqref{eq: robust revenue}, we dub the process of finding the optimal assortment that maximizes the robust expected revenue \eqref{eq: robust revenue} as \emph{robust assortment optimization}.
Specifically, given a nominal choice model $\PP$, a robust set size function $\rho(\cdot;\cdot)$, and a size constraint $K$, we define the optimal robust assortment as
\begin{align}
    S^{\star}:=\argmax_{S\subseteq[N],|S|\leq K} R_{\rho(\cdot,\cdot)}(S;\PP), \label{eq: optimal robust assortment}
\end{align} 
When the function $\rho(\cdot;\cdot)=0$, the robust assortment optimization problem is reduced back to the standard assortment optimization problem \eqref{eq: non robust target}.

\paragraph{Dual formulation.}
A bless of the distributional formulation of the choice pattern shift is the following proposition: 
the robust expected revenue \eqref{eq: robust revenue} can be equivalently formulated by its dual problem that can be solved based on the knowledge of the nominal choice model.
This is useful for solving the optimal robust assortment given nominal choice model (Section~\ref{subsec: computation}) as well as for designing data-driven algorithms (Section~\ref{sec:_algorithms}).

\begin{proposition}[Dual representation of robust expected revenue]\label{prop:_dual}
Given any assortment $S\subseteq[N]$, the robust expected revenue \eqref{eq: robust revenue} can be reformulated as
\begin{align}
    R_{\rho(\cdot;\cdot)}(S;\PP) = \sup_{0\leq \lambda\leq B(S;\PP(\cdot|S))} \Big\{-\lambda\cdot\log\left(\mathbb{E}_{i\sim \PP(\cdot|S)}\left[\exp(-r_i/\lambda)\right]\right) - \lambda\cdot \rho\big(S;\PP(\cdot|S)\big)\Big\},
\end{align}    
where the upper bound of the dual variable $B(S,\PP(\cdot|S)):=r_{\max}/\rho(S;\PP(\cdot|S))$.
\end{proposition}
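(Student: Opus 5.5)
Write $\PP_S:=\PP(\cdot|S)$ and $\rho:=\rho(S;\PP_S)>0$; the case $\rho=0$ is degenerate and is read as a limit. The problem \eqref{eq: robust revenue} minimizes a linear functional $\QQ\mapsto\sum_j\QQ(j)r_j$ over the convex, compact set $\{\QQ\in\cP(S_+):D_{\mathrm{KL}}(\QQ\|\PP_S)\le\rho\}$. The plan is standard Lagrangian duality, followed by a bound on the optimal multiplier. Any feasible $\QQ$ is absolutely continuous with respect to $\PP_S$, so we may restrict to the support of $\PP_S$. Slater's condition holds because $\QQ=\PP_S$ gives divergence $0<\rho$. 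Hence strong duality gives
\[
R_{\rho(\cdot;\cdot)}(S;\PP)=\sup_{\lambda\ge 0}\ \inf_{\QQ\ll\PP_S}\Big\{\EE_{\QQ}[r]+\lambda\big(D_{\mathrm{KL}}(\QQ\|\PP_S)-\rho\big)\Big\}.
\]

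\textbf{Step 1 (inner minimization).} Fix $\lambda>0$ and apply the Donsker--Varadhan / Gibbs variational formula:
\[
\inf_{\QQ}\big\{\EE_{\QQ}[r]+\lambda D_{\mathrm{KL}}(\QQ\|\PP_S)\big\}=-\lambda\log\EE_{\PP_S}[e^{-r/\lambda}].
\]
The minimizer is the exponentially tilted law $\QQ(j)\propto\PP_S(j)e^{-r_j/\lambda}$. This is checked directly by writing the objective as $\lambda D_{\mathrm{KL}}(\QQ\|\QQ_\lambda)-\lambda\log\EE_{\PP_S}[e^{-r/\lambda}]$. At $\lambda=0$ the inner value is $\min_{j\in\mathrm{supp}\PP_S}r_j$, which equals the limit of the dual function as $\lambda\downarrow0$. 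So the formula extends continuously to $\lambda=0$, and the supremum over $[0,\infty)$ is well defined. This yields the dual objective $\phi(\lambda)=-\lambda\log\EE_{\PP_S}[e^{-r/\lambda}]-\lambda\rho$.

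\textbf{Step 2 (restrict to $\lambda\le B$).} This is the main point needing care. By Jensen, $-\lambda\log\EE[e^{-r/\lambda}]\le \EE_{\PP_S}[r]\le r_{\max}$. Since $r_i\ge 0$, we also have $-\lambda\log\EE[e^{-r/\lambda}]\ge 0$. Hence $\phi(\lambda)\le r_{\max}-\lambda\rho$, which is negative for $\lambda>r_{\max}/\rho=B$. On the other hand, $\phi(0)=\min_j r_j\ge 0$. Therefore the supremum over $\lambda\ge0$ is attained, or approached, within $[0,B]$, which gives the stated form.

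Strong duality itself could be the obstacle if one wants to avoid citing a general convex duality theorem. As a backup, I would argue directly. Weak duality is immediate. For the reverse inequality, $\phi$ is concave and continuous on $[0,B]$, so it has a maximizer $\lambda^\star$. If $\lambda^\star>0$, the first-order condition $\phi'(\lambda^\star)=0$ says exactly that the tilted law $\QQ_{\lambda^\star}$ satisfies $D_{\mathrm{KL}}(\QQ_{\lambda^\star}\|\PP_S)=\rho$. Plugging $\QQ_{\lambda^\star}$ into the primal then gives $\EE_{\QQ_{\lambda^\star}}[r]=\phi(\lambda^\star)$. If $\lambda^\star=0$, the point mass on the minimal-revenue supported item is feasible, since its divergence $-\log\PP_S(j_{\min})$ is at most $\rho$ (this is what $\phi'(0^+)\le0$ encodes), and it matches $\phi(0)$. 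Either way, primal equals dual.
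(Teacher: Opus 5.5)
Your proposal is correct and is essentially the paper's argument written out in full. The paper simply cites the standard KL-DRO duality of Hu--Hong, which is your Slater/Lagrangian step combined with the Gibbs variational formula. It cites Lemma~H.7 of Blanchet et al.\ for the multiplier bound, which is your comparison of $\phi(\lambda)\le r_{\max}-\lambda\rho<0$ for $\lambda>B$ with $\phi(0)\ge 0$.

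One small repair is needed in your optional backup argument. If several supported items tie for the minimal revenue, the point mass on a single such item can be infeasible. Use instead the conditional law $\mathbb{P}_S(\cdot\mid A)$ on the argmin set $A$. Its divergence $-\log\mathbb{P}_S(A)$ is exactly the limit of $D_{\mathrm{KL}}(\mathbb{Q}_\lambda\|\mathbb{P}_S)$ as $\lambda\downarrow 0$, so $\phi'(0^+)\le 0$ gives its feasibility.
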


\begin{proof}[Proof of Proposition~\ref{prop:_dual}]
    The dual formulation follows from the dual representation for KL-divergence based distributionally robust optimization problem, see e.g., \cite{hu2013kullback}. 
    The upper bound on the dual variable follows from Lemma H.7 of \cite{blanchet2023double}.
\end{proof}

\subsection{Examples of Robust Assortment Optimization Framework}\label{subsec: example}

To concentrate the scope of this work, in the sequel of this paper, we consider the nominal choice model $\PP(\cdot|\cdot)$ to be the MNL model \eqref{eq:_mnl}. 
When it is clear from the context, we use the notation of $\bv$ and the choice model $\PP$ interchangeably since $\bv$ fully specifies an MNL model.
Furthermore, in this paper, we study the following two concrete and representative examples of the robust set size function $\rho(\cdot;\cdot)$.

The first example is when the robust set size function is a constant.

\begin{theorembox}

\begin{example}[Constant robust set size \citep{jin2022distributionally}]\label{exp: jin}
    The first example is when robust set size function is a constant, i.e., $\rho(\cdot;\cdot)=\rho$ for some constant $\rho\geq 0$, regardless of specific assortment $S$.
    In this example, we simply refer to the constant $\rho$ as the robust set size.
    This formulation was firstly considered by 
    \cite{jin2022distributionally} but under a different perspective.
    It can be easily shown that the robust expected revenue \eqref{eq: robust revenue} with $\rho(\cdot;\cdot)=\rho$ can be equivalently formulated as
    \begin{align}
    R_{\rho}(S) := \inf_{\substack{\bp\in\cP(\RR^{N+1}),\,\,D_{\mathrm{KL}}(\bp\|\bp_0)\leq \rho}} \mathbb{E}_{\bu\sim \bp}\big[f(S, \bu)\big],\quad f(S,\bu):= \min\Big\{r_{i^{\star}}\Big|i^{\star}\in\argmax_{i\in S_+}u_i\Big\},\label{eq: jin}
\end{align}
where the minimization operation in function $f$ is to break a tie, and the nominal distribution of the customer utility $\bp_0$ is defined as a product Gumbel distribution
\begin{align}
    \bp_0:= \cG\big(\sqrt{6}u_0/\pi,1\big)\otimes\cG\big(\sqrt{6}u_1/\pi,1\big)\otimes\cdots\otimes\cG\big(\sqrt{6}u_N/\pi,1\big).
\end{align}
\cite{jin2022distributionally} starts from the perspective of \eqref{eq: jin} and studies the computation of the corresponding optimal robust assortment given the nominal choice model.
In contrast, our main target is to study the algorithm design for robust assortment optimization in a data-driven setup.
\end{example}

\end{theorembox}

In the following, we propose another instance of \eqref{eq: robust revenue}, where the robust set size function varies depending on different assortments following a specific pattern.

\begin{theorembox}
    
\begin{example}[Varying robust set size]\label{exp: new}
    The second example is a non-constant robust set size function,
    \begin{align}
        \rho(S;\bv) := \rho_0 - \log\left(e^{\rho_0} - \frac{(e^{\rho_0} - 1)\cdot\sum_{j\in[N]_+}v_j}{\sum_{j\in S_+}v_j}\right),\quad \forall S\subseteq[N], \label{eq: rho new}
    \end{align}
    for a constant $0\leq \rho_0 < \log((\sum_{j\in[N]_+}v_j) / (\sum_{j\in [N]}v_j))$\footnote{We remark that this range of $\rho_0$ ensures that the logarithm term in \eqref{eq: rho new} is finite for any assortment $S$.
    That is, for $\rho_0$ in this interval, 
        $e^{\rho_0} - ((e^{\rho_0} - 1)\cdot\sum_{j\in[N]_+}v_j)/(\sum_{j\in S_+}v_j) >0$ for any $S\subseteq[N]$.
    In the equivalent formulation \eqref{eq: conditioned robust revenue formulation}, such a constraint on $\rho_0$ actually ensures that the robust expected revenue does not degenerate to $0$ for any non-empty assortment $S\subset [N]$.}. 
    We refer to $\rho_0$ as robust set size parameter.
    Intuitively, under \eqref{eq: rho new}, the robust set size would be larger for the assortments $S$ with smaller total attraction $\sum_{j\in S_+}v_j$.
    It is then more unlikely for the assortments that achieve high expected revenue under the nominal model through small total attraction to become the optimal robust assortment. 
    This is because, for example, assortments achieving high expected revenue under the nominal model through a few items with super high revenue but relatively low attraction could easily induce non-purchasing behavior under preference distributional shift, resulting in a low expected revenue.
    
    Here we further provide a second way to interpret the definition of $\rho(\cdot,\cdot)$ in \eqref{eq: rho new}. 
    Under \eqref{eq: rho new}, the distributionally robust expected revenue \eqref{eq: robust revenue} can be equivalently understood as following. 
    Firstly, one assigns a distribution $\bp_0 = (p_{0,0},p_{0,1},\cdots,p_{0,N})\in\cP([N]_+)$ over the whole item set by 
    \begin{align}
        p_{0,i} = \frac{v_i}{\sum_{j\in [N]_+}v_j},\quad \forall i \in [N]_+.
    \end{align}
    Then the robust expected revenue \eqref{eq: robust revenue} with $\rho(\cdot,\cdot)$ in \eqref{eq: rho new} is equivalently given by
    \begin{align}
        \widetilde{R}_{\rho_0}(S;\bp_0):= \inf_{\substack{\bp\in \cP([N]_+),\,\,
        D_{\mathrm{KL}}(\bp\|\bp_0)\leq \rho_0}}\left\{\mathbb{E}_{j\sim \bp|j\in S_+}[r_j]=\frac{\sum_{j\in S}p_jr_j}{\sum_{j'\in S_+}p_{j'}}\right\}.\label{eq: conditioned robust revenue formulation}
    \end{align}
    That is, we treat the MNL choice model under an assortment $S$ as a posterior probability distribution given $S$, where each item has a prior distribution  $p_{0,i}$ induced by the attraction parameters.
    The robust counterpart is by robustifying the prior distribution of the items (and the non-purchasing choice).
    
    Finally, we remark that one particular feature of this example is that such a problem formulation necessitates the learning of the robust set size,
    We discuss this from a technical perspective in Section~\ref{sec: sketch}.
\end{example}
\end{theorembox}

\begin{wrapfigure}{r}{0.4\textwidth}
  \centering
\vspace{-6mm}\hspace{-2mm}\includegraphics[width=0.38\textwidth]{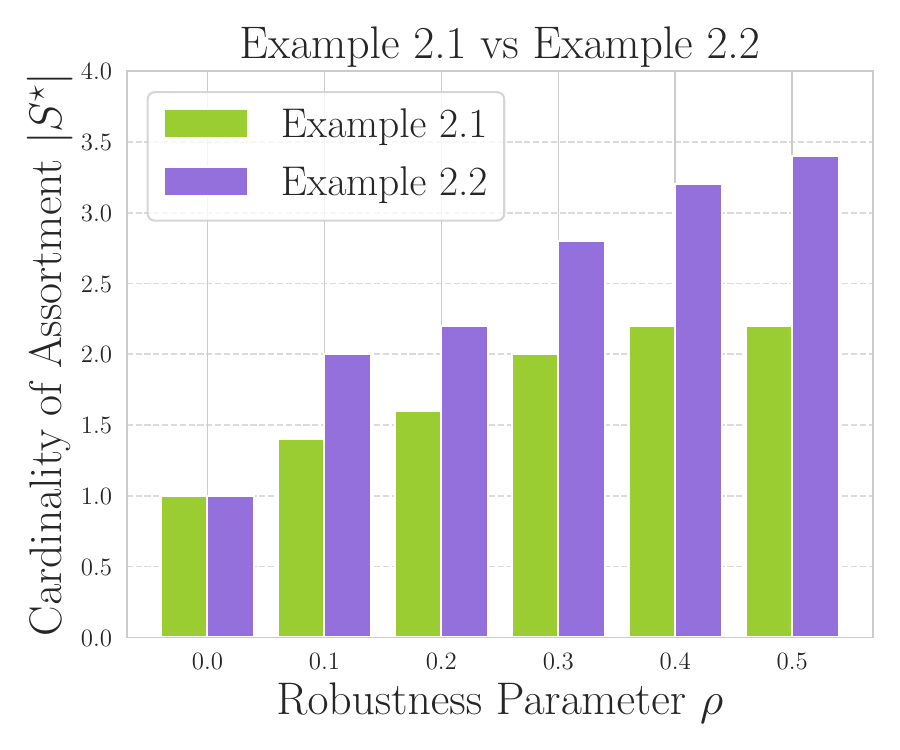}
  \caption{\small Comparing $|S^{\star}|$, avged over $5$ instances. Here $\rho$ refers to $\rho_0$ in Example~\ref{exp: new}. 
}
  \label{fig: card comparison}
\end{wrapfigure}

\paragraph{Comparison between Example~\ref{exp: jin} and Example~\ref{exp: new}.} 

Examples~\ref{exp: jin} and~\ref{exp: new} are both instances of the unified robust assortment optimization objective, but they encode two different philosophies of model uncertainty.
The key conceptual distinction is the \emph{order of conditioning and robustification}.
Example~\ref{exp: jin} robustifies the assortment-conditional choice behavior $\PP(i|S)$ directly:
for each offered assortment $S$, the true conditional choice distribution is allowed to deviate from the nominal one
within a KL ball of a \emph{constant} radius $\rho$.
This viewpoint is actually ``local'' in $S$: it does not require the perturbed conditionals across different assortments to arise from a single globally coherent latent preference model.
The induced perturbed conditional models need not remain within the MNL family, which makes Example~\ref{exp: jin}
flexible for guarding against broad forms of misspecification or estimation error at the conditional level.

In contrast, Example~\ref{exp: new} starts from a ``global'' uncertainty viewpoint: Example~\ref{exp: new} can be interpreted as treating the MNL choice model under an assortment $S$ as a posterior
distribution given $S$, and robustifying the corresponding prior $p_0$. 
It robustifies a single global object
(the prior distribution $p_0$ over $[N]_+$) within a KL ball of radius $\rho_0$,
and then obtains choice probabilities by conditioning on the assortment set $S$.
This construction couples uncertainty across assortments, and preserves structural validity of MNL choice model.

\paragraph{Algebraic connection: Example~\ref{exp: new} as an induced $S$-dependent radius.}
These two viewpoints are connected algebraically: Example~\ref{exp: new} can be viewed as a structured special case of Example~\ref{exp: jin}
with an $S$-dependent effective radius $\rho(S;v)$.
Specifically, Example~\ref{exp: new} defines $\rho_{\mathrm{vary}}(S;v)$ in \eqref{eq: rho new} as
\begin{align}
\label{eq:rho_vary_original_repeat}
\rho_{\mathrm{vary}}(S;v)
&= \rho_0 - \log\!\Big(
e^{\rho_0}-(e^{\rho_0}-1)\cdot \frac{\sum_{j\in [N]^+} v_j}{\sum_{j\in S^+} v_j}
\Big).
\end{align}
Let $v_{\mathrm{all}} := \sum_{j\in [N]_+} v_j$, $
v_S := \sum_{j\in S^+} v_j$, $c(S) := v_{\mathrm{all}}/v_S\ge 1 $.
Then \eqref{eq:rho_vary_original_repeat} can be rewritten in an equivalent and more interpretable form:
\begin{align}
\label{eq:rho_vary_simplified}
\rho_{\mathrm{vary}}(S;v)
&= -\log\!\Big(1-(1-e^{-\rho_0})\cdot c(S)\Big).
\end{align}
Equation~\eqref{eq:rho_vary_simplified} makes explicit how conditioning amplifies global uncertainty: since $c(S)\ge 1$,
we have
\begin{align}
\label{eq:rho_vary_properties}
\rho_{\mathrm{vary}}([N];v) = \rho_0, \quad \rho_{\mathrm{vary}}(S;v) \ge \rho_0, \quad \forall\, S\subsetneq [N],
\end{align}
with the inequality becoming strict whenever $c(S)>1$ (i.e., whenever $v_S<v_{\mathrm{all}}$).
Consequently, if we ``match'' the robustness parameters by setting $\rho=\rho_0$, the constant-radius model in
Example~\ref{exp: jin} is always weakly less conservative than the induced radius from Example~\ref{exp: new}:
\begin{align}
\label{eq:rho_compare_const_vary}
\rho_{\mathrm{const}}(S;v) = \rho_0 = \rho_{\mathrm{vary}}([N];v), \quad \rho_{\mathrm{const}}(S;v) \le \rho_{\mathrm{vary}}(S;v), \quad \forall\, S\neq [N].
\end{align}
This amplification has an immediate behavioral implication for the optimizer: because $\rho_{\mathrm{vary}}(S;v)$ is
larger when $\sum_{j\in S^+} v_j$ is small, the robust objective in Example~\ref{exp: new} tends to favor assortments with larger
total attraction, which often pushes the robust optimal solution toward selecting more items to ``stabilize'' purchase likelihood under shifts.
This mechanism is illustrated empirically in Figure~2, where we compare the optimal robust assortment cardinality
$|S^\star|$ under matched parameters ($\rho$ in Example~\ref{exp: jin} versus $\rho_0$ in Example~\ref{exp: new}): across a range of
robustness levels, the solutions under Example~\ref{exp: new} tend to have larger average cardinality, consistent with
\eqref{eq:rho_compare_const_vary}.
Finally, the above discussion also clarifies how $\rho$ and $\rho_0$ should be compared:
Example~\ref{exp: jin} treats $\rho$ as a uniform ``budget'' of conditional perturbation for every assortment,
while Example~\ref{exp: new} treats $\rho_0$ as a global ``budget'' on the prior, which induces an assortment-dependent
conditional budget $\rho_{\mathrm{vary}}(S;v)$.
Thus, calibrating $\rho$ to $\rho_0$ can be done either at $S=[N]$ (where they coincide), or at a reference
assortment $S_{\mathrm{ref}}$ (e.g., the nominal/robust optimizer) via the induced effective radius
$\rho_{\mathrm{vary}}(S_{\mathrm{ref}};v)$, depending on which operating point is most relevant to the application.

\subsection{Data Driven Learning of the Optimal Robust Assortment} \label{subsec: data driven}
We aim to study how to find an \emph{optimal robust assortment} with cardinality constraint $K\in\NN$ from observational data generated from the nominal MNL choice model $\PP_{\bv}$.
The observational offline dataset $\mathbb{D}$ is in the form of 
$$\mathbb{D}:=\big\{(S_k,i_k)\big\}_{k=1}^n,$$
where $S_k\subseteq[N]$ is an observed assortment, and $i_k\sim \PP_{\bv}(\cdot|S_k)$ is the observed customer choice sampled from the \emph{nominal} choice model.
We assume the following conditional independence structure. 
For any $k\in[n]$, when conditioned on $S_k$, $i_k$ is independent of all previous choices $\{i_{k'}\}_{k'<k}$.
Such a structure allows for the observational data to be collected from an online decision-making process from the nominal choice model \citep{caro2007dynamic, rusmevichientong2010dynamic, saure2013optimal, agrawal2017thompson, agrawal2019mnl, chen2021optimal, saha2024stop}.

We remark that when $K= N$, the above target reduces to the unconstrained case that permits assortments of any size. 
Meanwhile, when the robust set size function $\rho(\cdot;\cdot)=0$, the above target recovers the standard non-robust data-driven assortment optimization problem \citep{dong2023pasta, han2025learning}.

Finally, the metric we are interested in is the \emph{sub-optimality gap} of the learned assortment $\widehat{S}$ in terms of the robust expected revenue \eqref{eq: robust revenue}, defined as,
\begin{align}
    \mathrm{SubOpt}_{\rho(\cdot,\cdot)}(\widehat{S};\bv):=R_{\rho(\cdot,\cdot)}(S^{\star};\bv) - R_{\rho(\cdot,\cdot)}(\widehat{S};\bv),
\end{align}
where $\bv$ denotes the nominal choice model and $S^{\star}$ is the optimal robust assortment defined in \eqref{eq: optimal robust assortment}.
The goal is to design algorithms that utilize the observational data $\mathbb{D}$ to find robust assortments such that the suboptimality gap is small, matching the statistical lower bound of the gap.

\section{Solving Optimal Robust Assortment with Known Nominal Model}\label{subsec: computation}

Before we dive into the learning algorithm design, we first present how to solve the optimal robust assortments given a fixed nominal choice model $\PP$, which is a purely computational problem. 
We focus on the two examples (Examples~\ref{exp: jin} and \ref{exp: new}) for studying this question.  
It turns out that the answer to this question depends on two dimensions of the problem setups: (i) constrainedness of the assortments and (ii) the uniformness of the individual item revenues.
If the problem instance is either unconstrained or of uniform revenue, the solution can be greatly simplified with linear time algorithm. 
For the general case, we also present algorithms to find the optimal robust assortment with polynomial computational complexity.

\paragraph{Unconstrained assortment optimization.}
For the case of unconstrained assortment optimization, i.e., constraint size $K=N$, the following result shows that the optimal robust assortment is a complete set of the whole item set $[N]$. 
That is, if one assumes (without loss of generality) that $r_1>r_2>\cdots>r_N$, then the optimal robust assortment set is in the form of $S^{\star} = \{1,2,\cdots,i^{\star}\}$ for some $i^{\star}\in[N]$.
This allows us to find the optimal robust assortment with compute time linear in the item number $N$.

\begin{proposition}[Optimal robust assortment set without size constraints]\label{prop:_optimal_set_unconstrained}
    Under either Example~\ref{exp: jin} or Example~\ref{exp: new}, if $K=N$, then an optimal robust assortment set $S^{\star}$ is a complete set $S^{\star} = \{1,2,\cdots,i^{\star}\}$.
\end{proposition}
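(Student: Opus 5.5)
Throughout, assume $r_1>r_2>\cdots>r_N$ and that every item has positive revenue; items with $r_i=0$ can be dropped from any assortment without lowering the robust revenue. The plan is to show that for every $S$ one can find a revenue-ordered set $\{1,\dots,m\}$ whose robust revenue is at least as large. I will do this with an exchange argument built on the dual representation in Proposition~\ref{prop:_dual}. The key device is a \emph{threshold characterization} of the robust revenue. For each $S$, write $\tau_S := R_{\rho(\cdot;\cdot)}(S;\PP)$. I will show that, in both examples, $\tau_S$ is the unique root in $t$ of a scalar equation of the form $\Phi_S(t)=0$, where $\Phi_S$ is decreasing in $t$ and is \emph{additive over items}. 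The problem then reduces to a monotone comparison, in the same way that the non-robust MNL problem reduces to solving $\sum_{i\in S} v_i(r_i-t)=t$.

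\textbf{Example~\ref{exp: new}.} The conditional formulation \eqref{eq: conditioned robust revenue formulation} makes this direct. We have $\widetilde{R}_{\rho_0}(S;\bp_0)\ge t$ if and only if
\[
\inf_{D_{\mathrm{KL}}(\bp\|\bp_0)\le\rho_0}\ \sum_{j\in S_+}p_j(r_j-t)\ \ge 0,
\]
since the denominator $\sum_{j\in S_+}p_j$ is positive; the footnote's range of $\rho_0$ ensures it. This is a linear DRO over $\bp$ on all of $[N]_+$, with payoff $g_t(j)=(r_j-t)\mathbf{1}\{j\in S_+\}$. Applying the KL dual as in Proposition~\ref{prop:_dual} gives the criterion
\[
\sup_{\lambda\ge0}\Big\{-\lambda\log\sum_{j\in[N]_+}p_{0,j}\,e^{-g_t(j)/\lambda}-\lambda\rho_0\Big\}\ge0 .
\]
For a fixed $t$ and $\lambda$, including item $i$ in $S$ changes its term from $p_{0,i}$ to $p_{0,i}e^{-(r_i-t)/\lambda}$. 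This strictly decreases the log-sum exactly when $r_i>t$. Hence, for any fixed threshold $t$, the set maximizing the dual value for every $\lambda$ simultaneously is $\{i: r_i>t\}$ (together with $0$), which is a revenue-ordered set.

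Now take $t=\tau_{S}$ for an arbitrary $S$ and let $S'=\{i:r_i>\tau_S\}$. For every $\lambda$, the dual value of $S'$ is at least that of $S$. So the criterion holds for $S'$ at level $\tau_S$, and therefore $\tau_{S'}\ge\tau_S$. Applying this to $S=S^\star$ gives a revenue-ordered optimal set.

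\textbf{Example~\ref{exp: jin}.} Here I repeat the argument with constant $\rho$, but the radius is applied to the conditional law $\PP(\cdot|S)$, which is not a restriction of a single global distribution. The criterion $R_\rho(S)\ge t$ is equivalent to
\[
\sup_{\lambda\ge0}\Big\{-\lambda\log \mathbb{E}_{\PP(\cdot|S)}\big[e^{-(r_i-t)/\lambda}\big]-\lambda\rho\Big\}\ge0,
\]
using translation invariance of the dual objective, with $r_0=0$. Writing $\PP(i|S)=v_i/(1+\sum_{j\in S}v_j)$, the inner condition for a fixed $\lambda$ reads
\[
\sum_{j\in S_+}v_j\big(e^{-(r_j-t)/\lambda}-e^{-\rho}\big)\le 0 .
\]
This is again additive over items: item $i$ contributes $v_i\big(e^{-(r_i-t)/\lambda}-e^{-\rho}\big)$, which is negative exactly when $r_i>t-\lambda\rho$. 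For fixed $(t,\lambda)$, the optimal set is therefore the revenue-ordered set $\{i:r_i>t-\lambda\rho\}$. Let $\lambda^\star$ be a dual maximizer for $S^\star$ at $t=\tau_{S^\star}$. Replacing $S^\star$ by $\{i:r_i>\tau_{S^\star}-\lambda^\star\rho\}$ keeps the inequality valid, so that revenue-ordered set has robust revenue at least $\tau_{S^\star}$.

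\textbf{Main obstacle.} The step I expect to be hardest is rigorously justifying the threshold/level-set equivalence, namely that $R\ge t$ if and only if the shifted dual criterion is nonnegative. This requires handling the attainment of the supremum over $\lambda$, including the boundary cases $\lambda\to0$ and $\lambda$ at the bound $B$, as well as the non-empty-set and positivity conditions. To deal with the boundary, I would first establish the inequality for every $\lambda>0$ and then pass to the limit, using continuity of the dual objective.
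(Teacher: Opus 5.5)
Your argument for Example~\ref{exp: new} is correct and is essentially the paper's. The paper also passes to the dual and rewrites the condition ``$H\ge R^\star$'' as the item-additive inequality $\sum_{i\in S}v_i\big(e^{-r_i/\lambda}-e^{-R^\star/\lambda}\big)\le e^{-R^\star/\lambda}-1-e^{-R^\star/\lambda}(1-e^{-\rho_0})\sum_{i\in[N]_+}v_i$. It then notes that $\{i:r_i\ge R^\star\}$ minimizes the left side for every $\lambda$. If you multiply your global-prior dual criterion through by $\sum_{j\in[N]_+}v_j$, you get exactly this inequality with $R^\star$ replaced by $t$. Your step of comparing dual objectives pointwise in $\lambda$ and then taking suprema is slightly cleaner than the paper's ``there exists $\lambda$'' phrasing, because it never needs a maximizing $\lambda$.

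For Example~\ref{exp: jin}, the paper simply cites Theorem~2 of \cite{jin2022distributionally}, so your exchange argument would be a self-contained alternative. However, its key step has a sign error. From $\sum_{j\in S_+}v_j\big(e^{-(r_j-t)/\lambda}-e^{-\rho}\big)\le 0$, item $i$ contributes negatively iff $e^{-(r_i-t)/\lambda}<e^{-\rho}$, i.e.\ iff $r_i>t+\lambda\rho$, not $r_i>t-\lambda\rho$. Your set $\{i:r_i>\tau_{S^\star}-\lambda^\star\rho\}$ therefore also contains every item with $r_i\in(\tau_{S^\star}-\lambda^\star\rho,\,\tau_{S^\star}+\lambda^\star\rho]$, and those items contribute nonnegatively. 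So the claim that the inequality stays valid is false as written. The repair is to use $S'=\{i:r_i>\tau_{S^\star}+\lambda^\star\rho\}$. This set is still revenue-ordered, and since it contains exactly the negatively contributing items, its sum is no larger than that of $S^\star$. This matches the paper's Lemma~\ref{lem: optimal condition jin}, which says every item of an optimal set satisfies $r_j\ge R_\rho(S^\star)+\lambda^\star\rho$.

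In Example~\ref{exp: jin} the threshold moves with $\lambda$, so you genuinely need one good $\lambda$. You can still sidestep the attainment issue you flagged. By translation invariance, $\sup_{\lambda}\{-\lambda\log\mathbb{E}[e^{-(r-t)/\lambda}]-\lambda\rho\}=R_\rho(S^\star)-t$. Hence for every $t<\tau_{S^\star}$ some finite $\lambda>0$ makes the shifted dual objective positive. With the corrected threshold $t+\lambda\rho$, that $\lambda$ yields a revenue-ordered set with robust revenue at least $t$. There are only finitely many revenue-ordered sets, so one of them works along a sequence $t\uparrow\tau_{S^\star}$, and it therefore has robust revenue at least $\tau_{S^\star}$. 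This also covers $\rho=0$ and the degenerate case where the supremum is approached only as $\lambda\to 0$.
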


\begin{proof}[Proof of Proposition~\ref{prop:_optimal_set_unconstrained}]
    Please refer to Appendix~\ref{subsec:_proof_optimal_set_unconstrained} for a detailed proof.
\end{proof}

\paragraph{Assortment optimization with uniform revenue.}

For the case of uniform individual revenues, i.e., $r_i=r_j$ for any $i\neq j\in [N]$, the optimal assortment set $S^{\star}$ contains the $K$ items that have the top $K$ nominal attraction parameters $v_i$, which is computationally tractable within linear time. 
Meanwhile, one can see that in this case, the optimal robust assortment set actually coincides with the non-robust optimal assortment set.

\begin{proposition}[Optimal robust assortment set under uniform reward]\label{prop:_optimal_set_uniform}
    Under either Example~\ref{exp: jin} or Example~\ref{exp: new}, if $r_i=r_j$ for any $i\neq j\in [N]$, the optimal robust assortment $S^{\star}$ contains the $K$ items that have the top $K$ nominal attraction parameters $v_i$.
\end{proposition}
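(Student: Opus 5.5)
With uniform revenues $r_i\equiv r>0$ for $i\in[N]$ and $r_0=0$, the expected revenue under any $\QQ_{S_+}$ is $r\cdot(1-\QQ_{S_+}(0))$. The robust revenue is therefore $r$ times one minus the largest no-purchase probability that the adversary can reach inside the KL ball. The plan is to compute this worst-case no-purchase probability in closed form, show it depends on $S$ only through the total attraction $V(S):=\sum_{j\in S}v_j$, and show it is monotone in $V(S)$. Maximizing $V(S)$ subject to $|S|\le K$ then selects the $K$ items with the largest $v_i$, since all $v_i>0$ and adding items never decreases $V$.

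\textbf{Example~\ref{exp: jin}.} The adversary's problem reduces to a binary one: collapse $S_+$ into the two outcomes $\{0\}$ and $S$. By the data-processing inequality for KL, and because the revenue depends only on $q:=\QQ_{S_+}(0)$, the feasible values of $q$ are exactly those with $d(q\,\|\,p)\le\rho$. Here $d$ is the binary KL divergence and $p=\PP(0|S)=1/(1+V(S))$. For the converse direction, any such $q$ is attained by rescaling $\PP(\cdot|S)$ proportionally within $S$, and this rescaling achieves exactly $d(q\|p)$. The worst case is therefore $q^\star(p)=\max\{q\ge p: d(q\|p)\le\rho\}$. Because $d(q\|p)$ is decreasing in $p$ on the range $p\le q$, the map $q^\star(p)$ is nondecreasing in $p$, hence nonincreasing in $V(S)$. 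The robust revenue $r(1-q^\star(1/(1+V(S))))$ is consequently nondecreasing in $V(S)$, which proves the claim. Equivalently, one can argue through Proposition~\ref{prop:_dual}: the dual objective equals $-\lambda\log\big(p+(1-p)e^{-r/\lambda}\big)-\lambda\rho$, which is pointwise decreasing in $p$ for each $\lambda$, and taking the supremum over $\lambda$ preserves this (the range $[0,r/\rho]$ does not depend on $S$).

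\textbf{Example~\ref{exp: new}.} I would use the equivalent form \eqref{eq: conditioned robust revenue formulation}. With uniform revenue its objective is $r\,(1-p_0'/(p_0'+\sum_{j\in S}p_j))$, where $p_0'$ denotes the adversary's mass on the no-purchase option. The adversary wants to maximize the ratio $p_0'/\sum_{j\in S}p_j$. The same lumping argument applies, now to the three blocks $\{0\}$, $S$ and $[N]\setminus S$, and the log-sum inequality lets the adversary restrict to proportional rescaling within each block. The worst case is then governed by the nominal block masses $(a,b,c)=(p_{0,0},\,V(S)/V_{\mathrm{all}},\,\text{rest})$, where $V_{\mathrm{all}}:=\sum_{j\in[N]_+}v_j$. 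I would show that enlarging $b$ by moving nominal mass from $c$ into $b$ can only lower the worst-case ratio $a'/b'$.

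To prove that monotonicity, take any perturbation $(a',b',c')$ feasible for the larger $b$, and build a perturbation for the smaller $b$ with the same $a'$, a $b'$ no larger, and KL no larger. A cleaner route is to use the closed form already available: by \eqref{eq:rho_vary_simplified}, the effective radius $\rho_{\mathrm{vary}}(S;\bv)$ is decreasing in $V(S)$. Combined with the Example~\ref{exp: jin} argument, where the robust revenue is increasing in $V(S)$ for a fixed radius and decreasing in the radius, this gives the conclusion directly. The statement is then immediate from the three-step plan above.

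The main obstacle is justifying rigorously the reduction to a two-point problem, and the monotonicity of $q^\star$ in $p$. I would handle this through the dual expression, as described above, since the dual form makes the monotonicity in $p$ elementary.
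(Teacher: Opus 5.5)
Your proof is correct. Your dual remark for Example~\ref{exp: jin} is exactly the paper's argument; the rest of your proof takes a different route.

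The paper stays entirely in the dual of Proposition~\ref{prop:_dual}. With uniform revenue, it rewrites the argument of the logarithm in two ways:
\begin{itemize}
\item for Example~\ref{exp: jin}, as $e^{-r_{\max}/\lambda}+(1-e^{-r_{\max}/\lambda})/\sum_{i\in S_+}v_i$;
\item for Example~\ref{exp: new}, after merging the two $\rho_0$-dependent terms of the dual objective, as $e^{-r_{\max}/\lambda}+\big(1-e^{-r_{\max}/\lambda}(1-c)\big)/\big(\sum_{i\in S_+}v_i-c\big)$, with $c=(1-e^{-\rho_0})\sum_{i\in[N]_+}v_i<1$.
\end{itemize}
In both cases the dual integrand is monotone in $\sum_{i\in S}v_i$ pointwise in $\lambda$, so the supremum is too.

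You argue differently in two places.
\begin{enumerate}
\item For Example~\ref{exp: jin}, you identify the adversary explicitly in the primal. Data processing plus proportional rescaling reduces the problem to the binary worst-case no-purchase probability $q^\star(p)$. This is monotone in $p$ because $\partial_p d(q\,\|\,p)=(p-q)/(p(1-p))\le 0$ for $p\le q$.
\item For Example~\ref{exp: new}, you avoid any algebra with \eqref{eq: rho new} by composing three monotonicities: the fixed-radius robust revenue is nondecreasing in $\sum_{i\in S}v_i$; the robust revenue is nonincreasing in the radius; and $\rho_{\mathrm{vary}}(S;\bv)$ is nonincreasing in $\sum_{i\in S}v_i$ by \eqref{eq:rho_vary_simplified}.
\end{enumerate}

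Your route works straight from the definition \eqref{eq: robust revenue} and explains why the result holds. It also applies verbatim to any radius that depends on $S$ only through $\sum_{i\in S}v_i$ in a nonincreasing way. The paper's computation is shorter and stays inside the dual form used throughout the rest of the paper.

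The three-block sketch you begin for Example~\ref{exp: new} is unfinished but also unnecessary; the composition argument should simply replace it.

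Both proofs give only weak monotonicity. So the top-$K$ set is \emph{an} optimal robust assortment, which is the intended reading of the statement.
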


\begin{proof}[Proof of Proposition~\ref{prop:_optimal_set_uniform}]
    Please refer to Appendix~\ref{subsec:_proof_optimal_set_uniform} for a detailed proof.
\end{proof}

\paragraph{General setups.}
Finally, for the general case where the revenues are non-uniform and the assortment set size is constrained to an arbitrary $K<N$, there exist algorithms that can find the optimal robust assortment given the nominal choice model with computational complexity of order $\widetilde{\cO}(N^2)$, where we recall that $N$ is the number of whole items.
For Example~\ref{exp: jin}, the algorithm is proposed by \cite{jin2022distributionally}. 
For Example~\ref{exp: new}, we provide a detailed algorithm design (Algorithm~\ref{alg: computation new 1}) and computational time analysis in Appendix~\ref{subsubsec: algorithm design}.

\begin{proposition}[Solving optimal robust assortment: general case]\label{prop:_optimal_set_general}
    Under either Example~\ref{exp: jin} or Example~\ref{exp: new}, there exists an algorithm to find an $\epsilon$-optimal optimal robust assortment $S^{\star}_{\epsilon}$ using $\widetilde{\cO}(N^2)$ times of evaluation of a known function, where $\widetilde{\cO}(\cdot)$ hides poly-logarithm factors in $1/\epsilon$.
\end{proposition}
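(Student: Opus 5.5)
We reduce the robust problem to a one-dimensional family of non-robust, MNL-type constrained assortment problems, each solvable in $\widetilde{\cO}(N)$ or $\cO(N^2)$ time, and search over the scalar parameter.

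\emph{Example~\ref{exp: jin}.} By Proposition~\ref{prop:_dual} with constant $\rho$,
\begin{align*}
R_\rho(S;\bv)=\sup_{0\le\lambda\le r_{\max}/\rho}\Big\{-\lambda\log\Big(\frac{\sum_{i\in S_+}v_ie^{-r_i/\lambda}}{\sum_{i\in S_+}v_i}\Big)-\lambda\rho\Big\}.
\end{align*}
So $\max_{|S|\le K}R_\rho(S)=\sup_\lambda \max_{|S|\le K}\{\cdots\}$. For fixed $\lambda$, maximizing over $S$ amounts to minimizing $\sum_{S_+}v_iw_i/\sum_{S_+}v_i$ with $w_i=e^{-r_i/\lambda}$ and $w_0=1$. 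This is a cardinality-constrained MNL assortment problem with ``revenues'' $1-w_i$. It is solvable exactly, e.g.\ by the parametric approach of Rusmevichientong et al.: binary search on the ratio $t$, checking $\max_{|S|\le K}\sum_{i\in S}v_i(1-w_i-t)\ge t\,v_0$ by taking the top $K$ positive terms. Alternatively, one can enumerate the $\cO(N^2)$ breakpoints of the orderings of the lines $v_i(1-w_i)-tv_i$. This is exactly the route of \cite{jin2022distributionally}, which we may simply invoke.

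\emph{Outer search and main obstacle.} The outer optimization over $\lambda\in[0,r_{\max}/\rho]$ is the key difficulty, because $g(\lambda):=\max_S\phi_S(\lambda)$ is a maximum of concave functions and need not be concave. We handle it in two steps:
\begin{itemize}
\item[(i)] Each $\phi_S$ is Lipschitz in $\lambda$ on the compact interval, with a constant depending on $r_{\max},\rho$. To obtain this, bound the derivative $-\log\mathbb E e^{-r/\lambda}-\rho-\mathbb E_{\tilde q}[r]/\lambda$, which is bounded once $\lambda$ is bounded away from $0$. Near $0$, the objective tends to $\min_{S_+}r_i=0$ and is dominated.
\item[(ii)] Use a grid of $\widetilde{\cO}(1/\epsilon)$ points, or better, exploit the $\cO(N^2)$ combinatorial structure. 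The candidate optimal $S$ as a function of $(\lambda,t)$ ranges over at most $\cO(N^2)$ ``top-$K$ by a pairwise-crossing order'' sets. Enumerate these $\cO(N^2)$ candidate sets. For each, solve the concave one-dimensional dual by bisection in $\cO(\log(1/\epsilon))$ evaluations, then pick the best.
\end{itemize}
Showing that this candidate family contains a maximizer is the step I expect to be hardest. I would argue it by noting that for fixed $(\lambda,t)$ the optimal set is the top-$K$ set under $h_i=v_i(1-e^{-r_i/\lambda}-t)$. As the parameters vary, the order changes only at pairwise crossings. Provided each pair crosses $\cO(1)$ times (a sign-pattern argument on $e^{-r_i/\lambda}$ differences), the family has size $\cO(N^2)$.

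\emph{Example~\ref{exp: new}.} Use the equivalent form \eqref{eq: conditioned robust revenue formulation}. Writing the ratio constraint $\sum_S p_jr_j\ge t\sum_{S_+}p_j$ and applying KL duality to the linear functional $\sum_j p_j(r_j\mathbf 1_{j\in S}-t\mathbf 1_{j\in S_+})$ under $\bp_0$ reduces the check ``$\widetilde R_{\rho_0}(S)\ge t$'' to
\begin{align*}
\sup_{\lambda}\Big\{-\lambda\log\sum_j p_{0,j}e^{-c_j(t)/\lambda}-\lambda\rho_0\Big\}\ge0,
\end{align*}
where $c_j(t)=r_j-t$ for $j\in S$, $c_0=-t$, and $c_j=0$ for $j\notin S_+$. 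For fixed $(t,\lambda)$, this is separable: choose the $K$ items with largest $p_{0,j}(1-e^{-(r_j-t)/\lambda})$. Bisecting on $t$ with the same $\cO(N^2)$ enumeration over $\lambda$-induced orderings yields Algorithm~\ref{alg: computation new 1} with $\widetilde{\cO}(N^2)$ evaluations.
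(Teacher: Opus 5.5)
The gap is in your treatment of Example~\ref{exp: jin}, specifically the outer search over $\lambda$, which you correctly flag as the crux. Option~(i), the Lipschitz grid, costs $\mathrm{poly}(1/\epsilon)$ evaluations, but the proposition allows only polylogarithmic dependence on $1/\epsilon$. Option~(ii) rests on the claim that the top-$K$ sets under $h_i(\lambda,t)=v_i(1-e^{-r_i/\lambda}-t)$ number only $\mathcal{O}(N^2)$ as $(\lambda,t)$ ranges over a \emph{two}-dimensional region. Your justification is that each pair swaps order $\mathcal{O}(1)$ times. That bounds the number of order changes along a one-parameter path only. In the $(\lambda,t)$-plane, the loci $\{h_i=h_j\}$ and $\{h_i=0\}$ are $\Theta(N^2)$ curves, and an arrangement of that many curves can have cells of order $N^4$. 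Nothing in your sketch shows that only $\mathcal{O}(N^2)$ distinct top-$K$ sets occur. You also cannot restrict to the path $t=t^\dagger(\lambda)$ traced by the inner MNL optimum: that path is not known in advance, and you have not shown that it crosses each $\{h_i=h_j\}$ only $\mathcal{O}(1)$ times. As written, neither branch gives $\widetilde{\mathcal{O}}(N^2)$.

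The repair is the device you already use for Example~\ref{exp: new}: put the \emph{revenue level} $t$ in the outer loop. For $\rho>0$ and $t>0$, $R_\rho(S;\bv)\ge t$ holds if and only if $\sum_{i\in S_+}v_i\bigl(e^{\rho+(t-r_i)/\lambda}-1\bigr)\le 0$ for some $\lambda\in(0,r_{\max}/\rho]$. This condition is monotone in $t$, so bisection on $t$ is valid, and it is separable in $S$ for fixed $(t,\lambda)$. The case analysis of Lemma~\ref{lem: intersection} shows that, for fixed $t$, the curves $\lambda\mapsto v_i(e^{\rho+(t-r_i)/\lambda}-1)$ with $r_i>t$ cross pairwise at most once. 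This fits your picture: at fixed $\lambda$ the two parametrizations are related by $1-t_{\mathrm{yours}}=e^{-\rho-t/\lambda}$, so fixing $t$ cuts your plane down to a single curve. Alternatively, simply cite \cite{jin2022distributionally}, Section~3.3, for the whole of Example~\ref{exp: jin}, which is all the paper does.

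Your Example~\ref{exp: new} argument follows the paper's route. After multiplying by $\sum_{j\in[N]_+}v_j$, your KL-dual check is exactly $G(t)\le-(1-e^{-\rho_0})\sum_{j\in[N]_+}v_j$. However, you assert several facts the paper actually proves:
\begin{itemize}
\item the at-most-one-crossing property of the curves $g_i(t,\cdot)$ (Lemma~\ref{lem: intersection});
\item the quasi-convexity of $\lambda\mapsto\sum_{i\in S_+}g_i(t,\lambda)$ (also Lemma~\ref{lem: intersection});
\item that inexact inner minimization does not derail the bisection on $t$ (Theorem~\ref{thm: computation new}, via $\epsilon_G=v_0\epsilon/(4B(\emptyset,\bv))$ and $G(t)-G(t-\epsilon/4)\ge\epsilon_G$).
\end{itemize}
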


\begin{proof}[Proof of Proposition~\ref{prop:_optimal_set_general}]
    Please refer to Appendix~\ref{subsec:_proof_optimal_set_general} for a detailed proof. 
\end{proof}

Equipped with the propositions in this section, we are able to claim an oracle that can return an optimal robust assortment given any nominal choice model for both Example~\ref{exp: jin} and Example~\ref{exp: new}, which would be utilized as a subroutine in the coming section for data-driven algorithm design.

\section{Learning an Optimal Robust Assortment from Data}\label{sec:_algorithms}

In this section, we present the algorithms to solve the data-driven robust assortment optimization problem (Section~\ref{subsec: data driven}). 
We first propose a unified algorithm framework for this problem in Section~\ref{subsec: unified-framework}.
In Sections~\ref{subsec: algorithm jin} and~\ref{subsec: algorithm new}, we specify the unified algorithm to solve the two instances of the robust assortment optimization framework introduced in Section~\ref{subsec: setup}, namely the examples of constant robust set size (Example~\ref{exp: jin}) and varying robust set size (Example~\ref{exp: new}).
Later in Section~\ref{sec: theory} we establish the finite sample suboptimality analysis.

\subsection{A Unified Algorithm Design: Pessimistic Robust Rank-Breaking}\label{subsec: unified-framework}

To address the data-driven assortment optimization problem (Section~\ref{subsec: data driven}), we propose Pessimistic Robust Rank-Breaking (PR$^2$B).
It (i) uses rank-breaking technique \citep{saha2019active} to estimate the nominal choice model, and (ii) solves the optimal robust assortment based on the idea of ``double pessimism'' \citep{blanchet2023double}.
Specifically, ``double pessimism'' represents the principle of performing nested pessimistic estimation in the face of two sources of uncertainty, namely, the uncertainty stems from finite data and the uncertainty in the choice model.
We detail these two algorithmic components in the sequel.

\paragraph{Rank-breaking estimation of the nominal choice model.} 
The first step of the algorithm is to estimate the nominal choice model, i.e., the attraction parameters $\{v_j\}_{j\in[N]}$ of the nominal model that generates the data. 
We apply the technique of rank-breaking, which is based upon the idea of converting ranking data into independent pairwise comparisons. 
In the context of the MNL model, this utilizes the key insights that the conditional probability $p_j:=\PP(i_k=j|i_k\in\{0,j\}) = v_j/(1+v_j)$ can be estimated from observational data, which can in turn induces estimates of $v_j$. 
Mathematically, we define 
\begin{align}
    \widehat p_j = \frac{\tau_j}{\tau_{j,0}}, \quad \tau_j = \sum_{k=1}^{n}\mathbf{1}\{i_k = j\},\quad \tau_{j,0} = \sum_{k=1}^{n}\mathbf{1}\big\{i_k \in\{0,j\},j\in S_k\big\}.\label{eq: tau}
\end{align}
One key observation is that such an estimation framework treats each unknown $p_j$ (and thus $v_j$) individually. 
Effective estimation of each $p_j$ only requires that the dataset contains sufficiently many assortments involving this specific item $j$, regardless of other items.
Our theory in Section~\ref{sec: theory} shows that it enables PR$^2$B to utilize the data in the most efficient way: PR$^2$B requires a minimal coverage assumption on the data. 
The minimal assumption means without it sample-efficient learning of the optimal robust assortment is proven impossible.

\paragraph{Pessimistic robust assortment optimization: a double pessimism approach.} 
With the estimate of the nominal choice model, we solve the desired assortment via a double pessimism approach \citep{blanchet2023double}. 
Namely, we behave pessimistically in the face of two sources of uncertainty: (i) the statistical uncertainty in estimation the nominal model using finite data and (ii) the epistemic uncertainty about the choice probability.
This is demonstrated effective and efficient in the context of offline policy learning with direct reward feedback \citep{blanchet2023double, shi2024distributionally}. 
In the context of assortment optimization, following the objective of \cite{blanchet2023double}, we first propose the following abstract target: 
\begin{align}
    \widetilde{S}:=\argmax_{S\subset [N], |S|\leq K} \,\,\inf_{\widetilde{\mathbb{P}}(\cdot|S)\in \cP_{n}(\PP(\cdot|S))}\,\,\inf_{D_{\mathrm{KL}}(\mathbb{Q}_{S_+}\|\widetilde{\mathbb{P}}(\cdot|S))\leq \rho(S;\tilde\PP(\cdot|S))} \Big\{R(S;\QQ_{S+})\Big\},\label{eq: double pessimism}
\end{align}
where $\cP_n(\PP(\cdot|S))$ denotes any valid confidence region\footnote{By valid we mean that it should contain the true nominal choice probability.} of the nominal choice probability given the assortment $S$ constructed from finite data.
Intuitively, the first infimum in \eqref{eq: double pessimism} captures the statistical uncertainty in estimating the nominal choice model from finite data, while the second infimum captures the epistemic uncertainty in the choice model itself.

However, despite the potential effectiveness of \eqref{eq: double pessimism} in terms of statistical efficiency \citep{blanchet2023double}, it remains unknown how to computationally efficiently solve such a target.
To address such a challenge, we propose the following practical version of the target \eqref{eq: double pessimism},
\begin{align}
    \widehat S:=\argmax_{S\subseteq[N],|S|\leq K} \inf_{D_{\mathrm{KL}}(\mathbb{Q}_{S_+}\|\PP_{\bv^{\mathrm{LCB}}}(\cdot|S))\leq \rho(S;\tilde{\PP}(\cdot|S))} \Big\{R(S;\QQ_{S+})\Big\},\label{eq: object}
\end{align}
where $\PP_{\bv^{\mathrm{LCB}}}$ is the MNL model induced by a pessimistic attraction parameter estimate $\bv^{\mathrm{LCB}}$, defined as
\begin{align}
    v_0:=1,\quad v^{\mathrm{LCB}}_j:= \frac{p_j^{\mathrm{LCB}}}{1-p_j^{\mathrm{LCB}}},\quad p_j^{\mathrm{LCB}}:= \max\left\{0, \widehat p_j - \sqrt{\frac{2\widehat p_j(1-\widehat p_j)\log(1/\delta)}{\tau_{j,0}}} - \frac{\log(1/\delta)}{\tau_{j,0}}\right\},\quad \forall j\in[N].
\end{align}
Here $\{(p_j,\tau_j,\tau_{j,0})\}_{j\in[N]}$ is obtained from the first step of the algorithm.
In other words, the new objective \eqref{eq: object} treats $\PP_{\bv^{\mathrm{LCB}}}$ as the true nominal model and solves the optimal robust assortment associated with it, which is computationally tractable (see Section~\ref{subsec: computation}). 

We now explain the insights why \eqref{eq: object} is a good surrogate of \eqref{eq: double pessimism}. 
Firstly, with concentration arguments, one can conclude that $\bv^{\mathrm{LCB}}$ does form a pessimistic estimation of the unknown nominal model parameter $\bv$ in the sense that with high probability $v^{\mathrm{LCB}}_j\leq v_j$ for each $j\in[N]_+$ (Lemma~\ref{lem:_concentration}).
Then most importantly, we are able to prove that for the two examples we consider, the robust expected revenue with the pessimistically estimated attraction parameter also becomes a pessimistic estimate of the robust revenue with the true nominal attraction parameter in a proper sense (Lemmas~\ref{lem:_monotonicity_1} and \ref{lem:_monotonicity_2}).
This is non-trivial since the dependence of the robust revenue w.r.t. the attraction parameters is highly non-linear.
But thanks to these new results, by treating the pessimistic parameter estimation as the true parameter, we can effectively approximate the first infimum in the double pessimism objective \eqref{eq: double pessimism}.

\subsection{Algorithm Design: Constant Robust Set Size (Example~\ref{exp: jin})}\label{subsec: algorithm jin}

\begin{algorithm}[h]
\caption{Pessimistic Robust Rank-Breaking: Constant Robust Set Size (PR$^2$B-C)} \label{alg: jin}
\begin{algorithmic}[1]
\STATE \textbf{Inputs:} Item set $[N]$, cardinality constraint $K$, offline dataset $\{(i_k, S_k)\}_{k = 1}^n$, failure probability $\delta$, constant robust set size $\rho\geq 0$.
 \STATE \textcolor{blue!55}{\texttt{// Construct pessimistic estimation of the nominal choice model parameters.}}
\FOR{item $j =1,\cdots,N$}
    \STATE Set $\widehat{p}_j \leftarrow \tau_j / \tau_{j0}$, where $\tau_{j,0} \leftarrow \sum_{k=1}^n \mathbf{1}\{i_k \in \{0, j\}, j \in S_k\}$, $\tau_j \leftarrow \sum_{k=1}^n \mathbf{1}\{i_k = j\}$
    \STATE Set $p_j^{\text{LCB}} \leftarrow (\widehat{p}_j - \sqrt{2\widehat{p}_j(1-\widehat{p}_j)\log(1/\delta)/\tau_{j0}} - \log(1/\delta)/\tau_{j0})_+$\\ 
    \STATE Set $v_j^{\text{LCB}} \leftarrow p_j^{\text{LCB}} / (1 - p_j^{\text{LCB}})$
\ENDFOR
 \STATE \textcolor{blue!55}{\texttt{// Pessimistic robust assortment optimization.}}
\STATE Solve $\widehat{S}:= \arg\max_{S\subset[N], \lvert S \rvert \leq K} R_{\rho}(S;{\bm v}^{\mathrm{LCB}})$
\STATE \textbf{Output:} assortment $\widehat{S}$
\end{algorithmic}
\end{algorithm}

To address the data-driven assortment optimization problem with constant robust set size (Example~\ref{exp: jin}), we adopt the general framework in Section~\ref{subsec: unified-framework} to  propose Pessimistic Robust Rank-Breaking (PR$^2$B-C), with details in Algorithm~\ref{alg: jin}. Under this formulation, the pessimistic robust assortment optimization step (line~5 of Algorithm~\ref{alg: jin}), can be solved in polynomial time, as shown in Proposition~\ref{prop:_optimal_set_general}.

\subsection{Algorithm Design: Varying Robust Set Size (Example~\ref{exp: new})}\label{subsec: algorithm new}

To address the data-driven assortment optimization problem with varying robust set size in Example~\ref{exp: new}, we adopt the double pessimism framework in Section~\ref{subsec: unified-framework} to propose another variant of the Pessimistic Robust Rank-Breaking algorithm named PR$^2$B-V, with the details in Algorithm~\ref{alg: new}.
The PR$^2$B-V algorithm follows a similar design to PR$^2$B-C in model estimation, but incorporates customized designs to handle the robust assortment optimization problem with varying robust set size.

\paragraph{Known total attraction.} Before we introduce the details, we first make the following assumption, which assumes that the summation of the attraction parameters of the nominal MNL model is known to the learner.

\begin{assumption}[Known total attraction]\label{ass: known total attraction}
    We assume that for the nominal choice model $\bv$ that generates the observational data, the total attraction of all items are known, i.e., $v_{\mathrm{tot}} := \sum_{i\in[N]}v_i$ is given.
\end{assumption}

We explain the known of total attraction as follows. 
Such an assumption plays a key role in enabling a ``partial coverage'' style minimal data requirement we hope for under the varying robust set setup (Example~\ref{exp: new}).
Intuitively, as we explain in Example~\ref{exp: new}, our formulation in Example~\ref{exp: new} can be understood as robustifying the prior choice distribution of the items, see \eqref{eq: conditioned robust revenue formulation}.
Since the prior distribution on the items depend on all the attraction parameters, essentially only observing the items associated the optimal robust assortment is thus not enough to estimate the probabilities for those items, causing an insufficiency of ``partial coverage'' data.
However, ideally one would still like the minimal data requirement could only depend on the items within the optimal robust assortment. 
To bypass such an impossibility, we impose such an assumption on the knowledge of the learner. 
In practice this is also a reasonable assumption since the learner could easily calibrate the relative attraction of the whole item set compared to the attraction of the non-purchasing behavior $v_0=1$.

Under Assumption~\ref{ass: known total attraction}, we consider the parameter $\rho_0$ in robust set size function $\rho(\cdot;\cdot)$ satisfying 
\begin{align}
0\leq \rho_0 < \log\left(1+\frac{1}{v_{\mathrm{tot}}}\right),\quad \text{$v_{\mathrm{tot}}$ defined in Assumption~\ref{ass: known total attraction}}.
\end{align}
This range of $r_0$ is obtained from the discussions in Example~\ref{exp: new} which ensures that the robust assortment optimization problem does not become trivial.
Also, under Assumption~\ref{ass: known total attraction}, the robust expected revenue associated with the nominal choice model $\bv$ can be equivalently represented using the following dual form. 
Specifically, defining the function $\widetilde{H}(S,\lambda,\bv): [N]\times \mathbb{R}_{\geq 0}\times\RR^{(N+1)}\mapsto\RR$ as 
\begin{align}
    \widetilde{H}(S,\lambda;\bv):=-\lambda\cdot\log\left(\frac{\sum_{i\in S_+}v_i\cdot\exp(-r_i/\lambda)}{\sum_{i\in S_+}v_i}\right) - \lambda\cdot \rho_0 +\lambda\cdot  \log\left(e^{\rho_0} - \frac{(e^{\rho_0} - 1)\cdot (1+v_{\mathrm{tot}})}{\sum_{i\in S_+}v_i}\right),\label{eq: h tilde}
\end{align}
then the robust expected revenue under the nominal choice model $\bv$ equals to 
\begin{align}
    R_{\rho(\cdot,\cdot)}(S;\bv) =\inf_{\substack{\mathbb{Q}_{S_+}\in\cP(S_+),\\D_{\mathrm{KL}}(\mathbb{Q}_{S_+}\|\mathbb{P}(\cdot|S))\leq \rho(S;\mathbb{P}(\cdot|S))}} \Big\{R(S;\QQ_{S+})\Big\}= \sup_{\lambda \geq 0} \left\{\widetilde{H}(S,\lambda;\bv)\right\},\quad \forall S\subseteq[N].\label{eq: dual new}
\end{align}
Notice that under the robust parameter constraint $0\leq \rho_0 < \log(1+1/v_{\mathrm{tot}})$, the function $\widetilde{H}$ is always finite for any input $(S,\lambda,\bv)\in [N]\times \mathbb{R}_{\geq 0}\times\RR^{(N+1)}$.

\paragraph{Algorithm details.} 
Now we are ready to introduce the algorithm. Similar to Algorithm~\ref{alg: jin}, Algorithm~\ref{alg: new} first establishes a pessimistic estimation $\bv^{\mathrm{LCB}}$ of the attraction parameters $\bv$ of the nominal choice model via the technique of rank-breaking.
We omit the details here and refer to Section~\ref{subsec: unified-framework}.

Then with the pessimistic estimation $\bv^{\mathrm{LCB}}$, we solve the following pessimistic robust assortment optimization problem to obtain $\widehat{S}$,
\begin{align}
    \widehat S:=\argmax_{S\subseteq[N],|S|\leq K} \,\,\sup_{\lambda\geq 0} \left\{\widetilde{H}(S,\lambda;\bv^{\mathrm{LCB}})\right\},\quad \text{with $\widetilde{H}$ defined in \eqref{eq: h tilde}.}\label{eq: object new}
\end{align}
To understand the above objective, we note that it also follows the idea of double pessimism in Section~\ref{subsec: unified-framework}. 
More specifically, as we prove in Lemma~\ref{lem:_monotonicity_2}, for the pessimistic attraction estimate $\bv^{\mathrm{LCB}}$, the objective we optimize, i.e.,  $\sup_{\lambda\geq 0}\widetilde{H}(\widehat{S},\lambda;\bv^{\mathrm{LCB}})$, also forms a valid lower bound of the ground truth $\sup_{\lambda\geq 0}\widetilde{H}(\widehat{S},\lambda;\bv) = R_{\rho(\cdot;\cdot)}(\widehat{S};\bv)$, despite the high non-linearity in the dependence on $\bv^{\mathrm{LCB}}$.
This then makes the objective \eqref{eq: object new} approximately solve the doubly pessimistic objective \eqref{eq: double pessimism} where the inner infimum is now performed with a varying robust set size $\rho(\cdot;\cdot)$.

Finally, we conclude by discussing the computation of \eqref{eq: object new}. 
One can notice that the optimization problem in \eqref{eq: object new} is not the same as the original robust assortment optimization problem given a nominal choice model. 
But as we show in Remark~\ref{rmk: computation}, our proposed computation algorithm for Example~\ref{exp: new} can be directly used to solve \eqref{eq: object new}. 
Therefore, the objective \eqref{eq: object new} can still be solved with complexity $\widetilde{\cO}(N^2)$ in the worst case.

\begin{algorithm}[t]
\caption{Pessimistic Robust Rank-Breaking: Varying Robust Set Size (PR$^2$B-V)} \label{alg: new}
\begin{algorithmic}[1]
\STATE \textbf{Inputs:} Item set $[N]$, cardinality constraint $K$, offline dataset $\{(i_k, S_k)\}_{k = 1}^n$, failure probability $\delta$, robust set size function $\rho(\cdot,\cdot)$ defined in Example~\ref{exp: new}, total nominal attraction $v_{\mathrm{tot}}$.
 \STATE \textcolor{blue!55}{\texttt{// Construct pessimistic estimation of the nominal choice model parameters.}}
\FOR{item $j =1,\cdots,N$}
    \STATE Set $\widehat{p}_j \leftarrow \tau_j / \tau_{j0}$, where $\tau_{j,0} \leftarrow \sum_{k=1}^n \mathbf{1}\{i_k \in \{0, j\}, j \in S_k\}$, $\tau_j \leftarrow \sum_{k=1}^n \mathbf{1}\{i_k = j\}$
    \STATE Set $p_j^{\text{LCB}} \leftarrow (\widehat{p}_j - \sqrt{2\widehat{p}_j(1-\widehat{p}_j)\log(1/\delta)/\tau_{j0}} - \log(1/\delta)/\tau_{j0})_+$\\ 
    \STATE Set $v_j^{\text{LCB}} \leftarrow p_j^{\text{LCB}} / (1 - p_j^{\text{LCB}})$
\ENDFOR
 \STATE \textcolor{blue!55}{\texttt{// Pessimistic robust assortment optimization.}}
\STATE Solve $\widehat{S}:= \arg\max_{S\subset[N], \lvert S \rvert \leq K} \sup_{\lambda\geq 0}\widetilde{H}(S,\lambda;{\bm v}^{\mathrm{LCB}})$ with $\widetilde{H}$ defined in \eqref{eq: h tilde}.
\STATE \textbf{Output:} assortment $\widehat{S}$
\end{algorithmic}
\end{algorithm}

\section{Theoretical Analysis of Pessimistic Robust Rank-Breaking}\label{sec: theory}

In this section, we establish the theoretical guarantees for the algorithms proposed in Section~\ref{sec:_algorithms}.
The main results are suboptimality upper bounds.
We also establish minimax lower bounds to demonstrate the tightness of our results.
We analyze Algorithm~\ref{alg: jin} in Section~\ref{subsec: theory jin} and Algorithm~\ref{alg: new} in Section~\ref{subsec: theory new}.

\subsection{Theoretical Results for Algorithm~\ref{alg: jin}}\label{subsec: theory jin}

\subsubsection{Suboptimality Upper Bounds}
Our first result is the suboptimality upper bound.
It first gives the upper bound for the general non-uniform revenue case, and then provides a tighter bound for the special case of uniform revenue.

\begin{theorem}[Suboptimality of PR$^2$B-C (Algorithm~\ref{alg: jin})]\label{thm:_algorithm_design_1}
    Conditioning on the observed assortments $\{S_k\}_{k=1}^n$, suppose that 
    \begin{align}
        n_i:= \sum_{k=1}^n\mathbf{1}\{i \in S_k\}\geq 138\cdot\max\left\{1,\frac{256}{138v_i}\right\}\cdot (1+v_{\max})(1+Kv_{\max})\log(3N/\delta),\quad \forall i\in S^{\star},\label{eq: condition_n_i_main}
    \end{align}
    where $S^{\star}$ is the optimal robust assortment \eqref{eq: optimal robust assortment}.
    Then with probability at least $1-2\delta$, the suboptimality of the robust expected revenue of the assortment $\widehat S$ output by Algorithm~\ref{alg: jin} satisfies
    \begin{align}
        \mathrm{SubOpt}_{\rho}(\widehat{S};\bv)\leq  r_{\max} \cdot (1+v_{\max})\cdot K\cdot \left( 32  \sqrt{\frac{\log(N/\delta)}{\min_{i\in S^{\star}} n_i}} + \frac{200 (1+v_{\max})K\log(N/\delta)}{\min_{i\in S^{\star}} n_i}\right).
    \end{align}
    If the problem instance further satisfies $r_j=r_{\max}$ for each $j\in[N]$, then the following tighter bound holds,
    \begin{align}
        \mathrm{SubOpt}_{\rho}(\widehat{S};\bv)\leq  r_{\max} \cdot (1+v_{\max})\cdot\sqrt{K}\cdot \left( 32  \sqrt{\frac{\log(N/\delta)}{\min_{i\in S^{\star}} n_i}} + \frac{200 \sqrt{K}\log(N/\delta)}{\min_{i\in S^{\star}} n_i}\right).
    \end{align}
\end{theorem}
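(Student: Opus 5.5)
The argument follows the standard pessimism template. We split the suboptimality as
\begin{align*}
\mathrm{SubOpt}_{\rho}(\widehat S;\bv) = \big[R_\rho(S^\star;\bv)-R_\rho(S^\star;\bv^{\mathrm{LCB}})\big] + \big[R_\rho(S^\star;\bv^{\mathrm{LCB}})-R_\rho(\widehat S;\bv^{\mathrm{LCB}})\big] + \big[R_\rho(\widehat S;\bv^{\mathrm{LCB}})-R_\rho(\widehat S;\bv)\big].
\end{align*}
The middle term is nonpositive by the definition of $\widehat S$ in Algorithm~\ref{alg: jin}. The last term is nonpositive on the good event that $v^{\mathrm{LCB}}_j\le v_j$ for all $j$, which holds by Lemma~\ref{lem:_concentration}, combined with the monotonicity Lemma~\ref{lem:_monotonicity_1}: the robust revenue $R_\rho(S;\cdot)$ is nondecreasing in each attraction parameter. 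Only the first term remains. It involves the fixed set $S^\star$ only, and this is why coverage is needed only for items $i\in S^\star$.

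To bound the first term, we use the dual form from Proposition~\ref{prop:_dual}: $R_\rho(S;\bv)=\sup_{\lambda\in[0,r_{\max}/\rho]}H(S,\lambda;\bv)$, where
\[
H(S,\lambda;\bv)=-\lambda\log\frac{\sum_{i\in S_+}v_ie^{-r_i/\lambda}}{\sum_{i\in S_+}v_i}-\lambda\rho .
\]
Let $\lambda^\star$ be the maximizer for $\bv$. Then $R_\rho(S^\star;\bv)-R_\rho(S^\star;\bv^{\mathrm{LCB}})\le H(S^\star,\lambda^\star;\bv)-H(S^\star,\lambda^\star;\bv^{\mathrm{LCB}})$. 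Write $w_i=e^{-r_i/\lambda^\star}\in(0,1]$, with $w_0=1$. The difference then equals
\[
\lambda^\star\Big[\log\frac{\sum v^{\mathrm{L}}_iw_i}{\sum v^{\mathrm{L}}_i}-\log\frac{\sum v_iw_i}{\sum v_i}\Big],
\]
which we bound by a mean-value argument in $\bv$. We have $\partial_{v_j}$ of $-\lambda\log(\sum v_iw_i/\sum v_i)$ equal to $\lambda\big(\tfrac{1}{V}-\tfrac{w_j}{W}\big)$, where $V=\sum v_i$ and $W=\sum v_iw_i$. Since $W\ge w_{\min}V$ and $-\lambda\log w_j=r_j$, the key estimate to obtain is $\lambda(1/V-w_j/W)\le \lambda(1-w_j)/(W)\cdot(\ldots)$, which should be at most about $r_{\max}/V\le r_{\max}$. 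One route: $\lambda\big(\tfrac1V-\tfrac{w_j}{W}\big)\le \tfrac{\lambda}{V}\log\tfrac{W/V}{w_j}\cdot$const. If the $\lambda$ factor causes trouble (e.g.\ as $\rho\to 0$), this is where the hidden $\rho$-dependence in $\widetilde{\cO}_\rho$ enters. I would first try to show that each partial derivative is bounded by $r_{\max}/(1+\sum_{i\in S^\star}v_i)\le r_{\max}$, using that the dual objective is a log-sum-exp whose gradient is a tilted distribution. Granting this, the first term is at most $r_{\max}\sum_{j\in S^\star}(v_j-v_j^{\mathrm{LCB}})$, up to a factor from the intermediate point.

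Next we use Lemma~\ref{lem:_concentration} and Bernstein to get $v_j-v_j^{\mathrm{LCB}}\lesssim (1+v_j)^2\big(\sqrt{p_j(1-p_j)\log(N/\delta)/\tau_{j,0}}+\log(N/\delta)/\tau_{j,0}\big)$, using $dv/dp=(1+v)^2$ together with condition \eqref{eq: condition_n_i_main} to keep $p^{\mathrm{LCB}}$ away from $1$. We then lower-bound $\tau_{j,0}$: each appearance of $j$ in $S_k$ yields $i_k\in\{0,j\}$ with probability $(1+v_j)/\sum_{S_{k+}}v\ge 1/(1+Kv_{\max})$, so by a multiplicative Chernoff bound (second failure event $\delta$), $\tau_{j,0}\gtrsim n_j/(1+Kv_{\max})$. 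The condition on $n_i$ is exactly what makes this Chernoff step and the linearization valid. Combining and summing over $|S^\star|\le K$ items gives $K\sqrt{\cdot/\min n_i}$. The precise factors $(1+v_{\max})$ come from $(1+v_j)^2\sqrt{p_j(1-p_j)}\le(1+v_j)^{3/2}v_j^{1/2}$, divided against the normalization $1/V$ from the derivative.

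For the uniform-revenue case, $w_j=w$ for every $j\ge1$, so $H$ depends on $\bv$ only through the total $\sum_{j\in S^\star}v_j$. We need to bound $\sum_j(v_j-v_j^{\mathrm{LCB}})/(1+\sum_{S^\star}v)$ more carefully: by Cauchy--Schwarz, $\sum_j\sqrt{v_j(1+v_j)^{\cdots}}\le\sqrt{K\sum v_j}\cdot(\ldots)$, and dividing by $1+\sum v_j$ gives $\sqrt K$ instead of $K$. The main obstacle I expect is the derivative bound uniform in $\lambda^\star$ (in particular controlling $\lambda/W$ when $\lambda$ is small or the revenue tilting is extreme), and checking that it yields the clean $r_{\max}(1+v_{\max})$ factor without $\rho$-blowup.
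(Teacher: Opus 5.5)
Your decomposition is the paper's. Pessimism (Lemma~\ref{lem:_concentration}) plus the optimality of $\widehat S$ leaves only $R_\rho(S^\star;\bv)-R_\rho(S^\star;\bv^{\mathrm{LCB}})$. For this remaining term you take a genuinely different route, and the step you flag as the main obstacle closes in one line. Write $W=\sum_{i\in S^\star_+}v_iw_i$ and $V=\sum_{i\in S^\star_+}v_i$; then $W\le V$. Using $1-x\le-\log x$, for every $\lambda>0$,
\[
\partial_{v_j}H(S^\star,\lambda;\bv)=\frac{\lambda}{V}\Bigl(1-\frac{w_jV}{W}\Bigr)\le\frac{\lambda}{V}\log\frac{W}{w_jV}=\frac{1}{V}\Bigl(r_j+\lambda\log\frac{W}{V}\Bigr)\le\frac{r_{\max}}{V}.
\]
On the pessimism event $\bv-\bv^{\mathrm{LCB}}\ge0$, so this one-sided bound suffices; the partials can be very negative for small $\lambda$, but that does not matter. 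Along the segment from $\bv^{\mathrm{LCB}}$ to $\bv$, $V$ only grows. Integrating gives, uniformly in $\lambda$,
\[
H(S^\star,\lambda;\bv)-H(S^\star,\lambda;\bv^{\mathrm{LCB}})\le r_{\max}\frac{\sum_{j\in S^\star}(v_j-v_j^{\mathrm{LCB}})}{1+\sum_{j\in S^\star}v_j^{\mathrm{LCB}}}.
\]
So you need neither the maximizer $\lambda^\star$ (use $\sup f-\sup g\le\sup(f-g)$) nor any control on $\rho$, including $\rho\to0$.

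The paper proves the same inequality with $\mathsf{C}(\rho)\le 2/\log3$ in place of $1$. It linearizes the logarithm inside the supremum over $\lambda\in[0,r_{\max}/\rho]$ and splits at $\lambda=c\,r_{\max}$. For small $\lambda$ it uses the trivial bound $\lambda(V-V^{\mathrm{LCB}})/V^{\mathrm{LCB}}$; for large $\lambda$ it uses the monotonicity of $\lambda(e^{r_{\max}/\lambda}-1)$ (Lemma~\ref{lem:_function_upper_bound}). Your gradient bound is shorter and has a better constant. The paper's split additionally shows the $1/\rho$ decay for large $\rho$, which you also get from $\lambda\le r_{\max}/\rho$. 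Both routes then need $v_j\le 2v_j^{\mathrm{LCB}}$ (Lemma~\ref{lem:_concentration_2}, where the $1/v_i$ part of the burn-in is used) to replace the denominator by $(1+v(S^\star))/2$.

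Some things need fixing. First, Lemma~\ref{lem:_monotonicity_1} does not say $R_\rho(S;\cdot)$ is nondecreasing for every $S$; that is false, since raising the attraction of a low-revenue item lowers revenue. It holds only at the robust-optimal assortment of the smaller parameter, via Lemma~\ref{lem: optimal condition jin}. Your use is valid precisely because $\widehat S$ is the optimizer under $\bv^{\mathrm{LCB}}\le\bv$.

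Second, and this is a genuine gap, you misidentify where the $\sqrt K$ gain in the uniform case comes from. Dividing by $1+v(S^\star)$ and applying Cauchy--Schwarz is already how the non-uniform bound $(1+v_{\max})K$ is obtained, together with $1+Kv_{\max}\le K(1+v_{\max})$. What is special about uniform revenues is Proposition~\ref{prop:_optimal_set_uniform}: $S^\star$ consists of the top-$K$ attractions. Hence every observed assortment of size at most $K$ has $v(S_k)\le v(S^\star)$, and so $\mathbb{E}[\tau_{j,0}]\ge n_j(1+v_j)/(1+v(S^\star))$. It is this $1+v(S^\star)$, not $1+Kv_{\max}$, that cancels against the normalization after Cauchy--Schwarz. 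With your lower bound $\tau_{j,0}\gtrsim n_j/(1+Kv_{\max})$, the uniform bound stays of order $K$. For example, take $S^\star$ with one item of attraction $1$ and $K-1$ items of attraction $2/K$: then $1+Kv_{\max}\approx K$ while $1+v(S^\star)\approx4$.

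Finally, to land on the stated factor $1+v_{\max}$, keep the exact identity $(1+v_j)^2\sqrt{p_j(1-p_j)}=(1+v_j)\sqrt{v_j}$ and the factor $1+v_j$ in $\mathbb{E}[\tau_{j,0}]$. Your looser versions together cost an extra factor of $1+v_{\max}$.
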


\begin{proof}[Proof of Theorem~\ref{thm:_algorithm_design_1}]
    Both of the two bounds are corollaries of a more general result in Theorem~\ref{thm:_algorithm_design_1_general} (see Appendix~\ref{subsec:_general_upper_bound}), based on which we can obtain Theorem~\ref{thm:_algorithm_design_1} through additional proofs in Appendix~\ref{subsec:_proof_algorithm_design_1}.
    We sketch the proof and highlight the technique novelties in Section~\ref{sec: sketch}.
\end{proof}

We make following four key remarks for understanding and interpreting our Theorem~\ref{thm:_algorithm_design_1}.

\emph{Firstly,} and most importantly, the leading term of the suboptimality gap scales with 
\begin{align}
    \mathrm{SubOpt}_{\rho}(\widehat{S};\bv)\lesssim \widetilde{\cO}\left(\sqrt{\frac{1}{\min_{i\in S^{\star}}n_i}}\right),\quad\text{where}\,\,n_i = \sum_{k=1}^n\mathbf{1}\{i\in S_k\},
\end{align}
in terms of the number of observational data, where $S^{\star}$ is the optimal robust assortment.
In order words, in order to enable efficient learning of the optimal robust assortment, it suffices to ensure that \emph{each single item $i$ in the optimal robust assortment} is observed for enough times across the whole dataset.
It does not require the observation of the whole optimal robust assortment $S^{\star}$ nor other items outside the optimal robust assortment.
This corresponds to the \emph{item-wise coverage condition} \citep{han2025learning} in the non-robust learning setup, and our results generalizes this condition to the robust learning problem, which we call the \emph{robust item-wise coverage condition}.
The reason that the PR$^2$B-C algorithm requires only the robust item-wise data coverage is that (i) we use the careful design of the pessimistic robust assortment optimization step guided by the spirit of double pessimism in the face of two sources of uncertainty \citep{blanchet2023double}; and (ii) we use the rank breaking based estimation of nominal choice models which decouples the estimation of each $v_i$ from other $v_j$ with $j\neq i$ from observational data.

\emph{Secondly,} as is shown by the different bounds in Theorem~\ref{thm:_algorithm_design_1}, the suboptimality of Algorithm~\ref{alg: jin} receives an $\cO(\sqrt{K})$ improvement when the problem instance is of uniform revenue.
This phenomenon coincidences with the observations in the non-robust data-driven assortment problems \citep{han2025learning} and generalizes their conclusion to the robust assortment learning problem.

\emph{Thirdly,} we note that the suboptimality upper bound of Algorithm~\ref{alg: jin} does not explode when the constant robust set size $\rho$ is approaching $0$.
This improves the dependence on the robust set size upon existing works on data-driven robust decision policy learning based upon KL robust sets, e.g., \cite{blanchet2023double, shi2024distributionally}. 
When $\rho=0$, the suboptimality gaps reduce to the performance metric for standard assortment learning problem and also recover the same bounds in \cite{han2025learning}.
Further, we remark that when $\rho$ gets relatively large, the general result Theorem~\ref{thm:_algorithm_design_1_general} shows that the suboptimality gap decays with a $\rho^{-1}$ factor. 
Since we consider constant level $\rho=\cO(1)$, we suppress such a dependence in the upper bound presented in Theorem~\ref{thm:_algorithm_design_1}.
Interested readers can refer to Theorem~\ref{thm:_algorithm_design_1_general} and its proofs.

\emph{Lastly,} we remark that there is no dependence on factors in the form of $\min_{i\in S^{\star}_+}\QQ_{S^{\star}_+}(i)^{-1}$ or $\min_{i\in S^{\star}} v_i$ in the suboptimality gap, This contrasts with that in the suboptimality analysis for robust RL in KL-RMDPs  \citep{shi2024distributionally}, which depends on the minimal inverse transition probability induced by optimal robust policy.
This is actually a bless of using the MNL structure as the nominal choice model. 
Meanwhile, we do point out that the burn-in complexity \eqref{eq: condition_n_i_main} does depend on $v_i^{-1}$, but the rate of the overall sample complexity does not depend on such quantities.

\subsubsection{Minimax Lower Bounds}
To demonstrate the optimality of Algorithm~\ref{alg: jin} for learning robust assortments with constant robust set size, we further establish the minimax lower bound for the suboptimality gap of this problem.

\begin{theorem}[Suboptimality minimax lower bound (Example~\ref{exp: jin})]\label{thm:_lower_bound_1}
    Consider the setup of robust assortment optimization problem of constant robust set size (Example~\ref{exp: jin}) with  $N$ items, $K$-constraint satisfying $N\geq 5K$, and robust set size $\rho\in [\underline{c}_{\rho}\cdot\log2,(1-\overline{c}_{\rho})\cdot \log 2]$ for any absolute constant $\underline{c}_{\rho},\overline{c}_{\rho}>0$, there exists an offline dataset of $K$-sized assortments $\{S_k\}_{k=1}^n$ and a class of problem instances of Example~\ref{exp: jin} (denoted by $\cV$) s.t.,
    \begin{align}
        \inf_{\pi(\cdot):(2^{[N]}\times[N])^n\mapsto2^{[N]}}\sup_{(\bv,\br)\in\cV}\mathbb{E}_{\mathbb{D}\sim \otimes_{k=1}^n\mathbb{P}_{\bv}(\cdot|S_k)}\big[\mathrm{SupOpt}_{\rho}(\pi(\mathbb{D});\bv,\br) \big]\geq \Omega\left(r_{\max}\cdot K\cdot \sqrt{\frac{1}{\min_{i\in S_{\bv}^{\star}}n_i}}\right),
    \end{align}
    where $\Omega(\cdot)$ only hides universal absolute constants.
    Moreover, when we further restrict the problem into the uniform revenue case, under the same ranges of problem parameters as before, there exists another class of problem instances of Example~\ref{exp: jin} (denoted by $\cV_{\mathrm{uni}}$) such that 
    \begin{align}
        \inf_{\pi(\cdot):(2^{[N]}\times[N])^n\mapsto2^{[N]}}\sup_{(\bv,\br)\in\cV_{\mathrm{uni}}}\mathbb{E}_{\mathbb{D}\sim \otimes_{k=1}^n\mathbb{P}_{\bv}(\cdot|S_k)}\big[\mathrm{SupOpt}_{\rho}(\pi(\mathbb{D});\bv,\br) \big]\geq \Omega\left(r_{\max}\cdot \sqrt{K}\cdot \sqrt{\frac{1}{\min_{i\in S_{\bv}^{\star}}n_i}}\right).
    \end{align}
\end{theorem}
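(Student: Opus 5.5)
The plan is to adapt the Fano/Assouad-type construction used by \cite{han2025learning} for non-robust MNL offline assortment learning, and to show that the robustification with a constant KL radius $\rho\in[\underline{c}_\rho\log 2,(1-\overline{c}_\rho)\log 2]$ only changes constants. Concretely, split $[N]$ into $K$ blocks of at least $5$ items each, which is possible since $N\geq 5K$. Fix a dataset consisting of $K$-sized assortments that cycle through the items so that every item appears in roughly $n_i\asymp nK/N$ assortments; this makes $\min_{i\in S^\star_{\bv}}n_i$ comparable across all instances. Each instance $\bv\in\cV$ picks one ``good'' item per block, indexed by a vector $\sigma$ over the blocks. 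The good item gets attraction $v_i=\bar v(1+\epsilon)$ and the others get $\bar v$, with $\bar v\asymp 1/K$ so that total attraction is $\Theta(1)$ and the no-purchase probability stays bounded away from $0$ and $1$. The perturbation size is $\epsilon\asymp \sqrt{1/\min_i n_i}$ (suitably scaled). For the general-reward class, give a high revenue $r_{\max}$ to all items, and choose revenues so that the optimal robust assortment is exactly the set of good items; for the uniform-reward class, all $r_i=r_{\max}$, and Proposition~\ref{prop:_optimal_set_uniform} directly gives that $S^\star_{\bv}$ consists of the top-$K$ attraction items, i.e.\ the good ones.

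The second step is a quantitative separation lemma: for any $S$ with $|S|\le K$, $R_\rho(S^\star_{\bv};\bv)-R_\rho(S;\bv)\gtrsim r_{\max}\,\epsilon\cdot(\#\text{blocks where }S\text{ misses the good item})/K$ in the uniform case, with an extra $\sqrt{K}$ gain in the general case. To get this I would work with the dual of Proposition~\ref{prop:_dual}: $R_\rho(S;\bv)=\sup_\lambda\{-\lambda\log \mathbb{E}_{\PP_{\bv}(\cdot|S)}e^{-r_i/\lambda}-\lambda\rho\}$. With two revenue levels ($0$ for no purchase, $r_{\max}$ for items), this reduces to a Bernoulli KL-projection. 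The robust revenue equals $r_{\max}\,q^\star(p)$, where $p$ is the nominal purchase probability and $q^\star(p)=\min\{q:\mathrm{kl}(q\|p)\le\rho\}$. The range of $\rho$ strictly below $\log 2$ guarantees $q^\star(p)>0$ and $\frac{d q^\star}{dp}$ bounded below by a constant depending on $\underline{c}_\rho,\overline{c}_\rho$, via the implicit function theorem on $\mathrm{kl}$. Hence robust gaps are within constant factors of nominal purchase-probability gaps. The nominal gap is linear in $\epsilon\bar v$ times the number of wrong blocks, which gives the uniform bound. For the general case I would follow \cite{han2025learning}: use heterogeneous revenues so that the gap in expected revenue (not merely purchase probability) scales with $K$, and treat the resulting multi-level revenue dual through a first-order expansion in $\epsilon$ around a fixed optimal $\lambda^\star$ (envelope theorem). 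The derivative of the dual objective in $v_i$ is bounded below uniformly over the parameter range.

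The third step is Assouad's lemma over the hypercube of block choices. Instances differing in one block differ in the attraction of two items by $\bar v\epsilon$. The KL divergence between the induced data laws is $\sum_k \mathrm{KL}(\PP_{\bv}(\cdot|S_k)\|\PP_{\bv'}(\cdot|S_k))\lesssim \epsilon^2\min_i n_i\cdot O(1)$, since only assortments containing those items contribute, and each contributes $O(\bar v\epsilon^2)$ relative to a choice probability of order $\bar v$, giving $O(\epsilon^2)$ per relevant observation after normalization. Choosing $\epsilon$ to make this $O(1)$ and summing the per-block separations yields the stated $\Omega(r_{\max}K/\sqrt{\min n_i})$ and $\Omega(r_{\max}\sqrt K/\sqrt{\min n_i})$ rates.

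I expect the main obstacle to be the separation lemma in the general-reward case, namely showing that the robust revenue preserves the nominal $K$-scaling gap uniformly in $\rho$. Here the worst-case $\QQ$ reweights items exponentially in $-r_i/\lambda^\star$, and I must verify that $\lambda^\star$ stays in a compact interval bounded away from $0$ for all instances in $\cV$. I would first try to prove this bound using $\rho\le(1-\overline{c}_\rho)\log 2$ together with Lemma H.7 of \cite{blanchet2023double}.
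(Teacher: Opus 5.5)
\textbf{The gap: the general-revenue bound $\Omega(r_{\max}K/\sqrt{\min_i n_i})$.}

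In your construction every item has attraction $\asymp 1/K$, so every observed assortment has total attraction $O(1)$. Plug this into Theorem~\ref{thm:_algorithm_design_1_general}, which holds for arbitrary revenues and whose burn-in condition is met in the nontrivial regime $n_{\min}\gtrsim K^2$. It shows that PR$^2$B-C already attains $\widetilde{\cO}(r_{\max}\sqrt{K/\min_i n_i})$ on your class. So no choice of heterogeneous revenues can produce the extra $\sqrt{K}$, and revenues cannot amplify the separation either, since the revenue's sensitivity to a single $v_i$ is at most of order $r_{\max}$.

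In the paper the separation has the same form $\frac{\epsilon}{18}\lambda^\star(1-e^{-r_{\max}/\lambda^\star})\Delta(S^\star,S)$ in both classes. The $\sqrt{K}$ comes from the information side, through the design of the offline data. Following \cite{han2025learning}:
\begin{itemize}
\item the candidate items $[4K]$ have attraction $1/K$ or $1/K+\epsilon$ and revenue $r_{\max}$;
\item the items $\cN^0=[N]\setminus[4K]$ are fillers with attraction $1$ and revenue $0$;
\item each observed assortment is one candidate plus $K-1$ fillers.
\end{itemize}
The fillers push the candidate's choice probability down to $\asymp 1/K^2$. Each observation then contributes KL at most $5\epsilon^2$ (Fisher information $O(1)$ rather than $O(K)$), which permits an absolute perturbation $\epsilon\asymp 1/\sqrt{n_{\min}}$ and a gap $\asymp r_{\max}K\epsilon$.

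Zero revenue is what keeps these high-attraction fillers out of $S^\star$. In the uniform class this is unavailable, because high-attraction items would be optimal. There the fillers must have attraction $1/K$, the per-observation KL becomes $5K\epsilon'^2$, and $\epsilon'\asymp 1/\sqrt{Kn_{\min}}$; this is exactly the $\sqrt{K}$ gap.

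Your Bernoulli-projection separation argument transfers verbatim to this class, because zero-revenue fillers are revenue-equivalent to no purchase. The robust revenue of any $S$ is $r_{\max}\,q^\star\big(v(S\cap[4K])/(1+v(S))\big)$. This gives $S^\star=\cN^\star$ and the reduction to $S\subseteq[4K]$ at once.

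\textbf{The uniform case.} Your route is a valid and somewhat cleaner alternative to the paper's Fano-plus-dual-window argument. It uses Assouad over blocks and writes the robust revenue as $r_{\max}q^\star(p)$, with slope bounded below because $\rho\le(1-\overline{c}_\rho)\log 2$ keeps $q^\star$ away from $0$. Your bookkeeping, however, is off. With relative perturbation $\epsilon$ and $\bar v\asymp 1/K$, each relevant observation contributes KL $\asymp\bar v\epsilon^2\asymp\epsilon^2/K$, not $O(\epsilon^2)$. You must therefore take $\epsilon\asymp\sqrt{K/n_{\min}}$ to reach $r_{\max}\sqrt{K/n_{\min}}$; with $\epsilon\asymp 1/\sqrt{n_{\min}}$ as written you only get $r_{\max}/\sqrt{n_{\min}}$.
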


\begin{proof}[Proof of Theorem~\ref{thm:_lower_bound_1}]
    See Appendix~\ref{subsec:_proof_lower_bound_1} for a detailed proof.
\end{proof}

As suggested by Theorem~\ref{thm:_lower_bound_1} and Theorem~\ref{thm:_algorithm_design_1}, the suboptimality of PR$^2$B-C (Algorithm~\ref{alg: jin}) is actually nearly-minimax optimal with respect to the key problem parameters $K$, $\min_{i\in S_{\bv}^{\star}}n_i$, and $r_{\max}$ up to logarithm factors, demonstrating the optimality of our algorithm design for the constant robust set size case.

Especially, the lower bounds show that the minimal requirement for efficient robust assortment learning is that \emph{each single item in the optimal robust assortment $S^{\star}$ should be covered enough time by the assortments in the observational dataset,}
regardless of the robust set size $\rho$.
In the end, we highlight that the lower bounds together with the upper bounds indicate that the \emph{statistical gap} of order $\cO(\sqrt{K})$ still exists for the robust assortment optimization problem with constant robust set size, again regardless of the robust set size $\rho$.

\subsection{Theoretical Results for Algorithm~\ref{alg: new}}\label{subsec: theory new}

\subsubsection{Suboptimality Upper Bounds}

Parallel to the constant robust set size case, we first establish the suboptimality upper bounds of Algorithm~\ref{alg: new} for both the general non-uniform revenue case and the special case of uniform revenue.

\begin{theorem}[Suboptimality of PR$^2$B-V (Algorithm~\ref{alg: new})]\label{thm:_algorithm_design_2}
    Suppose that Assumption~\ref{ass: known total attraction} holds. 
    Conditioning on the observed assortments $\{S_k\}_{k=1}^n$, suppose that 
    \begin{align}
        n_i:= \sum_{k=1}^n\mathbf{1}\{i \in S_k\}\geq 138\cdot\max\left\{1,\frac{256}{138v_i}\right\}\cdot (1+v_{\max})(1+Kv_{\max})\log(3N/\delta),\quad \forall i\in S^{\star},
    \end{align}
    where $S^{\star}$ is the optimal robust assortment \eqref{eq: optimal robust assortment}.
    Then with probability at least $1-2\delta$, the suboptimality of the robust expected revenue of the assortment $\widehat S$ output by Algorithm~\ref{alg: new} satisfies
    \begin{align}
        \mathrm{SubOpt}_{\rho(\cdot;\cdot)}(\widehat{S};\bv)\leq  \frac{r_{\max} \cdot (1+v_{\max})\cdot K}{\sqrt{1+v(S^{\star})/2 - (1-e^{-\rho_0})\cdot(1+v_{\mathrm{tot}})}}\cdot  32  \sqrt{\frac{2\log(N/\delta)}{\min_{i\in S^{\star}} n_i}} + \mathrm{\textnormal{higher order term}},
    \end{align}
    where for notational simplicity we define $v(S):=\sum_{j\in S}v_j$.
    If the problem instance further satisfies $r_j=r_{\max}$ for each $j\in[N]$, then the following tighter bound holds,
    \begin{align}
        \mathrm{SubOpt}_{\rho(\cdot;\cdot)}(\widehat{S};\bv)\leq  \frac{ \sqrt{1+ v(S^{\star})} \cdot r_{\max} \cdot (1+v_{\max})\cdot\sqrt{K}}{\sqrt{1 +  v(S^{\star})/2 - (1-e^{-\rho_0})\cdot(1+v_{\mathrm{tot}})}}\cdot  32  \sqrt{\frac{2\log(3N/\delta)}{\min_{i\in S^{\star}} n_i}} + \mathrm{\textnormal{higher order term}}.
    \end{align}
\end{theorem}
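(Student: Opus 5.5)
We follow the standard pessimism decomposition. On the high-probability event $\mathcal{E}$ of Lemma~\ref{lem:_concentration}, we have $v_j^{\mathrm{LCB}}\le v_j$ for all $j$, together with a Bernstein-type width bound for $i\in S^\star$ of the form
\begin{align*}
0\le v_i-v_i^{\mathrm{LCB}}\lesssim (1+v_{\max})\sqrt{\frac{v_i\log(N/\delta)}{\tau_{i,0}}}+\frac{(1+v_{\max})\log(N/\delta)}{\tau_{i,0}}.
\end{align*}
The burn-in condition on $n_i$ lets us convert $\tau_{i,0}$ into $n_i$ via a multiplicative Chernoff bound. Since each appearance of $i$ lands in $\{0,i\}$ with probability at least $(1+v_i)/(1+Kv_{\max})$, we get $\tau_{i,0}\gtrsim n_i(1+v_i)/(1+Kv_{\max})$. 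Writing $\widehat{R}(S):=\sup_{\lambda\ge0}\widetilde H(S,\lambda;\bv^{\mathrm{LCB}})$, we split
\begin{align*}
\mathrm{SubOpt}=\big[R(S^\star;\bv)-\widehat R(S^\star)\big]+\big[\widehat R(S^\star)-\widehat R(\widehat S)\big]+\big[\widehat R(\widehat S)-R(\widehat S;\bv)\big].
\end{align*}
The middle term is $\le0$ by optimality of $\widehat S$ in \eqref{eq: object new}. The last term is $\le0$ by the monotonicity Lemma~\ref{lem:_monotonicity_2}, since $\bv^{\mathrm{LCB}}\le\bv$. So everything reduces to bounding the first term, which involves only $S^\star$ and therefore only items covered by the data.

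For the first term, we use $\sup_\lambda f-\sup_\lambda g\le \sup_\lambda(f-g)$ evaluated near the optimal dual variable. More concretely, let $\lambda^\star$ be the maximizer for $\bv$ (bounded as in Proposition~\ref{prop:_dual}). Then
\begin{align*}
R(S^\star;\bv)-\widehat R(S^\star)\le \widetilde H(S^\star,\lambda^\star;\bv)-\widetilde H(S^\star,\lambda^\star;\bv^{\mathrm{LCB}}).
\end{align*}
We bound this by a mean-value argument along the segment $\bv^{t}=\bv^{\mathrm{LCB}}+t(\bv-\bv^{\mathrm{LCB}})$, computing $\partial\widetilde H/\partial v_i$ for $i\in S^\star$. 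The first piece of $\widetilde H$, namely $-\lambda\log$ of an MNL-weighted average of $e^{-r_i/\lambda}$, has a derivative equal to a tilted-probability difference divided by $V_S:=1+v(S)$, and it is bounded by $r_{\max}/V_S$ uniformly in $\lambda$. This uses $\lambda|\log(a/b)|\le r_{\max}$, since the ratios $e^{-r/\lambda}$ lie in $[e^{-r_{\max}/\lambda},1]$. The second piece, $\lambda\log(e^{\rho_0}-(e^{\rho_0}-1)(1+v_{\mathrm{tot}})/V_S)$, has derivative
\begin{align*}
\frac{\lambda(e^{\rho_0}-1)(1+v_{\mathrm{tot}})}{V_S^2\,\big(e^{\rho_0}-(e^{\rho_0}-1)(1+v_{\mathrm{tot}})/V_S\big)},
\end{align*}
which is where the denominator of the theorem arises. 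The condition $v^{\mathrm{LCB}}\ge v/2$ on the segment, guaranteed by the burn-in, yields the factor $1+v(S^\star)/2-(1-e^{-\rho_0})(1+v_{\mathrm{tot}})$. Multiplying by $e^{-\rho_0}$ and pairing $\lambda^\star$ with $\rho(S;\cdot)$ through the bound $\lambda^\star\le r_{\max}/\rho$ (or a direct bound showing $\lambda^\star$ times this derivative is $O(r_{\max}/\text{denominator})$) controls this piece. Summing $|\partial_i \widetilde H|\cdot(v_i-v_i^{\mathrm{LCB}})$ over $i\in S^\star$ gives the bound in the general case. There are at most $K$ terms, each of order $r_{\max}(1+v_{\max})\sqrt{\log/n_i}$, with the $(1+Kv_{\max})$ factor from $\tau_{i,0}$ absorbed against $\sqrt{v_i}/V_S$. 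This produces the factor $K$.

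For the uniform-revenue case we do not sum absolute values. Instead we use Cauchy–Schwarz: when all $r_i=r_{\max}$, the first piece depends on $\bv$ only through $v(S)$, and its variation is $\lesssim r_{\max}\sum_{i}(v_i-v^{\mathrm{LCB}}_i)/V_S$. With widths $\propto\sqrt{v_i/n_i}$, this sum is at most $\sqrt{K\,v(S^\star)/\min n_i}$, which gives $\sqrt K\sqrt{1+v(S^\star)}$ after normalizing, matching the stated bound. The second-order Bernstein terms collect into the higher-order term.

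The main obstacle is the second piece of $\widetilde H$, because its derivative can blow up as the argument of the log approaches zero. We must show that, uniformly on the segment and at the relevant $\lambda$, the argument stays bounded below by the stated denominator. This requires careful use of $v^{\mathrm{LCB}}_i\ge v_i/2$ (from the burn-in) and of $\rho_0<\log(1+1/v_{\mathrm{tot}})$. It also requires controlling $\lambda^\star$ without a $1/\rho$ blow-up, for which we would first try the explicit first-order condition for $\lambda^\star$.
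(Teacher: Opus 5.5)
Your reduction is the paper's: optimality of $\widehat S$, plus Lemma~\ref{lem:_monotonicity_2} applied to the pair $\bv^{\mathrm{LCB}}\le\bv$, leaves only $\sup_\lambda\widetilde H(S^\star,\lambda;\bv)-\sup_\lambda\widetilde H(S^\star,\lambda;\bv^{\mathrm{LCB}})$.

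\textbf{The general case: a different and valid route.} You fix the true dual maximizer and integrate $\nabla_{\bv}\widetilde H$ along the segment. The paper instead bounds $\sup(f-g)$, handles the first piece by a likelihood-ratio argument that splits the $\lambda$-range at $c\,r_{\max}$, and handles the second piece by an explicit log difference with $\lambda\le B(S^\star;\bv^{\mathrm{LCB}})$. Your route works and avoids both the two-regime analysis and the comparison of suprema over different dual ranges. Two corrections:
\begin{itemize}
\item The first-piece derivative is $\frac{\lambda}{V_S}\bigl(1-e^{-r_i/\lambda}/\bar e_\lambda\bigr)$, where $\bar e_\lambda=\sum_{j\in S_+}v_je^{-r_j/\lambda}/V_S$. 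This involves $1-x$, not $\log x$, so your justification via $\lambda|\log(a/b)|\le r_{\max}$ does not apply, and the derivative is not bounded by $r_{\max}/V_S$ in absolute value. Since $v_i-v_i^{\mathrm{LCB}}\ge 0$, you only need the one-sided bound $\le\lambda(1-e^{-r_{\max}/\lambda})/V_S\le r_{\max}/V_S$, and that one does hold.
\item There is no $1/\rho$ blow-up. Set $a:=(1-e^{-\rho_0})(1+v_{\mathrm{tot}})<1$. Then $\rho(S^\star;\bv)=-\log(1-a/V_{S^\star})\ge a/V_{S^\star}$, so $\lambda^\star\le r_{\max}V_{S^\star}/a$. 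This $a$ cancels the $a$ in the second-piece derivative $\lambda a/(V^t(V^t-a))$, where $V^t:=1+\sum_{i\in S^\star}v_i^t$. Using $V_{S^\star}\le 2V^t$ and $V^t-a\ge D:=1+v(S^\star)/2-a$, the second piece contributes at most $2r_{\max}/D$.
\end{itemize}
Adding the first piece's $r_{\max}/D$ gives $\frac{3r_{\max}}{D}\sum_{i\in S^\star}(v_i-v_i^{\mathrm{LCB}})$, which is exactly Theorem~\ref{thm:_algorithm_design_general_2}.

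\textbf{The gap: the uniform-revenue bound.} Your argument misidentifies where the $\sqrt K$ gain comes from, so as written it does not prove the stated uniform bound.
\begin{itemize}
\item On the revenue side nothing changes between the two cases. The coordinate-wise bound $r_{\max}/V_S$ is already uniform in $i$ in the general case, so noting that the first piece depends only on $v(S)$ buys nothing.
\item The gain has to come from the widths. These scale like $\sqrt{v_i(1+v_i)\log(3N/\delta)\big/\sum_k\mathbf{1}\{i\in S_k\}(1+v(S_k))^{-1}}$, not like $\sqrt{v_i/n_i}$.
\item If you convert to $n_i$ using $1+v(S_k)\le 1+Kv_{\max}$, as in your general case, you only get $\frac{r_{\max}}{D}\sqrt{(1+v_{\max})(1+Kv_{\max})\log(3N/\delta)}\sum_{i\in S^\star}\sqrt{v_i/n_i}$. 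This exceeds the stated uniform bound by a factor of order $\sqrt K$, for example when $v_1=1$, all other attractions equal $1/K$, and the $n_i$ are equal.
\end{itemize}
The missing idea is this. With uniform revenues, Proposition~\ref{prop:_optimal_set_uniform} makes $S^\star$ the top-$K$ attraction set. Hence every logged assortment of size at most $K$ has $v(S_k)\le v(S^\star)$, which gives $\sum_k\mathbf{1}\{i\in S_k\}(1+v(S_k))^{-1}\ge n_i/(1+v(S^\star))$. This is what puts $\sqrt{1+v(S^\star)}$ in place of $\sqrt{1+Kv_{\max}}\le\sqrt{K(1+v_{\max})}$.

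Relatedly, the $\sqrt{v(S^\star)}$ produced by Cauchy--Schwarz, $\sum_{i\in S^\star}\sqrt{v_i/n_i}\le\sqrt{K\,v(S^\star)/\min_i n_i}$, is not the statement's $\sqrt{1+v(S^\star)}$. In both cases it is absorbed via $v(S^\star)<2D$ (true because $a<1$), turning $1/D$ into $1/\sqrt D$. This aggregate Cauchy--Schwarz is also what gives the factor $K$ in the general case. Your item-by-item claim there fails: an item with $v_i=v_{\max}=1$ when $v(S^\star)\approx 1$ contributes on the order of $r_{\max}\sqrt{K\log(3N/\delta)/n_i}$. The summed bound is still correct.
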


\begin{proof}[Proof of Theorem~\ref{thm:_algorithm_design_2}]
    Both of the two bounds are corollaries of a more general result in Theorem~\ref{thm:_algorithm_design_general_2} (see Appendix~\ref{subsec:_general_upper_bound_new}), based on which we can obtain Theorem~\ref{thm:_algorithm_design_2} through additional proofs in Appendix~\ref{subsec:_proof_algorithm_design_2}.
    We sketch the proof and highlight the technique novelties in Section~\ref{sec: sketch}.
\end{proof}

\subsubsection{Minimax Lower Bounds}\label{subsubsec: lower bound new}
To show the optimality of Algorithm~\ref{alg: new} for learning robust assortments with varying robust set size, we further establish the minimax lower bound for the suboptimality gap of this problem.

\begin{theorem}[Suboptimality minimax lower bound (Example~\ref{exp: new})]\label{thm:_lower_bound_new_1}
    Consider the setup of robust assortment optimization problem of varying robust set size (Example~\ref{exp: new}) with  $N$ items, $K$-constraint satisfying $N\geq 5K$, there exists an offline dataset of $K$-sized assortments $\{S_k\}_{k=1}^n$ and a class of problem instances of Example~\ref{exp: new} (denoted by $\cV$) such that for any robust set size parameter $\rho_0$ satisfying
    \begin{align}
         \log\left(\frac{1+v_{\mathrm{tot}}(\bv)}{1 + v_{\mathrm{tot}}(\bv) - \underline{c}_{\rho} }\right) \leq \rho_0 \leq \log\left(\frac{1+v_{\mathrm{tot}}(\bv)}{\overline{c}_{\rho} + v_{\mathrm{tot}} (\bv)}\right),\qquad \forall (\bv,\br)\in\cV,
    \end{align}
    where $\underline{c}_{\rho},\overline{c}_{\rho}>0$ are any absolute constants, and $v_{\mathrm{tot}}(\bv)$ denotes the total attraction parameter associated with instance $\bv$\footnote{In our construction of hard instance class the total attraction is constant across different instances, see Appendix~\ref{subsec:_proof_lower_bound_new_1}.}, it holds that
    \begin{align}
        &\inf_{\pi(\cdot):(2^{[N]}\times[N])^n\mapsto2^{[N]}}\sup_{(\bv,\br)\in\cV}\mathbb{E}_{\mathbb{D}\sim \otimes_{k=1}^n\mathbb{P}_{\bv}(\cdot|S_k)}\big[\mathrm{SupOpt}_{\rho(\cdot;\cdot)}(\pi(\mathbb{D});\bv,\br) \big]\\
        &\qquad \geq \Omega\left(\frac{r_{\max}\cdot K}{\sqrt{1+v(S^{\star}_{\bv})/2 - (1-e^{-\rho_0})\cdot(1+v_{\mathrm{tot}}(\bv))}}\cdot    \sqrt{\frac{1}{\min_{i\in S^{\star}_{\bv}} n_i}}\right),
    \end{align}
    where $\Omega(\cdot)$ only hides universal absolute constants.
    Moreover, when we further restrict the problem into the uniform revenue case, under the same ranges of problem parameters as before, there exists another class of problem instances of Example~\ref{exp: new} (denoted by $\cV_{\mathrm{uni}}$) such that for any robust set size parameter $\rho_0$ satisfying
    \begin{align}
         \log\left(\frac{1+v_{\mathrm{tot}}(\bv)}{1 + v_{\mathrm{tot}}(\bv) - \underline{c}_{\rho} }\right) \leq \rho_0 \leq \log\left(\frac{1+v_{\mathrm{tot}}(\bv)}{\overline{c}_{\rho} + v_{\mathrm{tot}} (\bv)}\right),\qquad \forall (\bv,\br)\in\cV_{\mathrm{uni}},
    \end{align}
    where $\underline{c}_{\rho},\overline{c}_{\rho}>0$ are any absolute constants, it holds that
    \begin{align}
        &\inf_{\pi(\cdot):(2^{[N]}\times[N])^n\mapsto2^{[N]}}\sup_{(\bv,\br)\in\cV}\mathbb{E}_{\mathbb{D}\sim \otimes_{k=1}^n\mathbb{P}_{\bv}(\cdot|S_k)}\big[\mathrm{SupOpt}_{\rho(\cdot;\cdot)}(\pi(\mathbb{D});\bv,\br) \big]\\
        &\qquad \geq \Omega\left(\frac{r_{\max}\cdot \sqrt{K}}{\sqrt{1+v(S^{\star}_{\bv})/2 - (1-e^{-\rho_0})\cdot(1+v_{\mathrm{tot}}(\bv))}}\cdot    \sqrt{\frac{1}{\min_{i\in S^{\star}_{\bv}} n_i}}\right).
    \end{align}
\end{theorem}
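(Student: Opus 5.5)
The plan is a standard multi-hypothesis (Assouad / Fano-type) reduction, following the hard-instance construction used for the non-robust item-wise lower bound of \cite{han2025learning} and for Theorem~\ref{thm:_lower_bound_1}. The construction must be adapted so that (i) the total attraction $v_{\mathrm{tot}}(\bv)$ is constant across instances, which makes $\rho_0$ and the admissible interval instance-independent, and (ii) the varying radius $\rho(S;\bv)$ enters the gap through the factor $\bigl(1+v(S^\star)/2-(1-e^{-\rho_0})(1+v_{\mathrm{tot}})\bigr)^{-1/2}$.

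\textbf{Construction.} Split $[N]$ into $K$ ``slots''. Each slot contains a pair of candidate items $(2k-1,2k)$ with nearly identical attractions $v_{2k-1}=\bar v(1+\epsilon\sigma_k)$ and $v_{2k}=\bar v(1-\epsilon\sigma_k)$, where $\sigma\in\{\pm1\}^K$. The remaining $N-2K\ge 3K$ items are filler items. Their attractions are chosen so that:
\begin{itemize}
\item $v_{\mathrm{tot}}$ is the same for every $\sigma$, which holds automatically since the perturbations in each pair cancel;
\item filler items are never optimal, which we enforce by giving them revenue $0$.
\end{itemize}
In the general revenue case, the two items in each pair share revenue $r_{\max}$. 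The optimal robust assortment $S^\star_\sigma$ then picks, in each slot, the item with the larger attraction. This is where we use the monotonicity of $\sup_\lambda\widetilde H$ in each $v_i$ (Lemma~\ref{lem:_monotonicity_2}) and the structure from Proposition~\ref{prop:_optimal_set_uniform}.

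The offline dataset consists of $K$-sized assortments, each containing exactly one item of every pair, with each candidate item appearing $n_i\asymp n$ times. Thus $\min_{i\in S^\star}n_i\asymp n$ under every $\sigma$.

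\textbf{Separation.} The key quantitative step is to show that choosing the wrong item in $m$ slots costs at least $c\, m\,\epsilon\, r_{\max}\,\bar v/(\text{normalizer})\cdot\Gamma$, where $\Gamma^{-1/2}$ is the target $\rho_0$-dependent factor. To do this, write $R_\rho(S;\bv)=\sup_\lambda\widetilde H(S,\lambda;\bv)$ as in \eqref{eq: dual new}. By the envelope theorem, the derivative of the robust revenue with respect to $v_j$, for $j\in S$, equals $\partial_{v_j}\widetilde H$ evaluated at the optimal $\lambda^\star$. Differentiating the last term of \eqref{eq: h tilde} gives
\[
\lambda^\star\,\frac{(e^{\rho_0}-1)(1+v_{\mathrm{tot}})}{v_{S_+}\bigl(e^{\rho_0}v_{S_+}-(e^{\rho_0}-1)(1+v_{\mathrm{tot}})\bigr)},
\]
whose denominator is $\propto 1+v(S)-(1-e^{-\rho_0})(1+v_{\mathrm{tot}})$. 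Lower bounding $\lambda^\star$ by a constant times $r_{\max}$ divided by the square root of this quantity is the point where the constraint $\rho_0\in[\log\frac{1+v_{\mathrm{tot}}}{1+v_{\mathrm{tot}}-\underline c_\rho},\log\frac{1+v_{\mathrm{tot}}}{\overline c_\rho+v_{\mathrm{tot}}}]$ is used. Specifically, $\rho(S;\bv)$ is bounded away from $0$, which keeps $\lambda^\star$ finite, and bounded above, which keeps $\lambda^\star\gtrsim$ the square-root scale via the KL-dual first-order condition $\rho=\mathrm{KL}$ of the tilted distribution. Replacing $v(S)$ by $v(S^\star)/2$ absorbs the change from swapping slot items. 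Summing the per-slot losses, which we approximate as additive via a second-order Taylor bound, gives a Hamming-type separation.

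\textbf{Information and conclusion.} For the information bound, the KL divergence between the data laws under $\sigma$ and $\sigma^{(k)}$ (flip slot $k$) is $\lesssim n_i\epsilon^2$. Choosing $\epsilon\asymp 1/\sqrt{\min_i n_i}$ keeps it $O(1)$, and Assouad's lemma yields expected loss $\gtrsim K\cdot r_{\max}\epsilon\,\Gamma^{-1/2}$, which is the first bound. For the uniform-revenue case, we instead set every revenue equal to $r_{\max}$ and use the $\sqrt K$-scaling construction of \cite{han2025learning}: the per-slot gain is of order $\epsilon/K$ relative to a total attraction of order $K$, so we use a Fano/Gilbert–Varshamov packing with $\epsilon\asymp\sqrt{K/n}$ to obtain the $\sqrt K$ rate.

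\textbf{Main obstacle.} I expect the hardest step to be the lower bound on the dual optimizer $\lambda^\star$ that exposes the exact factor $\Gamma^{-1/2}$, together with the additivity of the revenue loss across slots in spite of the nonlinear dual. The first thing I would try is a direct computation of $\lambda^\star$ for the symmetric instance, in which all chosen items share the same value, followed by a perturbation argument.
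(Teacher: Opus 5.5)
\textbf{Gap: your offline design cannot give the $K$ rate in the non-uniform case.} In your construction every data assortment consists of $K$ candidates of revenue $r_{\max}$ (one per pair). The zero-revenue fillers appear neither in the data nor in $S^\star$, so on everything that matters your instance is a uniform-revenue one. Write $\eta:=\bar v\epsilon$ for the absolute attraction gap and take $\bar v\asymp 1/K$, so that $v(S^\star)=O(1)$.
\begin{itemize}
\item A wrong choice in one slot lowers $v(S)$ by $2\eta$, and hence lowers the robust revenue by $\asymp r_{\max}\eta$.
\item Every observation contains a slot-$k$ item, chosen with probability $\asymp 1/K$. So flipping slot $k$ costs KL $\asymp K\eta^2$ per observation, i.e.\ $\asymp nK\eta^2$ in total.
\item Assouad then forces $\eta\asymp 1/\sqrt{Kn}$, and the resulting loss is only $K\cdot r_{\max}\eta\asymp r_{\max}\sqrt{K/n}$. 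Other choices of $\bar v$ do no better.
\end{itemize}
In your own bookkeeping, the factor $\bar v/(\text{normalizer})\asymp 1/K$ in the per-slot loss is silently dropped when you pass to ``$K r_{\max}\epsilon$''. This is not slack in your separation estimate. Every data assortment satisfies $v(S_k)\le v(S^\star)$, so the effective counts in Theorem~\ref{thm:_algorithm_design_general_2} are $\gtrsim n_i$. Consequently PR$^2$B-V itself attains $O(r_{\max}\sqrt{K\log(N/\delta)/\min_i n_i})$ on your class, and no refinement of the dual analysis can yield $K/\sqrt{n}$ there.

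\textbf{The missing idea is diluting the data with attractive but unprofitable items.} The paper's construction is as follows.
\begin{itemize}
\item Candidates are the items in $[4K]$, with $v\in\{1/K,1/K+\epsilon\}$ and revenue $r_{\max}$. Hypotheses are indexed by a packing of $K$-subsets of $[4K]$ with pairwise symmetric difference $\ge\lfloor K/4\rfloor$.
\item Fillers have $v=1$ and revenue $0$.
\item The data assortments are $S_k=\{\lceil k/n_{\min}\rceil,4K+2,\dots,5K\}$: one candidate plus $K-1$ fillers.
\end{itemize}
The candidate is then chosen with probability $\asymp 1/K^2$, so each observation carries only $O(\epsilon^2)$ KL (Lemma~\ref{lem-KL-new-upper-bound-non-uniform}). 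Meanwhile the optimal assortment contains no fillers and has $v(S^\star)=1+K\epsilon=O(1)$, so each wrong item still costs $\Omega(r_{\max}\epsilon)$. Fano then gives $\epsilon\asymp 1/\sqrt{n_{\min}}$ and the $\Omega(r_{\max}K/\sqrt{n_{\min}})$ bound. This is exactly where non-uniform revenue is needed.

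With high-attraction fillers you must also show that any output can be replaced by a $K$-subset of $[4K]$ without lowering robust revenue. This is not automatic in Example~\ref{exp: new}, because enlarging $S$ shrinks $\rho(S;\bv)$. The paper proves it by direct computation (Lemma~\ref{lem: aux 1}). It also follows from \eqref{eq: conditioned robust revenue formulation}: the KL ball there does not depend on $S$, and deleting a zero-revenue item or adding an $r_{\max}$-revenue item weakly increases $\sum_{j\in S}p_jr_j/\sum_{j\in S_+}p_j$ for every $\bp$.

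\textbf{Your stated main obstacle is unnecessary.} The admissible range of $\rho_0$ is exactly $(1-e^{-\rho_0})(1+v_{\mathrm{tot}})\in[\underline{c}_{\rho},1-\overline{c}_{\rho}]$. So on a hard instance with $v(S^\star)=O(1)$, the quantity $1+v(S^\star)/2-(1-e^{-\rho_0})(1+v_{\mathrm{tot}})$ lies between absolute constants. The paper only bounds $\lambda^\star$ above and below by constant multiples of $r_{\max}$, and then uses $1/x\ge c/\sqrt{x}$ for bounded $x$. No $\Gamma^{-1/2}$ scaling of $\lambda^\star$ needs to be established, and your first-order-condition heuristic for it is unsupported.

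Your uniform-revenue sketch is fine. In fact, your pair instance with $\bar v\asymp 1/K$ already yields the $\sqrt{K/n}$ rate.
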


\begin{proof}[Proof of Theorem~\ref{thm:_lower_bound_new_1}]
    See Appendix~\ref{subsec:_proof_lower_bound_new_1} for a detailed proof.
\end{proof}

The result shows that the PR$^2$B-V algorithm is also nearly-minimax optimal on the dependence of the key parameters $K$, $r_{\max}$, $\min_{i\in S^{\star}} n_i$, and the additional factor $\sqrt{1+v(S^{\star}_{\bv})/2 - (1-e^{-\rho_0})\cdot(1+v_{\mathrm{tot}}(\bv))}$ that does not appear in the case of constant robust set size (Example~\ref{exp: jin}), unveiling the different difficulty facing the learning problem between Examples~\ref{exp: jin} and \ref{exp: new}.
Parallel to Example~\ref{exp: jin}, we also demonstrate the $\sqrt{K}$ statistical gap between the uniform revenue case and the general non-uniform revenue case for Example~\ref{exp: new}.
Finally, we note that when $\rho_0\rightarrow 0$, Example~\ref{exp: new} reduces to the standard data driven assortment optimization problem, and the corresponding upper and lower bounds also reduces to that of \cite{han2025learning}.

\subsection{Overview of Techniques}\label{sec: sketch}

In this section, we highlight some key techniques in establishing the suboptimality upper bounds for PR$^2$B(-C/V) algorithms.
We refer to Appendices for detailed proofs of the results introduced in this section.

\paragraph{Monotonicity argument to prove the effectiveness of the pessimistic robust assortment optimization step.} 
As we discuss in Section~\ref{subsec: algorithm jin}, a key component of Algorithm~\ref{alg: jin} is to use the robust expected revenue associated with a nominal choice  model with pessimistically estimated attraction parameters as a surrogate of pessimistic estimation of the robust expected revenue associated with the true nominal choice model.
This hugely releases the computation burden of the original doubly pessimistic optimization procedure \eqref{eq: double pessimism}. 
The observation behind this step is the following key lemma. 

\begin{lemma}[Monotonicity under optimal robust assortment (Example~\ref{exp: jin})]\label{lem:_monotonicity_1_main}
    Consider Example~\ref{exp: jin} and two sets of parameters $\bv$ and $\bv'$ such that $v_j\leq v_j'$ for all $j\in[N]$. 
    Then for the optimal robust assortment $S^{\star}_{\bv}$ associated with $\bv$, it holds that $ R_{\rho}(S^{\star}_{\bv};\bv)\leq R_{\rho}(S^{\star}_{\bv};\bv')$.
\end{lemma}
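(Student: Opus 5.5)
The plan is to work entirely through the dual representation of Proposition~\ref{prop:_dual} with constant radius $\rho$. The key structural fact to extract from optimality of $S^{\star}_{\bv}$ is that every item it contains is ``good'' at the optimal dual variable. Write $S:=S^{\star}_{\bv}$, $e_i(\lambda):=\exp(-r_i/\lambda)$ (so $e_0=1$ since $r_0=0$), and
\begin{align}
g(\lambda;\bv):=\mathbb{E}_{i\sim\PP_{\bv}(\cdot|S)}[e_i(\lambda)]=\frac{1+\sum_{i\in S}v_ie_i(\lambda)}{1+\sum_{i\in S}v_i},
\end{align}
so that $R_{\rho}(S;\bv)=\sup_{0\le\lambda\le r_{\max}/\rho}\{-\lambda\log g(\lambda;\bv)-\lambda\rho\}$. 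Only the coordinates $v_j$ with $j\in S$ enter, so $\bv'$ outside $S$ is irrelevant. The objective is continuous on the compact interval $(0,r_{\max}/\rho]$; the endpoint $\lambda\to0$ can be handled by a limit, or by working with an $\epsilon$-maximizer and sending $\epsilon\to0$. Fix a maximizer $\lambda^{\star}$.

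\emph{Step 1: every item of $S$ satisfies $e_j(\lambda^{\star})\le g(\lambda^{\star};\bv)$.} Fix $j\in S$ and set $S_{-j}:=S\setminus\{j\}$. This set is feasible because $|S_{-j}|<K$, so by optimality $R_{\rho}(S_{-j};\bv)\le R_{\rho}(S;\bv)$. Denote by $A/B$ the value of $g$ for $S_{-j}$ at $\lambda^{\star}$, where
\begin{align}
A=1+\sum_{i\in S_{-j}}v_ie_i(\lambda^{\star}),\qquad B=1+\sum_{i\in S_{-j}}v_i .
\end{align}
Then $g(\lambda^{\star};\bv)=(A+v_je_j)/(B+v_j)$ is a convex combination of $A/B$ and $e_j$. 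If $e_j>g(\lambda^{\star};\bv)$, then $A/B<g(\lambda^{\star};\bv)$. Plugging the same $\lambda^{\star}$ into the dual for $S_{-j}$ would then give
\begin{align}
R_{\rho}(S_{-j};\bv)\ge-\lambda^{\star}\log(A/B)-\lambda^{\star}\rho>R_{\rho}(S;\bv)
\end{align}
whenever $\lambda^{\star}>0$, a contradiction. The degenerate case $\lambda^{\star}=0$ or $v_j=0$ needs a short separate remark. Otherwise one can perturb to an $\epsilon$-optimal $\lambda>0$, where the same convex-combination argument yields the claim up to $\epsilon$.

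\emph{Step 2: direct comparison at $\lambda^{\star}$.} Abbreviate $e_i:=e_i(\lambda^{\star})$ and $g:=g(\lambda^{\star};\bv)$. By Step 1 and $v'_i-v_i\ge0$,
\begin{align}
1+\sum_{i\in S}v'_ie_i = \Big(1+\sum_{i\in S}v_ie_i\Big)+\sum_{i\in S}(v'_i-v_i)e_i \le g\Big(1+\sum_{i\in S}v_i\Big)+g\sum_{i\in S}(v'_i-v_i),
\end{align}
and the right-hand side equals $g\,(1+\sum_{i\in S}v'_i)$. Hence $g(\lambda^{\star};\bv')\le g(\lambda^{\star};\bv)$. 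This one-shot inequality avoids any path argument, in which the ``good item'' property might fail to persist as the parameters increase. The dual-variable range $[0,r_{\max}/\rho]$ does not depend on $\bv$ for constant $\rho$, so $\lambda^{\star}$ is also feasible for $\bv'$. Therefore
\begin{align}
R_{\rho}(S;\bv')\ge-\lambda^{\star}\log g(\lambda^{\star};\bv')-\lambda^{\star}\rho\ge-\lambda^{\star}\log g(\lambda^{\star};\bv)-\lambda^{\star}\rho=R_{\rho}(S;\bv).
\end{align}

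The main obstacle is Step 1. A naive monotonicity of the robust revenue in $\bv$ is false: raising the attraction of a low-revenue item can lower the revenue. The argument must therefore use the optimality of $S^{\star}_{\bv}$, together with the fact that the cardinality constraint permits deleting items, to certify that every item is at least as good as the current ``certainty equivalent'' $g$ at the optimal dual variable. The remaining care goes into the boundary cases $\lambda^{\star}=0$ and ties, which I would handle by an $\epsilon$-approximation of the supremum.
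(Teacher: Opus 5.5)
Your proposal is correct and follows the paper's proof essentially step for step. Your Step~1 is exactly the auxiliary Lemma~\ref{lem: optimal condition jin}: every $j\in S^{\star}_{\bv}$ satisfies $r_j\ge R_{\rho}(S^{\star}_{\bv};\bv)+\lambda^{\star}\rho$, which is equivalent to $e_j(\lambda^{\star})\le g(\lambda^{\star};\bv)$, and it is proved by deleting $j$ and reusing $\lambda^{\star}$. Your Step~2 is the same mediant comparison at the fixed dual variable $\lambda^{\star}$.

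The one point to tighten is your deferred remark on the case $v_j=0$. There, deleting $j$ gives no strict improvement, and the claim can genuinely fail if $S^{\star}_{\bv}$ is a maximizer padded with a zero-attraction, low-revenue item. For example, take $v_1=1$, $r_1=1$, $v_2=0$, $r_2=0$, $S^{\star}_{\bv}=\{1,2\}$ and $\bv'=(1,1)$. So $S^{\star}_{\bv}$ must be chosen to exclude such items; no remark can rescue an arbitrary maximizer. The paper's ``by calculation'' step silently assumes $v_j>0$ as well. This matters because the lemma is applied with $\bv=\bv^{\mathrm{LCB}}$, whose entries can vanish.
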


The lemma shows that, despite the highly non-linearity between the robust expected revenue $R_{\rho}(S;\bv)$ and the nominal attraction parameters $\bv$, there still exists certain type of monotonicity of $R_{\rho}(S;\bv)$ respect to $\bv$. 
Specifically, it shows that if we solve the optimal robust assortment associated with the pessimistically estimated nominal choice parameters, the resulting robust expected revenue itself would also be pessimistic. 

This result is key to establish the sample-efficiency and the minimal data coverage condition, namely, the robust item-wise coverage condition for Algorithm~\ref{alg: jin}. 
Similarly, the same argument also holds for Example~\ref{alg: new} (Lemma~\ref{lem:_monotonicity_2}), which then allows us to establish the theory for Algorithm~\ref{alg: new}. 
We remark that results of similar spirit were proposed for non-robust assortment optimization problems based on MNL model for both online setup and offline setup \citep{agrawal2019mnl, han2025learning}.
However, because of a different mathematical structure of the robust assortment optimization problem, we need new techniques to prove Lemma~\ref{lem:_monotonicity_1_main}. 
Please see Appendices~\ref{lem: technical} for a detailed proof of Lemma~\ref{lem:_monotonicity_1_main} and Lemma~\ref{lem:_monotonicity_2}.

\paragraph{Handling the varying robust set size learned from data.}
Another critical challenge is to handle and analyze the varying robust set size in Example~\ref{exp: new} in order to guarantee a reasonable and tight performance guarantee for Algorithm~\ref{alg: new}.
Recall that to adaptively assign the robust set sizes based on the assortment as in Example~\ref{exp: new}, the robust set size itself involves unknown parameters to learn from the data.
However, the robust set size directly controls how the robust expected revenue is represented in its dual representation form, namely, the range of the dual variables, which is critical in deriving the performance guarantees of the algorithm.
A naive approach would result in amplification of the error in estimating the nominal parameters so that the resulting suboptimality upper bound explodes when the robust set size diminishes (i.e., $\rho_0\rightarrow 0$).
In the proof of Theorem~\ref{thm:_algorithm_design_2} (Appendix~\ref{subsec:_proof_algorithm_design_2}), we consider a clever decomposition of the suboptimality gap and carefully handles the error coming from an estimated robust set size. 
The resulting suboptimality bound does not explode as $\rho_0\rightarrow 0$ and is nearly-minimax optimal on the dependence of key parameters (Section~\ref{subsubsec: lower bound new}).

\section{Numerical Experiments}\label{sec: experiments}

\paragraph{The goals of experiments.}
We demonstrate the sample-efficiency of the proposed two algorithms and the robustness of the learned assortment through simulated numerical experiments.
To this end, we aim to study three key perspectives of this problem, namely: 
\begin{itemize}
    \item[(i)]  The sample efficiency of the proposed algorithms v.s. naive baselines (introduced later). See Section~\ref{subsec: exp1}.
    \item[(ii)] The robustness of the learned assortments and the comparison of the robustness properties under the two different formulations, i.e., Example~\ref{exp: jin} and Example~\ref{exp: new}. See Section~\ref{subsec: exp2}.
    \item[(iii)] The influence of the cardinality constraint $K$ on the sample complexity. See Section~\ref{subsec: exp3}. 
\end{itemize}

We conduct three lines of experiments to investigate these perspectives respectively in the following.

\subsection{Experiment 1: Sample Efficiency of Pessimistic Robust Rank Breaking.}\label{subsec: exp1}
In the first line of experiments, we demonstrate the sample efficiency of P$\mathrm{R}^2$B(-C/V) compared with naive baselines in terms of minimizing the suboptimality of the robust expected revenue of the learned assortments.
Here the baseline is implemented by the \emph{single-pessimism} counterpart of P$\mathrm{R}^2$B(-C/V). 
More concretely, for Example~\ref{exp: jin}, we consider the following learned assortment,
\begin{align}
    \widehat{S}_{\mathrm{vanilla}}:=\argmax_{S\subset[N],|S|\leq K} R_{\rho}(S;\widehat{\bv}),\quad \widehat{v}_j:=\frac{\widehat{p}_j}{1-\widehat{p}_j},\quad \text{$\widehat{p}_j$ defined in \eqref{eq: tau}}. \label{eq: vanilla 1}
\end{align}
and for Example~\ref{exp: new}, we consider 
\begin{align}
    \widehat{S}_{\mathrm{vanilla}}:=\argmax_{S\subset[N],|S|\leq K} \sup_{\lambda\geq 0}\,\widetilde{H}(S,\lambda;\widehat{\bv}),\quad \widehat{v}_j:=\frac{\widehat{p}_j}{1-\widehat{p}_j},\quad \text{$\widehat{p}_j$ defined in \eqref{eq: tau} and $\widetilde{H}$ defined in \eqref{eq: h tilde}}.\label{eq: vanilla 2}
\end{align}
They represent the methods non-pessimistic to the data uncertainty which simply plug in the non-pessimistic rank-breaking estimate $\widehat{\bv}$ of the attraction parameters to the robust expected revenue.

\begin{figure}[!t]   
       \begin{minipage}[b]{0.49\textwidth}
        \centering
        \includegraphics[width=0.93\textwidth]{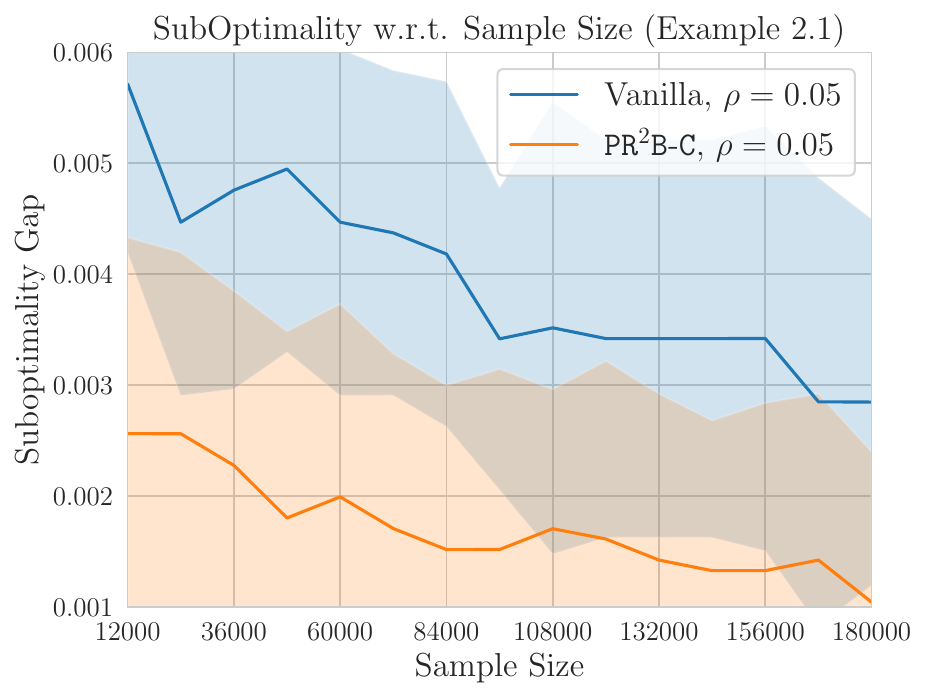}
    \end{minipage}
    \hspace{3mm}
    \begin{minipage}[b]{0.49\textwidth}
        \centering
        \includegraphics[width=0.98\textwidth]{pics/constant_size_suboptimality_vs_sample_size_varying_rho.pdf}   
    \end{minipage}
    \caption{Suboptimality gap of P$\mathrm{R}^2$B-C (Algorithm~\ref{alg: jin}) compared to the single-pessimistic counterpart \eqref{eq: vanilla 1}. All the parameter configurations are averaged over $25$ independent runs.}\label{fig: jin experiment}
\end{figure}

\begin{figure}[!t]
   
       \begin{minipage}[b]{0.49\textwidth}
        \centering
        \includegraphics[width=0.93\textwidth]{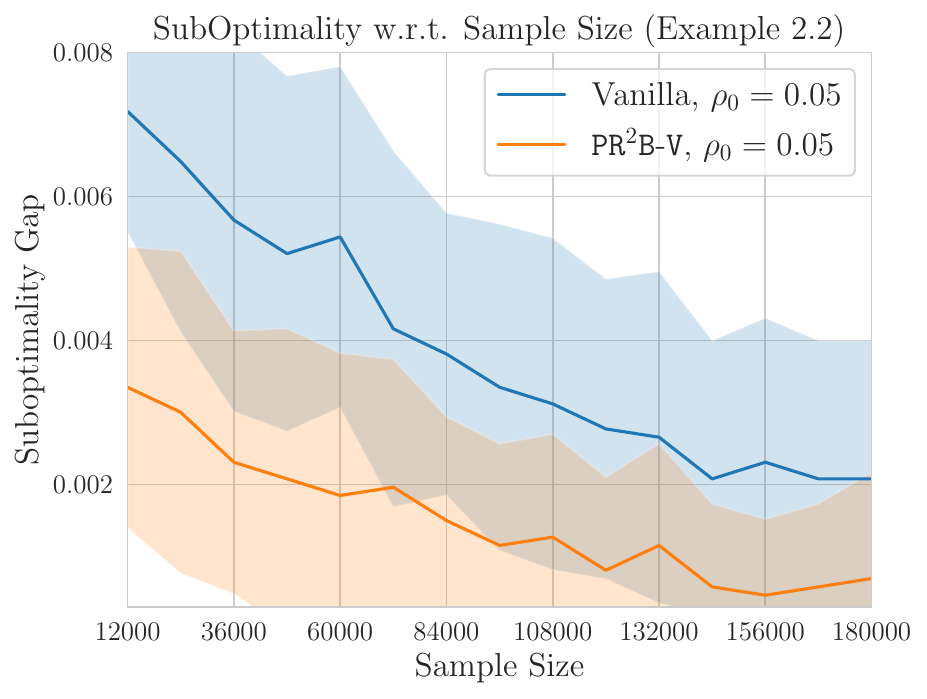}
    \end{minipage}
    \hspace{3mm}
    \begin{minipage}[b]{0.49\textwidth}
        \centering
        \includegraphics[width=0.98\textwidth]{pics/varying_size_suboptimality_vs_sample_size_varying_rho.pdf}   
    \end{minipage}
    \caption{Suboptimality gap of P$\mathrm{R}^2$B-V (Algorithm~\ref{alg: new}) compared to single-pessimism counterpart \eqref{eq: vanilla 2}. All the parameter configurations are averaged over $25$ independent runs.}\label{fig: new experiment}
\end{figure}

\paragraph{Construction of the instances and data.} 
We construct the following instance of nominal choice model (as motivated by the hard instances in the proof of statistical lower bounds, see Appendices~\ref{subsec:_proof_lower_bound_1} and \ref{subsec:_proof_lower_bound_new_1}).
We consider $N=15$ and $K=3$.
We define the nominal choice model by 
\begin{align}
    v_j=\frac{1}{K}+\epsilon,\quad \forall 1\leq j\leq K,\quad \text{and}\quad v_i = \frac{1}{K},\quad \forall K+1\leq i\leq N. 
\end{align}
with $\epsilon=0.01$ and define $r_i = 1$ for any $1\leq i\leq N$.
To show case the efficiency of our proposed algorithms compared to the vanilla algorithms we just introduced, we construct a partial coverage style offline dataset satisfying the ``robust item-wise coverage'' condition.
Specifically, we solve the optimal robust assortment (which is unique under the above setup) and randomly substitute an element in it by an item not inside the optimal robust assortment. 
Such a dataset has two key features: (i) the optimal robust assortment as a whole is not observed across the entire dataset; (ii) the suboptimal items are observed far less frequent than the items in the optimal robust assortment, causing essentially larger uncertainty for those items compared with the items in the optimal robust assortment, making the dataset a partially covered one.

\paragraph{Experiment results.} 
We vary two key quantities of the problem and observe the suboptimality gap of the learned assortment for both Example~\ref{exp: jin} and Example~\ref{exp: new}. 
For both examples, we range the dataset size from $12000$ to $180000$. 
For Example~\ref{exp: jin}, we vary $\rho\in\{0.05, 0.1, 0.15, 0.2, 0.25, 0.3, 0.35, 0.4, 0.45, 0.5\}$, and for Example~\ref{exp: new}, we vary $\rho_0\in\{0.05, 0.075, 0.1, 0.125, 0.15, 0.175\}$.
All the experiment parameters are run for $25$ times and we take the average.
See Figures~\ref{fig: jin experiment} and \ref{fig: new experiment}.
We observe that our Pessimistic Rank-breaking algorithms consistently outperform the vanilla algorithms in terms of the sample efficiency across different examples, sample sizes, and robust set parameters.
Especially, given the same sample size, our algorithms achieve much smaller suboptimality gap than the vanilla baseline, demonstrating their efficiency.

Moreover, as we can see from the experiments varying the robust set size parameters (right figures), the suboptimality gap shrinks when the robust set size parameter grows, matching the theoretical predictions (see Theorem~\ref{thm:_algorithm_design_1_general} and Theorem~\ref{thm:_algorithm_design_general_2}, which predict a decaying suboptimality if increasing the robust set size).

\subsection{Experiment 2: Robustness of the Learned Assortment.}\label{subsec: exp2}

In the second line of experiments, we aim to demonstrate the robustness of the learned assortment by the robust assortment optimization framework we consider.
Our approach is by showing how well our robust data-driven algorithms can bring revenue improvement against state-of-the-art non-robust counterpart \citep{han2025learning} when the customer preference distribution shifts from data generating patterns. 

\paragraph{Experiment pipeline.}
We first describe the pipeline of this experiment. 
We fix a nominal MNL model as the data generation environment, denoted by $\boldsymbol{v}_0$.
We also fix a general non-uniform revenue.
From the nominal environment $\PP_{\bv_0}$, we generate choice data $i_k\sim \PP_{\bv_0}(\cdot|S_k)$ based on randomly sampled assortments $S_k$, forming a dataset of size $n=20000$. 
Then we conduct P$\mathrm{R}^2$B-C (Algorithm~\ref{alg: jin}), P$\mathrm{R}^2$B-V (Algorithm~\ref{alg: new}), and PRB \citep{han2025learning}. 
In specific, we range the robust set parameters for P$\mathrm{R}^2$B(-C/V) to obtain a series of robust assortments. 
We note that the non-robust algorithm PRB can be obtained from P$\mathrm{R}^2$B by setting the robust set parameter to be zero. 

Then to test the robustness properties of the learned assortments, we deploy the series of assortments in perturbed choice distributions. 
We consider the following way of perturbing the choice distribution.
We treat $\bp_0:=\bv_0/\sum_{j\in [N]_+}v_{0,j}$ as a prior distribution over the whole items and we perturb such a prior distribution to obtain different choice models with different prior distributions.
In our experiments, we randomly generate the perturbed prior distributions $\widetilde{\bp}_0$, denoted as the set $\mathcal{P}_{\mathrm{perturb}}$. 
For each of the $\widetilde{\bp}_0\in\mathcal{P}_{\mathrm{perturb}}$, we calculate the suboptimality gap under the MNL model $\widetilde{\bv}_0$ corresponding to $\widetilde{\bp}_0$ between the learned assortments and the counterfactual optimal assortment $S^{\star}_{\text{non-robust}}(\widetilde{\bv}_0)$.
After ranging over the randomly generated perturbation set $\cP_{\mathrm{perturb}}$, we look into the statistics (defined later) of how the assortments given by our proposed robust assortment learning algorithms could improve the expected revenue over the non-robust assortment under shifted choice distributions.

We take $N=50$, $K=50$.
For P$\mathrm{R}^2$B-C, the robust set size is taken over $\rho\in\mathscr{R}_\mathrm{c}:=\{0, 0.1, 0.2, \cdots, 0.9, 1.0\}$, and for P$\mathrm{R}^2$B-V, the robust set parameter $\rho_0$ is taken over $\rho_0\in \mathscr{R}_{\mathrm{v}}:=\{0, 0.02, 0.04, \cdots,0.18, 0.2\}$.
Finally, to investigate the influence of the degree of the perturbation on the robustness properties, we divide the set $\cP_{\mathrm{perturb}}$ into $\cP_{\mathrm{perturb}}^{[0,1)}$ and $\cP_{\mathrm{perturb}}^{[1,\infty)}$. 
Here $\cP_{\mathrm{perturb}}^{[0,1)}$ means that any $\widetilde{\bp}_0\in\cP_{\mathrm{perturb}}^{[0,1)}$ satisfies $D_{\mathrm{KL}}(\widetilde{\bp}_0\|\bp_0)<1$, and similarly for the other set.
We let each of the two sets contain $10000$ randomly perturbed prior distributions.

\begin{figure}[!t]
       \begin{minipage}[b]{0.33\textwidth}
        \centering
        \includegraphics[width=1.1\textwidth]{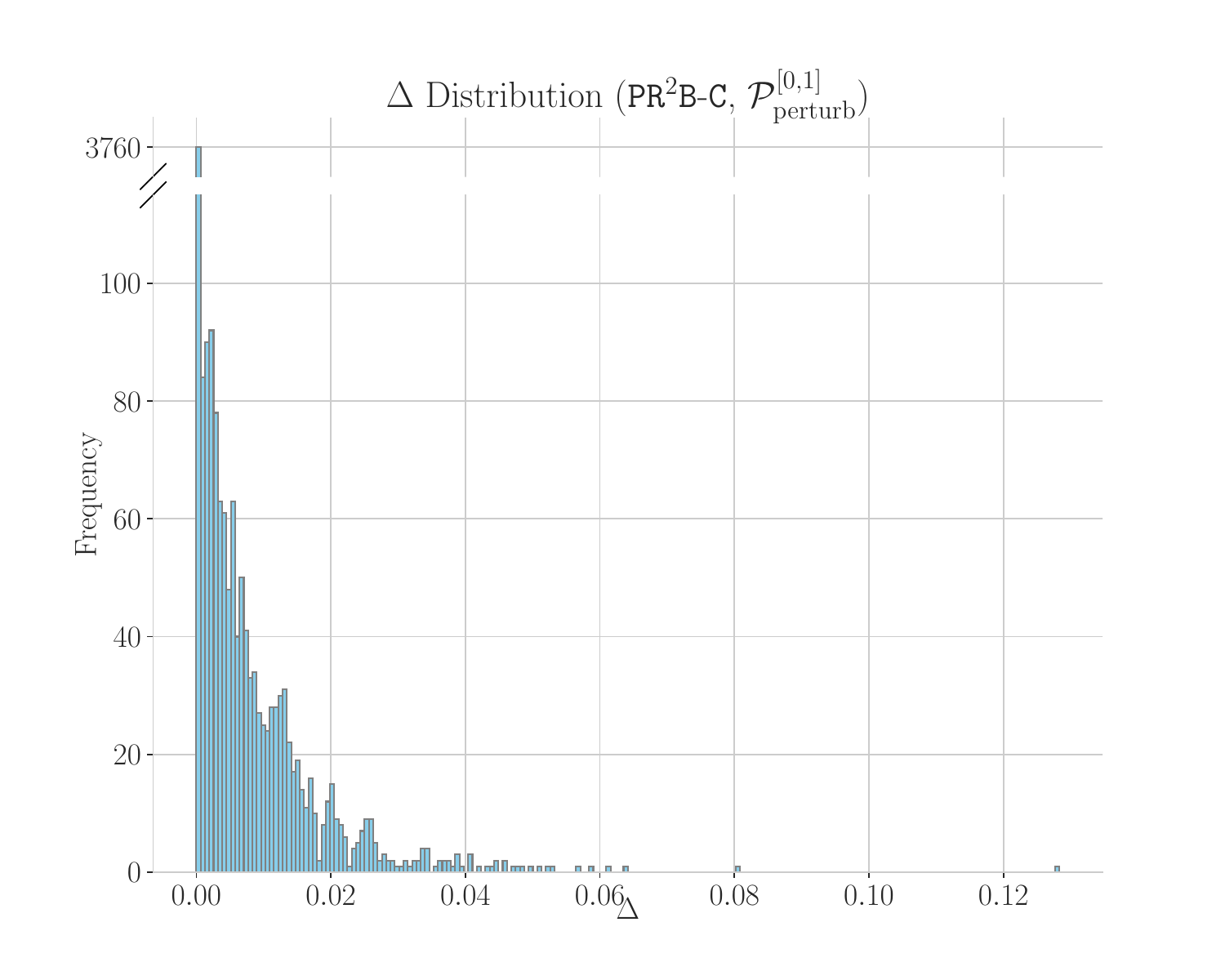}
    \end{minipage}
    \hspace{-5mm}
    \begin{minipage}[b]{0.33\textwidth}
        \centering
        \includegraphics[width=1.1\textwidth]{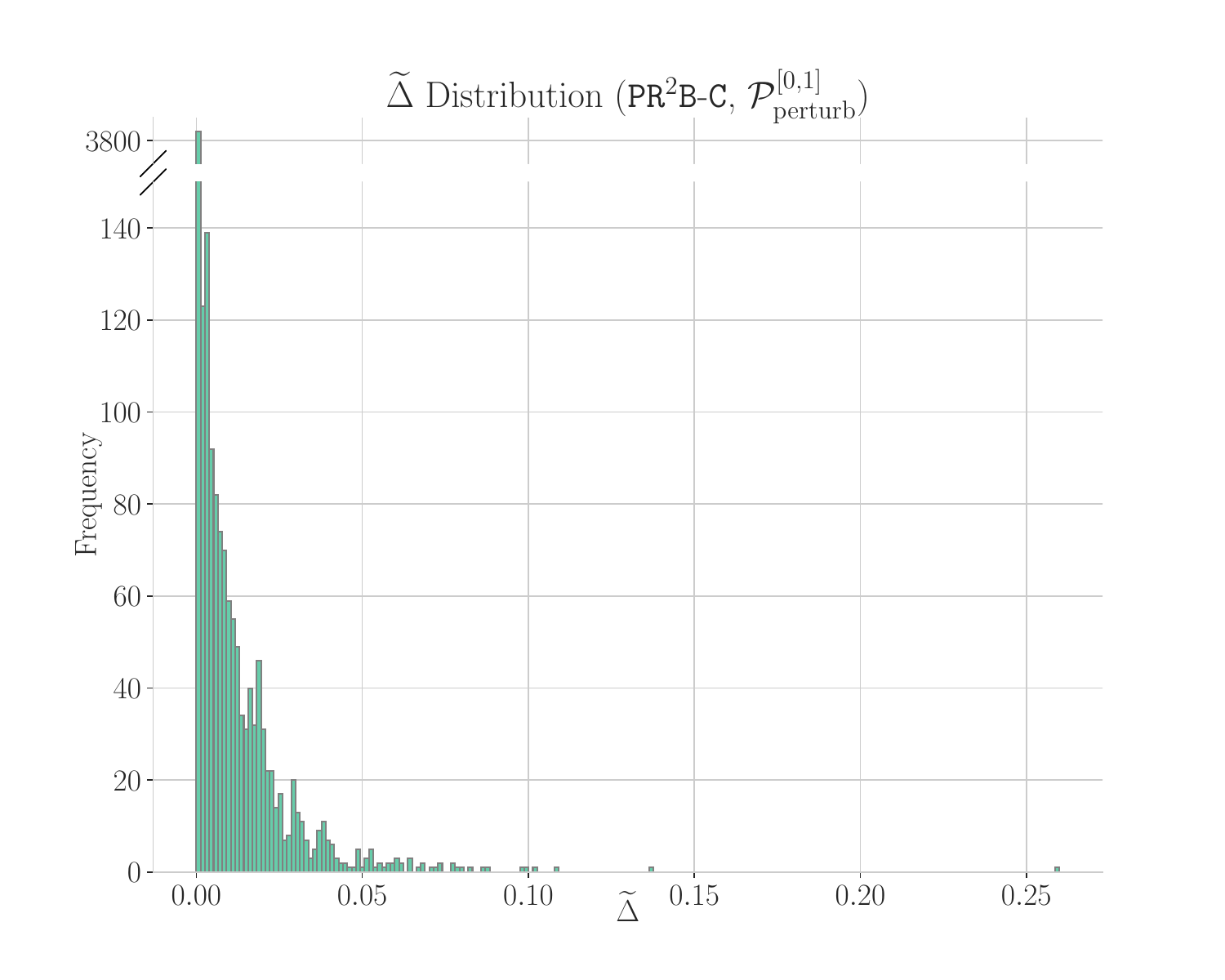}   
    \end{minipage}
    \hspace{-5mm}
    \begin{minipage}[b]{0.33\textwidth}
        \centering
        \includegraphics[width=1.15\textwidth]{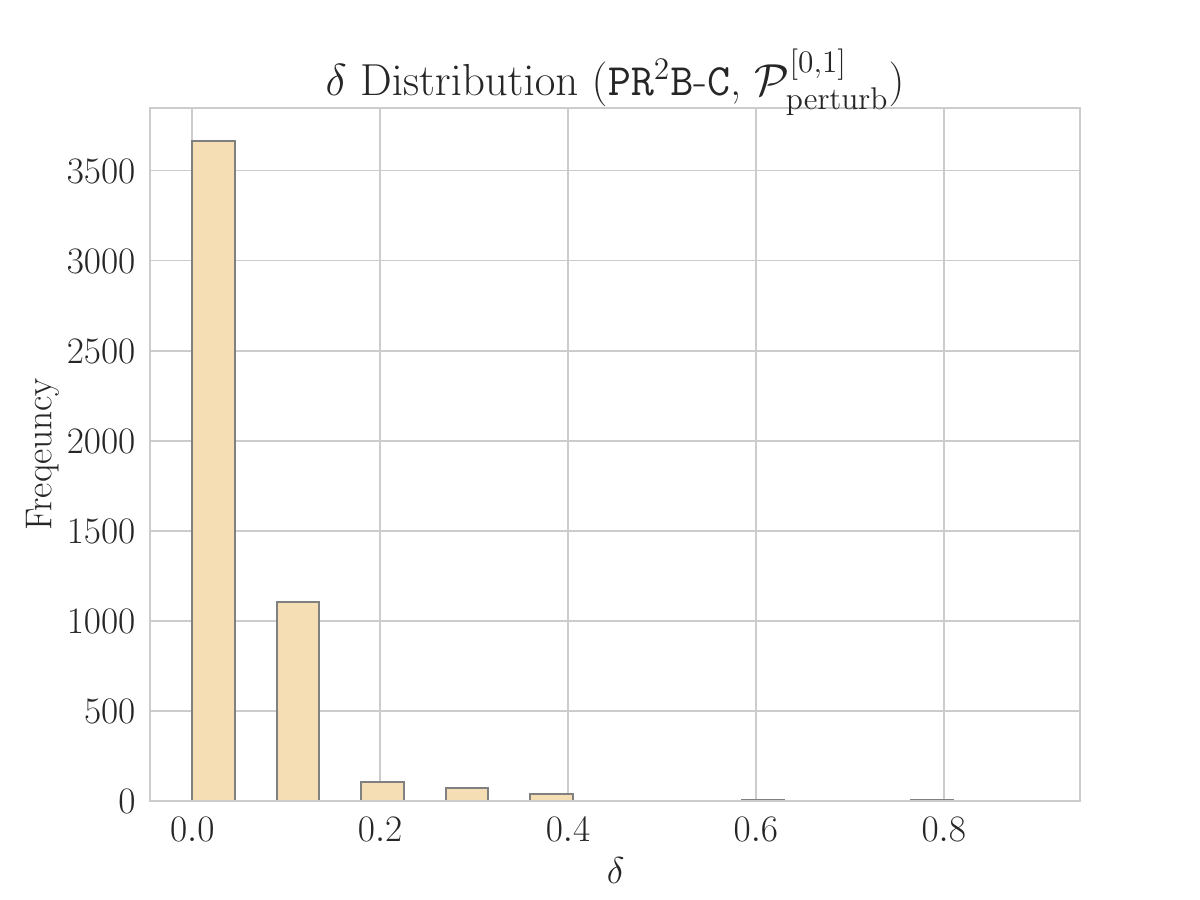}   
    \end{minipage}

           \begin{minipage}[b]{0.33\textwidth}
        \centering
        \includegraphics[width=1.1\textwidth]{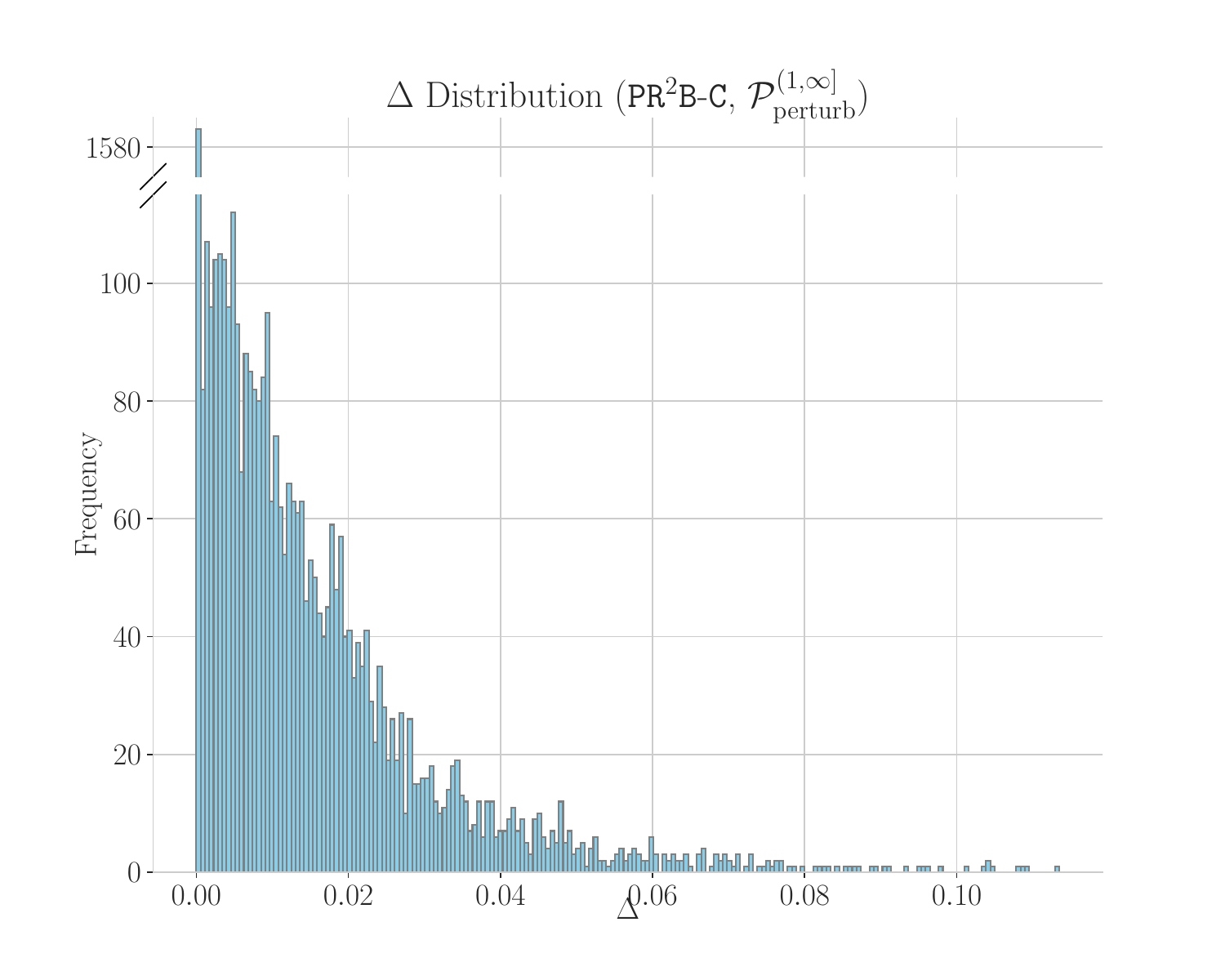}
    \end{minipage}
    \hspace{-5mm}
    \begin{minipage}[b]{0.33\textwidth}
        \centering
        \includegraphics[width=1.1\textwidth]{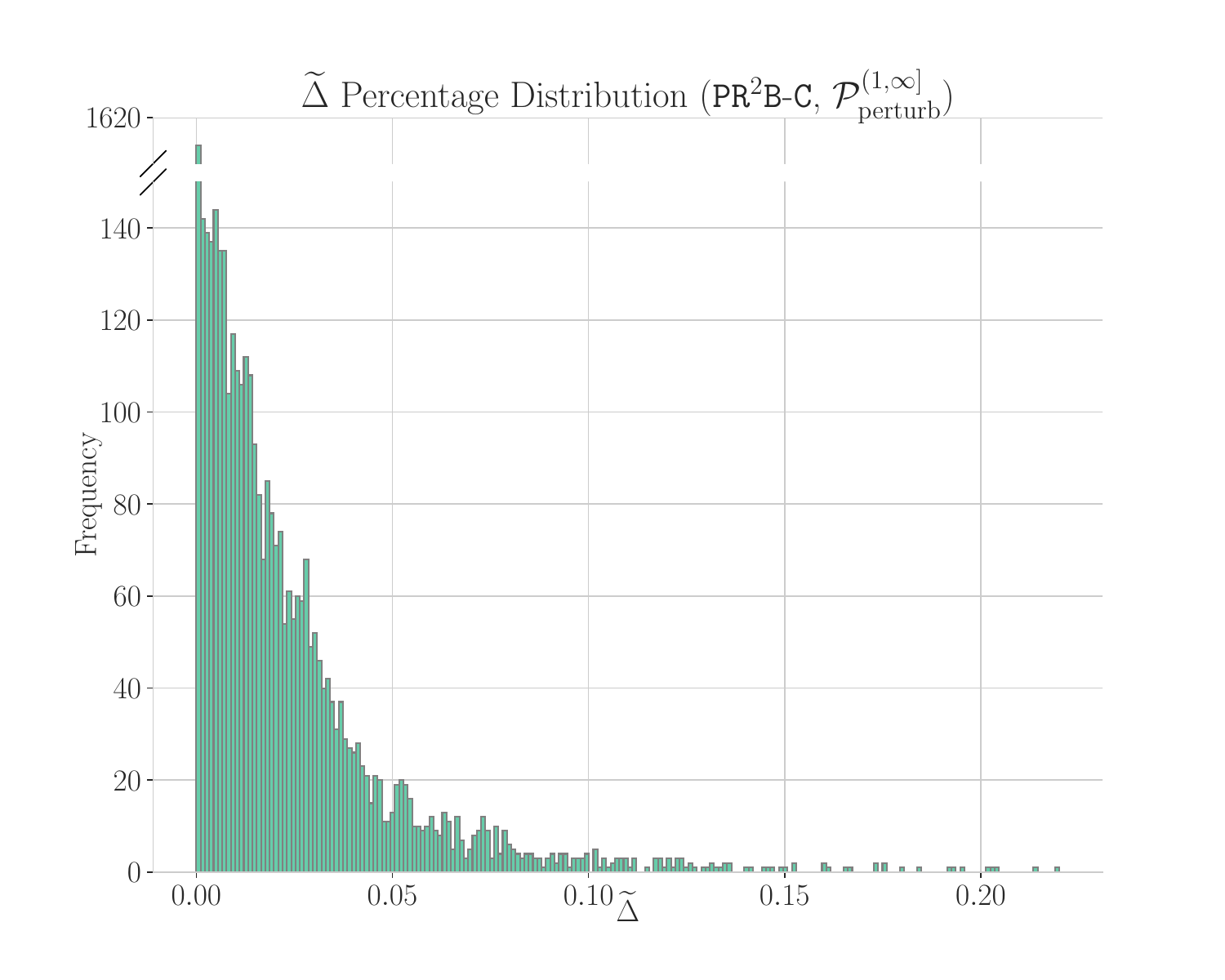}   
    \end{minipage}
    \hspace{-5mm}
    \begin{minipage}[b]{0.33\textwidth}
        \centering
        \includegraphics[width=1.15\textwidth]{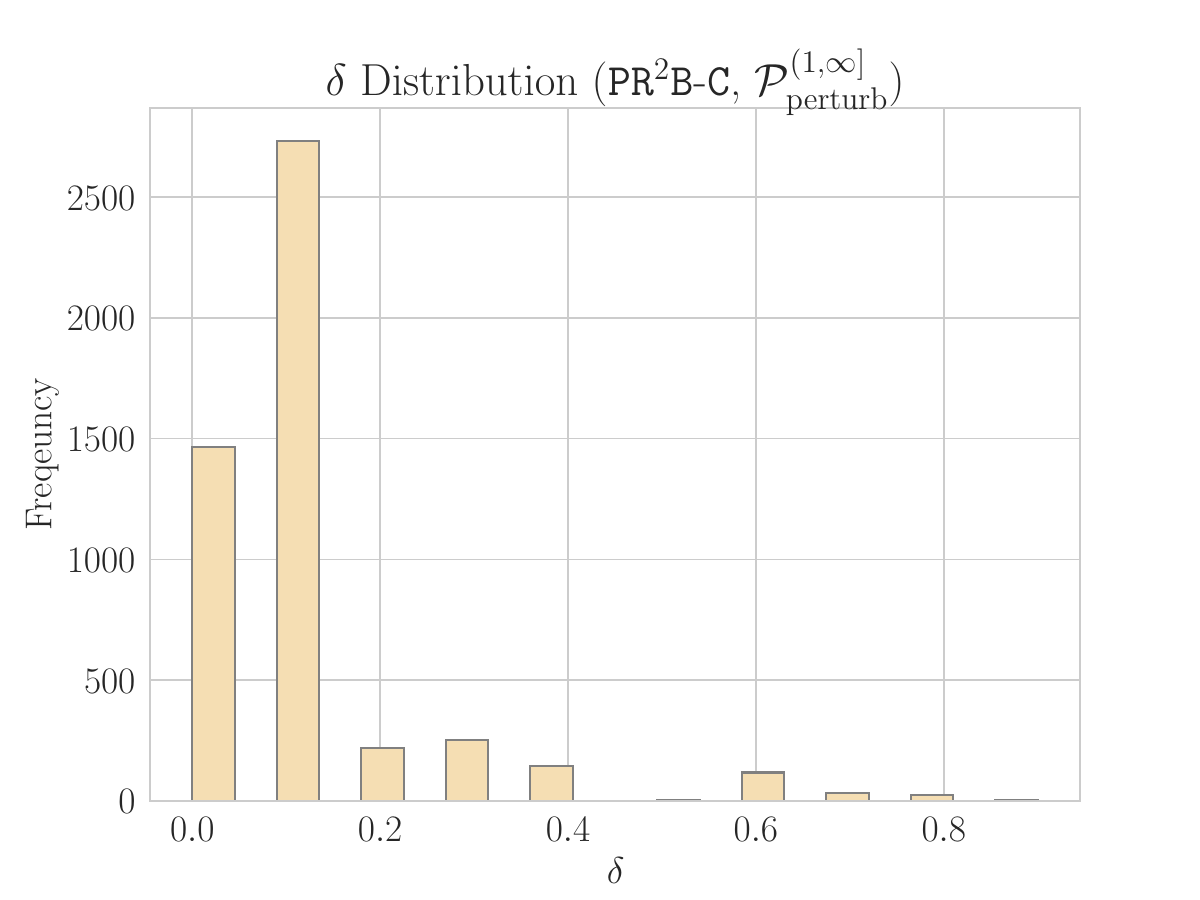}   
    \end{minipage}
    
           \begin{minipage}[b]{0.33\textwidth}
        \centering
        \includegraphics[width=1.1\textwidth]{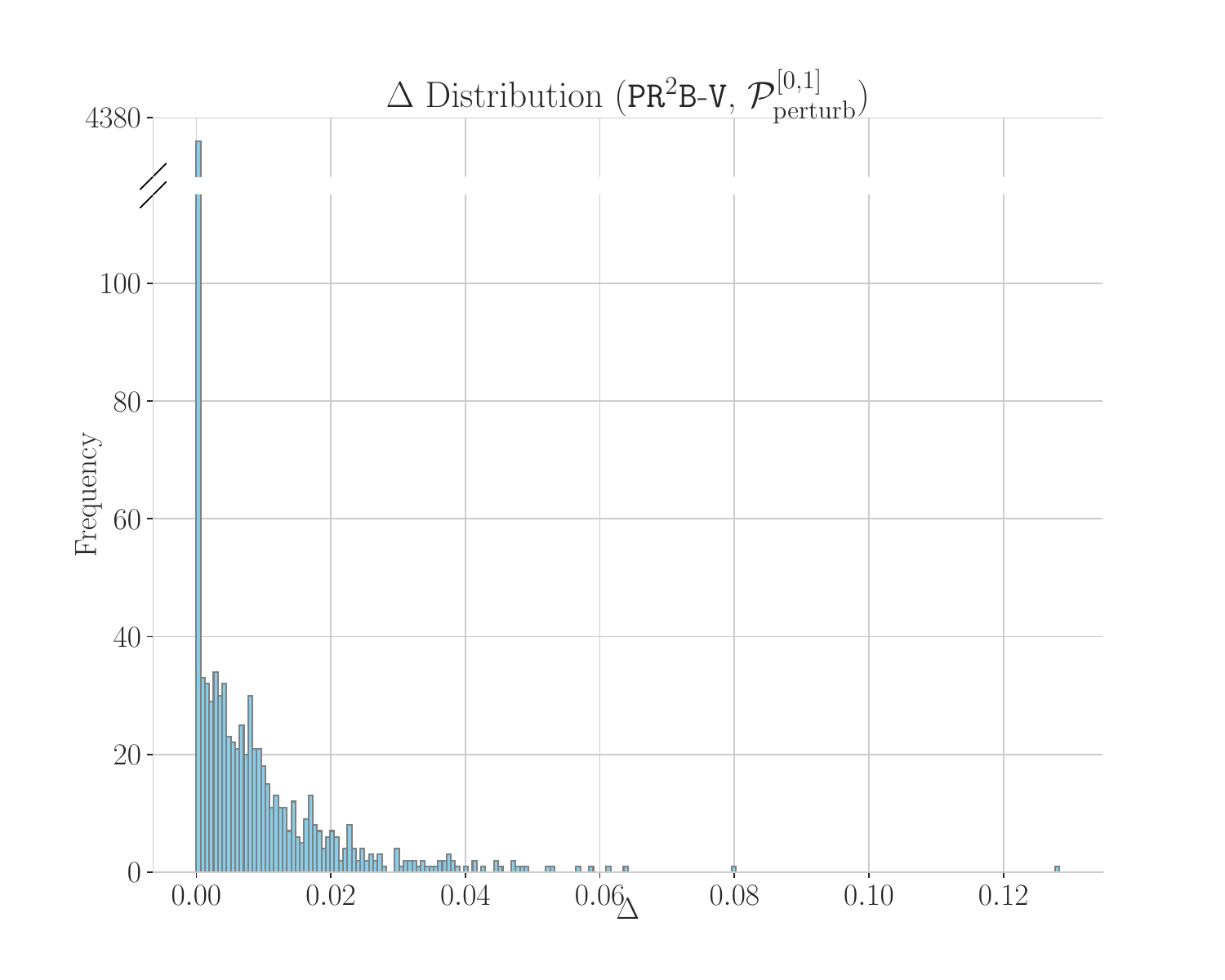}
    \end{minipage}
    \hspace{-5mm}
    \begin{minipage}[b]{0.33\textwidth}
        \centering
        \includegraphics[width=1.1\textwidth]{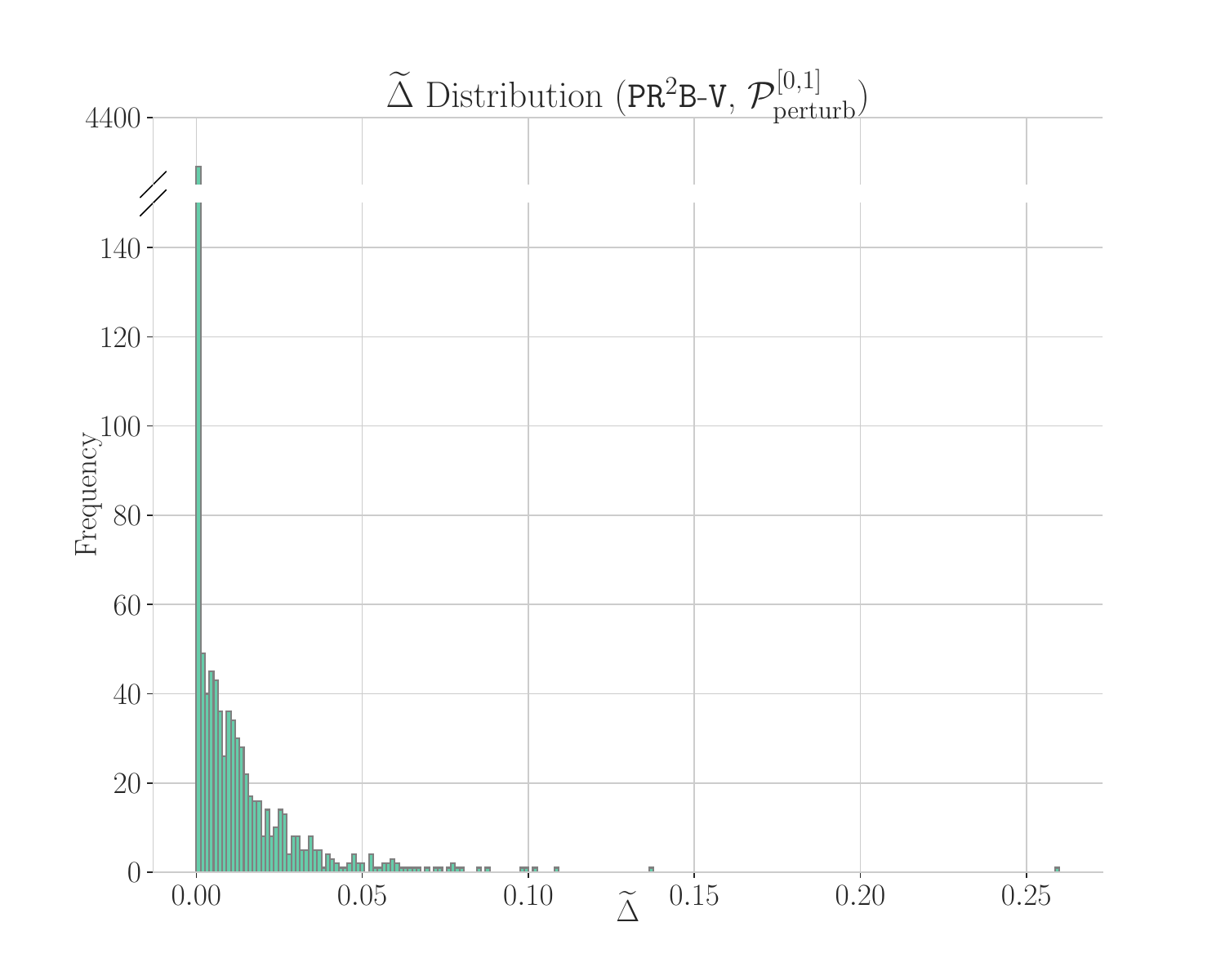}   
    \end{minipage}
    \hspace{-5mm}
    \begin{minipage}[b]{0.33\textwidth}
        \centering
        \includegraphics[width=1.15\textwidth]{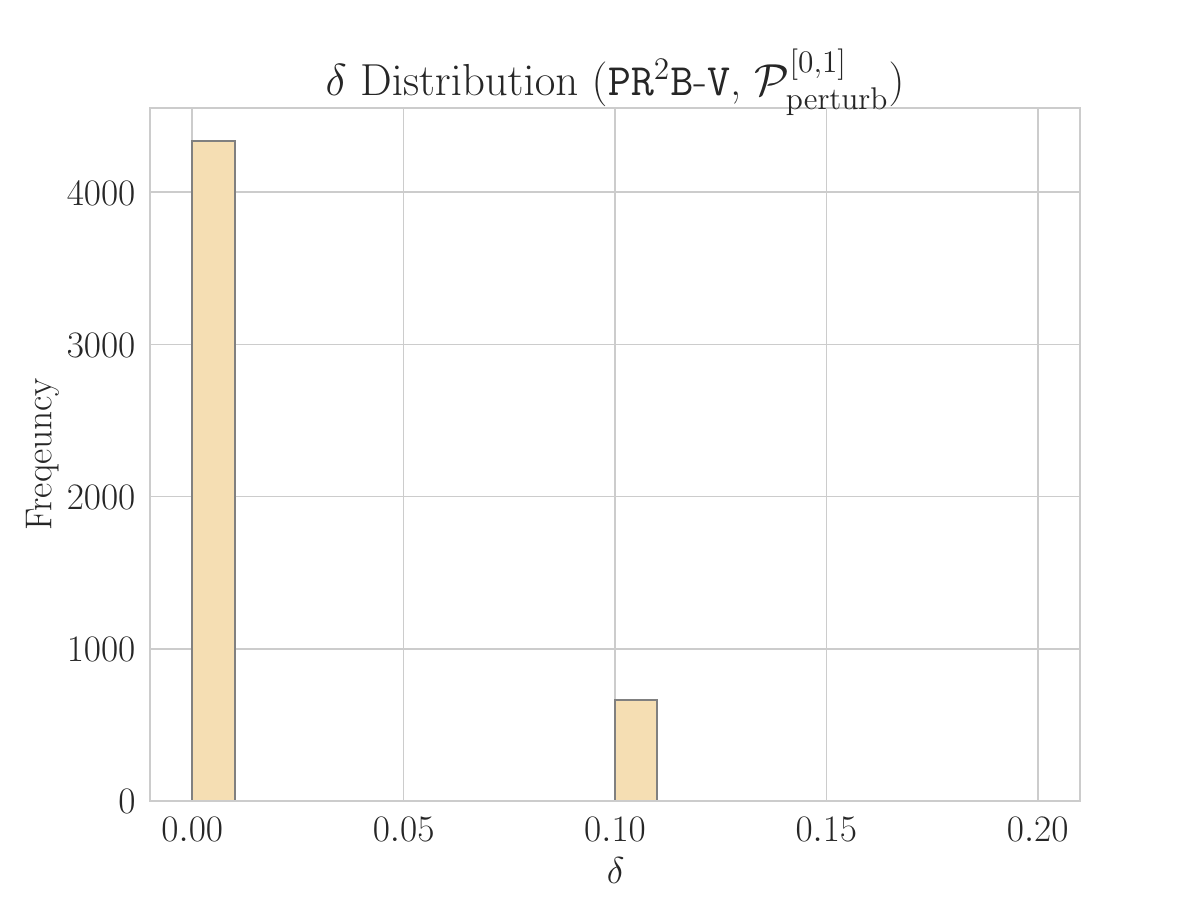}   
    \end{minipage}

           \begin{minipage}[b]{0.33\textwidth}
        \centering
        \includegraphics[width=1.1\textwidth]{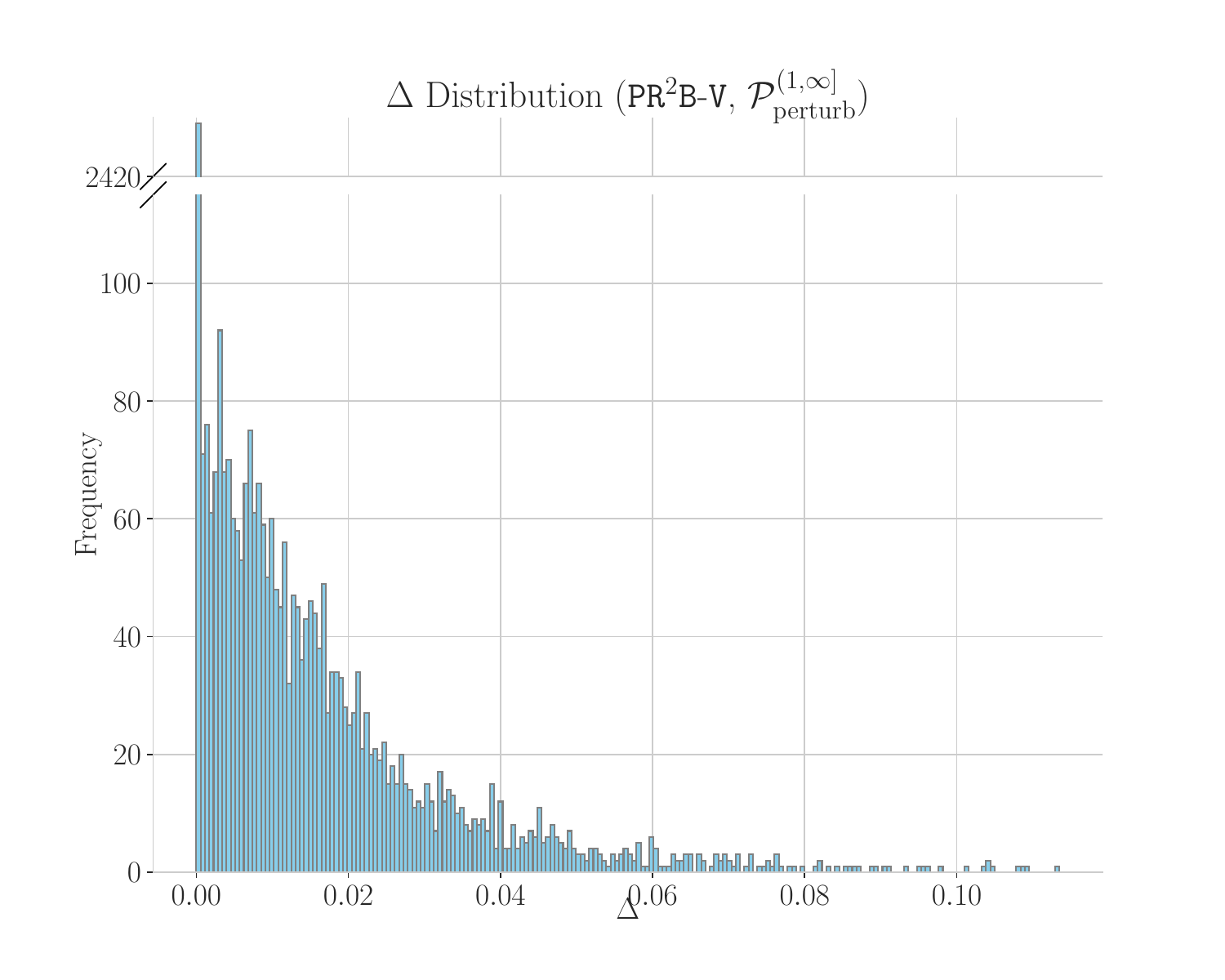}
    \end{minipage}
    \hspace{-5mm}
    \begin{minipage}[b]{0.33\textwidth}
        \centering
        \includegraphics[width=1.1\textwidth]{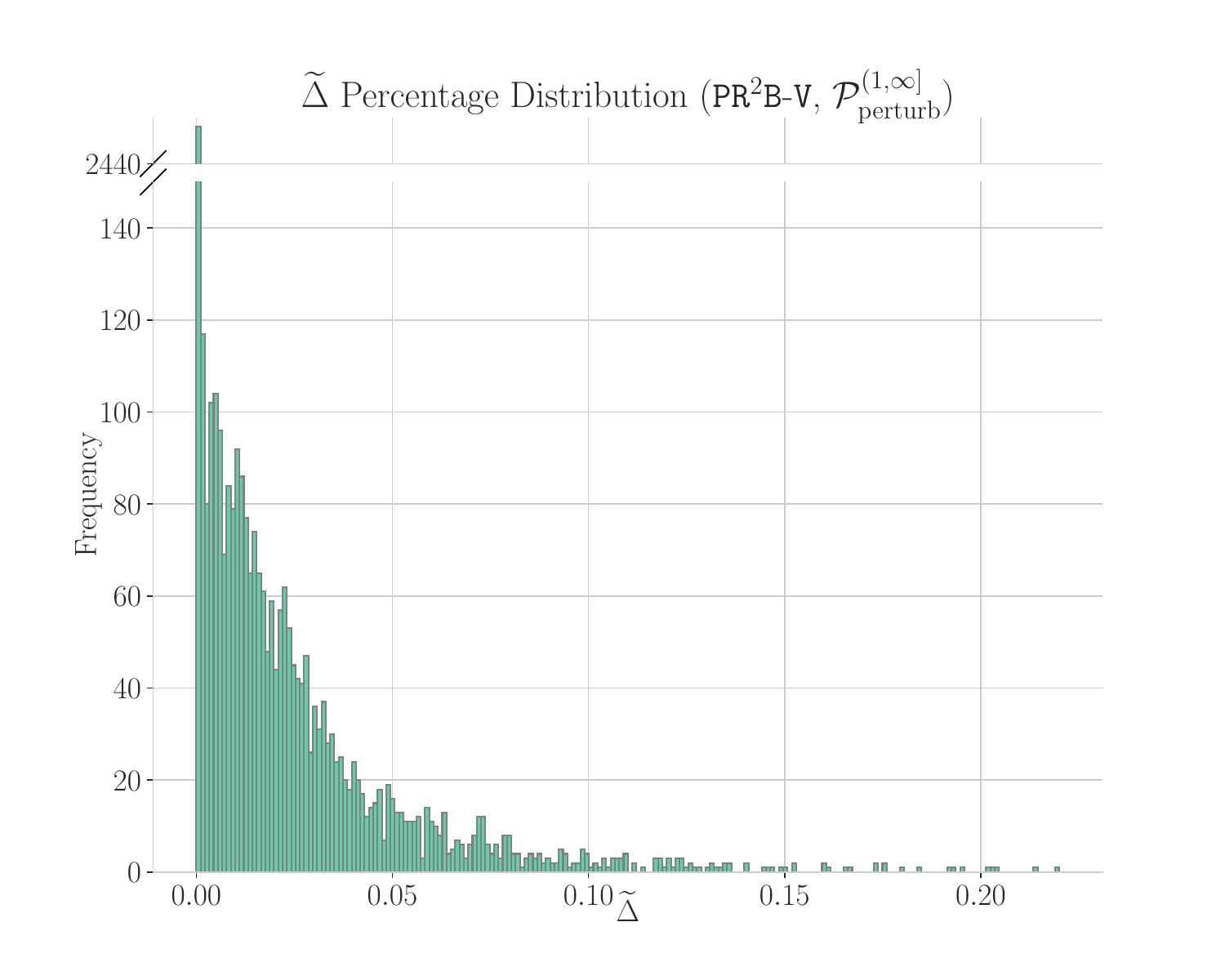}   
    \end{minipage}
    \hspace{-5mm}
    \begin{minipage}[b]{0.33\textwidth}
        \centering
        \includegraphics[width=1.1\textwidth]{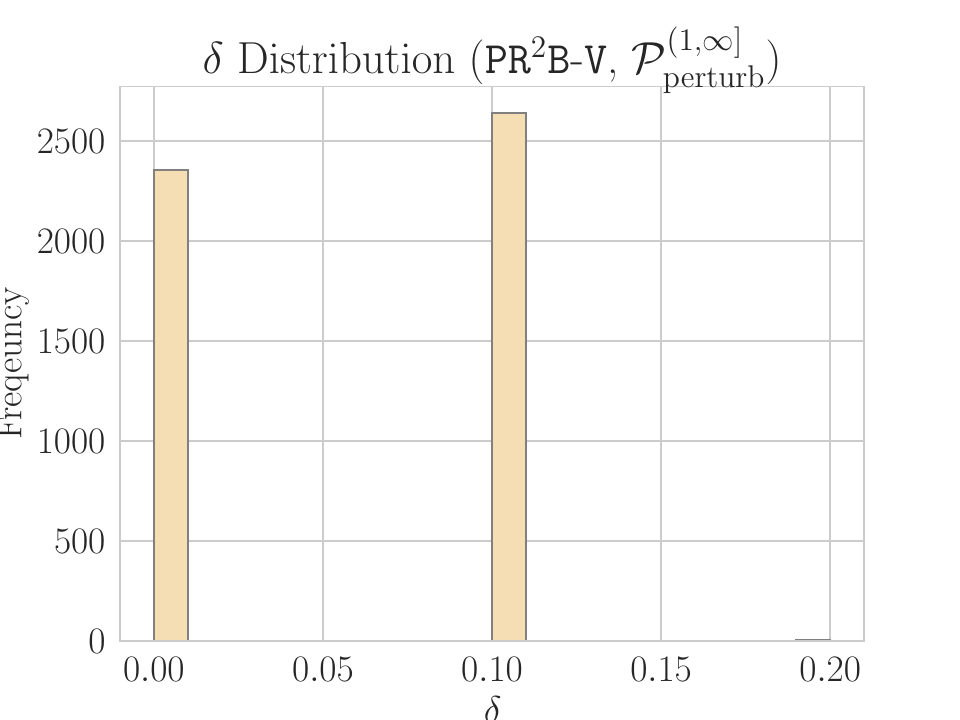}   
    \end{minipage}
    \caption{Results for experiments on robustness of the learned assortment. The first column is the distribution for the absolute improvement $\Delta$. The second column is the distribution of the relative improvement $\widetilde{\Delta}$, and the last column is the distribution of the optimal robust parameter. The first two rows are for P$\mathrm{R}^2$B-C, and the last two rows are for P$\mathrm{R}^2$B-V.}\label{fig: robustness exp}
\end{figure}

\paragraph{Evaluation metrics and results.}
Properly evaluating the robustness of a learned model is a non-trivial task. 
To test the robustness of the assortments learned by our algorithms, we look into three statistics. 

The first statistics is the distribution of the improvement brought by using robust assortment learning approaches.
More specifically, we define the random variable $\Delta$ with  instantiation on $\widetilde{\bp}_0$ as
\begin{align}
    \Delta(\widetilde{\bp}_0):= \sup_{\rho\in \mathscr{R}}R(\widehat{S}_{\rho};\widetilde{\bv}_0) - R(\widehat{S}_{0};\widetilde{\bv}_0),\quad \forall \widetilde{\bp}_0\in\cP_{\mathrm{perturb}},
\end{align}
and we plot the histogram of the distribution of $\Delta$.
This visualizes the distribution of largest possible revenue the robust approach could bring compared to non-robust method. 
For completeness, we consider the second statistics to be the distribution of another variable $\widetilde{\Delta}$, defined as the relative improvement,
\begin{align}
    \widetilde{\Delta}(\widetilde{\bp}_0):= \frac{\Delta(\widetilde{\bp}_0)}{R(\widehat{S}_{0};\widetilde{\bv}_0)}.
\end{align}
The third statistics is defined as another random variable $\varrho$ with instantiation on $\widetilde{\bp}_0$ as
\begin{align}
    \varrho(\widetilde{\bp}_0):= \argsup_{\rho\in \mathscr{R}}R(\widehat{S}_{\rho};\widetilde{\bv}_0) - R(\widehat{S}_{0};\widetilde{\bv}_0),\quad \forall \widetilde{\bp}_0\in\cP_{\mathrm{perturb}}.
\end{align}
This informs us under random choice pattern shifts what is the distribution of an optimal robust parameter. 
By looking into the distribution of $\varrho$ under P$\mathrm{R}^2$B-C and P$\mathrm{R}^2$B-V, we also gain insights into the difference between the formulation of Example~\ref{exp: jin} and Example~\ref{exp: new}.

The results are shown in Figure~\ref{fig: robustness exp}. In general, the robust assortments is able to bring significant revenue improvement compared to non-robust assortment. 
Under extreme circumstances, the robust assortment can bring up to 25\% revenue improvement.
For larger choice distribution shifts, the advantage of the robust assortment over the non-robust assortment is much more significant.
This can be observed by comparing the first (third) row to the second (fourth) row. 
In the meanwhile, for larger choice distribution shifts, one can observe that the optimal choice of the robust set size parameter also becomes larger, meaning that the larger distributional shift necessitates a larger robust set size.

\subsection{Experiment 3: Influence of Cardinality Constraints.}\label{subsec: exp3}

In the last line of experiments, we aim to demonstrate the theoretical predictions regarding the influence of the cardinality constraints on the suboptimality of the learned assortments. 
As predicted by the theory, given the same effective sample size $\min_{i\in S^{\star}}n_i$, the learned robust assortment suffer larger suboptimality if the robust assortment optimization problem involves a larger cardinality constraint $K$, see Theorem~\ref{thm:_algorithm_design_1} and \ref{thm:_algorithm_design_2}.
Also, for both Examples~\ref{exp: jin} and \ref{exp: new}, the uniform revenue case suffers a lower order dependence of $K$, i.e., $\widetilde{\cO}(K^{1/2})$, compared to the general non-uniform revenue case  of $\cO(K)$.
We aim to verify this by experiments.

\paragraph{Construction of the instances and data.} 
We construct the following instances of the nominal choice model and the observational data following  the same protocol as in the proof of statistical lower bounds, see Appendices~\ref{subsec:_proof_lower_bound_1} and \ref{subsec:_proof_lower_bound_new_1}.
We consider $N=50$, $K\in\{1,2,\cdots,9,10\}$, and $n_{\mathrm{effect}}:=\min_{i\in S^{\star}}n_i=1000$.
For the uniform revenue case and each $K\in\{1,2,\cdots,9,10\}$, we define the nominal choice model by 
\begin{align}
    v_j=\frac{1}{K}+\frac{1}{\sqrt{n_{\mathrm{effect}}\cdot K}},\quad \forall 1\leq j\leq K,\quad \text{and}\quad v_i = \frac{1}{K},\quad \forall K+1\leq i\leq N,
\end{align}
and in this case the revenues are uniform, i.e., $r_i = 1$ for any $1\leq i\leq N$. 
For the general non-uniform revenue case and each $K\in\{1,2,\cdots,9,10\}$, we define the nominal choice model by 
\begin{align}
    v_j=\frac{1}{K}+\frac{1}{\sqrt{n_{\mathrm{effect}}}},\quad \forall 1\leq j\leq K;\quad  v_i = \frac{1}{K},\quad \forall K+1\leq i\leq 4K;\quad v_k=1,\quad \forall 4K+1\leq k\leq N.
\end{align}
Then the non-uniform revenues are defined as 
\begin{align}
    r_j = 1,\quad \forall 1\leq j\leq 4K;\quad r_i = 0, \quad\forall 4K+1\leq i\leq N.
\end{align}
Finally, the offline dataset is defined as following.
We let $n_{\mathrm{total}} = 4Kn_{\mathrm{effect}}$ and for each $1\leq k\leq n_{\mathrm{total}}$,
    \begin{align}
        S_k :=\left\{\left\lceil\frac{k}{n_{\mathrm{effect}}}\right\rceil,4K+2,\cdots,5K\right\}.\label{eq: offline dataset main}
    \end{align}

\paragraph{Experiment results.}
We plot the suboptimality gap of the robust expected revenue of P$\mathrm{R}^2$B(-C/V) with respect to cardinality sizes $K$ separately for uniform and non-uniform revenue cases.
See Figures~\ref{fig: cardinality constant size} and \ref{fig: cardinality varying size}.
As we can see from the relation between the logarithm of suboptimality gap and the logarithm of size constraints, it matches our theoretical predictions of $\widetilde{\cO}(K^{1/2})$ dependency for the special case of uniform revenue and $\widetilde{\cO}(K)$ dependency for the general case of non-uniform revenue.
The only slight exception is for the varying robust set size case (Example~\ref{exp: new}, Figure~\ref{fig: cardinality varying size}) where the dependency deviates a bit from prediction when the size constraint $K=1$.
Beyond $K=1$, the result well matches our theory.

Finally, we also vary the robust set parameters and plot the suboptimality gap to cardinality size $K$ for different parameters.
Specifically, for Example~\ref{exp: jin}, we consider $\rho \in \{0.1, 0.15, 0.2, 0.25, 0.3, 0.35, 0.4, 0.45, 0.5\}$, and for Example~\ref{exp: new}, we consider $\rho_0\in\{0.005, 0.01, 0.015, 0.02, 0.025\}$. 
The results are provided in Figures~\ref{fig: cardinality constant size varying rho} and \ref{fig: cardinality varying size varying rho}. 
It further confirms our theory that for both two examples the uniform revenue setup enjoys a lower order of suboptimality gap than the non-uniform revenue setup for different robust set parameters.

\begin{figure}[!t]
   
       \begin{minipage}[b]{0.49\textwidth}
        \centering
        \includegraphics[width=0.95\textwidth]{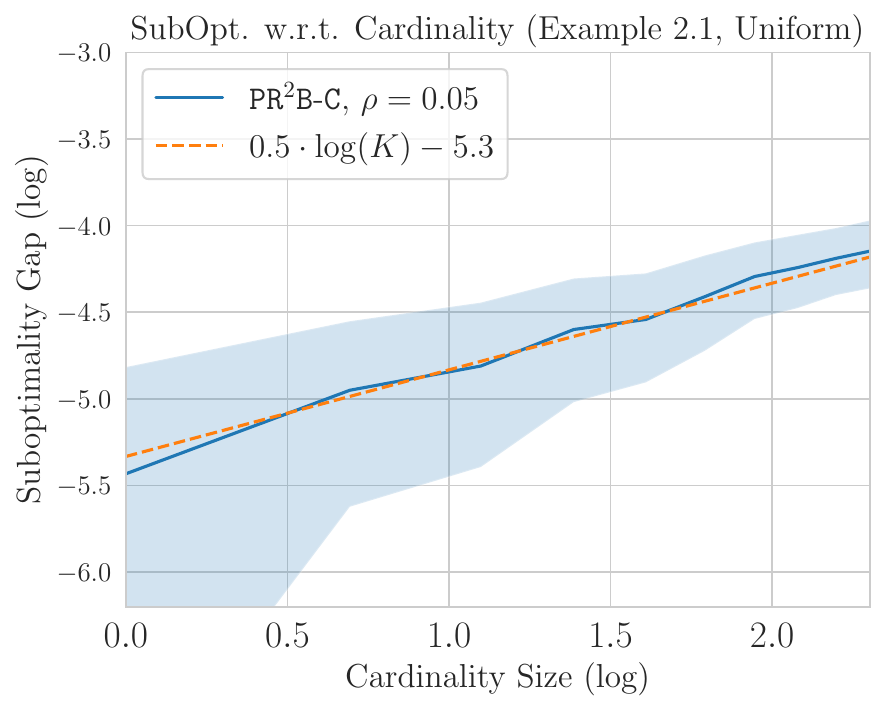}
    \end{minipage}
    \hspace{3mm}
    \begin{minipage}[b]{0.49\textwidth}
        \centering
        \includegraphics[width=0.95\textwidth]{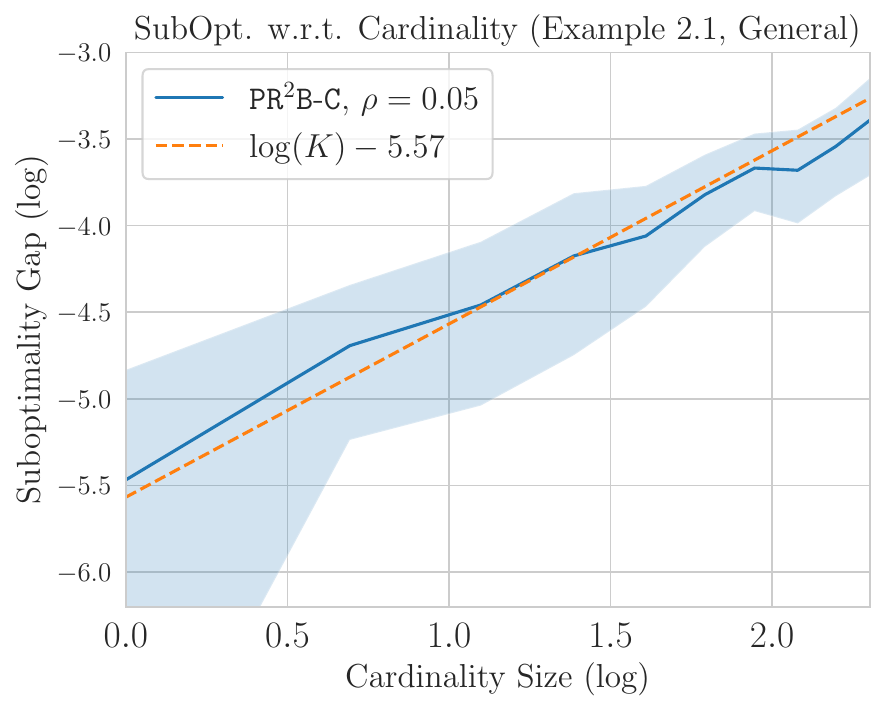}
    \end{minipage}
    \caption{Suboptimality gap of P$\mathrm{R}^2$B-C (Algorithm~\ref{alg: jin}) with respect to the cardinality constraint. All the parameter configurations are averaged over $25$ independent runs.}\label{fig: cardinality constant size}
\end{figure}

\begin{figure}[!t]
   
       \begin{minipage}[b]{0.49\textwidth}
        \centering
        \includegraphics[width=0.95\textwidth]{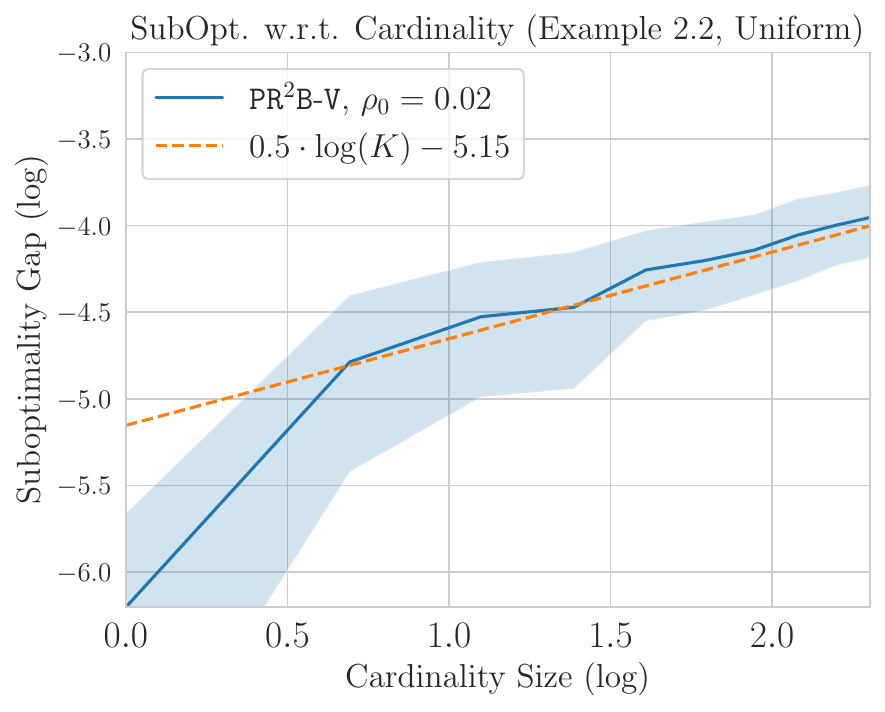}
    \end{minipage}
    \hspace{3mm}
    \begin{minipage}[b]{0.49\textwidth}
        \centering
        \includegraphics[width=0.95\textwidth]{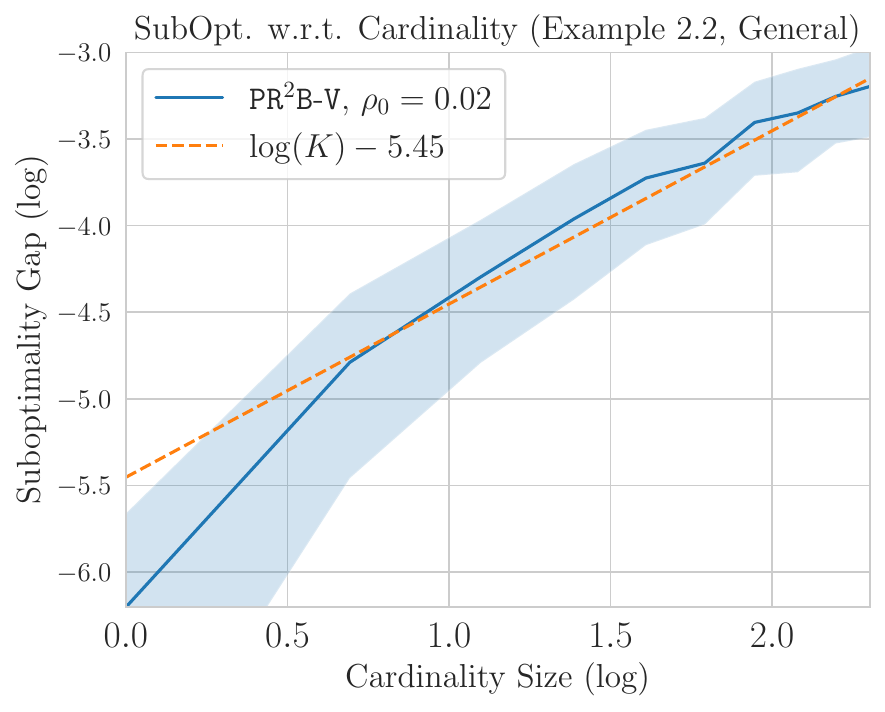}
    \end{minipage}
    \caption{Suboptimality gap of P$\mathrm{R}^2$B-V (Algorithm~\ref{alg: new}) with respect to the cardinality constraint. All the parameter configurations are averaged over $25$ independent runs.}\label{fig: cardinality varying size}
\end{figure}

\begin{figure}[!t]
   
       \begin{minipage}[b]{0.49\textwidth}
        \centering
        \includegraphics[width=1\textwidth]{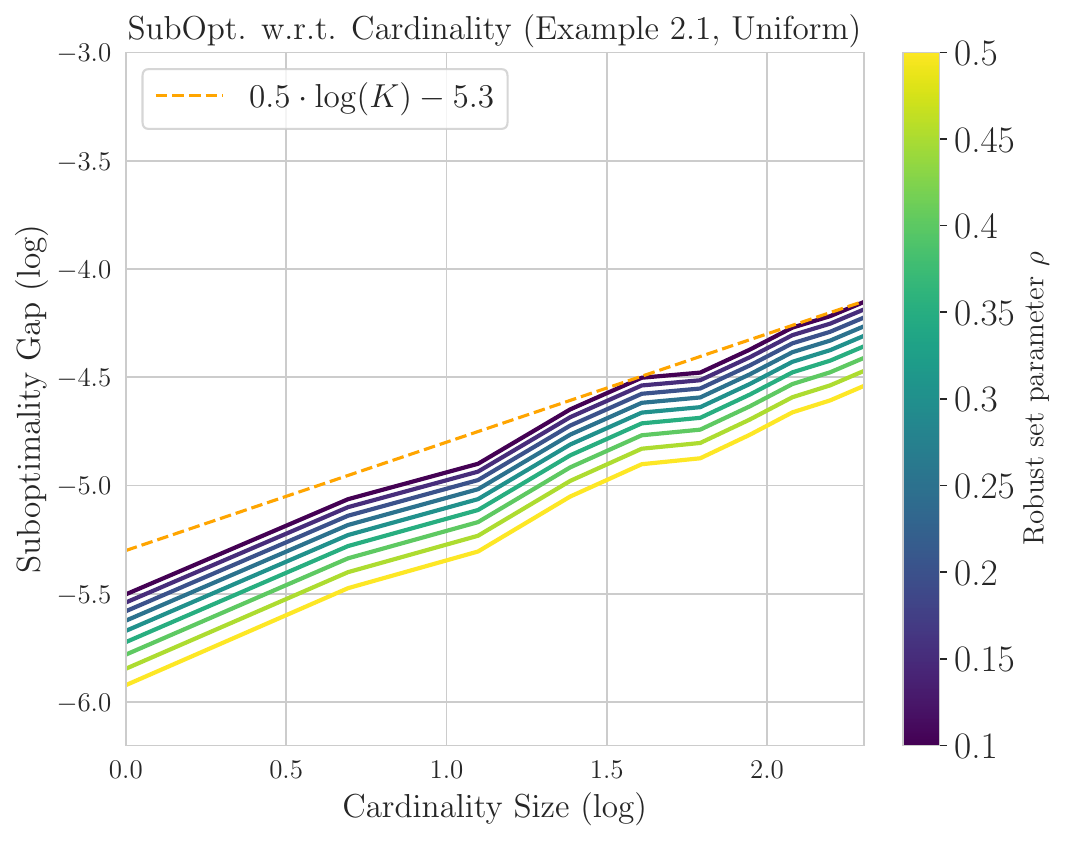}
    \end{minipage}
    \hspace{3mm}
    \begin{minipage}[b]{0.49\textwidth}
        \centering
        \includegraphics[width=1\textwidth]{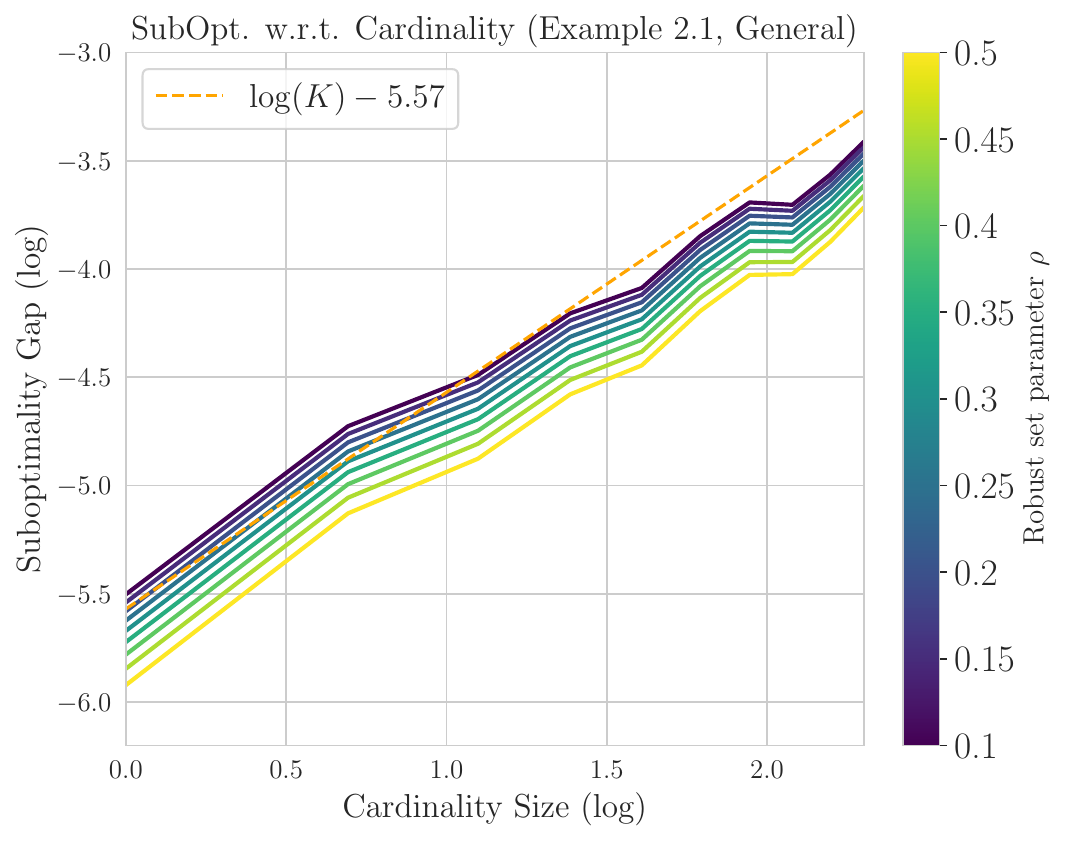}
    \end{minipage}
    \caption{Suboptimality gap of P$\mathrm{R}^2$B-C (Algorithm~\ref{alg: jin}) with respect to the cardinality constraint for different robust set size $\rho$. All the parameter configurations are averaged over $25$ independent runs.}\label{fig: cardinality constant size varying rho}
\end{figure}

\begin{figure}[!t]
   
       \begin{minipage}[b]{0.49\textwidth}
        \centering
        \includegraphics[width=1\textwidth]{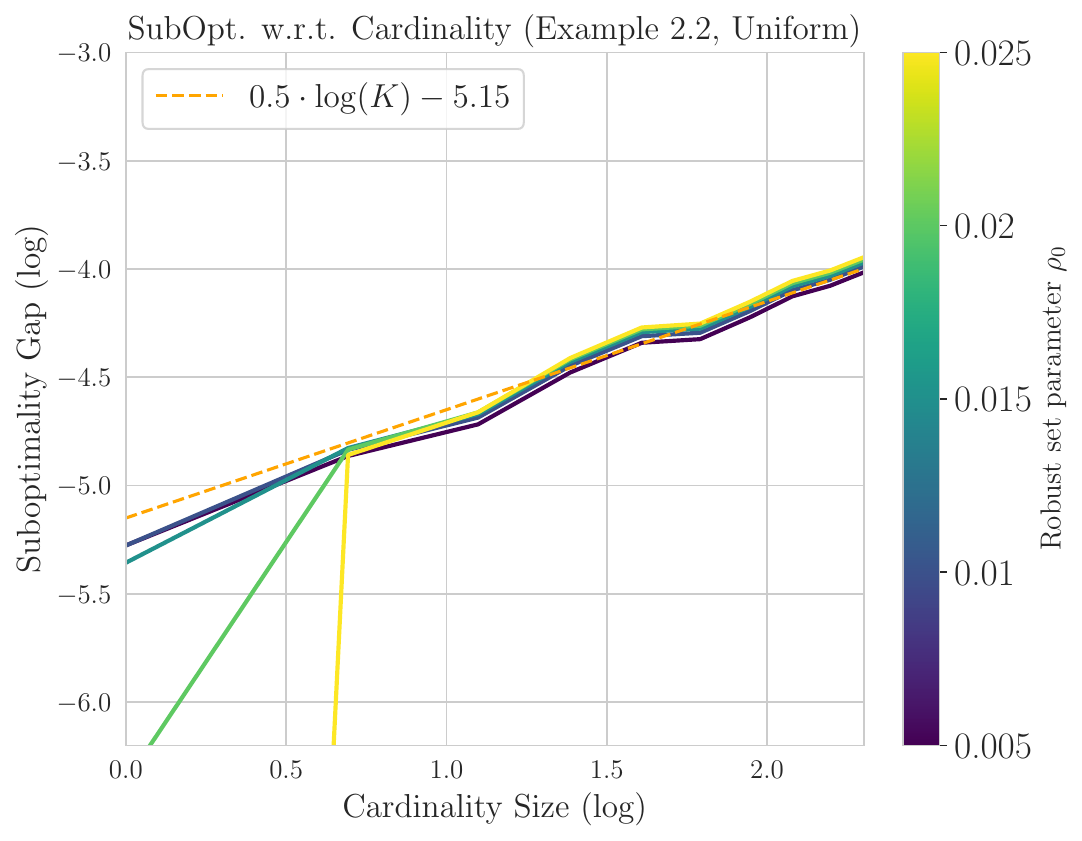}
    \end{minipage}
    \hspace{3mm}
    \begin{minipage}[b]{0.49\textwidth}
        \centering
        \includegraphics[width=1\textwidth]{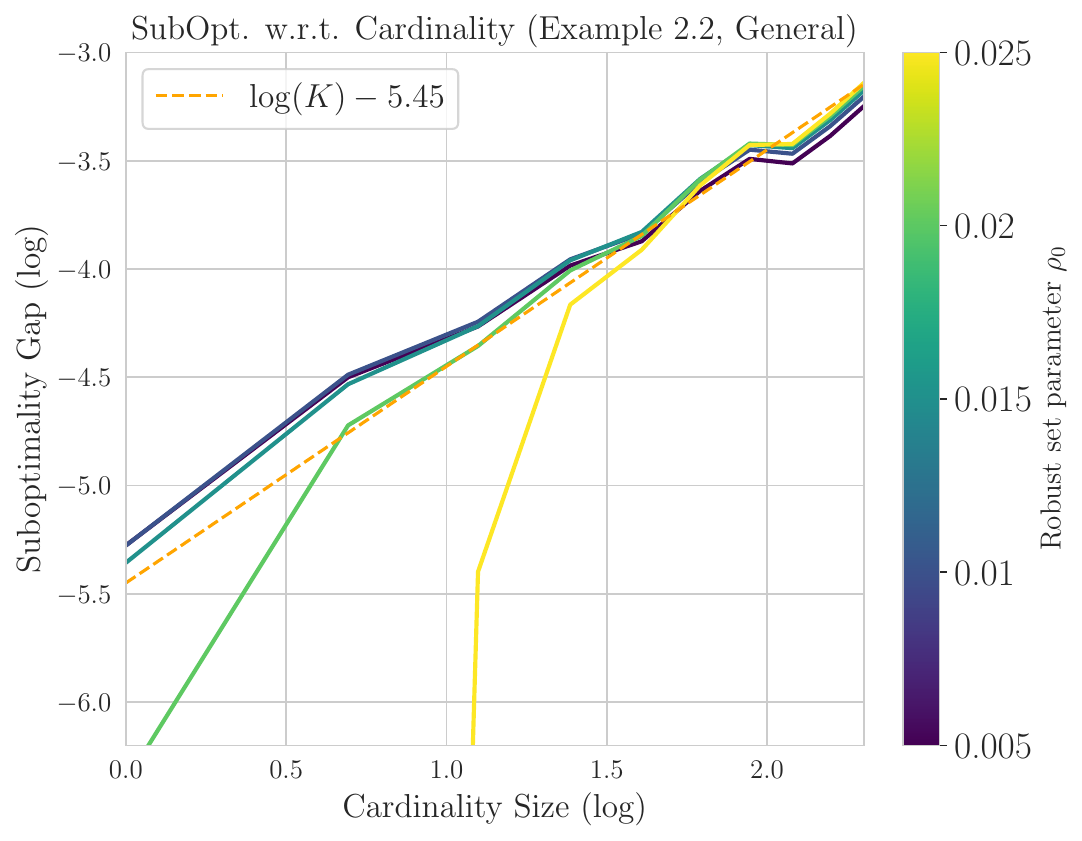}
    \end{minipage}
    \caption{Suboptimality gap of P$\mathrm{R}^2$B-V (Algorithm~\ref{alg: new}) with respect to the cardinality constraint for different robust set parameter $\rho_0$. All the parameter configurations are averaged over $25$ independent runs.}\label{fig: cardinality varying size varying rho}
\end{figure}

\section{Conclusions}

In this work, we addressed the limitations of traditional data-driven assortment optimization methods, which often falter under shifting customer preferences and model misspecification. To mitigate these challenges, we introduced a robust framework that accounts for distributional uncertainty in customer choice behavior, aiming to safeguard against worst-case revenue outcomes. We demonstrated that robust assortment planning remains computationally tractable even when incorporating uncertainty, and we extended this framework to the data-driven setting by developing statistically optimal algorithms that balance robustness with sample efficiency. Our theoretical contributions include tight bounds on sample complexity and the introduction of ``robust item-wise coverage'' as a key concept for enabling efficient learning. These results underscore the importance of designing algorithms that generalize reliably under uncertainty, offering both practical tools and theoretical insights for data-driven robust assortment optimization.

\section*{Acknowledgment}
Jose Blanchet gratefully acknowledges supports from ONR under award N00014-24-1-2655, and the National Science Foundation (NSF) under grants 2312204 and 2403007. 
Zhengyuan Zhou gratefully acknowledges supports from ONR under award N00014-24-1-2672 and NSF 2312205.

\bibliography{reference.bib}

\bibliographystyle{ims}

\newpage

\appendix

\section{Algorithms and Proofs for Solving the Optimal Robust Assortment with Known Nominal Model}

\subsection{Proof of Proposition~\ref{prop:_optimal_set_unconstrained}}\label{subsec:_proof_optimal_set_unconstrained}

\begin{proof}[Proof of Proposition~\ref{prop:_optimal_set_unconstrained}]
    Proposition~\ref{prop:_optimal_set_unconstrained} hold for Example~\ref{exp: jin} due to Theorem 2 of \cite{jin2022distributionally}. 
    We now prove this conclusion for Example~\ref{exp: new}.
    By the dual representation (Proposition~\ref{prop:_dual}), any assortment $S\subseteq[N]$ is optimal if and only if there exists $\lambda \geq 0$ such that 
    \begin{align}
        &-\lambda\cdot\log\left(\frac{\sum_{i\in S_+}v_i\cdot\exp(-r_i/\lambda)}{\sum_{i\in S_+}v_i - (1-e^{-\rho_0})\cdot \sum_{i\in[N]_+}v_i }\right) \\
        &\qquad \geq \sup_{S'\subseteq[N],\lambda'\geq 0}\left\{-\lambda'\cdot\log\left(\frac{\sum_{i\in S'_+}v_i\cdot\exp(-r_i/\lambda')}{\sum_{i\in S'_+}v_i - (1-e^{-\rho_0})\cdot \sum_{i\in[N]_+}v_i }\right)\right\}:=R^{\star},
    \end{align}
    which is further equivalent to 
    \begin{align}
        \sum_{i\in S}v_i\cdot \Big(\exp(-r_i/\lambda) - \exp(-R^{\star}/\lambda)\Big) \leq \exp(-R^{\star}/\lambda) - 1  - \exp(-R^{\star}/\lambda)\cdot (1-e^{-\rho_0})\cdot \sum_{i\in[N]_+}v_i.\label{eq: unconstrained equiva}
    \end{align}
    Consider the complete set assortment $S^{\star}:=\{1,\cdots,i^{\star}\}$ with $r_i\geq R^{\star}$ for all $i\in S^{\star}$. 
    We can find that such $S^{\star}$ is a minimizer of the left hand side of \eqref{eq: unconstrained equiva} regardless of specific $\lambda\geq 0$.
    Therefore, such assortment $S^{\star}$ must be an optimal robust assortment, because otherwise, for any assortment $S$ and any dual $\lambda\geq 0$, 
     \begin{align}
        \sum_{i\in S}v_i\cdot \Big(\exp(-r_i/\lambda) - \exp(-R^{\star}/\lambda)\Big) & \geq \sum_{i\in S^{\star}}v_i\cdot \Big(\exp(-r_i/\lambda) - \exp(-R^{\star}/\lambda)\Big) \\
&> \exp(-R^{\star}/\lambda) - 1  - \exp(-R^{\star}/\lambda)\cdot (1-e^{-\rho_0})\cdot \sum_{i\in[N]_+}v_i,\label{eq: unconstrained equiva contra}
    \end{align}
    which contradicts the feasibility of the  optimality condition. 
    This proves Proposition~\ref{prop:_dual}.
\end{proof}

\subsection{Proof of Proposition~\ref{prop:_optimal_set_uniform}}\label{subsec:_proof_optimal_set_uniform}

\begin{proof}[Proof of Proposition~\ref{prop:_optimal_set_uniform}]
    We first prove the conclusion for Example~\ref{exp: jin}. 
    By the dual representation (Proposition~\ref{prop:_dual}), for any assortment $S\subseteq[N]$ with $|S|\leq K$,
    \begin{align}
        R_{\rho}(S;\bv)&=\sup_{\lambda\geq 0} \left\{-\lambda\cdot\log\left(\frac{1+ \exp(-r_{\max}/\lambda)\cdot\sum_{i\in S}v_i}{\sum_{i\in S_+}v_i}\right) - \lambda\cdot \rho\right\} \\
        &=\sup_{\lambda\geq 0} \left\{-\lambda\cdot\log\left(\exp(-r_{\max}/\lambda) + \frac{1-\exp(-r_{\max}/\lambda)}{\sum_{i\in S_+}v_i}\right) - \lambda\cdot \rho\right\}.
    \end{align}
    Thus to maximize the robust expected revenue, it suffices to find $S$ such that $\sum_{i\in S} v_i$ is maximized, which is equivalent to find the assortment with top $K$ attraction parameter $v_i$. This proves for Example~\ref{exp: jin}. 

    Now we prove the conclusion for Example~\ref{exp: new}. 
    Similarly, by the dual representation, for any assortment $S\subseteq[N]$ with $|S|\leq K$, we have that 
    \begin{align}
        R_{\rho(\cdot;\cdot)}(S;\bv)&=\sup_{\lambda\geq 0}\left\{-\lambda\cdot\log\left(\frac{1 + \exp(-r_{\max}/\lambda)\cdot \sum_{i\in S}v_i}{\sum_{i\in S_+}v_i - (1-e^{-\rho_0})\cdot \sum_{i\in[N]_+}v_i }\right)\right\}\\
        &=\sup_{\lambda\geq 0}\left\{-\lambda\cdot\log\left(\exp(-r_{\max}/\lambda) + \frac{1-\exp(-r_{\max}/\lambda) \cdot(1-(1-e^{-\rho_0})\cdot \sum_{i\in[N]_+}v_i)}{\sum_{i\in S_+}v_i - (1-e^{-\rho_0})\cdot \sum_{i\in[N]_+}v_i }\right)\right\}.
    \end{align}
    under the condition that $\rho_0\leq \log(1+1/v_{\mathrm{tot}})$, we know that  $1-(1-e^{-\rho_0})\cdot \sum_{i\in[N]_+}v_i)>0$. 
    Thus the robust expected revenue is maximized when $\sum_{i\in S}$ is maximized, which is equivalent to find the assortment with top $K$ attraction parameter $v_i$. This proves for Example~\ref{exp: jin} and finishing the proof of Proposition~\ref{prop:_optimal_set_uniform}. 
\end{proof}

\subsection{Proof of Proposition~\ref{prop:_optimal_set_general}}\label{subsec:_proof_optimal_set_general}

\begin{proof}[Proof of Proposition~\ref{prop:_optimal_set_general}]
    The algorithm and analysis for the general case under Example~\ref{exp: jin} is presented by \cite{jin2022distributionally} in Section 3.3. 
    We refer the readers to their work for the details. 
    In the coming subsections, we give the algorithm design and analysis for Example~\ref{exp: new}. 
    These combined prove Proposition~\ref{prop:_optimal_set_general}.
\end{proof}

\subsubsection{Algorithm Design for Varying Robust Set Size (Example~\ref{exp: new})}\label{subsubsec: algorithm design}

Our algorithm design is based upon the geometric perspective for assortment optimization introduced in \cite{rusmevichientong2010dynamic}. 
We first introduce the basic observations for the problem and present the intuitions of the algorithm design, after which we present the full algorithms and the theoretical guarantees.

\paragraph{Intuitions.}
Recall that here the goal is to (approximately) solve the following optimization problem,
\begin{align}
    S^{\star}:=\argmax_{S\subseteq[N],|S|\leq K} R_{\rho(\cdot;\cdot)}(S;\bv),\quad R^{\star}:=\max_{S\subseteq[N],|S|\leq K} R_{\rho(\cdot;\cdot)}(S;\bv)=R_{\rho(\cdot;\cdot)}(S^{\star};\bv)\label{eq: computation new 1}
\end{align}
with the robust set size function $\rho(\cdot;\cdot)$ given in Example~\ref{exp: new}, the nominal parameter $\bv$ known, and a given constraint size $K$.
By the dual representation (Proposition~\ref{prop:_dual}), it suffices to equivalently solve the following, 
\begin{align}
    (S^{\star},\lambda^{\star}):=\argmax_{\substack{S\subseteq[N],|S|\leq K\\
    0\leq \lambda\leq B(S,\bv)}} H_{\rho(\cdot;\cdot)}(S,\lambda;\bv),
\end{align}
where the function $H$ is defined as 
\begin{align}
    H_{\rho(\cdot;\cdot)}(S,\lambda;\bv)&:=-\lambda\cdot\log\left(\frac{\sum_{i\in S_+}v_i\cdot\exp(-r_i/\lambda)}{\sum_{i\in S_+}v_i}\right) - \lambda\cdot \rho_0 +\lambda\cdot  \log\left(e^{\rho_0} - \frac{(e^{\rho_0} - 1)\cdot \sum_{i\in[N]_+}v_i}{\sum_{i\in S_+}v_i}\right)\\
    &=-\lambda\cdot\log\left(\frac{\sum_{i\in S_+}v_i\cdot\exp(-r_i/\lambda)}{\sum_{i\in S_+}v_i - (1-e^{-\rho_0})\cdot \sum_{i\in[N]_+}v_i }\right),
\end{align}
and the upper bound of the dual variable $\lambda$ is defined as 
\begin{align}
    B(S,\bv):=\frac{r_{\max}}{\log\left(\frac{\sum_{j\in S_+}v_j}{\sum_{j\in S_+}v_j - (1-e^{-\rho_0})\cdot \sum_{j\in [N]_+}v_j}\right)}.
\end{align}
To design an efficient computational algorithm for the above optimization problem, it is useful to introduce an auxiliary variable $t$ as following, 
\begin{align}
    R^{\star}&=\max_{S\subseteq[N],|S|\leq K, 0\leq \lambda\leq B(S,\bv)}\left\{-\lambda\cdot\log\left(\frac{\sum_{i\in S_+}v_i\cdot\exp(-r_i/\lambda)}{\sum_{i\in S_+}v_i - (1-e^{-\rho_0})\cdot \sum_{i\in[N]_+}v_i }\right)\right\} \\
    &=\max_{0\leq t\leq r_{\max}}\left\{\exists S\subseteq[N],|S|\leq K, \lambda\in [0, B(S,\bv)],-\lambda\cdot\log\left(\frac{\sum_{i\in S_+}v_i\cdot\exp(-r_i/\lambda)}{\sum_{i\in S_+}v_i - (1-e^{-\rho_0})\cdot \sum_{i\in[N]_+}v_i }\right)\geq t\right\} \\
    &=\max_{0\leq t\leq r_{\max}}\left\{\exists S\subseteq[N],|S|\leq K,  \lambda\in [0, B(S,\bv)],\sum_{i\in S_+}v_i \left(\exp\big((t-r_i)/\lambda\big) - 1\right) \leq -(1-e^{-\rho_0})\!\!\sum_{i\in[N]_+}v_i \right\} \\
    &=\max_{0\leq t\leq r_{\max}}\left\{\min_{S\subseteq[N],|S|\leq K, 0\leq \lambda\leq B(S,\bv)}\Bigg\{\sum_{i\in S_+}v_i \left(\exp\big((t-r_i)/\lambda\big) - 1\right)\Bigg\}\leq -(1-e^{-\rho_0}) \sum_{i\in[N]_+}v_i\right\}.
\end{align}
For notational simplicity, we define the following function,
\begin{align}
    G(t):=\min_{\substack{S\subseteq[N],|S|\leq K\\ 0\leq \lambda\leq B(S,\bv)}}\sum_{i\in S_+} g_i(t,\lambda),\quad g_i(t,\lambda):=v_i\cdot \left(\exp\big((t-r_i)/\lambda\big) - 1\right),
\end{align}
under which we can reformulate the original optimization problem \eqref{eq: computation new 1} via 
\begin{align}
    R^{\star}=\sup_{0\leq t\leq r_{\max}}\left\{G(t)\leq -(1-e^{-\rho_0})\cdot \sum_{i\in[N]_+}v_i \right\}.\label{eq: computation new 2} 
\end{align}

\begin{remark}
Notice that if we find the $\epsilon$-optimal solution $t_{\epsilon}$ of \eqref{eq: computation new 2} and  assortment $S(t_{\epsilon})$ associated with $t_{\epsilon}$ in the sense of the following,
\begin{align}
    \hspace{-5mm}\min_{0\leq \lambda \leq B(S(t_{\epsilon}),\bv)}\sum_{i\in S(t_{\epsilon})_+ }g_i(t_{\epsilon},\lambda)\leq -(1-e^{-\rho_0})\cdot \sum_{i\in[N]_+}v_i,\quad \lambda^{\star}(t_{\epsilon}):= \argmin_{0\leq \lambda \leq B(S(t_{\epsilon}),\bv)}\sum_{i\in S(t_{\epsilon})_+ }g_i(t_{\epsilon},\lambda), \label{eq: s epsilon}
\end{align}
then $S(t_{\epsilon})$ is an $\epsilon$-optimal robust assortment. 
This is because according to \eqref{eq: s epsilon}, it holds that 
\begin{align}
    \lambda^{\star}(t_{\epsilon})\cdot\log\left(\frac{\sum_{i\in S(t_{\epsilon})_+}v_i\cdot\exp(-r_i/\lambda^{\star}(t_{\epsilon}))}{\sum_{i\in S(t_{\epsilon})_+}v_i - (1-e^{-\rho_0})\cdot \sum_{i\in[N]_+}v_i }\right)\geq t_{\epsilon} \geq R^{\star}-\epsilon,
\end{align}
where the first inequality is by \eqref{eq: s epsilon}, and the second inequality is by the $\epsilon$-optimality of $t_{\epsilon}$.
Consequently, 
\begin{align}
    R_{\rho(\cdot;\cdot)}(S(t_{\epsilon});\bv)&=\max_{\lambda\geq 0}\left\{-\lambda\cdot\log\left(\frac{\sum_{i\in S(t_{\epsilon})_+}v_i\cdot\exp(-r_i/\lambda)}{\sum_{i\in S(t_{\epsilon})_+}v_i - (1-e^{-\rho_0})\cdot \sum_{i\in[N]_+}v_i }\right)\right\}\\
    &\geq \lambda^{\star}(t_{\epsilon})\cdot\log\left(\frac{\sum_{i\in S(t_{\epsilon})_+}v_i\cdot\exp(-r_i/\lambda^{\star}(t_{\epsilon}))}{\sum_{i\in S(t_{\epsilon})_+}v_i - (1-e^{-\rho_0})\cdot \sum_{i\in[N]_+}v_i }\right)\\
    &\geq R^{\star}-\epsilon.
\end{align}
Therefore, $S(t_{\epsilon})$ is an $\epsilon$-optimal robust assortment.
\end{remark}

In the following, we focus on how to solve the reformulated problem \eqref{eq: computation new 2}.
Obviously we can see that $G(t)$ is monotonically increasing with respect to $t$, and thus \eqref{eq: computation new 2} can be efficiently solved by a bisection search if we can evaluate $G(t)$ efficiently for any fixed $t$.

Now notice that $G$ can be equivalently formulated as 
\begin{align}
    G(t):=\min_{\substack{S\subseteq \cN(t),|S|\leq K\\0\leq \lambda\leq B(S,\bv)}}\sum_{i\in S_+} g_i(t,\lambda),\quad \cN(t):=\big\{i\in[N]:r_i\geq t\big\}.
\end{align}
where we restrict the assortment inside the set $\cN(t)$ whose items have revenues larger than $t$.
Otherwise including an item with revenue smaller than $t$ would not help minimize the summation.
If $|\cN(t)|\leq K$, then $S(t)=\cN(t)$. 
Else, if $|\cN(t)|>K$, we then need to decide the $K$ items $i$ with the smallest values $g_i(t,\lambda)$ given any $\lambda$. 
That is, we need to efficiently determine the order of the $|\cN(t)|$ curves $\{g_i(t,\lambda)\}_{\lambda\geq 0, i\in \cN(t)}$.

To this end, we have the following lemma, which shows that the intersection points between these curves are well controlled. 
This is beneficial because between the intersection points the order of the curves are fixed and thus evaluating $G(t)$ reduces to optimizing over $\lambda$ between these intersection points.
Moreover, it shows that the summation of $g_i(t,\lambda)$ forms a quasi-convex function over $\lambda$, and thus the optimization over $\lambda$ can be performed efficiently via bisection search.

\begin{lemma}\label{lem: intersection}
    Fix $t\geq 0$, the number of intersection points among curves $\{g_i(t,\lambda)\}_{\lambda \geq 0, i\in [N]}$ is less than $N(N-1)/2$. Furthermore, for any assortment set $S\subset[N]$, it holds that $\sum_{i\in S_+}g_i(t,\lambda)$ is a quasi-convex function in $\lambda$.
\end{lemma}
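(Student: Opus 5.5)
Both claims follow from rewriting each curve in the variable $u:=1/\lambda\in(0,\infty)$. The map $\lambda\mapsto 1/\lambda$ is a monotone bijection of $(0,\infty)$, so it preserves intersection points and sends intervals to intervals. Write $a_i:=t-r_i$ for $i\in[N]_+$, so that $g_i(t,\lambda)=v_i(e^{a_i u}-1)=:\tilde g_i(u)$. The endpoint $\lambda=0$ (i.e.\ $u=\infty$) is handled as a limit. It does not affect either claim, since each $\tilde g_i$ has a limit in $[-v_i,\infty]$ there.

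\textbf{Intersection points.} Fix two items $i\neq j$ and consider
\begin{align}
h_{ij}(u):=\tilde g_i(u)-\tilde g_j(u)=v_i e^{a_i u}-v_j e^{a_j u}+(v_j-v_i)\cdot e^{0\cdot u}.
\end{align}
This is a generalized exponential polynomial with at most three distinct exponents $\{a_i,a_j,0\}$. If some exponents coincide, merge the corresponding terms; if $a_i=a_j$ and $v_i=v_j$, the two curves coincide and contribute no isolated intersections. I would invoke the classical Descartes rule of signs for exponential sums (Pólya--Szegő): the number of real zeros, counted with multiplicity, of $\sum_k c_k e^{b_k u}$ is at most the number of sign changes in $(c_k)$ once the exponents are sorted. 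With at most three terms this gives at most two real zeros. One zero is always $u=0$, corresponding to $\lambda=\infty$, since $h_{ij}(0)=0$. Hence $h_{ij}$ has at most one zero in $u\in(0,\infty)$. So each pair of curves meets at most once for finite positive $\lambda$, and summing over the $N(N-1)/2$ pairs gives the claimed bound. Between consecutive intersection points the ordering of the $g_i$ is fixed, which is what the algorithm uses.

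\textbf{Quasi-convexity.} For a fixed $S$, set
\begin{align}
F_S(u):=\sum_{i\in S_+}\tilde g_i(u)=\sum_{i\in S_+}v_i e^{a_i u}-\sum_{i\in S_+}v_i.
\end{align}
This is convex in $u$, as a nonnegative combination of exponentials plus a constant. Hence every sublevel set $\{u>0: F_S(u)\le c\}$ is an interval $I_c\subseteq(0,\infty)$. The sublevel set in $\lambda$ is $\{\lambda>0:\sum_{i\in S_+}g_i(t,\lambda)\le c\}=\{1/u:u\in I_c\}$, which is again an interval because $u\mapsto1/u$ is continuous and monotone on $(0,\infty)$. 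Convexity of all sublevel sets is exactly quasi-convexity, so $\lambda\mapsto\sum_{i\in S_+}g_i(t,\lambda)$ is quasi-convex on $(0,\infty)$. It is still quasi-convex after restricting to the interval $[0,B(S,\bv)]$ and including the limiting value at $\lambda=0$: a limit of values of a quasi-convex function added at an endpoint of the domain keeps the sublevel sets intervals.

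\textbf{Main obstacle.} The only delicate step is the intersection count, specifically justifying the exponential Descartes bound. Because there are only three terms, I would give a direct self-contained argument instead. Assume $a_i\le a_j$ and multiply $h_{ij}$ by $e^{-a_i u}$. The result is $v_i-v_j e^{(a_j-a_i)u}+(v_j-v_i)e^{-a_i u}$, whose derivative is a two-term exponential sum. A two-term exponential sum has at most one zero, since it vanishes exactly when a single exponential equals a constant. By Rolle's theorem, $h_{ij}e^{-a_i u}$ therefore has at most two zeros, one of which is $u=0$.
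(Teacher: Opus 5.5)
Your argument is correct. For quasi-convexity it is the paper's argument in cleaner coordinates; for the intersection count it takes a genuinely different route.

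\textbf{Quasi-convexity.} The paper writes $\partial_\lambda\sum_{i\in S_+}g_i(t,\lambda)=m(t,\lambda)/\lambda^2$ and shows that $m$ is increasing in $\lambda$. Since $m(t,\lambda)=-F_S'(1/\lambda)$, this is exactly the convexity of $F_S$ in $u$. Your formulation also makes the paper's auxiliary claim $\lim_{\lambda\to0}m(t,\lambda)=-\infty$ unnecessary. That claim fails at $t=0$, where the limit is $0$ and the sum is simply nondecreasing.

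\textbf{Intersection count.} The paper splits into cases on the orders of $(v_i,v_j)$ and $(r_i,r_j)$. It then studies the monotonicity of $\phi_{ij}(\lambda)=v_ie^{(t-r_i)/\lambda}-v_je^{(t-r_j)/\lambda}$.
\begin{itemize}
\item This tacitly uses $r_i,r_j\ge t$. When $t>r_i$, both $\phi_{ij}(0)=0$ and the strict inequality claimed in its Case~1 can fail, and two such curves may cross. This is harmless for the paper, because the algorithm only compares items in $\mathcal{N}(t)$.
\item Your three-term exponential-sum bound (Descartes' rule, or your Rolle argument on $e^{-a_iu}h_{ij}$) does not depend on the signs of $t-r_i$. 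It therefore proves the pairwise bound for all $i\in[N]$, as the lemma literally states.
\item What the case analysis buys in exchange is structure. It identifies exactly which pairs cross ($v_i>v_j$, $r_i<r_j$, $v_i(r_i-t)<v_j(r_j-t)$). It also locates the crossing in a range where $\phi_{ij}$ is monotone, which the paper uses to compute intersection points as roots of monotone functions.
\end{itemize}

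\textbf{Two small remarks.}
\begin{itemize}
\item Curves also coincide identically when $r_i=r_j=t$: both are $\equiv 0$ regardless of $v_i,v_j$. So coincidence is not limited to the case $a_i=a_j$, $v_i=v_j$. Your Rolle version handles this once you note that the derivative then vanishes identically, rather than having at most one zero.
\item What you and the paper actually prove is at most one crossing per pair, i.e.\ at most $N(N-1)/2$ crossings. This is sharp: with $v_k=1/k$ and $r_k=t+k^2$, every pair crosses exactly once. The ``less than'' in the statement should therefore be read as ``at most.''
\end{itemize}
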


\begin{proof}[Proof of Lemma~\ref{lem: intersection}]
We begin with proving the first claim on the number of intersection points.
It suffices to prove that any two curves $g_i(t,\lambda)$ and $g_j(t,\lambda)$ intersect at no more than one point.
    Consider any different $i,j\in [N]$ with $(v_i,r_i)\neq (v_j,r_j)$. 
    By symmetry, without loss of generality we assume that $v_i\geq v_j$. 
    We prove the claim by discussing three different cases.
    \begin{itemize}
        \item \emph{Case 1:} $v_i>v_j$ and $r_i \geq  r_j$. In this case, we have that for any $\lambda \geq 0$,
        \begin{align}
            g_i(t,\lambda) = v_i\cdot \left(\exp\big((t-r_i)/\lambda\big) - 1\right) < v_j\cdot \left(\exp\big((t-r_j)/\lambda\big) - 1\right) = g_j(t,\lambda).
        \end{align}
        Therefore, we directly conclude that there is no intersection points between these two curves. 
        \item \emph{Case 2:} $v_i>v_j$ and $r_i < r_j$. In this case, consider that the existence of intersection points is equivalent to the existence of $\lambda$ such that 
        \begin{align}
            \phi_{i,j}(\lambda):=v_i\cdot \exp\big((t-r_i)/\lambda\big)  - v_j\cdot \exp\big((t-r_j)/\lambda\big)  = v_i - v_j > 0.
        \end{align}
        Notice that $\phi_{i,j}(0) = 0$ and that the derivative of $\phi_{i,j}$ is given by 
        \begin{align}
            \partial_{\lambda}\phi_{i,j}(\lambda)=\frac{1}{\lambda^2}\cdot \Big(v_i(r_i-t)\cdot \exp\big((t-r_i)/\lambda\big) - v_j(r_j-t)\cdot \exp\big((t-r_j)/\lambda\big)\Big).
        \end{align}
        We can figure out the behavior of $\phi_{i,j}$ by considering the following sub-cases:
        \begin{itemize}
            \item \emph{Case 2.1:} $v_i(r_i-t)<v_j(r_j-t)$. 
            In this case, the derivative $\partial_{\lambda}\phi_{i,j}(\lambda)$ has a unique zero point $\lambda_{0} = (r_j-r_i)/(\log(v_j(r_j-t)) - \log(v_i(r_i-t)))$, and $\partial_{\lambda}\phi_{i,j}(\lambda)\geq 0$ on $\lambda\in[0,\lambda_0]$ and $\partial_{\lambda}\phi_{i,j}(\lambda)\leq 0$ on $[\lambda_0,\infty)$. 
            Thus $\phi_{i,j}(\lambda)$ is increasing on $[0,\lambda_0]$ and is decreasing on $[\lambda_0,\infty)$. 
            Noticing that the limit $\lim_{\lambda\rightarrow+\infty}\phi_{i,j}(\lambda) = v_i-v_j$ and $\phi_{i,j}(0)=0$,  we can conclude that there is a single $\lambda_{i,j}\in[0,\lambda_0]$ such that $\phi_{i,j}(\lambda_{i,j}) = v_i-v_j$.
            Therefore, there is only one intersection point. 
            \item \emph{Case 2.2:} $v_i(r_i-t)\geq v_j(r_j-t)$. 
            In this case, the derivative $\partial_{\lambda}\phi_{i,j}(\lambda)$ is always non-negative and thus $\phi_{i,j}(\lambda)$ is increasing. 
            Due to the limit $\lim_{\lambda\rightarrow+\infty}\phi_{i,j}(\lambda) = v_i-v_j$, we can see that there is no $\lambda$ such that $\phi_{i,j}(\lambda)=v_i-v_j$. 
            Consequently, there is no intersection point for this case. 
        \end{itemize}
        \item \emph{Case 3:} $v_i = v_j$. In this case, no matter $r_i>r_j$ or $r_i<r_j$, it is obvious to find that $g_i(t,\lambda)$ and $g_j(t,\lambda)$ can only intersect at $\lambda=0$, which means that there is only one intersection point.
    \end{itemize}
    Therefore, we have proved the first claim of Lemma~\ref{lem: intersection}. 

    Now we prove the second claim of Lemma~\ref{lem: intersection}. For any $S\subset [N]$, let $g_S(t,\lambda):=\sum_{i\in S_+}g_i(t,\lambda)$. 
    We have 
    \begin{align}
        \partial_{\lambda} g_S(t,\lambda)=\frac{1}{\lambda^2}\cdot \left(-v_0t\cdot\exp\big(t/\lambda\big) + \sum_{i\in S} v_i(r_i-t)\cdot \exp\big((t-r_i)/\lambda\big)\right):=\frac{1}{\lambda^2}\cdot m(t,\lambda)
    \end{align}
    To investigate the sign of $\partial_{\lambda} g_S(t,\lambda)$, we consider the derivative of $m(t,\lambda)$,
    \begin{align}
        \partial_{\lambda} m(t,\lambda)=\frac{1}{\lambda^2}\cdot v_0t^2\cdot\exp\big(t/\lambda\big) + \frac{1}{\lambda^2}\cdot \sum_{i\in S} v_i \cdot (r_i-t)^2\cdot  \exp\big((t-r_i)/\lambda\big) >0,
    \end{align}
    which means that $m(t,\lambda)$ is increasing over $\lambda$. 
    Now that $\lim_{\lambda\rightarrow 0}m(t,\lambda) = -\infty$, we conclude that $\partial_{\lambda} g_S(t,\lambda)$ is either always negative or is negative on $[0,\lambda_1]$ for some $\lambda_1>0$ and then positive on $[\lambda_1,\infty)$.
    This means that $g_S(t,\lambda)$ is either monotonically decreasing on $\lambda$ or decreasing on $\lambda\in[0,\lambda_1]$ and increasing on $\lambda\in[\lambda_1,\infty)$. 
    In both cases $g_S(t, \lambda)$ is a quasi-convex function. 
    This completes the proof of Lemma~\ref{lem: intersection}.
\end{proof}

Now with Lemma~\ref{lem: intersection} and all the above discussions, we are ready to give the computational algorithm to solve the original optimization problem \eqref{eq: computation new 1} and \eqref{eq: computation new 2}.

\paragraph{Algorithm to solve the optimal robust assortment for Example~\ref{exp: new}.}
The algorithm is presented in the following Algorithms~\ref{alg: computation new 1} and Algorithm~\ref{alg: computation new 2}.

\begin{algorithm}[h]
\caption{Solving the optimal robust assortment (Example~\ref{exp: new})} \label{alg: computation new 1}
\begin{algorithmic}[1]
\STATE \textbf{Inputs:} Error level $\epsilon,\epsilon_G\geq 0$, target $G$-value $a^{\dagger}(\rho_0,\bv)$.
\STATE Initialize $t_{\mathrm{left}}= 0$ and $t_{\mathrm{right}}=r_{\max}$.
 \STATE \textcolor{blue!55}{\texttt{// Use  bisection search to solve the largest possible $t$.}}
\FOR{iteration $m =1,\cdots,M_{\epsilon}:=\lceil\log(4r_{\max}/\epsilon)/\log 2\rceil$}
    \STATE Set $t_{\mathrm{mid}}=(t_{\mathrm{left}} + t_{\mathrm{right}})/2$ and call Algorithm~\ref{alg: computation new 2} to approximately solve $G(t_{\mathrm{mid}})$ up to error $\epsilon_G$, obtain the returned value $\widehat{G}$ and the assortment $S$.
    \IF{$\widehat{G}>a^{\dagger}(\rho_0,\bv)+\epsilon_G$}
    \STATE Set the right point $t_{\mathrm{right}} = t_{\mathrm{mid}}$.
    \ELSIF{$\widehat{G}<a^{\dagger}(\rho_0,\bv)$}
    \STATE Set the left point $t_{\mathrm{left}} = t_{\mathrm{mid}}$.
    \ELSE
    \STATE \textbf{break}
    \ENDIF
\ENDFOR
\STATE Set $t' = t_{\mathrm{mid}} - \epsilon/2$ and call Algorithm~\ref{alg: computation new 2} to approximately solve $G(t')$ up to error $\epsilon_G$, and obtain the corresponding assortment $S$.
\STATE \textbf{Output:} value $t'$ and assortment $S$
\end{algorithmic}
\end{algorithm}

\begin{algorithm}[!h]
\caption{Approximately evaluate the function $G$} \label{alg: computation new 2}
\begin{algorithmic}[1]
\STATE \textbf{Inputs:} Error level $\epsilon_G\geq 0$, real number $t$, functions $\{g_i(t,\lambda)\}_{i\in \cN(t),\lambda \geq 0}$, the intersection points among the functions  $\{g_i(t,\lambda)\}_{i\in \cN(t),\lambda \geq 0}$, denoted in an ascent order as $0=\lambda_0\leq \lambda_1\leq \cdots\leq \lambda_{I(t)} < \lambda_{I(t)+1}:=\infty$.

 \STATE \textcolor{blue!55}{\texttt{// Case 1:} $|\cN(t)|\leq K$\texttt{, directly set the assortment $S=\cN(t)$.}}\label{line: case 1}
\IF{$|\cN(t)|\leq K$}
\STATE Set the assortment $S = \cN(t)$
\STATE Approximately solve the following minimization problem up to error $\epsilon_G$ via bisection search: 
\begin{align}
    \min_{0\leq \lambda \leq B(S,\bv)}\sum_{i\in S_+}g_i(t,\lambda),\label{eq: case 1}
\end{align}
and denote the $\epsilon_G$-optimal value as $\widehat{G}$.
\STATE \textbf{Output:} assortment $S$ and approximate value $\widehat{G}$.
\ENDIF
\STATE \textcolor{blue!55}{\texttt{// Case 2:} $|\cN(t)|> K$\texttt{, need to enumerate over possible optimal assortments.}}
\label{line: case 2}
\FOR{interval $\ell=1,\cdots,I(t),I(t)+1$}
    \STATE Obtain $S_{\ell}$ containing the $K$ functions in $\{g_i(t,\lambda)\}_{i\in \cN(t),\lambda_{\ell-1}\leq \lambda\leq\lambda_{\ell}}$ with the $K$ smallest values. \label{line: sort}
    \IF{$B(S_{\ell},\bv)< \lambda_{\ell-1}$} 
    \STATE \textbf{continue}
    \ENDIF 
    \STATE Approximately solve the following minimization problem up to error $\epsilon_G$ via bisection search:
    \begin{align}
    \min_{\lambda_{\ell-1}\leq \lambda \leq B(S_{\ell},\bv)\wedge \lambda_{\ell}}\sum_{i\in (S_{\ell})_+}g_i(t,\lambda),\label{eq: case 2}
\end{align}
and denote the $\epsilon_G$-optimal value as $\widehat{G}_{\ell}$.
\ENDFOR
\STATE Set $\widehat{\ell}=\argmin_{\ell\in[I(t)+1]}\widehat{G}_{\ell}$, and set $\widehat{G} = \widehat{G}_{\widehat{\ell}}$ and $S=S_{\widehat{\ell}}$.
\STATE \textbf{Output:} assortment $S$ and approximate value $\widehat{G}$.
\end{algorithmic}
\end{algorithm}

\begin{remark}
    About solving the intersection points, it is a simple one-dimensional numerical equation, for which assume that we have solved all of them accurately before running Algorithms~\ref{alg: computation new 1} and \ref{alg: computation new 2}. Actually, for any two $i,j\in[N]$, the cases such that the intersection point exists reduce to either a trivial case $\lambda_{\mathrm{intersect}}=0$ or to solving the zero point of a monotone function (see Case 3 and Case 2.1 in Lemma~\ref{lem: intersection} respectively).
\end{remark}

\begin{remark}\label{rmk: computation}
    In the data driven algorithm Algorithm~\ref{alg: new}, we need to solve a slightly different objective than \eqref{eq: computation new 1}. 
    More concretely, we assume the knowledge of the true value of $\sum_{i\in[N]}v_i$, denoted as $v_{\mathrm{tot}}$, and need to solve the new duality objective \eqref{eq: object new}. However, the resulting new objective shares the same form as the objective $\eqref{alg: computation new 2}$ here except that the target $G$-value becomes $-(1-e^{-\rho_0})\cdot (1+v_{\mathrm{tot}})$. 
    Therefore, it suffices to define the quantity $a^{\dagger}(\rho_0,\bv):=-(1-e^{-\rho_0})\cdot (1+v_{\mathrm{tot}})$ in the input of Algorithm~\ref{alg: computation new 1}. 
\end{remark}

\subsubsection{Analysis of Algorithm Design for Varying Robust Set Size (Example~\ref{exp: new})}

\begin{theorem}\label{thm: computation new}
    Given an error level $\epsilon<R^{\star}/2$, by setting the inner loop error level $\epsilon_G$ as 
    \begin{align}
        \epsilon_G=\frac{v_0\epsilon}{4B(\emptyset,\bv)},
    \end{align}
    then the following two conclusions hold:
    \begin{enumerate}
        \item Algorithm~\ref{alg: computation new 1}  returns a value $R^{\star}\geq t'\geq R^{\star} - \epsilon$ and an assortment $S$ satisfying condition \eqref{eq: s epsilon}, i.e., 
        \begin{align}
            \min_{0\leq \lambda \leq B(S,\bv)}\sum_{i\in S_+ }g_i(t',\lambda)\leq a^{\dagger}(\rho_0,\bv).
        \end{align}
        \item The iteration complexity of Algorithm~\ref{alg: computation new 1} is of order $\cO(\log(r_{\max}/\epsilon))$, and in each iteration of Algorithm~\ref{alg: computation new 1}, the sub-algorithm Algorithm~\ref{alg: computation new 2} needs to evaluate functions in the form of $\sum_{i\in S}g_i(t,\cdot)$ and its first-order derivative for no more than $\cO(N^2\cdot \log(B(S,\bv)/\epsilon_G'))$ times, where $\epsilon_G'$ is defined as the largest positive real number such that
        \begin{align}
            \epsilon_G'\cdot \sup_{\lambda\in[0,B(S,\bv)]\cap [\lambda_t^{\star}(S,\bv)-\epsilon_G',\lambda_t^{\star}(S,\bv)+\epsilon_G']}\left|\sum_{i\in S_+}\partial_{\lambda}g_i(t,\lambda)\right| \leq \epsilon_G,\label{eq: epsilon prime}
    \end{align}
    where $\lambda_t^{\star}(S,\bv):=\argmin_{0\leq \lambda\leq B(S,\bv)}\sum_{i\in S_+}g_i(t,\lambda)$ for any tuple $(t,S,\bv)$.
    Overall, the running complexity of Algorithm~\ref{alg: computation new 2} is of order $\widetilde{\cO}(N^2)$.
    \end{enumerate}
\end{theorem}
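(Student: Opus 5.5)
We treat the two conclusions separately. For correctness, we will show that the bisection in Algorithm~\ref{alg: computation new 1} keeps an invariant on the interval $[t_{\mathrm{left}},t_{\mathrm{right}}]$. For complexity, we will count how many times Algorithm~\ref{alg: computation new 2} evaluates $\sum_{i\in S_+}g_i(t,\cdot)$.

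\textbf{Correctness.} The starting point is the reformulation \eqref{eq: computation new 2}, $R^{\star}=\sup\{t: G(t)\le a^{\dagger}(\rho_0,\bv)\}$, together with the monotonicity of $G$ in $t$. The monotonicity holds because each $g_i(t,\lambda)$ is nondecreasing in $t$, and the feasible set over $(S,\lambda)$ does not depend on $t$.

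\emph{Accuracy of the oracle.} We first check that Algorithm~\ref{alg: computation new 2} returns $\widehat G$ with $G(t)\le \widehat G\le G(t)+\epsilon_G$.
\begin{itemize}
\item By Lemma~\ref{lem: intersection}, the curves $g_i(t,\cdot)$ have a fixed order between consecutive intersection points. So on each interval $[\lambda_{\ell-1},\lambda_\ell]$, the optimal assortment for every $\lambda$ is the set $S_\ell$ of the $K$ smallest curves. We also only need to keep terms with $g_i\le 0$, i.e., items in $\mathcal N(t)$.
\item Hence $G(t)$ equals the minimum over $\ell$ of the restricted problems \eqref{eq: case 2}.
\item One subtlety: the constraint $\lambda\le B(S,\bv)$ depends on $S$. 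We will argue that replacing $S_\ell$ by any other $K$-subset does not improve the objective on that interval while still respecting its own bound. If this fails, we instead enumerate the $B$-constraint as a further breakpoint.
\item Each restricted problem is quasi-convex in $\lambda$ by the second part of Lemma~\ref{lem: intersection}. So bisection on the sign of the derivative finds the minimizer, and an $\epsilon_G'$-accurate minimizer gives $\epsilon_G$-accuracy in value by the definition \eqref{eq: epsilon prime}.
\end{itemize}

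\emph{Invariant.} Using the oracle bound, the tests in Algorithm~\ref{alg: computation new 1} preserve
\[
G(t_{\mathrm{left}})\le a^{\dagger}(\rho_0,\bv)+\epsilon_G \quad\text{and}\quad G(t_{\mathrm{right}})>a^{\dagger}(\rho_0,\bv).
\]
This places $R^{\star}$ within $\epsilon/4$ of the final $t_{\mathrm{mid}}$ after $M_\epsilon$ halvings, up to the $\epsilon_G$ slack.

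\emph{Main obstacle: converting the $\epsilon_G$ slack in $G$ into a slack in $t$.} This step is needed to conclude that $t'=t_{\mathrm{mid}}-\epsilon/2$ satisfies $G(t')\le a^{\dagger}$ exactly and that $t'\le R^{\star}$. We need a lower bound on the growth rate of $G$ in $t$:
\[
\partial_t\sum_{i\in S_+}g_i=\frac{1}{\lambda}\sum_{i\in S_+}v_i e^{(t-r_i)/\lambda}\ \ge\ \frac{v_0e^{t/\lambda}}{\lambda}\ \ge\ \frac{v_0}{\lambda}\ \ge\ \frac{v_0}{B(\emptyset,\bv)}.
\]
The last inequality uses $\lambda\le B(S,\bv)\le B(\emptyset,\bv)$, since $B$ is maximized at the smallest total attraction. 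It follows that $G(t)-G(t-\epsilon/4)\ge v_0\epsilon/(4B(\emptyset,\bv))=\epsilon_G$. Taking $t=t_{\mathrm{mid}}$, the shift by $\epsilon/2$ absorbs both the bisection error and the $\epsilon_G$ slack, which yields $R^{\star}-\epsilon\le t'\le R^{\star}$. The assumption $\epsilon<R^{\star}/2$ keeps $t'\ge0$.

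The returned $S$ is the minimizing $S_{\widehat\ell}$ at $t'$. It attains $\min_\lambda\sum_{i\in S_+}g_i(t',\lambda)\le \widehat G$, and we must show $\widehat G\le a^{\dagger}$. This follows from the same margin argument: $G(t')\le a^{\dagger}-\epsilon_G$, so $\widehat G\le G(t')+\epsilon_G\le a^{\dagger}$.

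\textbf{Complexity.} The outer loop runs $M_\epsilon=\mathcal O(\log(r_{\max}/\epsilon))$ iterations. Each call to Algorithm~\ref{alg: computation new 2} handles at most $I(t)+1\le N(N-1)/2+1$ intervals, by the first part of Lemma~\ref{lem: intersection}. On each interval it performs one derivative-sign bisection, which costs $\mathcal O(\log(B(S,\bv)/\epsilon_G'))$ evaluations. The sorting step on each interval can be maintained incrementally, since crossing one intersection point swaps two curves. Altogether this gives $\mathcal O(N^2\log(B/\epsilon_G'))$ evaluations per outer iteration, and $\widetilde{\mathcal O}(N^2)$ overall.
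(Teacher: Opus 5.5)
Your proposal follows the paper's proof closely: oracle accuracy from Lemma~\ref{lem: intersection} and \eqref{eq: epsilon prime}, bisection that keeps $R^{\star}$ bracketed, a growth bound for $G$ in $t$ from the $v_0$ term, the shift $t'=t_{\mathrm{mid}}-\epsilon/2$, and the $\mathcal{O}(N^2)$ interval count. The gap is in the step that turns the $\epsilon_G$ slack into an $\epsilon/4$ slack in $t$: it rests on a false inequality.

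Write $x_S:=\sum_{j\in S_+}v_j$ and $c:=(1-e^{-\rho_0})\sum_{j\in[N]_+}v_j$. Then $B(S,\bv)=r_{\max}/\log\bigl(x_S/(x_S-c)\bigr)=r_{\max}/\rho(S;\bv)$. Since $x\mapsto x/(x-c)$ is decreasing, $B$ is \emph{increasing} in total attraction; Example~\ref{exp: new} itself notes that $\rho(S;\bv)$ is largest for small total attraction. So $B(\emptyset,\bv)$ is the \emph{smallest} of these bounds and $B([N],\bv)=r_{\max}/\rho_0$ is the largest. For instance, $v_0=v_1=1$ and $\rho_0=0.1$ give $B(\emptyset,\bv)\approx 4.74\,r_{\max}<B(\{1\},\bv)=10\,r_{\max}$.

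Consequently the feasible $\lambda$ in the minimum defining $G$ range up to $\bar B:=\max_{|S|\le K}B(S,\bv)\ge B(\emptyset,\bv)$. Your derivative bound then only gives $G(t)-G(t-\epsilon/4)\ge v_0\epsilon/(4\bar B)$, which is in general strictly smaller than the prescribed $\epsilon_G$. Every downstream step uses exactly this margin: localizing $t_{\mathrm{mid}}$ in the break branch, $t'\le R^{\star}$, and $\widehat G\le a^{\dagger}(\rho_0,\bv)$ at $t'$. The paper's proof makes the same move: it restricts to $\lambda\le B(\emptyset,\bv)$ because ``the objective is minimized when $S=\emptyset$''. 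So the defect comes from the stated $\epsilon_G$ rather than from you. The repair is to set $\epsilon_G:=v_0\epsilon/(4\bar B)$, where $\bar B\le r_{\max}/\rho_0$.

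The same uniform bound closes the other hole you flagged in Algorithm~\ref{alg: computation new 2}. Your claim that no other $K$-subset can beat $S_\ell$ on its interval is unjustified. The $K$ smallest curves need not have the largest total attraction, hence not the largest $B(S,\bv)$, so on $(B(S_\ell,\bv),\lambda_\ell]$ a different subset may be feasible and better. Treating the $B$-constraint as an extra breakpoint does not fix this. For fixed $\lambda$, the condition $\lambda\le B(S,\bv)$ is a lower bound on $\sum_{j\in S_+}v_j$, a knapsack-type side constraint that sorting cannot handle.

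Instead, note that by Jensen the dual objective satisfies $H_{\rho(\cdot;\cdot)}(S,\lambda;\bv)\le r_{\max}-\lambda\rho(S;\bv)<0\le R_{\rho(\cdot;\cdot)}(S;\bv)$ for $\lambda>B(S,\bv)$. So replacing every $S$-dependent range $[0,B(S,\bv)]$ by the common range $[0,\bar B]$ changes neither $R^{\star}$ nor the reformulation \eqref{eq: computation new 2}. With an $S$-independent range, the sorted choice $S_\ell$ is exactly optimal on each interval.

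The output condition is also unaffected. Because $t'>0$, any $\lambda$ with $\sum_{i\in S_+}g_i(t',\lambda)\le a^{\dagger}(\rho_0,\bv)$ must satisfy $\lambda\le B(S,\bv)$, so the condition over $[0,\bar B]$ is equivalent to the one stated with $B(S,\bv)$. With these two changes, the rest of your argument, which is the paper's, goes through.
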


\begin{proof}[Proof of Theorem~\ref{thm: computation new}]
    We first prove the second result regarding the running complexity of Algorithms~\ref{alg: computation new 1} and \ref{alg: computation new 2}.
    The first conclusion regarding the iteration complexity of Algorithm~\ref{alg: computation new 1} is trivial given that Algorithm~\ref{alg: computation new 1} iterates for no more than $M_{\epsilon}=\lceil \log(4r_{\max}/\epsilon)/\log 2\rceil$ times. 
    Regarding the second conclusion for the running complexity of each call of Algorithm~\ref{alg: computation new 2}, we need to invoke Lemma~\ref{lem: intersection} that proves the intersection between the curves $\{g_i(t,\lambda)\}_{i\in\cN(t),\lambda\geq 0}$ for any $t$ is no more than $\cO(N^2)$.
    More specifically, in Algorithm~\ref{alg: computation new 2}, if \textcolor{blue!55}{Case 1} holds (see Line~\ref{line: case 1}), then we only need evaluate the function $\sum_{i\in \cN(t)_+} g_i(t,\cdot)$ for $\cO(\log(B(\cN(t),\bv)/\epsilon_G'))$ times to use bisection search to solve the minimization problem \eqref{eq: case 1} up to error level of $\epsilon_{G}'$ in terms of $\lambda$.
    This is because within these many calls of the quasi-convex function $\sum_{i\in \cN(t)_+}g_i(t,\cdot)$ and its derivative, bisection search returns $\widehat{\lambda}_t(\cN(t),\bv)$ satisfying $|\widehat{\lambda}_t(\cN(t),\bv) - \lambda^{\star}_t(\cN(t),\bv)|\leq \epsilon_G'$, and thus 
    \begin{align}
        &\sum_{i\in \cN(t)_+}g_i(t,\widehat{\lambda}_t(\cN(t),\bv)) - \min_{0\leq \lambda \leq B(\cN(t),\bv)}\sum_{i\in \cN(t)_+}g_i(t,\lambda)\\
        &\qquad = \left|\sum_{i\in \cN(t)_+}g_i(t,\widehat{\lambda}_t(\cN(t),\bv)) - \sum_{i\in \cN(t)_+}g_i(t,\lambda^{\star}_t(\cN(t),\bv))\right| \\
        &\qquad \leq \int_{\lambda^{\star}_t(\cN(t),\bv)\wedge \widehat{\lambda}_t(\cN(t),\bv)}^{\lambda^{\star}_t(\cN(t),\bv)\vee \widehat{\lambda}_t(\cN(t),\bv)}\left|\sum_{i\in \cN(t)_+}\partial_{\lambda}g_i(t,\lambda)\right|\mathrm{d}\lambda\\
        &\qquad \leq  \epsilon_G'\cdot \sup_{\lambda\in[0,B(\cN(t),\bv)]\cap [\lambda_t^{\star}(S,\bv)-\epsilon_G',\lambda_t^{\star}(S,\bv)+\epsilon_G']}\left|\sum_{i\in S_+}\partial_{\lambda}g_i(t,\lambda)\right|\\
        &\qquad \leq \epsilon_G,
    \end{align}
    where the last inequality uses the condition on the parameter $\epsilon_G'$ in \eqref{eq: epsilon prime}.
    Thus $\widehat{\lambda}_t(\cN(t),\bv)$ is an $\epsilon_G$-optimal solution to \eqref{eq: computation new 1}.
    Then, if \textcolor{blue!55}{Case 2} holds (see Line~\ref{line: case 2}), by Lemma~\ref{lem: intersection}, we know that the number of the intervals $|I(t)|=\cO(N^2)$, and thus we only need to solve the optimization problem in the form of \eqref{eq: case 2} for less than $\cO(N^2)$ times. 
    Using arguments similar to those for \textcolor{blue!55}{Case 1}, we prove that for each instance of the optimization problem \eqref{eq: case 2}, we need to evaluate the function $\sum_{i\in S_\ell} g_i(t,\cdot)$ and its derivative for no more than $\cO(\log(B(S_{\ell},\bv)/\epsilon_G')$ times to obtain an $\epsilon_G$-optimal solution.
    Finally, we note that the process to find the $K$ functions with the $K$ smallest values in the $\ell$-th interval (Line~\ref{line: sort}) only requires $\cO(N)$ times of evaluation of these functions due to the fact that there are no intersection points among these functions inside each of the interval.
    This proves the second result of Theorem~\ref{thm: computation new}. 

    Now we prove the first result of Theorem~\ref{thm: computation new} regarding the correctness of Algorithm~\ref{alg: computation new 1}. 
    Firstly, by the above proofs for the second result, we see that given value $t$ queried by Algorithm~\ref{alg: computation new 1}, Algorithm~\ref{alg: computation new 2} returns an approximated value $\widehat{G}$ of $G(t)$ within error level $\widehat{G} \leq  G(t) + \epsilon_G$, together with an assortment $S$ satisfying $G(t)\leq \min_{0\leq \lambda\leq B(S,\bv)}\sum_{i\in S_+}g_i(t,\lambda)\leq \widehat{G}$. 
    With such a fact, the bisection process of Algorithm~\ref{alg: computation new 1} can always maintain the optimal $t$ value, i.e., $t^{\star}:=R^{\star}$, inside the remaining interval $[t_{\mathrm{left}},t_{\mathrm{right}}]$.
    This is because:
    \begin{itemize}
        \item If $\widehat{G}>a^{\dagger}(\rho_0,\bv)+\epsilon_G$, then $G(t^{\star})< \widehat{G} - \epsilon_G \leq G(t_{\mathrm{mid}})$.
        Thus $t^{\star}\leq  t_{\mathrm{mid}}$ due to monotonicity of $G$.
        \item Else if $\widehat{G}<a^{\dagger}(\rho_0,\bv)$, then $G(t_{\mathrm{mid}})\leq \widehat{G}  <G(t^{\star})$. Thus $t_{\mathrm{mid}}\leq t^{\star}$ due to monotonicity of $G$.
        \item Else if $a^{\dagger}(\rho_0,\bv)\leq  \widehat{G} \leq a^{\dagger} + \epsilon_G$, we are going to show that it must hold that $t^{\star}-\epsilon/4\leq t_{\mathrm{mid}} \leq t^{\star} + \epsilon/4$.
        It suffices to prove $G(t^{\star} - \epsilon/4) \leq G(t_{\mathrm{mid}}) \leq G(t^{\star}+\epsilon/4)$.
        We first show the left side by proving the following inequality: $G(t^{\star}-\epsilon/4) + \epsilon_G \leq  G(t^{\star})$, with which we can obtain that $G(t^{\star}-\epsilon/4)\leq G(t^{\star}) - \epsilon_G \leq \widehat{G} - \epsilon_G \leq G(t_{\mathrm{mid}})$.
        To prove that $G(t^{\star}-\epsilon/4) + \epsilon_G \leq  G(t^{\star})$, consider the following,
        \begin{align}
            G(t^{\star}) - G(t^{\star}-\epsilon/4)&= \min_{\substack{S\subseteq \cN(t),|S|\leq K\\0\leq \lambda\leq B(S,\bv)}}\sum_{i\in S_+} g_i(t^{\star},\lambda) - \min_{\substack{S\subseteq \cN(t),|S|\leq K\\0\leq \lambda\leq B(S,\bv)}}\sum_{i\in S_+} g_i(t^{\star} - \epsilon/4,\lambda) \\
            &\geq \min_{\substack{S\subseteq \cN(t),|S|\leq K\\0\leq \lambda\leq B(S,\bv)}}\left\{\sum_{i\in S_+} g_i(t^{\star},\lambda) - g_i(t^{\star} - \epsilon/4,\lambda)\right\} \\
            &= \min_{\substack{S\subseteq \cN(t),|S|\leq K\\0\leq \lambda\leq B(S,\bv)}}\left\{\sum_{i\in S_+} v_i\cdot\Big(\exp\big((t^{\star}-r_i)/\lambda\big) - \exp\big((t^{\star}-r_i-\epsilon/4)/\lambda\big) \Big)\right\} \\
            &\geq \min_{0\leq \lambda\leq B(\emptyset,\bv)}v_0\cdot\Big(\exp\big(t^{\star}/\lambda\big) - \exp\big((t^{\star}-\epsilon/4)/\lambda\big) \Big) \\
            &=\min_{0\leq \lambda\leq B(\emptyset,\bv)}v_0\cdot \frac{\epsilon}{4\lambda}\cdot \exp\big(\zeta(t^{\star},t^{\star}-\epsilon/4)/\lambda\big) \\
            &\geq  \frac{v_0\epsilon}{4B(\emptyset,\bv)}\\
            &=\epsilon_G,
        \end{align}
        where the first inequality is due to $\min_{x\in\cX} f(x) - \min_{x\in\cX} g(x)\geq \min_{x\in\cX}\{f(x) - g(x)\}$, the second inequality is by the fact that each summand is positive and the objective is minimized when $S=\emptyset$, the second to last equality is by mean-value theorem with $\zeta(t^{\star},t^{\star}-\epsilon/4)\in[ t^{\star}-\epsilon/4,t^{\star}]$, 
        and the last equality is by the choice of $\epsilon_G$.
        This proves that $G(t^{\star}-\epsilon/4) + \epsilon_G \leq  G(t^{\star})$ and therefore $G(t^{\star} - \epsilon/4) \leq G(t_{\mathrm{mid}}) $. 
        By the same argument, we can also prove that $G(t^{\star}) + \epsilon_G \leq  G(t^{\star}+\epsilon/4)$, for which we can further obtain that $G(t_{\mathrm{mid}}) \leq \widehat{G}\leq G(t^{\star})+\epsilon_G \leq G(t^{\star} + \epsilon/4)$. 
        This proves that if $a^{\dagger}(\rho_0,\bv)\leq  \widehat{G} \leq a^{\dagger} + \epsilon_G$, it must hold that $t^{\star}-\epsilon/4\leq t_{\mathrm{mid}} \leq t^{\star} + \epsilon/4$, and we can terminate the iteration now.
    \end{itemize}
    Therefore, when the iterations of Algorithm~\ref{alg: computation new 1} end, we must have that $t^{\star} - \epsilon/4 \leq t_{\mathrm{mid}} \leq t^{\star} + \epsilon/4$. 
    Then we have that $t' = t_{\mathrm{mid}} - \epsilon/2$ satisfying $t^{\star} - \epsilon < t^{\star} - 3\epsilon/4 \leq t'\leq t^{\star} - \epsilon/4<t^{\star}$. 
    Furthermore, the corresponding assortment $S$ satisfies 
    \begin{align}
        \min_{0\leq \lambda\leq B(S,\bv)}\sum_{i\in S_+}g_i(t',\lambda) \leq G(t') + \epsilon_G\leq G(t^{\star}-\epsilon/4) + \epsilon_G\leq G(t^{\star})=a^{\dagger}(\rho_0,\bv).
    \end{align}
    This completes the proof of Theorem~\ref{thm: computation new}.
\end{proof} 

\newpage

\section{Proofs for Constant Robust Set Size Case}\label{sec:_proof_main_theorems}

\subsection{General Upper Bound for Suboptimality Gap}\label{subsec:_general_upper_bound}

\begin{theorem}[Suboptimality of Algorithm~\ref{alg: jin}]\label{thm:_algorithm_design_1_general}
    Conditioning on the observed assortments $\{S_k\}_{k=1}^n$, suppose that 
    \begin{align}
        n_i:= \sum_{k=1}^n\mathbf{1}\{i \in S_k\}\geq 138\cdot\max\left\{1,\frac{256}{138v_i}\right\}\cdot (1+v_{\max})(1+Kv_{\max})\log(3N/\delta),\quad \forall i\in S^{\star},\label{eq: condition_n_i_main_general}
    \end{align}
    then with probability at least $1-2\delta$, the robust expected revenue of the assortment $\widehat S$ satisfies
    \begin{align}
        &R_{\rho}(S^{\star};\bv) - R_{\rho}(\widehat S;\bv)\\
        &\qquad \leq  \frac{\mathsf{C}(\rho)\cdot r_{\max}}{1+v(S^{\star})}\cdot \sum_{i\in S^{\star}}\left(16 \sqrt{\frac{v_i(1+v_i)\log(3N/\delta)}{\sum_{k=1}^n\mathbf{1}\{i_k\in S_k\}\cdot (1+v(S_k))^{-1}}} + \frac{88(1+v_i)\log(3N/\delta)}{\sum_{k=1}^n\mathbf{1}\{i_k\in S_k\}\cdot (1+v(S_k))^{-1}}\right).
    \end{align}
    where $v(S):= \sum_{j\in S}v_j$ and the function $\mathsf{C}(\rho)$ is defined as 
    \begin{align}
        \mathsf{C}(\rho) = \min\left\{ \inf_{c\in[0,1/\rho]}\Big\{c\cdot \max\big\{2, \exp(1/c)-1\big\}\Big\}, \frac{1}{\rho}\right\}.
    \end{align}
\end{theorem}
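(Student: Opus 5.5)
The plan is to run the standard pessimism decomposition. On a good event $\mathcal{E}$ of probability at least $1-2\delta$ we have $v^{\mathrm{LCB}}_j\le v_j$ for all $j\in[N]$ (Lemma~\ref{lem:_concentration}), together with a quantitative lower bound on $v_i-v^{\mathrm{LCB}}_i$ for $i\in S^\star$. On $\mathcal{E}$ we write
\begin{align}
R_\rho(S^\star;\bv)-R_\rho(\widehat S;\bv)
&=\big[R_\rho(S^\star;\bv)-R_\rho(S^\star;\bv^{\mathrm{LCB}})\big]+\big[R_\rho(S^\star;\bv^{\mathrm{LCB}})-R_\rho(\widehat S;\bv^{\mathrm{LCB}})\big]\\
&\quad+\big[R_\rho(\widehat S;\bv^{\mathrm{LCB}})-R_\rho(\widehat S;\bv)\big].
\end{align}
The middle bracket is $\le 0$ because $\widehat S$ maximizes $R_\rho(\cdot;\bv^{\mathrm{LCB}})$. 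The last bracket is $\le 0$ by the monotonicity Lemma~\ref{lem:_monotonicity_1_main}, applied with the pair $(\bv^{\mathrm{LCB}},\bv)$. Here $\widehat S$ is the optimal robust assortment for $\bv^{\mathrm{LCB}}$ and $\bv^{\mathrm{LCB}}\le\bv$ coordinatewise. It therefore remains to bound the first bracket, which involves only the fixed set $S^\star$ and only the parameters $\{v_i\}_{i\in S^\star}$. This is exactly where item-wise coverage enters.

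For the first bracket, use Proposition~\ref{prop:_dual} with $\rho$ constant. Write $F(\lambda;\bv):=-\lambda\log\big(\sum_{i\in S^\star_+}v_ie^{-r_i/\lambda}/\sum_{i\in S^\star_+}v_i\big)-\lambda\rho$ and let $\lambda^\star$ be a maximizer for $\bv$, so that $\lambda^\star\le r_{\max}/\rho$. Then
\begin{align}
R_\rho(S^\star;\bv)-R_\rho(S^\star;\bv^{\mathrm{LCB}})\le F(\lambda^\star;\bv)-F(\lambda^\star;\bv^{\mathrm{LCB}}).
\end{align}
Interpolate $\bv(s)=\bv^{\mathrm{LCB}}+s(\bv-\bv^{\mathrm{LCB}})$ and differentiate in each $v_i$. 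With $q_i(\lambda)$ denoting the exponentially tilted probability $v_ie^{-r_i/\lambda}/\sum_j v_je^{-r_j/\lambda}$ and $p_i=v_i/\sum_j v_j$, we get
\begin{align}
\partial_{v_i}F=\lambda\big(p_i-q_i(\lambda)\big)/v_i .
\end{align}
The task is to bound $|\partial_{v_i}F|$ by $\mathsf C(\rho)r_{\max}/(1+v(S^\star))$ uniformly along the path.

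The two branches of $\mathsf C(\rho)$ come from two ways of bounding this derivative. The trivial branch uses $\lambda\le 1/\rho$ in units of $r_{\max}$, giving the $1/\rho$ term. The other branch splits at $\lambda=c\,r_{\max}$. For $\lambda\le c\,r_{\max}$ we use $|p_i-q_i|\le\max\{p_i,q_i\}$, which contributes the factor $2c$. For $\lambda>c\,r_{\max}$ the tilting weights $e^{-r_i/\lambda}$ lie in $[e^{-1/c},1]$, so $q_i/p_i-1$ is controlled by $e^{1/c}-1$. Optimizing over $c\in[0,1/\rho]$ gives the infimum in $\mathsf C(\rho)$.

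The main obstacle is controlling the factor $p_i/v_i=1/\sum_{j\in S^\star_+}v_j$ along the path $\bv(s)$. Its denominator involves $v^{\mathrm{LCB}}$, not $v$, and it must be compared to $1+v(S^\star)$. To handle this I would use the burn-in condition \eqref{eq: condition_n_i_main_general}. That condition ensures $v^{\mathrm{LCB}}_i\ge v_i/2$ (or at least that $1+v^{\mathrm{LCB}}(S^\star)\gtrsim 1+v(S^\star)$), which absorbs the discrepancy into constants.

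Finally, the estimation error $v_i-v^{\mathrm{LCB}}_i$ is converted into the stated rate through a Bernstein bound on $\widehat p_j$. The relevant sample count is $\tau_{i,0}$, whose conditional mean is $\sum_k\mathbf 1\{i\in S_k\}(1+v_i)/(1+v(S_k))$. A multiplicative Chernoff bound (this is where the burn-in condition is used) gives $\tau_{i,0}\gtrsim$ half of this mean. The map $p\mapsto p/(1-p)$ then produces the factors $\sqrt{v_i(1+v_i)}$ and $(1+v_i)$ with constants $16$ and $88$. Summing over $i\in S^\star$ yields the claimed bound.
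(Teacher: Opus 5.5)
Your skeleton is the paper's: the three-bracket decomposition (optimality of $\widehat S$ for $\bv^{\mathrm{LCB}}$, and Lemma~\ref{lem:_monotonicity_1} applied to the pair $(\bv^{\mathrm{LCB}},\bv)$), the reduction to the fixed set $S^\star$, and the concentration step through Lemma~\ref{lem:_concentration_2}. There $v_i^{\mathrm{LCB}}\ge v_i/2$ gives $\sum_{j\in S^\star_+}v^{\mathrm{LCB}}_j\ge(1+v(S^\star))/2$, which turns the constants $8,44$ into $16,88$.

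You differ in the perturbation bound on the fixed set $S^\star$. The paper uses $\sup f-\sup g\le\sup(f-g)$ and $\log(a/b)\le (a-b)/b$, then splits $\lambda$ at $c\,r_{\max}$. It treats small $\lambda$ with $\sum_i a_i\le\max_i(a_i/b_i)\sum_i b_i$ plus $\bv^{\mathrm{LCB}}\le\bv$. It treats large $\lambda$ with an $\ell_1$ estimate times the decreasing function $\lambda e^{r_{\max}/\lambda}(1-e^{-r_{\max}/\lambda})$. You instead freeze $\lambda$ and integrate $\nabla_{\bv}F$ along the segment.

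That route is sound, but your justification of the two branches is not.
\begin{itemize}
\item You aim to bound $|\partial_{v_i}F|$. For $\lambda\le c\,r_{\max}$, however, the ratio $q_i/p_i=e^{-r_i/\lambda}V/W$ can be as large as $e^{r_{\max}/\lambda}$, so $\lambda\max\{p_i,q_i\}/v_i$ is not $O(c\,r_{\max}/V)$ in general. At your specific $\lambda^\star$ this can be rescued, but you do not use the fix: Lemma~\ref{lem: optimal condition jin} gives $e^{-r_i/\lambda^\star}\le W/V$ for $i\in S^\star$, and this tilted mean only increases as mass is removed along the segment, so $q_i\le p_i$ throughout.
\item For $\lambda>c\,r_{\max}$, the crude bound $|q_i/p_i-1|\le e^{1/c}-1$ times $\lambda\le r_{\max}/\rho$ gives $(r_{\max}/\rho)(e^{1/c}-1)$, not $c\,r_{\max}(e^{1/c}-1)$. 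You would need the $\lambda$-dependent factor $\lambda(e^{r_{\max}/\lambda}-1)$, which is decreasing in $\lambda$.
\end{itemize}

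Both problems disappear once you use the sign. On the good event $v_i-v_i^{\mathrm{LCB}}\ge 0$, so only an upper bound on $\partial_{v_i}F$ matters. Since $W\le V$ (all tilting weights are at most one), at every point of the segment and every $\lambda$,
\[
\partial_{v_i}F=\lambda\Big(\frac{1}{V}-\frac{e^{-r_i/\lambda}}{W}\Big)\le\frac{\lambda\,(1-e^{-r_i/\lambda})}{V}\le\frac{\min\{\lambda,r_{\max}\}}{V}.
\]
Using $\lambda\le r_{\max}/\rho$ and $V\ge\sum_{j\in S^\star_+}v_j^{\mathrm{LCB}}$ along the segment, this yields
\[
R_\rho(S^\star;\bv)-R_\rho(S^\star;\bv^{\mathrm{LCB}})\le\min\{1,1/\rho\}\,r_{\max}\,\frac{\sum_{i\in S^\star}(v_i-v_i^{\mathrm{LCB}})}{\sum_{j\in S^\star_+}v_j^{\mathrm{LCB}}}.
\]
Moreover $\min\{1,1/\rho\}\le\mathsf C(\rho)$, because $c\max\{2,e^{1/c}-1\}\ge c(e^{1/c}-1)\ge 1$.

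So your derivative route proves the theorem with no $\lambda$-split. It also does not need an attained maximizer, since the same bound holds for $\sup_\lambda$ of the difference, which matters at $\rho=0$. Finally, it gives a slightly better constant than the paper's route, whose large-$\lambda$ term carries the factor $V/W\le e^{r_{\max}/\lambda}$.
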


\begin{proof}[Proof of Theorem~\ref{thm:_algorithm_design_1_general}]
    Please refer to Appendix~\ref{subsec:_proof_algorithm_design_general_1} for a detailed proof of Theorem~\ref{thm:_algorithm_design_1_general}.
\end{proof}

\subsection{Proof of Theorem~\ref{thm:_algorithm_design_1_general}}\label{subsec:_proof_algorithm_design_general_1}

\begin{proof}[Proof of Theorem~\ref{thm:_algorithm_design_1_general}]
    Consider the following upper bound of the suboptimality of the learned assortment, 
    \begin{align}
        R_{\rho}(S^{\star};\bv) - R_{\rho}(\widehat S;\bv) \leq R_{\rho}(S^{\star};\bv) - R_{\rho}(\widehat S;\bv^{\mathrm{LCB}}) \leq  R_{\rho}(S^{\star};\bv) - R_{\rho}(S^{\star};\bv^{\mathrm{LCB}}),\label{eq:_proof_algorithm_design_1_0}
    \end{align}
    where the first inequality uses Lemma~\ref{lem:_monotonicity_1} which shows that the robust expected revenue is non-decreasing with respect to the parameter given the optimal robust assortment, and the second inequality is due to the optimality of $\widehat S$.
    Now we can further bound the right-hand side of the above inequality by the following,
    \begin{align}
        R_{\rho}(S^{\star};\bv) - R_{\rho}(S^{\star};\bv^{\mathrm{LCB}}) &= \sup_{0\leq \lambda\leq r_{\max}/\rho} \left\{-\lambda\cdot\log\left(\frac{\sum_{i\in S^{\star}_+}v_i\cdot\exp(-r_i/\lambda)}{\sum_{i\in S^{\star}_+}v_i}\right)-\lambda\cdot\delta\right\}\label{eq:_proof_algorithm_design_1_0+}\\
        &\qquad  - \sup_{0\leq \lambda\leq r_{\max}/\rho} \left\{-\lambda\cdot\log\left(\frac{\sum_{i\in S^{\star}_+}v_i^{\mathrm{LCB}}\cdot\exp(-r_i/\lambda)}{\sum_{i\in S^{\star}_+}v_i^{\mathrm{LCB}}}\right)-\lambda\cdot\delta\right\},\\
        &\leq \sup_{0\leq \lambda\leq r_{\max}/\rho} \left\{\lambda\cdot\log\left(\frac{\sum_{i\in S^{\star}_+}v_i^{\mathrm{LCB}}\cdot\exp(-r_i/\lambda)}{\sum_{i\in S^{\star}_+}v_i^{\mathrm{LCB}}}\cdot \frac{\sum_{i\in S^{\star}_+}v_i}{\sum_{i\in S^{\star}_+}v_i\cdot\exp(-r_i/\lambda)}\right)\right\}.
    \end{align}
    Now using the inequality that $\log(a/b) = \log(1+(a-b)/b)\leq (a-b)/b$ for all $a,b>0$, we can further bound the above expression by
    \begin{align}
        R_{\rho}(S^{\star};\bv) - R_{\rho}(S^{\star};\bv^{\mathrm{LCB}}) &\leq  \sup_{0\leq \lambda\leq r_{\max}/\rho} \left\{\lambda\cdot\left(\frac{\sum_{i\in S^{\star}_+}v_i^{\mathrm{LCB}}\cdot\exp(-r_i/\lambda)}{\sum_{i\in S^{\star}_+}v_i^{\mathrm{LCB}}}\cdot \frac{\sum_{i\in S^{\star}_+}v_i}{\sum_{i\in S^{\star}_+}v_i\cdot\exp(-r_i/\lambda)}-1\right)\right\}.
    \end{align}
    To derive a tight upper bound on the right hand side of the above inequality, we consider two ranges of the dual variable $\lambda$ separately. 
    More specifically, we consider 
    \allowdisplaybreaks
    \begin{align}
        &R_{\rho}(S^{\star};\bv) - R_{\rho}(S^{\star};\bv^{\mathrm{LCB}}) \\
        &\qquad \leq\max\left\{\underbrace{ \sup_{0\leq \lambda\leq c\cdot r_{\max}} \left\{\lambda\cdot\left(\frac{\sum_{i\in S^{\star}_+}v_i^{\mathrm{LCB}}\cdot\exp(-r_i/\lambda)}{\sum_{i\in S^{\star}_+}v_i^{\mathrm{LCB}}}\cdot \frac{\sum_{i\in S^{\star}_+}v_i}{\sum_{i\in S^{\star}_+}v_i\cdot\exp(-r_i/\lambda)}-1\right)\right\}}_{\displaystyle{\text{Term (i)}}},\right.\\
        &\qquad\qquad \left.\underbrace{ \sup_{c\cdot r_{\max}\leq \lambda\leq r_{\max} / \rho} \left\{\lambda\cdot\left(\frac{\sum_{i\in S^{\star}_+}v_i^{\mathrm{LCB}}\cdot\exp(-r_i/\lambda)}{\sum_{i\in S^{\star}_+}v_i^{\mathrm{LCB}}}\cdot \frac{\sum_{i\in S^{\star}_+}v_i}{\sum_{i\in S^{\star}_+}v_i\cdot\exp(-r_i/\lambda)}-1\right)\right\}}_{\displaystyle{\text{Term (ii)}}}\right\},\label{eq:_proof_algorithm_design_1_1-}
    \end{align}
    where $c>0$ is a parameter to be tuned later.
    We now upper bound \text{Term (i)} and \text{Term (ii)} separately.

    \paragraph{Upper bounding \texttt{Term (i).}}
    To upper bound \text{Term (i)}, we utilize the inequality that 
    \begin{align}
        \sum_ia_i = \sum_i\frac{a_i}{b_i}\cdot b_i\leq \max_i\left\{\frac{a_i}{b_i}\right\}\cdot\sum_ib_i,\quad \forall a_i\in\RR,\,\,b_i>0,\label{eq:_proof_algorithm_design_1_1}
    \end{align}
    which gives that for any fixed dual variable $0\leq \lambda\leq c\cdot r_{\max}$, we have the following inequalities,
    \allowdisplaybreaks
    \begin{align}
        &\frac{\sum_{i\in S^{\star}_+}v_i^{\mathrm{LCB}}\cdot\exp(-r_i/\lambda)}{\sum_{i\in S^{\star}_+}v_i^{\mathrm{LCB}}}\cdot \frac{\sum_{i\in S^{\star}_+}v_i}{\sum_{i\in S^{\star}_+}v_i\cdot\exp(-r_i/\lambda)}-1 \\
        &\qquad =  \frac{\sum_{i\in S^{\star}_+}v_i}{\sum_{i\in S^{\star}_+}v_i\cdot\exp(-r_i/\lambda)}\cdot \left(\sum_{i\in S^{\star}_+}\left(\frac{v_i^{\mathrm{LCB}}}{\sum_{i\in S^{\star}_+}v_i^{\mathrm{LCB}}} - \frac{v_i}{\sum_{i\in S^{\star}_+}v_i}\right)\cdot\exp(-r_i/\lambda)\right)\\ 
        &\qquad \leq \frac{\sum_{i\in S^{\star}_+}v_i}{\sum_{i\in S^{\star}_+}v_i\cdot\exp(-r_i/\lambda)}\cdot \max_{i\in S^{\star}_+}\left\{\left(\frac{v_i^{\mathrm{LCB}}}{\sum_{i\in S^{\star}_+}v_i^{\mathrm{LCB}}} - \frac{v_i}{\sum_{i\in S^{\star}_+}v_i}\right)\cdot \frac{\sum_{i\in S^{\star}_+}v_i}{v_i} \right\}\cdot \frac{\sum_{i\in S^{\star}_+}v_i\cdot\exp(-r_i/\lambda)}{\sum_{i\in S^{\star}_+}v_i}\\
        &\qquad = \max_{i\in S^{\star}_+}\left\{\left(\frac{v_i^{\mathrm{LCB}}}{\sum_{i\in S^{\star}_+}v_i^{\mathrm{LCB}}} - \frac{v_i}{\sum_{i\in S^{\star}_+}v_i}\right)\cdot \frac{\sum_{i\in S^{\star}_+}v_i}{v_i} \right\},\label{eq:_proof_algorithm_design_1_2}
    \end{align}
    where the inequality uses \eqref{eq:_proof_algorithm_design_1_1}.
    To further upper bound the right hand side of \eqref{eq:_proof_algorithm_design_1_2}, we utilize the pessimistic property of $\bv^{\mathrm{LCB}}$ in Lemma~\ref{lem:_concentration} to derive that for any item $i\in S^{\star}_+$,
    \begin{align}
        \left(\frac{v_i^{\mathrm{LCB}}}{\sum_{i\in S^{\star}_+}v_i^{\mathrm{LCB}}} - \frac{v_i}{\sum_{i\in S^{\star}_+}v_i}\right)\cdot \frac{\sum_{i\in S^{\star}_+}v_i}{v_i} &\leq  \left(\frac{1}{\sum_{i\in S^{\star}_+}v_i^{\mathrm{LCB}}} - \frac{1}{\sum_{i\in S^{\star}_+}v_i}\right)\cdot \sum_{i\in S^{\star}_+}v_i = \frac{\sum_{i\in S_+^{\star}}v_i - v_i^{\mathrm{LCB}}}{\sum_{i\in S^{\star}_+}v_i^{\mathrm{LCB}}}.
    \end{align}
    Therefore,  we can upper bound \text{Term (i)} via the following,
    \begin{align}
        \text{Term (i)} \leq c\cdot r_{\max} \cdot \frac{\sum_{i\in S_+^{\star}}v_i - v_i^{\mathrm{LCB}}}{\sum_{i\in S^{\star}_+}v_i^{\mathrm{LCB}}}. \label{eq:_proof_algorithm_design_1_2+}
    \end{align}

    \paragraph{Upper bounding \texttt{Term (ii).}}
    To upper bound \text{Term (ii)}, we consider the following approach, 
    \begin{align}
        &\frac{\sum_{i\in S^{\star}_+}v_i^{\mathrm{LCB}}\cdot\exp(-r_i/\lambda)}{\sum_{i\in S^{\star}_+}v_i^{\mathrm{LCB}}}\cdot \frac{\sum_{i\in S^{\star}_+}v_i}{\sum_{i\in S^{\star}_+}v_i\cdot\exp(-r_i/\lambda)}-1 \\
        &\qquad \leq   \frac{\sum_{i\in S^{\star}_+}v_i}{\sum_{i\in S^{\star}_+}v_i\cdot\exp(-r_i/\lambda)}\cdot \left|\sum_{i\in S^{\star}_+}\left(\frac{v_i^{\mathrm{LCB}}}{\sum_{i\in S^{\star}_+}v_i^{\mathrm{LCB}}} - \frac{v_i}{\sum_{i\in S^{\star}_+}v_i}\right)\cdot\exp(-r_i/\lambda)\right|\\  
        &\qquad =  \frac{\sum_{i\in S^{\star}_+}v_i}{\sum_{i\in S^{\star}_+}v_i\cdot\exp(-r_i/\lambda)}\cdot \left|\sum_{i\in S^{\star}_+}\left(\frac{v_i^{\mathrm{LCB}}}{\sum_{i\in S^{\star}_+}v_i^{\mathrm{LCB}}} - \frac{v_i}{\sum_{i\in S^{\star}_+}v_i}\right)\cdot\big(1 - \exp(-r_i/\lambda) \big)\right|\\  
        & \qquad \leq \exp(r_{\max}/\lambda)\cdot \big(1 - \exp(-r_{\max}/\lambda) \big)\cdot \sum_{i\in S_+^{\star}}\left|\frac{v_i^{\mathrm{LCB}}}{\sum_{i\in S^{\star}_+}v_i^{\mathrm{LCB}}} - \frac{v_i}{\sum_{i\in S^{\star}_+}v_i} \right|,\label{eq:_proof_algorithm_design_1_3}
    \end{align}
    where the first inequality just uses $x\leq |x|$, while the second inequality uses $|\sum_{i}a_i b_i|\leq \max_i|b_i|\cdot\sum_i|a_i|$.
    We further upper bound the right handside of \eqref{eq:_proof_algorithm_design_1_3} as following. 
    On the one hand, using the fact that $\bv^{\mathrm{LCB}}$ is a pessimistic estimation of $\bv$ (Lemma~\ref{lem:_concentration}), we can derive that 
    \begin{align}
        \frac{v_i^{\mathrm{LCB}}}{\sum_{i\in S^{\star}_+}v_i^{\mathrm{LCB}}} - \frac{v_i}{\sum_{i\in S^{\star}_+}v_i}  \leq \frac{v_i}{\sum_{i\in S^{\star}_+}v_i^{\mathrm{LCB}}} - \frac{v_i}{\sum_{i\in S^{\star}_+}v_i} = \frac{v_i\cdot \left(\sum_{i\in S_+^{\star}}v_i - v_i^{\mathrm{LCB}}\right)}{\left(\sum_{i\in S_+^{\star}}v_i\right)\cdot \left(\sum_{i\in S_+^{\star}}v_i^{\mathrm{LCB}}\right)},
    \end{align}
    and that 
    \begin{align}
        \frac{v_i}{\sum_{i\in S^{\star}_+}v_i} - \frac{v_i^{\mathrm{LCB}}}{\sum_{i\in S^{\star}_+}v_i^{\mathrm{LCB}}}   \leq \frac{v_i}{\sum_{i\in S^{\star}_+}v_i^{\mathrm{LCB}}} - \frac{v_i^{\mathrm{LCB}}}{\sum_{i\in S^{\star}_+}v_i^{\mathrm{LCB}}} = \frac{v_i - v_i^{\mathrm{LCB}}}{\sum_{i\in S_+^{\star}}v_i^{\mathrm{LCB}}},
    \end{align}
    which together gives that 
    \begin{align}
        \sum_{i\in S_+^{\star}}\left|\frac{v_i^{\mathrm{LCB}}}{\sum_{i\in S^{\star}_+}v_i^{\mathrm{LCB}}} - \frac{v_i}{\sum_{i\in S^{\star}_+}v_i}\right| & \leq \sum_{i\in S_+^{\star}}\max\left\{\frac{v_i\cdot \left(\sum_{i\in S_+^{\star}}v_i - v_i^{\mathrm{LCB}}\right)}{\left(\sum_{i\in S_+^{\star}}v_i\right)\cdot \left(\sum_{i\in S_+^{\star}}v_i^{\mathrm{LCB}}\right)}, \frac{v_i - v_i^{\mathrm{LCB}}}{\sum_{i\in S_+^{\star}}v_i^{\mathrm{LCB}}}\right\} \\
        & \leq \sum_{i\in S_+^{\star}} \frac{v_i\cdot \left(\sum_{i\in S_+^{\star}}v_i - v_i^{\mathrm{LCB}}\right)}{\left(\sum_{i\in S_+^{\star}}v_i\right)\cdot \left(\sum_{i\in S_+^{\star}}v_i^{\mathrm{LCB}}\right)} + \frac{v_i - v_i^{\mathrm{LCB}}}{\sum_{i\in S_+^{\star}}v_i^{\mathrm{LCB}}} \\
        &\leq 2\cdot  \frac{\sum_{i\in S_+^{\star}} v_i - v_i^{\mathrm{LCB}}}{\sum_{i\in S_+^{\star}}v_i^{\mathrm{LCB}}}. \label{eq:_proof_algorithm_design_1_4}
    \end{align}
    Here the last inequality uses the fact that $\max\{a,b\}\leq a+b$ for any $a\geq 0$ and $b\geq 0$.
    On the other hand, by Lemma~\ref{lem:_function_upper_bound}, we have that for any $\lambda\in[c\cdot r_{\max}, r_{\max}/\rho]$, it holds that 
    \begin{align}
        \lambda\cdot\exp(r_{\max}/\lambda)\cdot \big(1 - \exp(-r_{\max}/\lambda) \big)\leq c\cdot r_{\max} \cdot\big(\exp(1/c)-1\big),\label{eq:_proof_algorithm_design_1_5}
    \end{align}
    Consequently, combining \eqref{eq:_proof_algorithm_design_1_4} and \eqref{eq:_proof_algorithm_design_1_5}, we can bound \text{Term (ii)} by 
    \begin{align}
        \text{Term (ii)} \leq 2c\cdot r_{\max} \cdot\big(\exp(1/c)-1\big) \cdot\frac{\sum_{i\in S^{\star}_+ }v_i - v_i^{\mathrm{LCB}}}{\sum_{i\in S_+^{\star}}v_i^{\mathrm{LCB}}}.\label{eq:_proof_algorithm_design_1_6}
    \end{align}
    
    \paragraph{Combining the bounds.} 
    Now combining \eqref{eq:_proof_algorithm_design_1_2+} and \eqref{eq:_proof_algorithm_design_1_6}, we can derive that for any $c\in[0,1/\rho]$, 
    \begin{align}
        R_{\rho}(S^{\star};\bv) - R_{\rho}(S^{\star};\bv^{\mathrm{LCB}}) &\leq \max\big\{\text{Term (i)}, \text{Term (ii)}\big\} \\
        &= c\cdot r_{\max}\cdot \max\big\{2, \exp(1/c)-1\big\}\cdot \frac{\sum_{i\in S^{\star}_+ }v_i - v_i^{\mathrm{LCB}}}{\sum_{i\in S_+^{\star}}v_i^{\mathrm{LCB}}}.\label{eq:_proof_algorithm_design_1_7}
    \end{align}
    In the meanwhile, by using the same argument as in bounding \text{Term (i)}, i.e., \eqref{eq:_proof_algorithm_design_1_2+}, except that the dual variable $\lambda$ is in the full range $[0, r_{\max}/\rho]$, we can obtain that 
    \begin{align}
        R_{\rho}(S^{\star};\bv) - R_{\rho}(S^{\star};\bv^{\mathrm{LCB}}) \leq \frac{r_{\max}}{\rho}\cdot  \frac{\sum_{i\in S_+^{\star}}v_i - v_i^{\mathrm{LCB}}}{\sum_{i\in S^{\star}_+}v_i^{\mathrm{LCB}}}.\label{eq:_proof_algorithm_design_1_8}
    \end{align}
    Consequently, by combining \eqref{eq:_proof_algorithm_design_1_0}, \eqref{eq:_proof_algorithm_design_1_7} and \eqref{eq:_proof_algorithm_design_1_8}, we can obtain that 
    \begin{align}
        R_{\rho}(S^{\star};\bv) - R_{\rho}(\widehat S;\bv) & \leq R_{\rho}(S^{\star};\bv) - R_{\rho}(S^{\star};\bv^{\mathrm{LCB}})\\
        & \leq r_{\max}\cdot \min\left\{ \inf_{c\in[0,1/\rho]}\Big\{c\cdot \max\big\{2, \exp(1/c)-1\big\}\Big\}, \frac{1}{\rho}\right\}\cdot \frac{\sum_{i\in S_+^{\star}}v_i - v_i^{\mathrm{LCB}}}{\sum_{i\in S^{\star}_+}v_i^{\mathrm{LCB}}}.
    \end{align}
    Now applying Lemma~\ref{lem:_concentration_2}, we can obtain the desired result as follows, with probability at least $1-2\delta$,
    \begin{align}
        &R_{\rho}(S^{\star};\bv) - R_{\rho}(\widehat S;\bv)\\
        &\qquad \leq  \frac{\mathsf{C}(\rho)\cdot r_{\max}}{1+v(S^{\star})}\cdot \sum_{i\in S^{\star}}\left(16 \sqrt{\frac{v_i(1+v_i)\log(3N/\delta)}{\sum_{k=1}^n\mathbf{1}\{i_k\in S_k\}\cdot (1+v(S_k))^{-1}}} + \frac{88(1+v_i)\log(3N/\delta)}{\sum_{k=1}^n\mathbf{1}\{i_k\in S_k\}\cdot (1+v(S_k))^{-1}}\right).
    \end{align}
    where the function $\mathsf{C}(\rho)$ is defined as $\mathsf{C}(\rho) := \min\{ \inf_{c\in[0,1/\rho]}\{c\cdot \max\{2, \exp(1/c)-1\}\}, \rho^{-1}\}$, and $v(S)$ is defined as $v(S):=\sum_{i\in S}v_i$
    for any assortment $S\subset[N]$.
\end{proof}

\subsection{Proof of Theorem~\ref{thm:_algorithm_design_1}}\label{subsec:_proof_algorithm_design_1}

With the general results in Theorem~\ref{thm:_algorithm_design_1_general}, we are ready to specify it to Theorem~\ref{thm:_algorithm_design_1}.

\proof[Proof of Theorem~\ref{thm:_algorithm_design_1}] 
We prove the non-uniform revenue case and the uniform revenue case respectively as following. 
Since $\mathsf{C}(\rho)=\cO(1)$, it suffices to calculate the upper bounds for   
\begin{align}
    &\frac{1}{1+v(S^{\star})}\cdot \sum_{i\in S^{\star}} \sqrt{\frac{v_i(1+v_i)\log(3N/\delta)}{\sum_{k=1}^n\mathbf{1}\{i_k\in S_k\}\cdot (1+v(S_k))^{-1}}},\\
    & \frac{1}{1+v(S^{\star})}\cdot\sum_{i\in S^{\star}} \frac{(1+v_i)\log(3N/\delta)}{\sum_{k=1}^n\mathbf{1}\{i_k\in S_k\}\cdot (1+v(S_k))^{-1}},\label{eq: term to bound 1}
\end{align}
respectively, which results in different upper bounds under different setups.

\paragraph{Non-uniform revenue case.}
For the first term in Theorem~\eqref{eq: term to bound 1}, we have the following upper bound,
\begin{align*}
&\frac{1}{1+\sum_{j\in S^\star} v_j}\cdot \sum_{i\in S^\star} \sqrt{\frac{v_i(1+v_i)\log (3N/\delta) }{\sum_{k = 1}^n \bm{1}\{i\in S_k\}(1+\sum_{j\in S_k}v_j)^{-1}}} \\
& \qquad\leq \frac{1}{1+\sum_{j\in S^\star} v_j}\cdot\sqrt{(1+v_{\max})(1+Kv_{\max})\log(3N/\delta)} \cdot \sum_{i\in S^\star} \sqrt{\frac{v_i}{n_i}}\\
&\qquad \leq \sqrt{\frac{(1+v_{\max})(1+Kv_{\max})\log(3N/\delta)}{1+\sum_{j\in S^\star} v_j}}  \cdot\sqrt{\sum_{i\in S^\star}\frac{1}{n_i}}\\
&\qquad \leq (1+v_{\max})K\cdot\sqrt{\frac{\log (3N/\delta)}{\min_{i\in S^\star}n_i}},
\end{align*}
where the first inequality uses that $v_i\leq v_{\max}$ and the fact that 
\begin{align}
    1+\sum_{j\in S_k}v_j \leq 1+Kv_{\max},\quad \forall k\in[n],
\end{align}
the second inequality uses Cauchy-Schwartz inequality that 
\begin{align}
    \sum_{i\in S^{\star}}\sqrt{\frac{v_i}{n_i}}\leq \sqrt{\sum_{i\in S^{\star}}v_i}\cdot\sqrt{\sum_{i\in S^{\star}}\frac{1}{n_i}}\leq \sqrt{1+\sum_{i\in S^{\star}}v_i}\cdot\sqrt{\sum_{i\in S^{\star}}\frac{1}{n_i}},
\end{align}
and the last inequality  uses the fact that $1+Kv_{\max}\leq K(1+v_{\max})$ and the inequalities that 
\begin{align}
    \sqrt{\sum_{i\in S^\star}\frac{1}{n_i}} \leq \sqrt{\frac{K}{\min_{i\in S^{\star}}n_i}},\quad 1\leq 1+\sum_{j\in S^{\star}}v_j.
\end{align}
For the second term in \eqref{eq: term to bound 1}, we similarly have the following  upper bound,
\begin{align*}
\frac{1}{1+\sum_{j\in S^\star} v_j}\cdot\sum_{i\in S^\star}   \frac{(1+v_i)\log (3N/\delta) }{\sum_{k = 1}^n \bm{1}\{i\in S_k\}(1+\sum_{j\in S_k}v_j)^{-1}} & \leq  \frac{(1+v_{\max})^2K^2\log(3N/\delta)}{\min_{i\in S^\star} n_i}.
\end{align*}
This finishes the proof of Theorem~\ref{thm:_algorithm_design_1} for the non-uniform revenue case.

\paragraph{Uniform revenue case.}
In the uniform revenue setting, due to Proposition~\ref{prop:_optimal_set_uniform}, we always have that
\begin{align}
    \sum_{i\in S^\star} v_i \geq \sum_{i\in S} v_i,\quad \text{for any assortment $S$ with $|S|\leq K$}.\label{eq: uniform condition}
\end{align}
Consequently, the first term in \eqref{eq: term to bound 1} can be upper bounded as
\allowdisplaybreaks
\begin{align*}
  &\frac{1}{1+\sum_{j\in S^\star} v_j}\cdot\sum_{i\in S^\star}  \sqrt{\frac{v_i(1+v_i)\log (3N/\delta) }{\sum_{k = 1}^n \bm{1}\{i\in S_k\}(1+\sum_{j\in S_k}v_j)^{-1}}}\\
  &\qquad \leq  \sqrt{\frac{(1+v_{\max})\log(3N/\delta)}{1+\sum_{j\in S^\star} v_j}} \cdot \sum_{i\in S^\star} \sqrt{\frac{v_i }{n_i}} \\
&\qquad \leq \sqrt{\frac{(1+v_{\max})K\log(3N/\delta)}{\min_{i\in S^\star} n_i}},
\end{align*}
where the first inequality uses \eqref{eq: uniform condition} and $v_i\leq v_{\max}$, the second inequality uses Cauchy-Schwartz inequality that 
\begin{align}
    \sum_{i\in S^{\star}}\sqrt{\frac{v_i}{n_i}}\leq \sqrt{\sum_{i\in S^{\star}}v_i}\cdot\sqrt{\sum_{i\in S^{\star}}\frac{1}{n_i}}\leq \sqrt{1+\sum_{i\in S^{\star}}v_i}\cdot\sqrt{\sum_{i\in S^{\star}}\frac{1}{n_i}},
\end{align}
together with the fact that
\begin{align}
    \sqrt{\sum_{i\in S^\star}\frac{1}{n_i}} \leq \sqrt{\frac{K}{\min_{i\in S^{\star}}n_i}}.
\end{align}
Similarly, for the second term in \eqref{eq: term to bound 1}, we have the following upper bound that
\begin{align*}
    \frac{1}{1+\sum_{i\in S^\star} v_i}\cdot \sum_{i\in S^\star} \frac{(1+v_i)\log(3N/\delta) }{\sum_{k = 1}^n \bm{1}\{i\in S_k\}(1+\sum_{j\in S_k}v_j)^{-1}} &\leq \frac{(1+v_{\max})K\log(3N/\delta)}{\min_{i\in S^\star} n_i},
\end{align*}
as desired, finishing the proof of Theorem~\ref{thm:_algorithm_design_1} for the uniform revenue case.
\endproof

\subsection{Proof of Theorem~\ref{thm:_lower_bound_1}}\label{subsec:_proof_lower_bound_1}

In this section, we prove the suboptimality lower bounds for Example~\ref{exp: jin}. 
We first prove the lower bound for the general non-uniform revenue case, after which we prove the lower bound for the uniform revenue case.

\begin{proof}[\textcolor{blue}{Proof of Theorem~\ref{thm:_lower_bound_1}: non-uniform revenue case}]
    We utilize the construction of the hard instances proposed by \cite{han2025learning}.
    Consider that we are given the number of products $N$, the cardinality constraint $K$,  the effective sample size $n_{\min}$, and the robust set size $\rho$ satisfying
    \begin{align}
        K\geq C_0,\quad N\geq 5K,\quad n_{\min}\geq K^2,\quad \underline{c}_{\rho}\cdot\log 2\leq \rho\leq (1-\overline{c}_{\rho})\cdot\log 2.
    \end{align}
    Then we can parameterize each instance of the robust assortment optimization problem by the attraction parameters of the nominal choice model and the revenues, denoted by $(\bv,\br)\in\RR^{N}_+\times\RR^N_+$.
    The optimal robust assortment with size constraint $K$ associated with $(\bv,\br)$ is denoted by $S_{\bv,\br}^{\star}$.
    The data generation distribution conditioning on an assortment $S$, i.e., the nominal choice probability given $S$, is denoted by $\PP_{\bv}(\cdot|S)$.

    To help present the whole proof of Theorem~\ref{thm:_lower_bound_1}, we divide the proof into the following steps.

    \paragraph{Step 1: construction of hard instance class and offline dataset.} 
    We denote the hard instance class we are going to construct as $\cV\subset\RR_+^N\times\RR_+^N$. 
    Consider the divisions of the whole item set $[N]$ in the form of 
    \begin{align}
        [N]= \cN^{\star}\cup \cN^{\mathrm{c}}\cup \cN^0,\quad \text{with}\quad |\cN^{\star}|=K, \,\,\cN^{\star}\cup\cN_\mathrm{c} = [4K],\,\,\cN^0=[N]\setminus [4K].\label{eq: division rule}
    \end{align}
    By such a rule, the division is fully decided by the choice of the $K$-sized subset $\cN^{\star}$ of $[4K]$.
    The elements of $\cV$ correspond to different ways of dividing $[N]$ following such a rule.
    More specifically, given any partition $(\cN^{\star}, \cN^{\mathrm{c}}, \cN^0)$ satisfying the rule \eqref{eq: division rule}, an instance $(\bv,\br)$ is defined by 
    \begin{align}
        \big(\bv_{\cN^{\star}} ,\br_{\cN^{\star}}\big) := \left(\frac{1}{K}+\epsilon,r_{\max}\right),\quad \big(\bv_{\cN^{\mathrm{c}}} ,\br_{
        \cN^{\mathrm{c}}}\big) = \left(\frac{1}{K},r_{\max}\right),\quad \big(\bv_{\cN^0} ,\br_{\cN^0}\big) = (1,0),\label{eq:_proof_lower_bound_0}
    \end{align}
    where the notation $\bv_{\cN}=v$ is an abbreviate for ``$v_{i}=v$ for any $i\in\cN$''.
    We choose the parameter $\epsilon$ as 
    \begin{align}
        \epsilon:=\sqrt{\frac{C_1}{80 n_{\min}}},\label{eq: epsilon jin}
    \end{align}
    where $C_1>0$ is an absolute constant specified later in Lemma~\ref{lem-packing-number}.

    To specify the collection of divisions $(\cN^{\star}, \cN^{\mathrm{c}}, \cN^0)$ that determines $\cV$, we invoke the following lemma.

    \begin{lemma}[Packing number]\label{lem-packing-number}
        For a sufficiently large $K$, there exists a collection $\mathcal{F}$ of $K$-sized subsets of $[4K]$ so that $\Delta(S,S') \geq \lfloor K/4\rfloor$ for any $S, S' \in \mathcal{F}$ and that $\log \lvert \mathcal{F} \rvert \geq C_1\cdot K$ for some absolute constant $C_1$.
    \end{lemma}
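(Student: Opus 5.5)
We will prove Lemma~\ref{lem-packing-number} by a probabilistic (Gilbert--Varshamov style) argument. The plan is to pick many $K$-sized subsets of $[4K]$ uniformly at random, bound the probability that a fixed pair is too close in symmetric difference, and conclude via a greedy/union-bound argument that a large family with pairwise symmetric difference at least $\lfloor K/4\rfloor$ exists.

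\textbf{Step 1 (volume of a Hamming ball).} Fix a $K$-subset $S\subseteq[4K]$. Any $K$-subset $S'$ has $\Delta(S,S')=2|S\setminus S'|$. So $\Delta(S,S')<\lfloor K/4\rfloor$ forces $|S\setminus S'|<K/8$. The number of such $S'$ is
\begin{align}
    \sum_{m< K/8}\binom{K}{m}\binom{3K}{m}\leq \frac{K}{8}\binom{K}{\lfloor K/8\rfloor}\binom{3K}{\lfloor K/8\rfloor}.
\end{align}
By $\binom{n}{m}\leq (en/m)^m$ this is at most $\exp\big(\tfrac{K}{8}\log(8e)+\tfrac{K}{8}\log(24e)+\log K\big)$. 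Meanwhile $\binom{4K}{K}\geq 4^K$, so $\log\binom{4K}{K}\ge K\log 4\approx 1.386K$. Numerically, $\tfrac{1}{8}(\log 8e+\log 24e)\approx \tfrac{1}{8}(3.08+4.18)\approx 0.91$. Hence the ball volume is at most $\binom{4K}{K}\exp(-c K+\log K)$ with $c\approx 0.47>0$.

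\textbf{Step 2 (greedy packing).} Build $\mathcal F$ greedily. Repeatedly add any $K$-subset not within distance $<\lfloor K/4\rfloor$ of a previously chosen set, and remove its ball. Each chosen set removes at most one ball's worth of sets, so the procedure yields
\begin{align}
    |\mathcal F|\geq \binom{4K}{K}\Big/\max_S\#\{S':\Delta(S,S')<\lfloor K/4\rfloor\}\geq \exp(cK-\log K).
\end{align}
For $K$ sufficiently large, $cK-\log K\geq (c/2)K$. Setting $C_1=c/2$ then gives $\log|\mathcal F|\geq C_1K$ together with the required pairwise separation.

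The only delicate step is the numerical comparison in Step 1, namely showing that the exponent of the ball volume is strictly smaller than $\log\binom{4K}{K}$. The crude bounds above already leave a constant margin, so no sharper entropy estimate is needed. The ``sufficiently large $K$'' hypothesis absorbs the polynomial factor $K$ and the floor effects.
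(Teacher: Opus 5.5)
Your argument is correct. The paper gives no proof of this lemma and simply cites Lemma 5.1 of \cite{han2025learning}, so your greedy volume (Gilbert--Varshamov) argument is a self-contained proof of the cited fact. The individual steps check out:
\begin{itemize}
\item For equal-size sets, $\Delta(S,S')=2|S\setminus S'|$.
\item The ball count is $\sum_{m<K/8}\binom{K}{m}\binom{3K}{m}$.
\item The volume comparison $\log\binom{4K}{K}\geq K\log 4\approx 1.386K$ against $\tfrac{K}{8}\log(192e^2)\approx 0.907K$ holds. This gives $c=\log 4-\tfrac18\log(192e^2)\approx 0.479$.
\item The maximal-packing step is right: every chosen set lies outside all previously removed balls, and each step removes at most one ball.
\end{itemize}

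Three cosmetic points, none of which affects validity:
\begin{itemize}
\item Your opening sentence announces a random construction with a union bound over pairs. What you actually carry out is the deterministic maximal-packing argument, which is all you need, so the plan statement should match.
\item The sum has $\lceil K/8\rceil\leq K/8+1$ terms, not $K/8$.
\item Replacing $\bigl(eK/\lfloor K/8\rfloor\bigr)^{\lfloor K/8\rfloor}$ by $(8e)^{K/8}$ uses that $m\mapsto(en/m)^m$ is increasing for $m\leq n$, and this deserves a remark.
\end{itemize}
The last two are absorbed by the $\log K$ slack and the ``sufficiently large $K$'' hypothesis.
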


    \begin{proof}[Proof of Lemma~\ref{lem-packing-number}]
        Please refer to Lemma 5.1 in \cite{han2025learning} for a detailed proof.
    \end{proof}

    We construct the hard instance class $\cV$ induced by the packing set of $K$-sized subset of $[4K]$ in Lemma~\ref{lem-packing-number}.
    \begin{align}
        \cV := \Big\{\text{$(\bv,\br)$ induced by $(\cN^{\star}, \cN^{\mathrm{c}}, \cN^0)$ via \eqref{eq:_proof_lower_bound_0}}\,\Big|\,\text{$\cN^{\star}\in \cF$ with $\cF$ given by Lemma~\ref{lem-packing-number}}\Big\}.\label{eq: hard instance class}
    \end{align}
    The following shows that the optimal robust assortment induced by a division $(\cN^{\star}, \cN^{\mathrm{c}}, \cN^0)$ is exactly $\cN^{\star}$.

    \begin{lemma}[Optimal robust assortment for class \eqref{eq: hard instance class}]\label{lem: optimal robust assortment}
        For any problem instance $(\bv,\br)\in\cV$  associated with the division $(\cN^{\star}, \cN^{\mathrm{c}}, \cN^0)$, its optimal robust assortment $S^{\star}_{\bv,\br}$ in the sense of Example~\ref{exp: jin} is $S^{\star}_{\bv,\br}=\cN^{\star}$.
    \end{lemma}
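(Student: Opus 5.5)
We need to show that under the instance \eqref{eq:_proof_lower_bound_0}, the assortment $\cN^{\star}$ maximizes $R_{\rho}(S;\bv)$ over $|S|\leq K$. The plan is to work entirely with the dual form from Proposition~\ref{prop:_dual}. For a fixed $\lambda$, the dual objective depends on $S$ only through the nominal distribution $\PP_{\bv}(\cdot|S)$, which is the quantity inside the logarithm. We will show that $\cN^{\star}$ pointwise minimizes $\mathbb{E}_{i\sim\PP_{\bv}(\cdot|S)}[\exp(-r_i/\lambda)]$ over all feasible $S$. Since the term $-\lambda\rho$ does not depend on $S$ in Example~\ref{exp: jin}, this gives
\[
-\lambda\log\mathbb{E}_{\cN^{\star}}[\cdot]-\lambda\rho\geq -\lambda\log\mathbb{E}_{S}[\cdot]-\lambda\rho \quad \text{for every }\lambda .
\]
Taking the supremum over $\lambda\in[0,r_{\max}/\rho]$ then yields $R_{\rho}(\cN^{\star};\bv)\geq R_{\rho}(S;\bv)$. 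For $\lambda=0$ the expression should be read as its limit; the pointwise comparison still passes to this limit.

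\textbf{Step 1: items in $\cN^0$ never help.} Each item of $\cN^0$ has revenue $0$, so it contributes $v_i\cdot 1$ to the numerator $\sum_{j\in S_+}v_je^{-r_j/\lambda}$ and $v_i$ to the denominator. Write $A=\sum_{j\in S_+}v_je^{-r_j/\lambda}$ and $B=\sum_{j\in S_+}v_j$, so that $A/B\leq 1$. Adding such an item changes the ratio from $A/B$ to $(A+v_i)/(B+v_i)$, which is at least $A/B$. Removing an $\cN^0$-item therefore weakly decreases the ratio and cannot hurt. Hence we may restrict to $S\subseteq[4K]$, where all revenues equal $r_{\max}$.

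\textbf{Step 2: reduce to the uniform-revenue comparison.} For $S\subseteq[4K]$ with $V:=\sum_{j\in S}v_j$, the ratio equals
\[
\frac{1+e^{-r_{\max}/\lambda}V}{1+V}=e^{-r_{\max}/\lambda}+\frac{1-e^{-r_{\max}/\lambda}}{1+V}.
\]
This is decreasing in $V$, which is exactly the computation used in the proof of Proposition~\ref{prop:_optimal_set_uniform}. Among subsets of $[4K]$ of size at most $K$, $V$ is maximized by taking the $K$ largest attractions. These are precisely the items of $\cN^{\star}$, each with value $1/K+\epsilon>1/K$. So $\cN^{\star}$ minimizes the ratio for every $\lambda$, and the conclusion follows. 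Uniqueness also follows: any other feasible $S$ has strictly smaller $V$, or contains a zero-revenue item that strictly increases the ratio whenever $\lambda>0$. Together with the strict monotonicity of $-\lambda\log(\cdot)$, this gives a strict inequality at the optimizing $\lambda$.

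\textbf{Main obstacle.} The only delicate point is Step 1, where the comparison is made before the supremum over $\lambda$ and mixed-revenue sets must be handled. The inequality $(A+v)/(B+v)\geq A/B$ uses $A\leq B$, which holds because $e^{-r/\lambda}\leq 1$. Once this is checked, the domination is uniform in $\lambda$, and neither the range of $\rho$ nor the exact bound $B(S;\cdot)=r_{\max}/\rho$ matters, since that bound is the same constant for all $S$.
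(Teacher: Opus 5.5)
Your proof is correct and follows essentially the same route as the paper: a pointwise-in-$\lambda$ comparison of the ratio inside the logarithm of the dual form, rewritten as $e^{-r_{\max}/\lambda}+(1-e^{-r_{\max}/\lambda})/(1+\cdot)$, which is minimized by $\cN^{\star}$ for every $\lambda$. The only difference is cosmetic: the paper handles the zero-revenue items by dividing the effective attraction by $1+|S\cap\cN^0|$, while you remove them using the mediant inequality $(A+v)/(B+v)\geq A/B$.
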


    \begin{proof}[Proof of Lemma~\ref{lem: optimal robust assortment}]
        By definition, for any instance $(\bv,\br)$ with partitions given by $(\cN^{\star},\cN^\mathrm{c}, \cN^0)$, the robust expected revenue of any assortment $S$ with $|S|\leq K$ is given by 
        \begin{align}
            R_{\rho}(S;\bv, \br)&=\sup_{\lambda\geq 0}\left\{-\lambda\cdot\log\left(\frac{1+|S\cap \cN^0|+e^{-\frac{r_{\max}}{\lambda}}\cdot(1/K\cdot|S\cap \cN^{\mathrm{c}}| + (1/K+\epsilon)\cdot |S\cap \cN^{\star}|)}{1+|S\cap\cN^0|+1/K\cdot|S\cap \cN^{\mathrm{c}}|+(1/K+\epsilon)\cdot|S\cap \cN^{\star}|)}\right)-\lambda\rho\right\} \\
            &=\sup_{\lambda\geq 0}\left\{-\lambda\cdot\log\left(e^{-\frac{r_{\max}}{\lambda}}+\frac{(1-e^{-\frac{r_{\max}}{\lambda}})\cdot(1+|S\cap \cN^0|)}{1+|S\cap\cN^0|+1/K\cdot|S\cap \cN^{\mathrm{c}}|+(1/K+\epsilon)\cdot|S\cap \cN^{\star}|)}\right)-\lambda\rho\right\}\\
            &=\sup_{\lambda\geq 0}\left\{-\lambda\cdot\log\left(e^{-\frac{r_{\max}}{\lambda}}+\frac{1-e^{-\frac{r_{\max}}{\lambda}}}{1+\frac{1}{K}\cdot\frac{|S\cap \cN^{\mathrm{c}}|}{1+|S\cap \cN^0|} + \left(\frac{1}{K}+\epsilon\right)\cdot\frac{|S\cap \cN^{\star}|}{1+|S\cap \cN^0|}}\right)-\lambda\rho\right\}\\
            &\leq \sup_{\lambda\geq 0}\left\{-\lambda\cdot\log\left(e^{-\frac{r_{\max}}{\lambda}}+\frac{1-e^{-\frac{r_{\max}}{\lambda}}}{1+\left(\frac{1}{K}+\epsilon\right)\cdot|\cN^{\star}|}\right)-\lambda\rho\right\},
        \end{align}
        where the last inequality becomes equality only when the assortment $S = \cN^{\star}$. This proves Lemma~\ref{lem: optimal robust assortment}.
    \end{proof}
    
    Finally, we construct the observational data $\mathbb{D}=\{(S_k,i_k)\}_{k=1}^n$.
    This boils down to specify the assortments $\{S_k\}_{k=1}^n$. 
    We let $n = 4Kn_{\min}$ and define each $S_k$ as
    \begin{align}
        S_k :=\left\{\left\lceil\frac{k}{n_{\min}}\right\rceil,4K+2,\cdots,5K\right\}.\label{eq: offline dataset proof}
    \end{align}
    Intuitively, each $S_k$ contains $K-1$ items in $\cN^0$ and just one item in $\cN^{\star}\cup\cN^{\mathrm{c}}$.
    Each item in $\cN^{\star}\cup\cN^{\mathrm{c}}$ appears in $n_{\mathrm{min}}$ of the $n$ assortments.
    Therefore, for any instance $(\bv,\br)\in\cV$, the effective sample size $\min_{i\in S^{\star}_{\bv,\br}} n_i$ is exactly $n_{\min}$.  
    The choices $\{i_k\}_{k=1}^n$ are independently generated obeying $\PP_{\bv}(\cdot|S_k)$ for $k\in[n]$.

    \paragraph{Step 2: establish the minimax lower bound.}
    For any $(\bv,\br)\in\cV$ and any assortment $S\subseteq[N]$ satisfying $|S|\leq K$, we need to lower bound the suboptimality of $S$ under instance $(\bv,\br)$.
    Actually, it suffices to consider assortments $S\subseteq \cN^{\star}\cup\cN^{\mathrm{c}}=[4K]$ with $|S|= K$. 
    This is because using a similar argument as in the proof of Lemma~\ref{lem: optimal condition jin}, we can show that for any $S\subseteq [N]$ with $|S|\leq K$, under any instance $(\bv,\br)$,
    \begin{align}
            R_{\rho}(S;\bv, \br)
            &=\sup_{\lambda\geq 0}\left\{-\lambda\cdot\log\left(\exp(-r_{\max}/\lambda)+\frac{1-\exp(-r_{\max}/\lambda)}{1+\frac{1}{K}\cdot\frac{|S\cap \cN^{\mathrm{c}}|}{1+|S\cap \cN^0|} + \left(\frac{1}{K}+\epsilon\right)\cdot\frac{|S\cap \cN^{\star}|}{1+|S\cap \cN^0|}}\right)-\lambda\rho\right\}\\
            &\leq \sup_{\lambda\geq 0}\left\{-\lambda\cdot\log\left(\exp(-r_{\max}/\lambda)+\frac{1-\exp(-r_{\max}/\lambda)}{1+\frac{1}{K}\cdot(K-|S\cap \cN^{\star}|) + \left(\frac{1}{K}+\epsilon\right)\cdot|S\cap \cN^{\star}|}\right)-\lambda\rho\right\}\\
            &= R_{\rho}(\widetilde{S}(S);\bv,\br),\quad \text{where} \quad \widetilde{S}(S):= (S\cap \cN^{\star}) \cup \cN^{\mathrm{c}}_{K-|S\cap \cN^{\star}|}\subseteq \cN^{\star}\cup \cN^{\mathrm{c}},\label{eq: simplification jin}
        \end{align}
    where $(\cN^{\star},\cN^{\mathrm{c}},\cN^0)$ is the division associated with the instance $(\bv,\br)$, and $\cN^{\mathrm{c}}_{K-|S\cap \cN^{\star}|}$ is any $(K-|S\cap \cN^{\star}|)$-sized subset of $\cN^{\mathrm{c}}$. 
    That being said, it suffices to consider all assortments $S\subseteq \cN^{\star}\cup \cN^{\mathrm{c}}=[4K]$ with $|S|=K$. 
    To this end, for any such assortment, we have the following (with abbreviation $S^{\star}_{\bv,\br}$ as $S^{\star}$ if no confusion),
    \allowdisplaybreaks
    \begin{align}
        &\mathrm{SubOpt}_{\rho}(S;\bv,\br)=R_{\rho}(S^{\star}_{\bv,\br};\bv,\br) - R_{\rho}(S;\bv,\br) \\
        &\qquad=\sup_{\lambda\geq 0}\left\{-\lambda\cdot \log\left(\frac{1+\exp(-r_{\max}/\lambda)\cdot(1+K\epsilon)}{2+K\epsilon}\right)-\lambda\rho\right\}\\
        &\qquad\qquad  - \sup_{\lambda\geq 0}\left\{-\lambda\cdot \log\left(\frac{1+\exp(-r_{\max}/\lambda)\cdot(1+|S\cap S^{\star}|\cdot \epsilon)}{2+|S\cap S^{\star}|\cdot \epsilon}\right)-\lambda\rho\right\}\\
        &\qquad \geq -\lambda^{\star}_{\bv,\br}(S)\cdot \log\left(\frac{1+\exp(-r_{\max}/\lambda^{\star}_{\bv,\br}(S))\cdot(1+K\epsilon)}{2+K\epsilon}\cdot \frac{2+|S\cap S^{\star}|\cdot \epsilon}{1+\exp(-r_{\max}/\lambda^{\star}_{\bv,\br}(S))\cdot(1+|S\cap S^{\star}|\cdot \epsilon)}\right)\\
        &\qquad\geq -\lambda^{\star}_{\bv,\br}(S)\cdot \left(\frac{1+\exp(-r_{\max}/\lambda^{\star}_{\bv,\br}(S))\cdot(1+K\epsilon)}{2+K\epsilon}\cdot \frac{2+|S\cap S^{\star}|\cdot \epsilon}{1+\exp(-r_{\max}/\lambda^{\star}_{\bv,\br}(S))\cdot(1+|S\cap S^{\star}|\cdot \epsilon)} - 1\right)\\
        &\qquad = \lambda^{\star}_{\bv,\br}(S)\cdot \frac{(1-\exp(-r_{\max}/\lambda_{\bv,\br}^{\star}(S)))\cdot(2+|S\cap S^{\star}|\cdot \epsilon)}{1+\exp(-r_{\max}/\lambda^{\star}_{\bv,\br}(S))\cdot(1+|S\cap S^{\star}|\cdot \epsilon)}\cdot\left(\frac{1}{2+|S\cap S^{\star}|\cdot\epsilon} - \frac{1}{2+K\epsilon}\right)\\
        &\qquad  \geq \frac{\epsilon}{18}\cdot \lambda^{\star}_{\bv,\br}(S)\cdot \frac{(1-\exp(-r_{\max}/\lambda^{\star}_{\bv,\br}(S)))\cdot(2+|S\cap S^{\star}|\cdot \epsilon)}{1+\exp(-r_{\max}/\lambda^{\star}_{\bv,\br}(S))\cdot(1+|S\cap S^{\star}|\cdot \epsilon)}\cdot \Delta(S^{\star}_{\bv,\br},S)\\
        &\qquad \geq \frac{\epsilon}{18}\cdot \lambda^{\star}_{\bv,\br}(S)\cdot \big(1-\exp(-r_{\max}/\lambda^{\star}_{\bv,\br}(S))\big)\cdot \Delta(S^{\star}_{\bv,\br},S),\label{eq:_proof_lower_bound_1}
    \end{align}
    where the first equality uses the dual representation of the robust expected revenue (Proposition~\ref{prop:_dual}), the first inequality utilizes the notion of $\lambda^{\star}_{\bv,\br}(S)$ defined as 
    \begin{align}
        \lambda^{\star}_{\bv,\br}(S):=\argsup_{\lambda\geq 0}\left\{-\lambda\cdot \log\left(\frac{1+\exp(-r_{\max}/\lambda)\cdot(1+|S\cap S^{\star}|\cdot \epsilon)}{2+|S\cap S^{\star}|\cdot \epsilon}\right)-\lambda\rho\right\},\label{eq: dual proof}
    \end{align}
    the second inequality uses $\log(1+x)\leq x$ for $x>-1$, and the third inequality uses the fact that that 
    \begin{align}
        \frac{1}{2+|S\cap S^{\star}|\cdot\epsilon} - \frac{1}{2+K\epsilon} \geq \frac{(K-|S\cap S^{\star}|)\cdot \epsilon}{(2+K\epsilon)^2}\geq \frac{\epsilon}{18}\cdot \underbrace{\Delta(S^{\star}_{\bv,\br},S)}_{\displaystyle{:=\Delta_{\bv,\br}(S)}}.
    \end{align}

    \paragraph{Step 2.1: upper and lower bound the dual variable.} Now we need to upper and lower bound the dual variable $\lambda^{\star}_{\bv,\br}(S)$ defined in \eqref{eq: dual proof} in order to further lower bound the right hand side of \eqref{eq:_proof_lower_bound_1}. 
    Thanks to \eqref{eq: simplification jin}, we only need consider $\lambda^{\star}_{\bv,\br}(S)$ for $(\bv,\br)\in\cV$ and most importantly $S\subseteq[4K]$ with size $|S|=K$.
    
    On the one hand, by Proposition~\ref{prop:_dual}, it follows directly that 
    \begin{align}
        \lambda^{\star}_{\bv,\br}(S) \leq B(S,\bv):= \frac{r_{\max}}{\rho} \leq \frac{r_{\max}}{\underline{c}_{\rho}\cdot\log 2},
    \end{align}
    where we use the condition on the robust set size $\rho$ that 
    \begin{align}
        \rho \geq \underline{c}_{\rho}\cdot\log2.
    \end{align}
    On the other hand, we now lower bound the dual variable as follows. 
    For simplicity, let's define the dual representation function as following, given $(\bv,\br;S)\in \cV\times 2^{[4K]}$ with $|S|=K$,
    \begin{align}
        h_{\bv,\br,S}(\lambda):= -\lambda\cdot \log\left(\frac{1+\exp(-r_{\max}/\lambda)\cdot(1+|S\cap S^{\star}|\cdot \epsilon)}{2+|S\cap S^{\star}|\cdot \epsilon}\right)-\lambda\rho.
    \end{align}
    Noting that $h_{\bv,\br,S}$ is a smooth and concave function on $(0,\infty)$, in order to prove $\lambda^{\star}_{\bv,\br}(S)\geq \underline{\lambda}$ for some $\underline{\lambda}>0$, it suffices to prove the following inequality,
    \begin{align}
        \partial_{\lambda} h_{\bv,\br,S}(\underline{\lambda}) \geq 0.
    \end{align}
    To this end, we first calculate the derivative of $h_{\bv,\br,S}$ with respect to $\lambda$ as following, 
    \allowdisplaybreaks
    \begin{align}
        \partial_{\lambda} h_{\bv,\br,S}(\lambda)&=-\rho - \log\left(\frac{1+\exp(-r_{\max}/\lambda)\cdot(1+|S\cap S^{\star}|\cdot \epsilon)}{2+|S\cap S^{\star}|\cdot \epsilon}\right) \\
        &\qquad - \frac{r_{\max}}{\lambda}\cdot\frac{\exp(-r_{\max}/\lambda)\cdot(1+|S\cap S^{\star}|\cdot \epsilon)}{1+\exp(-r_{\max}/\lambda)\cdot(1+|S\cap S^{\star}|\cdot \epsilon)}\\
        &= \log\left(2+|S\cap S^{\star}|\cdot \epsilon\right) -\rho - \log\Big(1+\exp(-r_{\max}/\lambda)\cdot(1+|S\cap S^{\star}|\cdot \epsilon)\Big)\\
        &\qquad - \frac{r_{\max}}{\lambda}\cdot\frac{\exp(-r_{\max}/\lambda)\cdot(1+|S\cap S^{\star}|\cdot \epsilon)}{1+\exp(-r_{\max}/\lambda)\cdot(1+|S\cap S^{\star}|\cdot \epsilon)}\\
        & \geq \overline{c}_{\rho}\cdot \log\left(2+|S\cap S^{\star}|\cdot \epsilon\right) - \log\Big(1+\exp(-r_{\max}/\lambda)\cdot(1+|S\cap S^{\star}|\cdot \epsilon)\Big)\\
        &\qquad - \frac{r_{\max}}{\lambda}\cdot\frac{\exp(-r_{\max}/\lambda)\cdot(1+|S\cap S^{\star}|\cdot \epsilon)}{1+\exp(-r_{\max}/\lambda)\cdot(1+|S\cap S^{\star}|\cdot \epsilon)},\label{eq: lower bound proof 1}
    \end{align}
    where the last inequality uses the condition on the robust set size $\rho$ that 
    \begin{align}
        \rho \leq (1-\overline{c}_\rho)\cdot\log2.\label{eq: lambda star upper bound}
    \end{align}
    Now we need to figure out the largest possible $\underline{\lambda}>0$ such that $\partial  h_{\bv,\br,S}(\underline{\lambda})\geq 0$ holds. 
    To this end, we consider the following reparametrization to simplify the notations and calculation, 
    \begin{align}
        x:=1+|S\cap S^{\star}|\cdot \epsilon\in[1,2],\quad t:= \exp(-r_{\max}/\lambda)\in[0,1],\quad u=t\cdot x\in[0,2],\label{eq: reparametrization jin}
    \end{align}
    where the upper bound on $x$ originates from the choice of $\epsilon$ in \eqref{eq: epsilon jin}.  
    With \eqref{eq: reparametrization jin}, we can further handle the right hand side of \eqref{eq: lower bound proof 1} as
    \allowdisplaybreaks
    \begin{align}
        \partial  h_{\bv,\br,S}(\underline{\lambda})&\geq \overline{c}_{\rho}\cdot \log(1+x) - \log(1+u) + \log\left(\frac{u}{x}\right)\cdot \frac{u}{1+u}\\
        &\geq \overline{c}_{\rho}\cdot \log(1+x) - \log(1+u) + \log\left(\frac{u}{2}\right)\cdot u,\label{eq: lower bound proof 1+}
    \end{align}
    Now consider taking $u=(c^{\dagger}_{\rho}\wedge  \overline{c}_{\rho})/2\cdot \log(1+x)\leq 2$, where $c^{\dagger}_{\rho}$ is given by solving the equation that 
    \begin{align}
        c^{\dagger}_{\rho}:=\argmax_{c\geq 0}\left\{c\cdot \log\left(\frac{\log2}{4}\cdot c\wedge\overline{c}_{\rho}\right) \geq -\overline{c}_{\rho}\right\}.\label{eq: c dagger jin}
    \end{align}
     It holds that $c^{\dagger}_{\rho}>0$ is an absolute constant which only dependents on the constant $\overline{c}_{\rho}$. 
     With such a choice of $u$, we can lower bound \eqref{eq: lower bound proof 1+} via 
     \begin{align}
         \partial  h_{\bv,\br,S}(\underline{\lambda}) &\geq \overline{c}_{\rho} \cdot \log(1+x) - \log\left(1+\frac{\overline{c}_{\rho}}{2}\cdot \log(1+x)\right) +\frac{c^{\dagger}_{\rho}}{2}\cdot\log\left(\frac{c^{\dagger}_{\rho}\wedge\overline{c}_{\rho}}{4}\cdot\log(1+x)\right)\cdot \log(1+x)\\
         & \geq \overline{c}_{\rho} \cdot \log(1+x)-\frac{\overline{c}_{\rho}}{2}\cdot \log(1+x) + \frac{c^{\dagger}_{\rho}}{2}\cdot \log\left(\frac{\log 2}{2}\cdot c^{\dagger}_{\rho}\wedge \overline{c}_{\rho}\right) \cdot\log(1+x) \\
         &\geq \overline{c}_{\rho} \cdot \log(1+x)-\frac{\overline{c}_{\rho}}{2}\cdot \log(1+x) -\frac{\overline{c}_{\rho}}{2}\cdot \log(1+x) =0,
     \end{align}
     where the first inequality plugs in the choice of $u$, the second inequality uses the inequality that $\log(1+x)\leq x$ for $x>-1$, and the fact that $x\geq 1$, the last inequality uses the property \eqref{eq: c dagger jin} that the constant $c_{\rho}^{\dagger}$ satisfies. 
     Therefore, by taking $u=(c^{\dagger}_{\rho}\wedge  \overline{c}_{\rho})/2\cdot \log(1+x)$, the associated $\lambda$ satisfies $\partial  h_{\bv,\br,S}(\lambda)\geq 0$. 
     Therefore, using the reparametrization formula \eqref{eq: reparametrization jin}, we get back to the desired $\underline{\lambda}$ as 
     \begin{align}
         \underline{\lambda}:= \frac{r_{\max}}{\log\left(\frac{x}{u}\right)}=\frac{r_{\max}}{\log\left(\frac{x}{\frac{1}{2}\cdot c^{\dagger}_{\rho}\wedge \overline{c}_{\rho}\cdot\log(1+x)}\right)}.
     \end{align}
     Consequently, we can conclude that the optimal dual variable is lower bounded by 
     \begin{align}
         \lambda^{\star}_{\bv,\br}(S)\geq \underline{\lambda}\geq \frac{r_{\max}}{\log\left(\frac{x}{\frac{1}{2}\cdot c^{\dagger}_{\rho}\wedge \overline{c}_{\rho}\cdot\log(1+x)}\right)} \geq \frac{r_{\max}}{\log\left(\frac{2}{\frac{1}{2}\cdot c^{\dagger}_{\rho}\wedge \overline{c}_{\rho}\cdot\log2}\right)},\label{eq: lambda star lower bound}
     \end{align}
     where the third inequality uses the fact that $x\in[1,2]$.
     Finally, combining \eqref{eq: lambda star upper bound} and \eqref{eq: lambda star lower bound}, we can conclude that for any $(\bv,\br;S)\in\cV\times 2^{[4K]}$ with $|S|=K$,
     \begin{align}
         \frac{r_{\max}}{\log\left(\frac{2}{\frac{1}{2}\cdot c^{\dagger}_{\rho}\wedge \overline{c}_{\rho}\cdot\log2}\right)}\leq \lambda_{\bv,\br}^{\star}(S)\leq \frac{r_{\max}}{\underline{c}_{\rho}\cdot \log 2}.\label{eq: lambda star lower and upper bound}
     \end{align}
     For a quick sanity check of the above result, we remind the reader that the above range of $\lambda^{\star}_{\bv,\br}(S)$ is a valid range since 
     \begin{align}
         \log\left(\frac{2}{\frac{1}{2}\cdot c^{\dagger}_{\rho}\wedge \overline{c}_{\rho}\cdot\log2}\right) \geq \underline{c}_{\rho}\cdot \log\left(\frac{2}{\frac{1}{2}\cdot c^{\dagger}_{\rho}\wedge \overline{c}_{\rho}\cdot\log2}\right)  \geq \underline{c}_{\rho}\cdot\log 2.
     \end{align}

    \paragraph{Step 2.2: lower bound the minimax rate of the quantity $\Delta(S^{\star}_{\bv,\br},\pi(\mathbb{D}))$.}

    To lower bound the minimax rate of $\Delta(S^{\star}_{\bv,\br},\pi(\mathbb{D}))$, it suffices to apply Fano’s Lemma and carefully upper bound the KL divergence between the product choice distributions induced by different instances $(\bv,\br),(\bv',\br')\in\cV$.  
    We conclude the results in the following lemma.

    \begin{lemma}[Minimax lower bound 1 of $\Delta(S^{\star}_{\bv,\br},\pi(\mathbb{D}))$]\label{lem: minimax rate 1}
        Under the settings inside the proof of Theorem~\ref{thm:_lower_bound_1} for general non-uniform revenue setting, by taking the parameter $\epsilon=\sqrt{C_1/80n_{\min}}$ where constant $C_1>0$ is defined in Lemma~\ref{lem-packing-number}, then there is another absolute constant $C_2>0$ such that for $K\geq C_2$ it holds that
        \begin{align}
            \inf_{\pi(\cdot):(2^{[N]}\times[N])^n\mapsto2^{[N]}}\sup_{(\bv,\br)\in\cV}\mathbb{E}_{\mathbb{D}\sim \otimes_{k=1}^n\mathbb{P}_{\bv}(\cdot|S_k)}\big[\Delta(S^{\star}_{\bv,\br},\pi(\mathbb{D}))\big] \geq  \frac{1}{128}\cdot \sqrt{\frac{C_1}{5}}\cdot\frac{K}{\epsilon\cdot\sqrt{n_{\min}}},
        \end{align}
        where the expectation is taken w.r.t. the randomness in the item choices $\{i_k\}_{k=1}^n$ in the observational data.
    \end{lemma}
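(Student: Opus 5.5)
The plan is to run Fano's inequality over the packing $\cF$ of Lemma~\ref{lem-packing-number}, in the same way as in the non-robust analysis of \cite{han2025learning}. This works because the hard instances for Example~\ref{exp: jin} have the same nominal choice distributions as those instances. The robust set size $\rho$ enters only through the revenue functional, not through the data distribution. The target quantity $\Delta(S^{\star}_{\bv,\br},\pi(\mathbb{D}))$ depends only on $\cN^{\star}$ and the output of $\pi$, so the problem reduces to a pure estimation/testing problem over $\cF$.

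\textbf{Step 1: reduce to testing.} Given any $\pi$, form the estimator $\widehat{\cN}:=\argmin_{S\in\cF}\Delta(S,\pi(\mathbb{D}))$. By the triangle inequality for the symmetric-difference metric, $\Delta(\widehat{\cN},\cN^{\star})\le 2\Delta(\pi(\mathbb{D}),\cN^{\star})$. Whenever $\widehat{\cN}\neq\cN^{\star}$, the packing property gives $\Delta(\widehat{\cN},\cN^{\star})\ge\lfloor K/4\rfloor$, and hence $\Delta(\pi(\mathbb{D}),\cN^{\star})\ge \lfloor K/4\rfloor/2$. It therefore suffices to show $\sup_{\cV}\PP(\widehat{\cN}\neq\cN^{\star})\ge 1/2$. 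Fano's lemma gives
\[
\PP(\widehat{\cN}\neq\cN^{\star})\ge 1-\frac{\max_{\bv,\bv'}D_{\mathrm{KL}}(\PP_{\bv}^{\otimes}\|\PP_{\bv'}^{\otimes})+\log 2}{\log|\cF|},\qquad \log|\cF|\ge C_1K.
\]

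\textbf{Step 2: bound the KL divergence; this is the main step.} By the dataset construction \eqref{eq: offline dataset proof}, each $S_k$ contains a single item $j=\lceil k/n_{\min}\rceil\in[4K]$ together with $K-1$ items from $\cN^0$, each of attraction $1$. The KL divergence tensorizes over $k$. An assortment $S_k$ contributes only if $j\in\cN^{\star}\triangle\cN'^{\star}$, and there are at most $2K$ such $j$, each appearing in $n_{\min}$ assortments. For such an $S_k$, the two choice distributions differ only because $v_j\in\{1/K,\,1/K+\epsilon\}$ while the other weights sum to $K$. I would bound the per-sample KL by the $\chi^2$ divergence. The choice probabilities are $p=v_j/(K+v_j)\approx 1/K^2$ and the perturbation is $\delta p\approx\epsilon/K$, so the $\chi^2$ divergence is $O(\epsilon^2/K^2\cdot K^2)$. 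I expect a per-sample bound of the form $c\,\epsilon^2$, with a small explicit constant because the weights on the remaining outcomes ($0$ and $\cN^0$) are essentially unchanged. Summing over the contributing assortments gives a total of at most $2Kn_{\min}\cdot c\epsilon^2$. With $\epsilon^2=C_1/(80n_{\min})$, this is $cC_1K/40$, which is at most $C_1K/4$ after tracking constants. For $K\ge C_2$, large enough that $\log 2\le C_1K/4$, the Fano bound gives an error probability of at least $1/2$.

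\textbf{Step 3: conclude.} The expected $\Delta$ is therefore at least $\tfrac12\cdot\tfrac12\lfloor K/4\rfloor\ge K/32$ for $K\ge C_2$. Since $\epsilon\sqrt{n_{\min}}=\sqrt{C_1/80}$, we have $\frac{1}{128}\sqrt{C_1/5}\cdot\frac{K}{\epsilon\sqrt{n_{\min}}}=\frac{K}{128}\cdot\sqrt{16}=K/32$. This matches the claimed bound exactly, which confirms that the intended constants are those of the Fano argument above.

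The only delicate point is the explicit constant in the per-sample KL bound of Step 2. Everything else is bookkeeping identical to \cite{han2025learning}.
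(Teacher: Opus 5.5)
Your proposal is correct and matches the paper's argument. The paper applies Fano's lemma in the form of Wainwright's Proposition~15.12, which already packages your explicit reduction to testing over $\mathcal{F}$. It then tensorizes the KL over the $4K$ blocks of $n_{\min}$ assortments, imports a per-block bound of $5\epsilon^2$ for items in $\mathcal{N}^{\star}\triangle\mathcal{N}^{\prime\star}$ from \cite{han2025learning}, and sums crudely over all $4K$ blocks to get total KL at most $C_1K/4$, hence $\tfrac{K}{8}(\tfrac12-\tfrac14)=K/32$. Your one open point, the constant $c$, is harmless: conditional on item $j$ not being chosen, the choice is uniform over the remaining $K$ outcomes under both instances. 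So the per-sample KL equals a binary KL whose $\chi^2$ upper bound is at most $\epsilon^2$ in either direction, well within your requirement $c\le 10$.
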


    \begin{proof}[Proof of Lemma~\ref{lem: minimax rate 1}]
        To prove the minimax rate for the difference $\Delta(S^{\star}_{\bv,\br},\pi(\mathbb{D}))$, we invoke the Fano's lemma (Lemma~\ref{lem-fano}).
        Specifically, in Lemma~\ref{lem-fano}, if we choose $\Theta := 2^{[N]}$, $\Gamma := \{S^{\star}_{\bv,\br}: (\bm r, \bm v) \in \mathcal{V}\}$, $\rho(\cdot,\cdot) := \Delta(\cdot,\cdot)$, then by the definition of $\cV$ and Lemma~\ref{lem-packing-number}, $\Gamma$ is $2\delta_0$-separated with $\delta_0 = K/8$. 
        We define the distribution associated with each element in $\Gamma$ as
        $$\mathbb{P}_{\bv,\br}(\cdot): = \bigotimes_{k=1}^n\PP_{\bv}(\cdot|S_k)=\bigotimes_{\ell=1}^{4K}\bigotimes_{j = 1}^{n_{\min}} \mathbb{P}_{\bm v}(\cdot|S_{j}^{(\ell)}), \quad\text{where}\quad S_{j}^{(\ell)} = S_{(\ell-1)\cdot n_{\min}+j}.$$ 
        Since we have that $\log M := \log|\Gamma| = \log \lvert \mathcal{V}\rvert \geq C_1K$ for some absolute constant $C_1$ (Lemma~\ref{lem-packing-number}), Lemma~\ref{lem-fano} implies that when $K\geq C_2:=\max\{C_0, C_1^{-1}2\log 2\} $, it holds that \begin{align*}
                \inf_{\pi(\cdot):(2^{[N]}\times[N])^n\mapsto2^{[N]}}\sup_{(\bv,\br)\in\cV}\mathbb{E}_{\mathbb{D}\sim \PP_{\bv,\br}(\cdot)}\big[\Delta(S^{\star}_{\bv,\br},\pi(\mathbb{D}))\big]\geq \frac{K}{8}\cdot\left(\frac{1}{2} - \frac{\sum_{(\bm r, \bm v)\in \mathcal{V},(\bm r', \bm v')\in \mathcal{V}}D_{\mathrm{KL}}(\mathbb{P}_{\bm r, \bm v} \lVert \mathbb{P}_{\bm r', \bm v'}) }{C_1 K\lvert \mathcal V \rvert^2} \right).
        \end{align*}
        It then remains to establish an upper bound for the summation of KL divergence terms.
        To this end, consider that for any pair $(\bm r, \bm v),(\bm r', \bm v') \in \mathcal{V}$, we have that
        \begin{align}
            D_{\mathrm{KL}}(\mathbb{P}_{\bm r, \bm v} \lVert \mathbb{P}_{\bm r', \bm v'}) &= D_{\mathrm{KL}}\left( \bigotimes_{\ell=1}^{4K}\bigotimes_{j = 1}^{n_{\min}} \mathbb{P}_{\bm v}(\cdot|S_{j}^{(\ell)})\middle\| \bigotimes_{\ell=1}^{4K}\bigotimes_{j = 1}^{n_{\min}} \mathbb{P}_{\bm v'}(\cdot|S_{j}^{(\ell)}) \right)\\
            &= n_{\min}\sum_{\ell = 1}^{4K} D_{\mathrm{KL}}\left(\mathbb{P}_{\bm v}(\cdot|S_{j}^{(\ell)})\middle\|\mathbb{P}_{\bm v'}(\cdot|S_{j}^{(\ell)})\right).\label{eq: kl decomposition}
        \end{align}
        We then have the following result to upper bound each single KL divergence term in \eqref{eq: kl decomposition}.

        \begin{lemma}[KL divergence bound 1]\label{lem-KL-new-upper-bound-non-uniform}
            For every $\ell\in[4K]$ and any pair $(\bm r, \bm v),(\bm r', \bm v') \in \mathcal{V}$, it holds that
            \begin{align*}
                D_{\mathrm{KL}}\left(\mathbb{P}_{\bm v}(\cdot|S_{j}^{(\ell)})\middle\|\mathbb{P}_{\bm v'}(\cdot|S_{j}^{(\ell)})\right) \leq 
                \begin{cases}
                    5\epsilon^2,  & \text{if }  \ell \in (S^{\star}_{\bv,\br}(\bm r, \bm v) \cup S^{\star}_{\bv,\br}(\bm r', \bm v'))\setminus(S^{\star}_{\bv,\br}(\bm r, \bm v)\cap S^{\star}_{\bv,\br}(\bm r', \bm v')), \\
                    0,& \text{otherwise.}
                \end{cases} 
            \end{align*}
        \end{lemma}

        \begin{proof}[Proof of Lemma~\ref{lem-KL-new-upper-bound-non-uniform}]
            Please refer to Lemma~5.4 in \cite{han2025learning} for a detailed proof.
        \end{proof}
        Therefore, with \eqref{eq: kl decomposition} and Lemma~\ref{lem-KL-new-upper-bound-non-uniform}, we can finally obtain that 
        \begin{align}
            \inf_{\pi(\cdot):(2^{[N]}\times[N])^n\mapsto2^{[N]}}\sup_{(\bv,\br)\in\cV}\mathbb{E}_{\mathbb{D}\sim \PP_{\bv,\br}(\cdot)}\big[\Delta(S^{\star}_{\bv,\br},\pi(\mathbb{D}))\big]&\geq \frac{K}{8}\cdot\left(\frac{1}{2} - \frac{\max_{(\bm r, \bm v)\in \mathcal{V},(\bm r', \bm v')\in \mathcal{V}}D_{\mathrm{KL}}(\mathbb{P}_{\bm r, \bm v} \lVert \mathbb{P}_{\bm r', \bm v'}) }{C_1 K} \right)\\
            &\geq \frac{K}{8}\cdot\left(\frac{1}{2} - \frac{n_{\min}\cdot 4K\cdot 5\epsilon^2}{C_1K}\right) \\
            &\geq \frac{K}{32} = \frac{1}{128}\cdot \sqrt{\frac{C_1}{5}}\cdot\frac{K}{\epsilon\cdot\sqrt{n_{\min}}},
        \end{align}
        where the second inequality uses Lemma~\ref{lem-KL-new-upper-bound-non-uniform} and \eqref{eq: kl decomposition}, while the last inequality and the last equality both use the choice of $\epsilon$ that $
        \epsilon = \sqrt{C_1/80n_{\min}}$. 
        This completes the proof of Lemma~\ref{lem: minimax rate 1}.
    \end{proof}

    \paragraph{Step 2.3: finishing the proof.}
    With all the preparations above, we can lower bound the minimax lower bound of the expected suboptimality gap by 
    \begin{align}
        &\inf_{\pi(\cdot):(2^{[N]}\times[N])^n\mapsto2^{[N]}}\sup_{(\bv,\br)\in\cV}\mathbb{E}_{\mathbb{D}\sim \otimes_{k=1}^n\mathbb{P}_{\bv}(\cdot|S_k)}\big[\mathrm{SupOpt}_{\rho}(\pi(\mathbb{D});\bv,\br) \big]\\
        &\qquad  \geq \inf_{\pi(\cdot):(2^{[N]}\times[N])^n\mapsto2^{[N]}}\sup_{(\bv,\br)\in\cV}\mathbb{E}_{\mathbb{D}\sim \otimes_{k=1}^n\mathbb{P}_{\bv}(\cdot|S_k)}\big[\mathrm{SupOpt}_{\rho}(\widetilde{S}(\pi(\mathbb{D}));\bv,\br) \big]\\
        &\qquad \geq \inf_{\pi(\cdot):(2^{[N]}\times[N])^n\mapsto2^{[N]}}\sup_{(\bv,\br)\in\cV}\bigg\{\frac{\epsilon}{18}\cdot\mathbb{E}_{\mathbb{D}\sim \otimes_{k=1}^n\mathbb{P}_{\bv}(\cdot|S_k)}\Big[\lambda^{\star}_{\bv,\br}(\widetilde{S}(\pi(\mathbb{D})))\\
        &\qquad\qquad \cdot \Big(1-\exp\big(-r_{\max}/\lambda^{\star}_{\bv,\br}(\widetilde{S}(\pi(\mathbb{D})))\big)\Big)\cdot \Delta(S^{\star}_{\bv,\br},\widetilde{S}(\pi(\mathbb{D}))) \Big]\bigg\}\\
        &\qquad \geq \frac{\epsilon}{18}\cdot \inf_{S\subseteq [4K],|S|=K}\sup_{(\bv,\br)\in\cV}\lambda^{\star}_{\bv,\br}(S)\cdot \big(1-\exp(-r_{\max}/\lambda^{\star}_{\bv,\br}(S))\big)\\
        &\qquad\qquad\cdot  \inf_{\pi(\cdot):(2^{[N]}\times[N])^n\mapsto2^{[N]}}\sup_{(\bv,\br)\in\cV}\mathbb{E}_{\mathbb{D}\sim \otimes_{k=1}^n\mathbb{P}_{\bv}(\cdot|S_k)}\big[\Delta(S^{\star}_{\bv,\br},\pi(\mathbb{D}))\big] \\
        &\qquad \geq \frac{1}{2304}\cdot\sqrt{\frac{C_1}{5}}\cdot \big(1-2^{-\underline{c}_{\rho}}\big)\cdot \frac{r_{\max}}{\log\left(\frac{2}{\frac{1}{2}\cdot c^{\dagger}_{\rho}\wedge \overline{c}_{\rho}\cdot\log2}\right)} \cdot \frac{K}{\sqrt{n_{\min}}}.
    \end{align}
    where the first inequality uses \eqref{eq: simplification jin} to reduce any potential output $\pi(\mathbb{D})$ into an assortment $\widetilde{S}(\pi(\mathbb{D}))\subseteq[4K]$ with size $K$, the second inequality uses inequality \eqref{eq:_proof_lower_bound_1}, 
    the third inequality uses the property of $\widetilde{S}(\pi(\mathbb{D}))$ and the fact that 
    \begin{align}
        &\inf_{\pi(\cdot):(2^{[N]}\times[N])^n\mapsto2^{[N]}}\sup_{(\bv,\br)\in\cV}\mathbb{E}_{\mathbb{D}\sim \otimes_{k=1}^n\mathbb{P}_{\bv}(\cdot|S_k)}\big[\Delta(S^{\star}_{\bv,\br},\widetilde{S}(\pi(\mathbb{D})))\big]\\
        &\qquad \geq \inf_{\pi(\cdot):(2^{[N]}\times[N])^n\mapsto2^{[N]}}\sup_{(\bv,\br)\in\cV}\mathbb{E}_{\mathbb{D}\sim \otimes_{k=1}^n\mathbb{P}_{\bv}(\cdot|S_k)}\big[\Delta(S^{\star}_{\bv,\br},\pi(\mathbb{D}))\big],\label{eq: removing simplification}
    \end{align}
    the last inequality uses the upper and lower bounds of the optimal dual variable $\lambda^{\star}_{\bv,\br}(S)$ for any assortment $S\subseteq[4K]$ with size $|S|=K$ and any instance $(\bv,\br)\in\cV$ (see \eqref{eq: lambda star lower and upper bound} in \textbf{Step 2.1}), together with Lemma~\ref{lem: minimax rate 1}.
    Now using the fact that $n_{\min} = \min_{i\in S^{\star}_{\bv,\br}}n_i$ according to our choice of the dataset $\mathbb{D}$, we can
    complete the proof of Theorem~\ref{thm:_lower_bound_1} for the general non-uniform revenue case.
\end{proof}

\begin{proof}[\textcolor{blue}{Proof of Theorem~\ref{thm:_lower_bound_1}: uniform revenue case}]
    The proofs for the uniform revenue case can follow a similar approach to the general non-uniform revenue case, but it requires a slightly different construction of the hard instance class to ensure the condition of uniform revenue. 
    More specifically, we still rely on the division rule \eqref{eq: division rule}. 
    Then given any division $(\cN^{\star},\cN^{\mathrm{c}},\cN^0)$ satisfying \eqref{eq: division rule}, we define an instance $(\bv,\br)$ by 
    \begin{align}
        \big(\bv_{\cN^{\star}} ,\br_{\cN^{\star}}\big) := \left(\frac{1}{K}+\epsilon',r_{\max}\right),\quad \big(\bv_{\cN^{\mathrm{c}}} ,\br_{
        \cN^{\mathrm{c}}}\big) = \left(\frac{1}{K},r_{\max}\right),\quad \big(\bv_{\cN^0} ,\br_{\cN^0}\big) = \left(\frac{1}{K},r_{\max}\right),\label{eq:_proof_lower_bound_2}
    \end{align}
    where we choose the parameter $\epsilon'$ by the following,
    \begin{align}
        \epsilon':=\sqrt{\frac{C_1}{80K n_{\min}}}.\label{eq: epsilon jin uniform}
    \end{align}
    with the constant $C_1$ specified in Lemma~\ref{lem-packing-number}.
    Now we construct the hard instance class $\cV_{\mathrm{uni}}$ as following, 
    \begin{align}
        \cV_{\mathrm{uni}} := \Big\{\text{$(\bv,\br)$ induced by $(\cN^{\star}, \cN^{\mathrm{c}}, \cN^0)$ via \eqref{eq:_proof_lower_bound_2}}\,\Big|\,\text{$\cN^{\star}\in \cF$ with $\cF$ given by Lemma~\ref{lem-packing-number}}\Big\}.\label{eq: hard instance class uniform}
    \end{align}
    We have the following result for the optimal robust assortment associated with instances in the class $\cV_{\mathrm{uni}}$.

    \begin{lemma}[Optimal robust assortment for class \eqref{eq: hard instance class uniform}]\label{lem: optimal robust assortment uniform}
        For any problem instance $(\bv,\br)\in\cV_{\mathrm{uni}}$  associated with the division $(\cN^{\star}, \cN^{\mathrm{c}}, \cN^0)$, its optimal robust assortment $S^{\star}_{\bv,\br}$ is given by $S^{\star}_{\bv,\br}=\cN^{\star}$.
    \end{lemma}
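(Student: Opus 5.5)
The plan mirrors the proof of Lemma~\ref{lem: optimal robust assortment}, but here every item carries revenue $r_{\max}$. For an instance $(\bv,\br)\in\cV_{\mathrm{uni}}$ with division $(\cN^{\star},\cN^{\mathrm{c}},\cN^0)$, all revenues equal $r_{\max}$. So I would invoke Proposition~\ref{prop:_optimal_set_uniform}: under Example~\ref{exp: jin} with uniform revenues, the optimal robust assortment consists of the $K$ items with the largest attraction parameters.

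To see why this ordering is what matters, write the dual representation (Proposition~\ref{prop:_dual}) for any $S$ with $|S|\leq K$:
\begin{align}
R_{\rho}(S;\bv,\br)=\sup_{\lambda\geq 0}\left\{-\lambda\log\left(e^{-r_{\max}/\lambda}+\frac{1-e^{-r_{\max}/\lambda}}{1+v(S)}\right)-\lambda\rho\right\},\quad v(S):=\sum_{i\in S}v_i .
\end{align}
For each fixed $\lambda>0$ we have $1-e^{-r_{\max}/\lambda}>0$, so the bracket is strictly increasing in $v(S)$. Taking the supremum therefore preserves the ordering in $v(S)$. It follows that $R_\rho(S;\bv,\br)$ is maximized exactly by the assortments that maximize $v(S)$ subject to $|S|\leq K$.

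Next, identify that maximizer. Under \eqref{eq:_proof_lower_bound_2}, every item of $\cN^{\star}$ has attraction $1/K+\epsilon'$, while every item of $\cN^{\mathrm{c}}\cup\cN^0$ has attraction $1/K$. Since $\epsilon'>0$ and $|\cN^{\star}|=K$, we get $v(S)\leq |S\cap\cN^{\star}|(1/K+\epsilon')+(K-|S\cap\cN^{\star}|)/K \leq 1+K\epsilon' = v(\cN^{\star})$. Equality holds only if $|S|=K$ and $S\cap\cN^{\star}=\cN^{\star}$, that is, only if $S=\cN^{\star}$.

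The one point needing care is uniqueness, because Proposition~\ref{prop:_optimal_set_uniform} only states that the top-$K$ items form an optimal assortment. Strict monotonicity in $v(S)$ for each fixed $\lambda$ does not immediately give strict inequality after taking the supremum. I would settle this by evaluating both objectives at the maximizing $\lambda$ of the competitor $S$. That $\lambda$ is positive and finite, by the bound $B=r_{\max}/\rho$ together with concavity of the objective. At this $\lambda$ the objective for $\cN^{\star}$ is strictly larger, and hence $S^{\star}_{\bv,\br}=\cN^{\star}$ uniquely. This matches how the optimal assortment is used later, in the Fano argument.
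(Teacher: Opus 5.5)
Your proposal is correct and follows the same route as the paper. You write the dual form, note that for uniform revenues the objective depends on $S$ only through $v(S)$ and is nondecreasing in it, and observe that $v(S)<v(\cN^{\star})=1+K\epsilon'$ for every feasible $S\neq\cN^{\star}$; the paper only asserts that equality holds at $S=\cN^{\star}$ alone, so your strictness step adds rigor.

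That step needs one small correction, because the competitor's maximizing $\lambda$ need not be positive. The dual objective has right-derivative $\log(1+v(S))-\rho$ at $\lambda=0$. So when $\log(1+v(S))\leq\rho$ (e.g.\ a single item of attraction $1/K$ with $K$ large), concavity makes the objective nonincreasing and gives $R_{\rho}(S;\bv,\br)=0$. This case is harmless: $\log(2+K\epsilon')-\rho\geq\overline{c}_{\rho}\log 2>0$ gives $R_{\rho}(\cN^{\star};\bv,\br)>0$, so treat it separately.
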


    \begin{proof}[Proof of Lemma~\ref{lem: optimal robust assortment uniform}]
        By definition, for any instance $(\bv,\br)$ with partitions given by $(\cN^{\star},\cN^\mathrm{c}, \cN^0)$, the robust expected revenue of any assortment $S$ is given by 
        \begin{align}
            R_{\rho}(S;\bv, \br)&=\sup_{\lambda\geq 0}\left\{-\lambda\cdot\log\left(\frac{1+e^{-\frac{r_{\max}}{\lambda}}\cdot(1/K\cdot (|S\cap \cN^{\mathrm{c}}| + |S\cap \cN^0|)+ (1/K+\epsilon')\cdot |S\cap \cN^{\star}|)}{1+1/K\cdot(|S\cap \cN^{\mathrm{c}}|+|S\cap\cN^0|)+(1/K+\epsilon')\cdot|S\cap \cN^{\star}|)}\right)-\lambda\rho\right\} \\
            &=\sup_{\lambda\geq 0}\left\{-\lambda\cdot\log\left(e^{-\frac{r_{\max}}{\lambda}}+\frac{1-e^{-\frac{r_{\max}}{\lambda}}}{1+1/K\cdot(|S\cap \cN^{\mathrm{c}}|+|S\cap\cN^0|)+(1/K+\epsilon')\cdot|S\cap \cN^{\star}|)}\right)-\lambda\rho\right\}\\
            &\leq \sup_{\lambda\geq 0}\left\{-\lambda\cdot\log\left(e^{-\frac{r_{\max}}{\lambda}}+\frac{1-e^{-\frac{r_{\max}}{\lambda}}}{1+\left(\frac{1}{K}+\epsilon'\right)\cdot|\cN^{\star}|}\right)-\lambda\rho\right\},
        \end{align}
        where the last inequality becomes equality only when the assortment $S = \cN^{\star}$. This proves Lemma~\ref{lem: optimal robust assortment uniform}.
    \end{proof}
    Therefore, Lemma~\ref{lem: optimal robust assortment uniform} tells us that under the new hard instance class $\cV_{\mathrm{uni}}$, the optimal robust assortment of an instance associated with division $(\cN^{\star},\cN^\mathrm{c}, \cN^0)$ is still $S_{\bv,\br}^{\star}=\cN^{\star}$.
    In view of this, the rest of the proofs for the uniform case align well with the proofs for the general non-uniform case, with exceptional difference we discuss in detail in the sequel.  
    
    First of all, we construct the observational dataset $\mathbb{D}$ in the same way as in \eqref{eq: offline dataset proof}. 
    Then we can lower bound the suboptimality gap of any assortment $S\subseteq[N]$ with $|S|\leq K$ as following. 
    Noting that for any such assortment $S$, we can upper bound its robust expected revenue via 
    \begin{align}
        R_{\rho}(S;\bv, \br)
            &=\sup_{\lambda\geq 0}\left\{-\lambda\cdot\log\left(e^{-\frac{r_{\max}}{\lambda}}+\frac{1-e^{-\frac{r_{\max}}{\lambda}}}{1+1/K\cdot(|S\cap \cN^{\mathrm{c}}|+|S\cap\cN^0|)+(1/K+\epsilon')\cdot|S\cap \cN^{\star}|)}\right)-\lambda\rho\right\}\\
            &\leq \sup_{\lambda\geq 0}\left\{-\lambda\cdot\log\left(e^{-\frac{r_{\max}}{\lambda}}+\frac{1-e^{-\frac{r_{\max}}{\lambda}}}{1+1/K\cdot(K - |S\cap \cN^{\star}|)+(1/K+\epsilon')\cdot|S\cap \cN^{\star}|)}\right)-\lambda\rho\right\}\\
            &= R_{\rho}(\widetilde{S}(S);\bv,\br),\quad \text{where} \quad \widetilde{S}(S):= (S\cap \cN^{\star}) \cup \cN^{\mathrm{c}}_{K-|S\cap \cN^{\star}|}\subseteq \cN^{\star}\cup \cN^{\mathrm{c}},\label{eq: simplification jin uniform}
    \end{align}
     where $(\cN^{\star},\cN^{\mathrm{c}},\cN^0)$ is the division associated with the instance $(\bv,\br)$, and $\cN^{\mathrm{c}}_{K-|S\cap \cN^{\star}|}$ is any $(K-|S\cap \cN^{\star}|)$-sized subset of $\cN^{\mathrm{c}}$. 
    That being said, it suffices to consider all assortments $S\subseteq \cN^{\star}\cup \cN^{\mathrm{c}}=[4K]$ with $|S|=K$.
    Now for any assortment of this kind and for any instance $(\bv,\br)\in\cV_{\mathrm{uni}}$, we have that 
    \allowdisplaybreaks
    \begin{align}
        &\mathrm{SubOpt}_{\rho}(S;\bv,\br)=R_{\rho}(S^{\star}_{\bv,\br};\bv,\br) - R_{\rho}(S;\bv,\br) \\
        &\qquad=\sup_{\lambda\geq 0}\left\{-\lambda\cdot \log\left(\frac{1+\exp(-r_{\max}/\lambda)\cdot(1+K\epsilon')}{2+K\epsilon'}\right)-\lambda\rho\right\}\\
        &\qquad\qquad  - \sup_{\lambda\geq 0}\left\{-\lambda\cdot \log\left(\frac{1+\exp(-r_{\max}/\lambda)\cdot(1+|S\cap S^{\star}|\cdot \epsilon')}{2+|S\cap S^{\star}|\cdot \epsilon'}\right)-\lambda\rho\right\}\\
        &\qquad \geq \frac{\epsilon'}{18}\cdot \lambda^{\star}_{\bv,\br}(S)\cdot \big(1-\exp(-r_{\max}/\lambda^{\star}_{\bv,\br}(S))\big)\cdot \Delta(S^{\star}_{\bv,\br},S),\label{eq:_proof_lower_bound_3}
    \end{align}
    where the first inequality is from the dual representation of the robust expected revenue, and the inequality follows the exactly the same argument as \eqref{eq:_proof_lower_bound_1}, since the left hand side of the inequality shares the same form with the left hand side of \eqref{eq:_proof_lower_bound_1}!
    Here the optimal dual $\lambda^{\star}_{\bv,\br}(S)$ is defined in the same way as \eqref{eq: dual proof}, which means that it also enjoys the upper and lower bounds we derived in \eqref{eq: lambda star lower and upper bound}\footnote{We note that the difference in $\epsilon$ and $\epsilon'$ does not change the conclusion of the proofs for \eqref{eq:_proof_lower_bound_1} and  the bounds of $\lambda^{\star}_{\bv,\br}(S)$. Besides $\epsilon'$, the difference of the classes $\cV$ and $\cV_{\mathrm{uni}}$ only lies in the items in $\cN^{0}$, and it does not make a difference in proving the bounds of $\lambda_{\bv,\br}^{\star}(S)$ for assortments $S\subseteq[4K]=[N]\setminus \cN^0$.}.
    Consequently, it remains to derive the minimax lower bound for the difference $\Delta(S^{\star}_{\bv,\br},\pi(\mathbb{D}))$ for the hard instance class $\cV_{\mathrm{uni}}$.

        \begin{lemma}[Minimax lower bound 2 of $\Delta(S^{\star}_{\bv,\br},\pi(\mathbb{D}))$]\label{lem: minimax rate 2}
        Under the settings inside the proof of Theorem~\ref{thm:_lower_bound_1} for the uniform revenue setting, by taking the parameter $\epsilon=\sqrt{C_1/80Kn_{\min}}$ where constant $C_1>0$ is defined in Lemma~\ref{lem-packing-number}, then there is another absolute constant $C_2>0$ such that for $K\geq C_2$ it holds that
        \begin{align}
            \inf_{\pi(\cdot):(2^{[N]}\times[N])^n\mapsto2^{[N]}}\sup_{(\bv,\br)\in\cV_{\mathrm{uni}}}\mathbb{E}_{\mathbb{D}\sim \otimes_{k=1}^n\mathbb{P}_{\bv}(\cdot|S_k)}\big[\Delta(S^{\star}_{\bv,\br},\pi(\mathbb{D}))\big] \geq  \frac{1}{128}\cdot \sqrt{\frac{C_1}{5}}\cdot\frac{\sqrt{K}}{\epsilon'\cdot\sqrt{n_{\min}}},
        \end{align}
        where the expectation is taken w.r.t. the randomness in the item choices $\{i_k\}_{k=1}^n$ in the observational data.
    \end{lemma}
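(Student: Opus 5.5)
We follow the proof of Lemma~\ref{lem: minimax rate 1} almost verbatim: apply Fano's lemma (Lemma~\ref{lem-fano}) to the packing family of Lemma~\ref{lem-packing-number}, with the only change being the KL divergence bound. Take $\Theta:=2^{[N]}$, $\Gamma:=\{S^{\star}_{\bv,\br}:(\bv,\br)\in\cV_{\mathrm{uni}}\}$ and metric $\Delta(\cdot,\cdot)$. By Lemma~\ref{lem: optimal robust assortment uniform}, $S^{\star}_{\bv,\br}=\cN^{\star}\in\cF$, so $\Gamma$ is $2\delta_0$-separated with $\delta_0=K/8$, and $\log|\Gamma|\geq C_1K$. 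For $K\geq C_2$, Fano then gives
\begin{align}
\inf_{\pi}\sup_{(\bv,\br)\in\cV_{\mathrm{uni}}}\mathbb{E}\big[\Delta(S^{\star}_{\bv,\br},\pi(\mathbb{D}))\big]\geq \frac{K}{8}\left(\frac12-\frac{\max_{\bv,\bv'}D_{\mathrm{KL}}(\PP_{\bv}\|\PP_{\bv'})}{C_1K}\right),
\end{align}
where $\PP_{\bv}=\bigotimes_{k}\PP_{\bv}(\cdot|S_k)$. The dataset \eqref{eq: offline dataset proof} makes this KL tensorize as in \eqref{eq: kl decomposition}: it equals $n_{\min}\sum_{\ell=1}^{4K}D_{\mathrm{KL}}(\PP_{\bv}(\cdot|S^{(\ell)})\|\PP_{\bv'}(\cdot|S^{(\ell)}))$.

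The key step, and the only place where the proof differs from the non-uniform case, is a uniform-case analogue of Lemma~\ref{lem-KL-new-upper-bound-non-uniform}. We need to show that each single-assortment KL is at most $c\,\epsilon'^2$ for an absolute $c$ (we aim for $5$) when $\ell$ lies in the symmetric difference of the two optimal sets, and is $0$ otherwise.

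\begin{itemize}
\item \emph{Vanishing case.} If $\ell$ is not in the symmetric difference, the two instances assign the same attraction to every item of $S^{(\ell)}=\{\ell,4K+2,\dots,5K\}$, so the two conditional choice distributions coincide and the KL is $0$.
\item \emph{Non-vanishing case.} Now $S^{(\ell)}$ has one item with attraction $1/K$ or $1/K+\epsilon'$ and $K-1$ items of attraction $1/K$ (here $\cN^0$ carries attraction $1/K$, not $1$). The total attraction is therefore $\asymp 2$, and the choice probabilities are all $\Theta(1/K)$ for items and $\Theta(1)$ for the no-purchase option.
\item \emph{Chi-square bound.} Bound KL by chi-square. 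The perturbed item's probability changes by $\Theta(\epsilon')$ relative to a base of order $1/K$, contributing $\Theta(\epsilon'^2)/\Theta(1/K)=\Theta(K\epsilon'^2)$. The remaining $K$ outcomes change by $O(\epsilon'/K)$ and $O(\epsilon')$ respectively and contribute at most $O(\epsilon'^2)$ in total.
\end{itemize}

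So the honest per-assortment bound is $O(K\epsilon'^2)$, and this is precisely why $\epsilon'$ carries the extra $1/\sqrt{K}$ factor relative to $\epsilon$. Carrying the computation out cleanly, e.g.\ via the MNL KL formula as in Lemma~5.4 of \cite{han2025learning} (whose uniform case matches this construction), we expect a bound of the form $5K\epsilon'^2$.

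Plugging in, the total KL is at most $n_{\min}\cdot 4K\cdot 5K\epsilon'^2$. With $\epsilon'^2=C_1/(80Kn_{\min})$ this equals $C_1K/4$, so the bracket in the Fano bound is at least $1/4$ and the minimax value is at least $K/32$. It remains to rewrite $K/32$ in the stated form: since $\epsilon'\sqrt{n_{\min}}=\sqrt{C_1/(80K)}$, we have
\begin{align}
\frac{1}{128}\sqrt{\frac{C_1}{5}}\cdot\frac{\sqrt{K}}{\epsilon'\sqrt{n_{\min}}}=\frac{K}{32},
\end{align}
which is exactly the claimed bound.

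The main obstacle is getting the constant and the exact $K$-dependence of the KL bound in the uniform instance right. If the bound comes out as $cK\epsilon'^2$ with $c\neq5$, we would adjust the constant $80$ in $\epsilon'$, or accept a different absolute constant in the final bound.
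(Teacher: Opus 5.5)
Your proposal is correct and follows the paper's proof: Fano's lemma over the packing family, tensorization of the KL as in \eqref{eq: kl decomposition}, the per-assortment bound $5K(\epsilon')^2$ on the symmetric difference and $0$ elsewhere, and the arithmetic $n_{\min}\cdot 4K\cdot 5K(\epsilon')^2 = C_1K/4$ giving $K/32$. Your hedge about the constant is unnecessary. The paper imports the KL bound from Appendix~C.2 of \cite{han2025learning} rather than proving it, but your chi-square computation already gives at most $3K(\epsilon')^2/4+3(\epsilon')^2/8$ in either direction, which is below $5K(\epsilon')^2$, so neither the $80$ in $\epsilon'$ nor the final constant needs adjusting.
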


    \begin{proof}[Proof of Lemma~\ref{lem: minimax rate 2}]
        It suffices to repeat the proofs for Lemma~\ref{lem: minimax rate 1}, except by using a different upper bound on KL divergence terms in \eqref{eq: kl decomposition} in the following lemma. 
    \end{proof}
        \begin{lemma}[KL divergence bound 2]\label{lem-KL-new-upper-bound-uniform}
             For every $\ell\in[4K]$ and any pair $(\bm r, \bm v),(\bm r', \bm v') \in \mathcal{V}_{\mathrm{uni}}$, it holds that
            \begin{align*}
                D_{\mathrm{KL}}\left(\mathbb{P}_{\bm v}(\cdot|S_{j}^{(\ell)})\middle\|\mathbb{P}_{\bm v'}(\cdot|S_{j}^{(\ell)})\right) \leq 
                \begin{cases}
                    5K(\epsilon')^2,  & \text{if }  \ell \in (S^{\star}_{\bv,\br}(\bm r, \bm v) \cup S^{\star}_{\bv,\br}(\bm r', \bm v'))\setminus(S^{\star}_{\bv,\br}(\bm r, \bm v)\cap S^{\star}_{\bv,\br}(\bm r', \bm v')), \\
                    0,& \text{otherwise.}
                \end{cases} 
            \end{align*}
        \end{lemma}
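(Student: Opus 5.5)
The plan is a direct computation of the KL divergence between two MNL choice distributions restricted to the assortments $S_j^{(\ell)}=\{\ell,4K+2,\dots,5K\}$. The first step is to read off the data assortments from \eqref{eq: offline dataset proof}. For $\ell\in[4K]$ each assortment $S_j^{(\ell)}$ contains exactly one item $\ell$ from $\cN^{\star}\cup\cN^{\mathrm{c}}$ together with the $K-1$ items $4K+2,\dots,5K$ of $\cN^0$. Under $\cV_{\mathrm{uni}}$ every item of $\cN^0$ has attraction $1/K$, and this does not depend on the instance. Hence the two distributions $\PP_{\bv}(\cdot|S_j^{(\ell)})$ and $\PP_{\bv'}(\cdot|S_j^{(\ell)})$ can differ only through $v_\ell$ versus $v'_\ell$. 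Each of these equals either $1/K$ or $1/K+\epsilon'$, according to whether $\ell$ lies in the corresponding $\cN^{\star}$.

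\textbf{Case $v_\ell=v'_\ell$.} If $\ell$ lies in both optimal sets or in neither, the two distributions coincide and the KL divergence is $0$. This gives the ``otherwise'' branch.

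\textbf{Case $\ell$ in the symmetric difference.} Here $\{v_\ell,v'_\ell\}=\{1/K,\,1/K+\epsilon'\}$. Write $a=1+(K-1)/K$ for the common attraction of the no-purchase option plus the $\cN^0$ items, so that $a\in[1,2]$.

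1. Bound the KL divergence by the $\chi^2$ divergence, $D_{\mathrm{KL}}(P\|Q)\le\sum_i (P_i-Q_i)^2/Q_i$.
2. Let $p=v_\ell/(a+v_\ell)$ and $p'=v'_\ell/(a+v'_\ell)$. Every option other than $\ell$ has its probability scaled by the same factor, so the divergence reduces exactly to the binary divergence $\chi^2(\mathrm{Ber}(p)\|\mathrm{Ber}(p'))=(p-p')^2/(p'(1-p'))$. This collapse holds because the distributions conditioned on $\{\text{not }\ell\}$ are identical, which is the chain rule for KL.
3. Control the three pieces:
\begin{align}
|p-p'|=\frac{a\,\epsilon'}{(a+v_\ell)(a+v'_\ell)}\le \epsilon', \qquad p'\ge \frac{1/K}{a+1/K+\epsilon'}\ge \frac{1}{3K}, \qquad 1-p'\ge \tfrac12 .
\end{align}
The bound on $p'$ uses $\epsilon'\le 1$, which follows from $n_{\min}\ge K^2$ and the choice \eqref{eq: epsilon jin uniform}. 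The bound on $1-p'$ uses $v'_\ell\le 1$ and $a\ge1$.
4. Combining these gives $D_{\mathrm{KL}}\le 6K(\epsilon')^2$.

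\textbf{Main obstacle.} The only real difficulty is the constant: $6K(\epsilon')^2$ falls short of the claimed $5K(\epsilon')^2$. To recover it, sharpen the lower bound on $p'$. Note $a=2-1/K$, so $a+v'_\ell\le 2+\epsilon'$. This gives $p'(1-p')=a v'_\ell/(a+v'_\ell)^2\ge (2-1/K)(1/K)/(2+\epsilon')^2$. Since $|p-p'|\le a\epsilon'/(a+1/K)^2\le \epsilon'/a$, we get
\begin{align}
(p-p')^2/(p'(1-p'))\le \frac{K(\epsilon')^2(2+\epsilon')^2}{a^3}\le 5K(\epsilon')^2,
\end{align}
which holds for $K\ge C_0$ large and $\epsilon'$ small.

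The $K$-fold inflation compared with Lemma~\ref{lem-KL-new-upper-bound-non-uniform} comes exactly from $p'\asymp 1/K$. In the non-uniform class $\cV$, the $\cN^0$ items carry attraction $1$ and revenue $0$, and the relevant probabilities are order one instead.
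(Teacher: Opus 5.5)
Your argument is correct and is the natural direct computation, which the paper itself does not spell out (it simply defers to Appendix C.2 of \cite{han2025learning}). Collapsing to $\mathrm{Ber}(p)$ versus $\mathrm{Ber}(p')$ and applying the $\chi^2$ bound gives exactly $a(\epsilon')^2/\big((a+v_\ell)^2 v'_\ell\big)\le K(\epsilon')^2/2$, so the constant $5$ is met with room to spare.

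One correction, to your closing heuristic only. In the non-uniform class the probability of choosing $\ell$ is of order $1/K^2$, not order one. The $K$-fold gap instead comes from the background mass $a$, which is $\Theta(K)$ in the non-uniform class rather than $\Theta(1)$.
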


        \begin{proof}[Proof of Lemma~\ref{lem-KL-new-upper-bound-uniform}]
            Please refer to Appendix C.2 in \cite{han2025learning} for a detailed proof.
        \end{proof}

        With Lemma~\ref{lem-KL-new-upper-bound-uniform}, we can repeat the proof of Lemma~\ref{lem: minimax rate 1} to obtain that, 
        \begin{align}
            \inf_{\pi(\cdot):(2^{[N]}\times[N])^n\mapsto2^{[N]}}\sup_{(\bv,\br)\in\cV_{\mathrm{uni}}}\mathbb{E}_{\mathbb{D}\sim \PP_{\bv,\br}(\cdot)}\big[\Delta(S^{\star}_{\bv,\br},\pi(\mathbb{D}))\big]&\geq \frac{K}{8}\cdot\left(\frac{1}{2} - \frac{\max_{(\bm r, \bm v)\in \mathcal{V},(\bm r', \bm v')\in \mathcal{V}}D_{\mathrm{KL}}(\mathbb{P}_{\bm r, \bm v} \lVert \mathbb{P}_{\bm r', \bm v'}) }{C_1 K} \right)\\
            &\geq \frac{K}{8}\cdot\left(\frac{1}{2} - \frac{n_{\min}\cdot 4K^2\cdot 5(\epsilon')^2}{C_1K}\right) \\
            &\geq \frac{K}{32} = \frac{1}{128}\cdot \sqrt{\frac{C_1}{5}}\cdot\frac{\sqrt{K}}{\epsilon'\cdot\sqrt{n_{\min}}},
        \end{align}
         where the second inequality uses Lemma~\ref{lem-KL-new-upper-bound-uniform} and \eqref{eq: kl decomposition}, while the last inequality and the last equality both use the choice of $\epsilon'$ that $
        \epsilon' = \sqrt{C_1/80Kn_{\min}}$. 
        This completes the proof of Lemma~\ref{lem: minimax rate 2}.

        Finally, we can lower bound the minimax lower bound of the expected suboptimality gap by 
        \begin{align}
        &\inf_{\pi(\cdot):(2^{[N]}\times[N])^n\mapsto2^{[N]}}\sup_{(\bv,\br)\in\cV_{\mathrm{uni}}}\mathbb{E}_{\mathbb{D}\sim \otimes_{k=1}^n\mathbb{P}_{\bv}(\cdot|S_k)}\big[\mathrm{SupOpt}_{\rho}(\pi(\mathbb{D});\bv,\br) \big]\\
        &\qquad  \geq \inf_{\pi(\cdot):(2^{[N]}\times[N])^n\mapsto2^{[N]}}\sup_{(\bv,\br)\in\cV_{\mathrm{uni}}}\mathbb{E}_{\mathbb{D}\sim \otimes_{k=1}^n\mathbb{P}_{\bv}(\cdot|S_k)}\big[\mathrm{SupOpt}_{\rho}(\widetilde{S}(\pi(\mathbb{D}));\bv,\br) \big]\\
        &\qquad \geq \inf_{\pi(\cdot):(2^{[N]}\times[N])^n\mapsto2^{[N]}}\sup_{(\bv,\br)\in\cV_{\mathrm{uni}}}\bigg\{\frac{\epsilon'}{18}\cdot\lambda^{\star}_{\bv,\br}(\widetilde{S}(\pi(\mathbb{D})))\cdot \Big(1-\exp\big(-r_{\max}/\lambda^{\star}_{\bv,\br}(\widetilde{S}(\pi(\mathbb{D})))\big)\Big)\\
        &\qquad\qquad \cdot \mathbb{E}_{\mathbb{D}\sim \otimes_{k=1}^n\mathbb{P}_{\bv}(\cdot|S_k)}\big[\Delta(S^{\star}_{\bv,\br},\widetilde{S}(\pi(\mathbb{D}))) \big]\bigg\}\\
        &\qquad \geq \frac{\epsilon'}{18}\cdot \inf_{(\bv,\br)\in\cV_{\mathrm{uni}}}\inf_{S\in [4K],|S|=K}\lambda^{\star}_{\bv,\br}(S)\cdot \big(1-\exp(-r_{\max}/\lambda^{\star}_{\bv,\br}(S))\big)\\
        &\qquad\qquad\cdot  \inf_{\pi(\cdot):(2^{[N]}\times[N])^n\mapsto2^{[N]}}\sup_{(\bv,\br)\in\cV_{\mathrm{uni}}}\mathbb{E}_{\mathbb{D}\sim \otimes_{k=1}^n\mathbb{P}_{\bv}(\cdot|S_k)}\big[\Delta(S^{\star}_{\bv,\br},\pi(\mathbb{D}))\big] \\
        &\qquad \geq \frac{1}{2304}\cdot\sqrt{\frac{C_1}{5}}\cdot \big(1-2^{-\underline{c}_{\rho}}\big)\cdot \frac{r_{\max}}{\log\left(\frac{2}{\frac{1}{2}\cdot c^{\dagger}_{\rho}\wedge \overline{c}_{\rho}\cdot\log2}\right)} \cdot \frac{\sqrt{K}}{\sqrt{n_{\min}}}.
    \end{align}
        where the first inequality uses \eqref{eq: simplification jin uniform}, the second inequality uses \eqref{eq:_proof_lower_bound_3}, the third inequality uses $\widetilde{S}(\pi(\mathbb{D}))\subseteq[4K]$ with size $K$ and the fact of \eqref{eq: removing simplification}, and the last inequality uses the upper and lower bounds \eqref{eq: lambda star lower and upper bound} of the optimal dual variable $\lambda^{\star}_{\bv,\br}(S)$ for any assortment $S\subseteq[4K]$ with size $|S|=K$ and any instance $(\bv,\br)\in\cV_{\mathrm{uni}}$, together with Lemma~\ref{lem: minimax rate 2}.
        This completes the proof of Theorem~\ref{thm:_lower_bound_1} for the uniform revenue case.
\end{proof}

\newpage

\section{Proofs for Varying Robust Set Size Case}\label{sec:_proof_main_theorems_new}

\subsection{General Upper Bound for Suboptimality Gap}\label{subsec:_general_upper_bound_new}

\begin{theorem}[Suboptimality of Algorithm~\ref{alg: new}]\label{thm:_algorithm_design_general_2}
Conditioning on the observed assortments $\{S_k\}_{k=1}^n$, suppose that 
    \begin{align}
        n_i:= \sum_{k=1}^n\mathbf{1}\{i \in S_k\}\geq 138\cdot\max\left\{1,\frac{256}{138v_i}\right\}\cdot (1+v_{\max})(1+Kv_{\max})\log(3N/\delta),\quad \forall i\in S^{\star},\label{eq: condition_n_i_main_new_appendix}
    \end{align}
    then with probability at least $1-2\delta$, the robust expected revenue of the assortment $\widehat S$ satisfies
    \begin{align}
        &R_{\rho(\cdot,\cdot)}(S^{\star};\bv) - R_{\rho(\cdot,\cdot)}(\widehat S;\bv)  \\
        &\qquad \leq  \frac{3r_{\max}}{1+v(S^{\star})/2 - (1-e^{-\rho_0})\cdot(1+v_{\mathrm{tot}})}\\
        &\qquad\qquad \cdot \sum_{i\in S^{\star}}\left(16 \sqrt{\frac{v_i(1+v_i)\log(3N/\delta)}{\sum_{k=1}^n\mathbf{1}\{i_k\in S_k\}\cdot (1+v(S_k))^{-1}}} + \frac{88(1+v_i)\log(3N/\delta)}{\sum_{k=1}^n\mathbf{1}\{i_k\in S_k\}\cdot (1+v(S_k))^{-1}}\right).
    \end{align}
\end{theorem}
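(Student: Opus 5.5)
The plan is to mirror the proof of Theorem~\ref{thm:_algorithm_design_1_general}, replacing the constant-radius dual by the function $\widetilde H$ of \eqref{eq: h tilde}. I would first write, for any $S$ and $\bv$,
\begin{align}
\widetilde H(S,\lambda;\bv)=-\lambda\log\left(\frac{\sum_{i\in S_+}v_i e^{-r_i/\lambda}}{V_S(\bv)-a}\right),\quad V_S(\bv):=\sum_{i\in S_+}v_i,\quad a:=(1-e^{-\rho_0})(1+v_{\mathrm{tot}}).
\end{align}
The point of Assumption~\ref{ass: known total attraction} is that $a$ is the same constant for $\bv$ and $\bv^{\mathrm{LCB}}$. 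Then I would chain three inequalities. The first is Lemma~\ref{lem:_monotonicity_2}, which gives $R_{\rho(\cdot,\cdot)}(\widehat S;\bv)\ge\sup_\lambda\widetilde H(\widehat S,\lambda;\bv^{\mathrm{LCB}})$. The second is optimality of $\widehat S$ for \eqref{eq: object new}, which gives that this is at least $\sup_\lambda\widetilde H(S^\star,\lambda;\bv^{\mathrm{LCB}})$. The third is evaluating the two suprema at the maximizer $\lambda^\star$ of $\widetilde H(S^\star,\cdot;\bv)$. Together these reduce the suboptimality to $\widetilde H(S^\star,\lambda^\star;\bv)-\widetilde H(S^\star,\lambda^\star;\bv^{\mathrm{LCB}})$. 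All of this takes place on the high-probability event of Lemma~\ref{lem:_concentration} and the burn-in condition \eqref{eq: condition_n_i_main_new_appendix}, on which $v_i^{\mathrm{LCB}}\le v_i$ and $v_i^{\mathrm{LCB}}\ge v_i/2$ for $i\in S^\star$.

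Next I would linearize the difference using $\log x\le x-1$. Write $e_i:=e^{-r_i/\lambda^\star}$, $D_i:=v_i-v_i^{\mathrm{LCB}}\ge0$ and $V:=V_{S^\star}(\bv)$. A direct expansion, using $V_{S^\star}(\bv^{\mathrm{LCB}})=V-\sum D_i$, gives
\begin{align}
\widetilde H(S^\star,\lambda^\star;\bv)-\widetilde H(S^\star,\lambda^\star;\bv^{\mathrm{LCB}})\le \frac{\lambda^\star\sum_{i\in S^\star}D_i\big[\sum_{j\in S^\star_+}v_je_j-e_iV+e_ia\big]}{\big(\sum_{j\in S^\star_+}v_je_j\big)\,(V^{\mathrm{LCB}}-a)}.
\end{align}
I would bound the two pieces of the bracket separately. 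For the first piece, $\lambda^\star\sum_j v_j(e_j-e_i)\le V\cdot\lambda^\star(1-e^{-r_{\max}/\lambda^\star})\le r_{\max}V$. Dividing by $\sum_jv_je_j\ge e^{-r_{\max}/\lambda^\star}V$ leaves a factor that I would control exactly as Term (i)/(ii) were controlled in Theorem~\ref{thm:_algorithm_design_1_general}. That means splitting at $\lambda^\star\lessgtr c\,r_{\max}$ and using Lemma~\ref{lem:_function_upper_bound}, which should yield an absolute constant times $r_{\max}$.

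The second piece, $\lambda^\star e_ia/\sum_jv_je_j\le\lambda^\star a/V$, is where I expect the main obstacle. This is the term that would blow up as $\rho_0\to0$ if $\lambda^\star$ were only bounded by $B(S^\star,\bv)$. I would handle it by using the dual bound $\lambda^\star\le r_{\max}/\rho(S^\star;\bv)$ together with $\rho(S^\star;\bv)=-\log(1-a/V)\ge a/V$. These give $\lambda^\star a/V\le r_{\max}$, so the term is bounded uniformly in $\rho_0$. If the splitting constant above turns out cruder than hoped, I would instead bound the first piece directly by $2r_{\max}$ in both $\lambda$-regimes, aiming at the total factor $3r_{\max}$.

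Finally, the denominator satisfies $V^{\mathrm{LCB}}-a\ge 1+v(S^\star)/2-a$, because $v_i^{\mathrm{LCB}}\ge v_i/2$ under the burn-in. This produces the stated prefactor $3r_{\max}/(1+v(S^\star)/2-(1-e^{-\rho_0})(1+v_{\mathrm{tot}}))$ multiplying $\sum_{i\in S^\star}(v_i-v_i^{\mathrm{LCB}})$. I would then invoke Lemma~\ref{lem:_concentration_2} to bound each $v_i-v_i^{\mathrm{LCB}}$ by the square-root plus linear term, with probability at least $1-2\delta$, which finishes the proof.
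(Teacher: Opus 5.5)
\textbf{There is a gap in the second piece.} Your architecture matches the paper's: Lemma~\ref{lem:_monotonicity_2} plus optimality of $\widehat S$, linearization via $\log x\le x-1$, the dual bound $\lambda^\star\le r_{\max}/\rho(S^\star;\bv)\le r_{\max}V/a$ to keep the radius term bounded as $\rho_0\to0$, then $v_i^{\mathrm{LCB}}\ge v_i/2$ and Lemma~\ref{lem:_concentration_2}. But the step you single out as the main obstacle does not go through as written.

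With $E:=\sum_{j\in S^\star_+}v_je_j$, the inequality $\lambda^\star e_i a/E\le\lambda^\star a/V$ is equivalent to $e_iV\le E$, i.e.\ $r_i\ge-\lambda^\star\log(E/V)=R^\star+\lambda^\star\rho(S^\star;\bv)$. That is the constant-radius optimality condition \eqref{eq:_proof_monotonicity_1_4}. Under Example~\ref{exp: new}, Lemma~\ref{lem: optimal condition new} only gives $e_i\le E/(V-a)$, i.e.\ $r_i\ge R^\star$. Items with revenue in $[R^\star,R^\star+\lambda^\star\rho(S^\star;\bv))$ do enter $S^\star$; this is how the varying radius rewards larger assortments.

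A concrete instance: take $\bv=(1,1,1)$, $r_1=1$, $r_2=0.2$, $K=2$, $\rho_0=\log 1.2$ (so $a=1/2$). Then $S^\star=\{1,2\}$, $\lambda^\star\approx0.55$, and $e_2\approx0.70>E/V\approx0.62$. Substituting the correct condition $e_i\le E/(V-a)$ only gives $\lambda^\star a/(V-a)\le r_{\max}\,y/\log(1+y)$ with $y=a/(V-a)$. This exceeds $r_{\max}$ and degrades as $a/V\to1$, so the factor $3r_{\max}$ does not follow from your split.

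\textbf{Fix.} Keep the two pieces together. The bracket is $E-e_i(V-a)$, and since $E\le V$,
\[
\frac{\lambda^\star\bigl(E-e_i(V-a)\bigr)}{E}\le\lambda^\star\Bigl(1-e_i\,\frac{V-a}{V}\Bigr)=\lambda^\star(1-e_i)+\frac{\lambda^\star e_i\,a}{V}\le r_{\max}+r_{\max}.
\]
The first term uses $1-e^{-x}\le x$, and the second uses your bound $\lambda^\star a/V\le r_{\max}$. This gives the theorem, even with $2$ in place of $3$, and needs no splitting in $\lambda$.

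Alternatively, follow the paper and linearize the two logarithms separately. The first is $\lambda\log\frac{E^{\mathrm{LCB}}V}{E\,V^{\mathrm{LCB}}}$ with $E^{\mathrm{LCB}}:=\sum_{j\in S^\star_+}v_j^{\mathrm{LCB}}e_j$, handled as in Theorem~\ref{thm:_algorithm_design_1_general}. The second is $\lambda\log\frac{V^{\mathrm{LCB}}(V-a)}{V(V^{\mathrm{LCB}}-a)}\le\frac{\lambda a(V-V^{\mathrm{LCB}})}{V(V^{\mathrm{LCB}}-a)}$. There $1/V$ appears directly, so you never need to compare $e_i$ with $E/V$.
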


\begin{proof}[Proof of Theorem~\ref{thm:_algorithm_design_general_2}]
    See Appendix~\ref{subsec:_proof_algorithm_design_general_2} for a detailed proof.
\end{proof}

\subsection{Proof of Theorem~\ref{thm:_algorithm_design_general_2}}\label{subsec:_proof_algorithm_design_general_2}

\begin{proof}[Proof of Theorem~\ref{thm:_algorithm_design_2}]   
    Consider the following upper bounds of the suboptimality, in terms of the robust expected revenue defined in Example~\ref{exp: new}, of the learned assortment $\widehat{S}$,
    \begin{align}
        R_{\rho(\cdot,\cdot)}(S^{\star};\bv) - R_{\rho(\cdot,\cdot)}(\widehat S;\bv) & =\sup_{\lambda \geq 0} \left\{\widetilde{H}(S^{\star},\lambda;\bv)\right\} - \sup_{\lambda \geq 0} \left\{\widetilde{H}(\widehat{S},\lambda;\bv)\right\}\\
        &\leq \sup_{\lambda \geq 0} \left\{\widetilde{H}(S^{\star},\lambda;\bv)\right\} - \sup_{\lambda \geq 0} \left\{\widetilde{H}(\widehat{S},\lambda;\bv^{\mathrm{LCB}})\right\} \\
        &\leq \sup_{\lambda \geq 0} \left\{\widetilde{H}(S^{\star},\lambda;\bv)\right\} - \sup_{\lambda \geq 0} \left\{\widetilde{H}(S^{\star},\lambda;\bv^{\mathrm{LCB}})\right\},\label{eq:_proof_algorithm_design_2_0}
    \end{align}
    where the first inequality is due to the dual representation \eqref{eq: dual new}, the first inequality is by Lemma~\ref{lem:_monotonicity_2} which shows the monotonicity property associated with $\widetilde{H}$, and the last inequality is from the optimality of $\widehat{S}$ under the nominal parameter $\bv^{\mathrm{LCB}}$.
    Now we further upper bound the suboptimality gap via \eqref{eq:_proof_algorithm_design_2_0} as
    \begin{align}
        &R_{\rho(\cdot,\cdot)}(S^{\star};\bv) - R_{\rho(\cdot,\cdot)}(\widehat S;\bv)  \label{eq:_proof_algorithm_design_2_1} \\
        &\qquad \leq \sup_{0\leq \lambda\leq B(S^{\star};\bv)} \left\{-\lambda\cdot\log\left(\frac{\sum_{i\in S_+^{\star}}v_i\cdot\exp(-r_i/\lambda)}{\sum_{i\in S_+^{\star}}v_i}\right) - \lambda\cdot \rho_0 +\lambda\cdot  \log\left(e^{\rho_0} - \frac{a(\rho_0)}{\sum_{i\in S_+^{\star}}v_i}\right)\right\}\\
        &\qquad \qquad - \sup_{0\leq \lambda\leq B(S^{\star};\bv^{\mathrm{LCB}})} \left\{-\lambda\!\cdot\!\log\!\left(\frac{\sum_{i\in S_+^{\star}}v_i^{\mathrm{LCB}}\cdot\exp(-r_i/\lambda)}{\sum_{i\in S_+^{\star}}v_i^{\mathrm{LCB}}}\right) - \lambda\!\cdot\! \rho_0 +\lambda\!\cdot\!  \log\left(e^{\rho_0} - \frac{a(\rho_0)}{\sum_{i\in S_+^{\star}}v_i^{\mathrm{LCB}}}\right)\right\}\\
        &\qquad \leq \underbrace{\sup_{\lambda\geq 0}\left\{\lambda\cdot\left(\log\left(\frac{\sum_{i\in S^{\star}_+}v_i^{\mathrm{LCB}}\cdot\exp(-r_i/\lambda)}{\sum_{i\in S^{\star}_+}v_i^{\mathrm{LCB}}}\cdot \right) - \log\left(\frac{\sum_{i\in S^{\star}_+}v_i\cdot\exp(-r_i/\lambda)}{\sum_{i\in S^{\star}_+}v_i}\right)\right)\right\} }_{\displaystyle{\text{Term (i)}}}\\
        &\qquad \qquad + \underbrace{\sup_{0\leq \lambda\leq B(S^{\star};\bv^{\mathrm{LCB}})}\left\{\lambda\cdot \left(\log\left(\frac{e^{\rho_0}\cdot \sum_{i\in S^{\star}_{+}}v_i^{\mathrm{LCB}}}{e^{\rho_0}\cdot\sum_{i\in S^{\star}_{+}}v_i^{\mathrm{LCB}} - a(\rho_0)}\right)- \log\left(\frac{e^{\rho_0}\cdot \sum_{i\in S^{\star}_{+}}v_i}{e^{\rho_0}\cdot\sum_{i\in S^{\star}_{+}}v_i - a(\rho_0)}\right)\right)\right\}}_{\displaystyle{\text{Term (ii)}}}.
    \end{align}
    where the first inequality applies \eqref{eq:_proof_algorithm_design_2_0} and the definition of function $\widetilde{H}$ in \eqref{eq: h tilde}, together with the definition that $a(\rho_0) := (e^{\rho_0} - 1) (1+v_{\mathrm{tot}})$ for the seek of simplicity.
    The second inequality uses the inequality that $\sup_x f(x) - \sup_x g(x)\leq \sup_x \{f(x)-g(x)\}$.
    Now we upper bound the two terms in \eqref{eq:_proof_algorithm_design_2_1} respectively.

    For the first term in \eqref{eq:_proof_algorithm_design_2_1}, it actually coincides with the right hand side of \eqref{eq:_proof_algorithm_design_1_0+}. 
    Therefore, by the same argument as proving Theorem~\ref{thm:_algorithm_design_1}, we can obtain that with probability at least $1-2\delta$, it holds that 
    \begin{align}
        \text{Term (i)} \leq \frac{2\cdot r_{\max}}{\log 3} \cdot \frac{\sum_{i\in S_+^{\star}}v_i - v_i^{\mathrm{LCB}}}{\sum_{i\in S^{\star}_+}v_i^{\mathrm{LCB}}}.\label{eq:_proof_algorithm_design_2_2}
    \end{align}

    For the second term in \eqref{eq:_proof_algorithm_design_2_1}, firstly  consider that 
    \allowdisplaybreaks
    \begin{align}
        &\log\left(\frac{e^{\rho_0}\cdot \sum_{i\in S^{\star}_{+}}v_i^{\mathrm{LCB}}}{e^{\rho_0}\cdot\sum_{i\in S^{\star}_{+}}v_i^{\mathrm{LCB}} - a(\rho_0)}\right)- \log\left(\frac{e^{\rho_0}\cdot \sum_{i\in S^{\star}_{+}}v_i}{e^{\rho_0}\cdot\sum_{i\in S^{\star}_{+}}v_i - a(\rho_0)}\right)\\
        &\qquad  = \log\left(1 + \frac{e^{\rho_0}\cdot\sum_{i\in S^{\star}_{+}}v_i - a(\rho_0)}{e^{\rho_0}\cdot \sum_{i\in S^{\star}_{+}}v_i}\cdot\left(\frac{e^{\rho_0}\cdot \sum_{i\in S^{\star}_{+}}v_i^{\mathrm{LCB}}}{e^{\rho_0}\cdot\sum_{i\in S^{\star}_{+}}v_i^{\mathrm{LCB}} - a(\rho_0)} - \frac{e^{\rho_0}\cdot \sum_{i\in S^{\star}_{+}}v_i}{e^{\rho_0}\cdot\sum_{i\in S^{\star}_{+}}v_i - a(\rho_0)}\right)\right)\\
        &\qquad =  \log\left(1 + a(\rho_0)\cdot \frac{e^{\rho_0}\cdot\sum_{i\in S^{\star}_{+}}v_i - a(\rho_0)}{e^{\rho_0}\cdot \sum_{i\in S^{\star}_{+}}v_i}\cdot\left(\frac{1}{e^{\rho_0}\cdot\sum_{i\in S^{\star}_{+}}v_i^{\mathrm{LCB}} - a(\rho_0)} - \frac{1}{e^{\rho_0}\cdot\sum_{i\in S^{\star}_{+}}v_i - a(\rho_0)}\right)\right)\\
        &\qquad =  \log\left(1 +\frac{ a(\rho_0)}{ \left(\sum_{i\in S^{\star}_{+}}v_i\right)\cdot\left(e^{\rho_0}\cdot\sum_{i\in S^{\star}_{+}}v_i^{\mathrm{LCB}} - a(\rho_0)\right)}\cdot\left(\sum_{i\in S^{\star}_{+}}v_i - \sum_{i\in S^{\star}_{+}}v_i^{\mathrm{LCB}}\right)\right)\\
        &\qquad \leq \frac{ a(\rho_0)}{ \left(\sum_{i\in S^{\star}_{+}}v_i\right)\cdot\left(e^{\rho_0}\cdot\sum_{i\in S^{\star}_{+}}v_i^{\mathrm{LCB}} - a(\rho_0)\right)}\cdot\left(\sum_{i\in S^{\star}_{+}}v_i - \sum_{i\in S^{\star}_{+}}v_i^{\mathrm{LCB}}\right),
    \end{align}
    where the last inequality uses that $\log(1+x)\leq x$ for $x> -1$.
    Meanwhile, the dual variable $\lambda$ satisfies that 
    \begin{align}
        \lambda &\leq \frac{r_{\max}}{\log\left(\frac{e^{\rho_0}\cdot \sum_{i\in S^{\star}_{+}}v_i^{\mathrm{LCB}}}{e^{\rho_0}\cdot\sum_{i\in S^{\star}_{+}}v_i^{\mathrm{LCB}} - a(\rho_0)}\right)}= \frac{r_{\max}}{\log\left(1+\frac{a(\rho_0)}{e^{\rho_0}\cdot\sum_{i\in S^{\star}_{+}}v_i^{\mathrm{LCB}} - a(\rho_0)}\right)}\leq r_{\max}\cdot \frac{e^{\rho_0}\cdot\sum_{i\in S^{\star}_{+}}v_i}{a(\rho_0)},
    \end{align}
    where the last inequality uses that $\log(1+x)\geq x/(x+1)$ for $x>-1$ and Lemma~\ref{lem:_concentration}.
    Consequently, we can upper bound the second term in \eqref{eq:_proof_algorithm_design_2_1} as 
    \begin{align}
        \text{Term (ii)} \leq r_{\max}\cdot \frac{\sum_{i\in S^{\star}_{+}}v_i - \sum_{i\in S^{\star}_{+}}v_i^{\mathrm{LCB}}}{\sum_{i\in S^{\star}_{+}}v_i^{\mathrm{LCB}} - e^{-\rho_0}\cdot a(\rho_0)}.\label{eq:_proof_algorithm_design_2_3}
    \end{align}
    Therefore, by combining \eqref{eq:_proof_algorithm_design_2_2} and \eqref{eq:_proof_algorithm_design_2_3}, we can obtain that with probability at least $1-2\delta$, 
    \begin{align}
        R_{\rho(\cdot,\cdot)}(S^{\star};\bv) - R_{\rho(\cdot,\cdot)}(\widehat S;\bv)  \leq 3r_{\max}\cdot \frac{\sum_{i\in S^{\star}_{+}}v_i - \sum_{i\in S^{\star}_{+}}v_i^{\mathrm{LCB}}}{\sum_{i\in S^{\star}_{+}}v_i^{\mathrm{LCB}} - (1-e^{-\rho_0})\cdot(1+v_{\mathrm{tot}}) }.
    \end{align}
    Finally, by using Lemma~\ref{lem:_concentration_2} to control the error of the estimation $\bv^{\mathrm{LCB}}$, we can obtain that 
    \begin{align}
        &R_{\rho(\cdot,\cdot)}(S^{\star};\bv) - R_{\rho(\cdot,\cdot)}(\widehat S;\bv)  \\
        &\qquad \leq  \frac{3r_{\max}}{1+v(S^{\star})/2 - (1-e^{-\rho_0})\cdot(1+v_{\mathrm{tot}})}\\
        &\qquad\qquad \cdot \sum_{i\in S^{\star}}\left(16 \sqrt{\frac{v_i(1+v_i)\log(3N/\delta)}{\sum_{k=1}^n\mathbf{1}\{i_k\in S_k\}\cdot (1+v(S_k))^{-1}}} + \frac{88(1+v_i)\log(3N/\delta)}{\sum_{k=1}^n\mathbf{1}\{i_k\in S_k\}\cdot (1+v(S_k))^{-1}}\right).
    \end{align}
    This completes the proof of Theorem~\ref{thm:_algorithm_design_2}.
\end{proof}

\subsection{Proof of Theorem~\ref{thm:_algorithm_design_2}}\label{subsec:_proof_algorithm_design_2}

With the general results in Theorem~\ref{thm:_algorithm_design_general_2}, we are ready to specify it to Theorem~\ref{thm:_algorithm_design_2}.

\begin{proof}[Proof of Theorem~\ref{thm:_algorithm_design_2}]
We prove the non-uniform revenue case and the uniform revenue case respectively as following. 
It suffices to calculate the upper bounds for   
\begin{align}
    &\frac{1}{1+v(S^{\star})/2 - (1-e^{-\rho_0})\cdot(1+v_{\mathrm{tot}})}\cdot \sum_{i\in S^{\star}} \sqrt{\frac{v_i(1+v_i)\log(3N/\delta)}{\sum_{k=1}^n\mathbf{1}\{i_k\in S_k\}\cdot (1+v(S_k))^{-1}}},\\
    & \frac{1}{1+v(S^{\star})/2 - (1-e^{-\rho_0})\cdot(1+v_{\mathrm{tot}})}\cdot\sum_{i\in S^{\star}} \frac{(1+v_i)\log(3N/\delta)}{\sum_{k=1}^n\mathbf{1}\{i_k\in S_k\}\cdot (1+v(S_k))^{-1}},\label{eq: term to bound 2}
\end{align}
respectively, which results in different upper bounds under different setups.

    \paragraph{Non-uniform revenue case.}
For the first term in Theorem~\eqref{eq: term to bound 2}, we have the following upper bound,
\begin{align*}
&\frac{1}{1+v(S^{\star})/2 - (1-e^{-\rho_0})\cdot(1+v_{\mathrm{tot}})}\cdot \sum_{i\in S^\star} \sqrt{\frac{v_i(1+v_i)\log (3N/\delta) }{\sum_{k = 1}^n \bm{1}\{i\in S_k\}(1+\sum_{j\in S_k}v_j)^{-1}}} \\
& \qquad\leq \frac{1}{1+v(S^{\star})/2 - (1-e^{-\rho_0})\cdot(1+v_{\mathrm{tot}})}\cdot\sqrt{(1+v_{\max})(1+Kv_{\max})\log(3N/\delta)} \cdot \sum_{i\in S^\star} \sqrt{\frac{v_i}{n_i}}\\
&\qquad \leq \sqrt{\frac{2(1+v_{\max})(1+Kv_{\max})\log(3N/\delta)}{1+v(S^{\star})/2 - (1-e^{-\rho_0})\cdot(1+v_{\mathrm{tot}})}}  \cdot\sqrt{\sum_{i\in S^\star}\frac{1}{n_i}}\\
&\qquad \leq \frac{(1+v_{\max})K}{\sqrt{1+v(S^{\star})/2 - (1-e^{-\rho_0})\cdot(1+v_{\mathrm{tot}})}}\cdot\sqrt{\frac{2\log (3N/\delta)}{\min_{i\in S^\star}n_i}},
\end{align*}
where the first inequality uses that $v_i\leq v_{\max}$ and the fact that 
\begin{align}
    1+\sum_{j\in S_k}v_j \leq 1+Kv_{\max},\quad \forall k\in[n],
\end{align}
the second inequality is based on Cauchy-Schwartz inequality that 
\begin{align}
    \sum_{i\in S^{\star}}\sqrt{\frac{v_i}{n_i}}\leq \sqrt{\sum_{i\in S^{\star}}v_i}\cdot\sqrt{\sum_{i\in S^{\star}}\frac{1}{n_i}}\leq \sqrt{2}\cdot\sqrt{1+\sum_{i\in S^{\star}}v_i - (1-e^{-\rho_0})\cdot(1+v_{\mathrm{tot}})}\cdot\sqrt{\sum_{i\in S^{\star}}\frac{1}{n_i}},
\end{align}
and the last inequality uses the fact that $1+Kv_{\max}\leq K(1+v_{\max})$ and the inequality that 
\begin{align}
    \sqrt{\sum_{i\in S^\star}\frac{1}{n_i}} \leq \sqrt{\frac{K}{\min_{i\in S^{\star}}n_i}}.
\end{align}
For the second term in \eqref{eq: term to bound 1}, we similarly have the following  upper bound,
\begin{align*}
&\frac{1}{1+v(S^{\star})/2 - (1-e^{-\rho_0})\cdot(1+v_{\mathrm{tot}})}\cdot\sum_{i\in S^\star}   \frac{(1+v_i)\log (3N/\delta) }{\sum_{k = 1}^n \bm{1}\{i\in S_k\}(1+\sum_{j\in S_k}v_j)^{-1}} \\
&\qquad\leq  \frac{1}{1+v(S^{\star})/2 - (1-e^{-\rho_0})\cdot(1+v_{\mathrm{tot}})}\cdot\frac{(1+v_{\max})^2K^2\log(3N/\delta)}{\min_{i\in S^\star} n_i}.
\end{align*}
This finishes the proof of Theorem~\ref{thm:_algorithm_design_2} for the non-uniform revenue case.

\paragraph{Uniform revenue case.}
In the uniform revenue setting, due to Proposition~\ref{prop:_optimal_set_uniform}, we always have that
\begin{align}
    \sum_{i\in S^\star} v_i \geq \sum_{i\in S} v_i,\quad \text{for any assortment $S$ with $|S|\leq K$}.\label{eq: uniform condition 2}
\end{align}
Consequently, the first term in \eqref{eq: term to bound 2} can be upper bounded as
\allowdisplaybreaks
\begin{align*}
  &\frac{1}{1+v(S^{\star})/2 - (1-e^{-\rho_0})\cdot(1+v_{\mathrm{tot}})}\cdot\sum_{i\in S^\star}  \sqrt{\frac{v_i(1+v_i)\log (3N/\delta) }{\sum_{k = 1}^n \bm{1}\{i\in S_k\}(1+\sum_{j\in S_k}v_j)^{-1}}}\\
  &\qquad \leq   \frac{ \sqrt{1+\sum_{j\in S^{\star}}v_j}\cdot \sqrt{(1+v_{\max})\log(3N/\delta)}}{1+\sum_{j\in S^{\star}}v_j/2 - (1-e^{-\rho_0})\cdot(1+v_{\mathrm{tot}})}\cdot \sum_{i\in S^\star} \sqrt{\frac{v_i }{n_i}} \\
&\qquad \leq \frac{ \sqrt{1+\sum_{j\in S^{\star}}v_j}}{\sqrt{1+\sum_{j\in S^{\star}}v_j/2 - (1-e^{-\rho_0})\cdot(1+v_{\mathrm{tot}})}}\cdot \sqrt{\frac{2(1+v_{\max})K\log(3N/\delta)}{\min_{i\in S^\star} n_i}},
\end{align*}
where the first inequality uses \eqref{eq: uniform condition 2} and $v_i\leq v_{\max}$, the second inequality uses Cauchy-Schwartz inequality that 
\begin{align}
    \sum_{i\in S^{\star}}\sqrt{\frac{v_i}{n_i}}\leq \sqrt{\sum_{i\in S^{\star}}v_i}\cdot\sqrt{\sum_{i\in S^{\star}}\frac{1}{n_i}}\leq \sqrt{2}\cdot\sqrt{1+\sum_{i\in S^{\star}}v_i - (1-e^{-\rho_0})\cdot(1+v_{\mathrm{tot}})}\cdot\sqrt{\sum_{i\in S^{\star}}\frac{1}{n_i}},
\end{align}
together with the fact that
\begin{align}
    \sqrt{\sum_{i\in S^\star}\frac{1}{n_i}} \leq \sqrt{\frac{K}{\min_{i\in S^{\star}}n_i}}.
\end{align}
Similarly, for the second term in \eqref{eq: term to bound 1}, we have the following upper bound that
\begin{align*}
    &\frac{1}{1+\sum_{j\in S^{\star}}v_j/2 - (1-e^{-\rho_0})\cdot(1+v_{\mathrm{tot}})}\cdot \sum_{i\in S^\star} \frac{(1+v_i)\log(3N/\delta) }{\sum_{k = 1}^n \bm{1}\{i\in S_k\}(1+\sum_{j\in S_k}v_j)^{-1}} \\
    &\qquad \leq \frac{1+\sum_{j\in S^{\star}}v_j}{1+\sum_{j\in S^{\star}}v_j/2 - (1-e^{-\rho_0})\cdot(1+v_{\mathrm{tot}})}\cdot\frac{(1+v_{\max})K\log(3N/\delta)}{\min_{i\in S^\star} n_i},
\end{align*}
as desired, finishing the proof of Theorem~\ref{thm:_algorithm_design_2} for the uniform revenue case.
\end{proof}

\subsection{Proof of Theorem~\ref{thm:_lower_bound_new_1}}\label{subsec:_proof_lower_bound_new_1}

In this section, we prove the suboptimality lower bounds for Example~\ref{exp: new}. We first prove the lower bound for
the general non-uniform revenue case, after which we prove the lower bound for the uniform revenue case.

\begin{proof}[\textcolor{blue}{Proof of Theorem~\ref{thm:_lower_bound_new_1}: non-uniform revenue case}]
The overall structure of the proof of Theorem~\ref{thm:_lower_bound_new_1} resembles that of Theorem~\ref{thm:_lower_bound_1}, but involves some key technical differences in the details.

\paragraph{Step 1: construction of hard instance class and offline dataset.} 
    The construction of both the hard instances and the offline data for the non-uniform revenue case coincide with those of Theorem~\ref{thm:_lower_bound_1} for the non-uniform revenue case.
    We refer readers to  Section~\ref{subsec:_proof_lower_bound_1} for the detailed introduction and definitions. 
    For clarity, we let the hard instance class $\cV$ be defined in the same way as in \eqref{eq: hard instance class}, that is, 
        \begin{align}
        \cV := \Big\{\text{$(\bv,\br)$ induced by $(\cN^{\star}, \cN^{\mathrm{c}}, \cN^0)$ via \eqref{eq:_proof_lower_bound_0}}\,\Big|\,\text{$\cN^{\star}\in \cF$ with $\cF$ given by Lemma~\ref{lem-packing-number}}\Big\}.\label{eq: hard instance class repeated}
    \end{align}
    The following shows that the optimal robust assortment induced by a division $(\cN^{\star}, \cN^{\mathrm{c}}, \cN^0)$ is exactly $\cN^{\star}$.

    \begin{lemma}[Optimal robust assortment for class \eqref{eq: hard instance class repeated}]\label{lem: optimal robust assortment new}
        For any problem instance $(\bv,\br)\in\cV$  associated with the division $(\cN^{\star}, \cN^{\mathrm{c}}, \cN^0)$, its optimal robust assortment $S^{\star}_{\bv,\br}$ in the sense of Example~\ref{exp: new} is $S^{\star}_{\bv,\br}=\cN^{\star}$.
    \end{lemma}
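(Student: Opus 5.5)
We work with the simplified dual form used in the proof of Proposition~\ref{prop:_optimal_set_uniform}. For Example~\ref{exp: new}, every $S$ has
\begin{align}
R_{\rho(\cdot;\cdot)}(S;\bv)=\sup_{\lambda\geq 0}\left\{-\lambda\cdot\log\left(\frac{\sum_{i\in S_+}v_i\exp(-r_i/\lambda)}{\sum_{i\in S_+}v_i-(1-e^{-\rho_0})\,V}\right)\right\},\quad V:=\sum_{i\in[N]_+}v_i.
\end{align}
Under the instance \eqref{eq:_proof_lower_bound_0}, $V=1+4+K\epsilon+(N-4K)$ is the same for every instance in $\cV$. We set $a:=(1-e^{-\rho_0})V$ and use that $a<1$ in the admissible range of $\rho_0$. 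For $|S|\leq K$, write $m_0=|S\cap\cN^0|$, $m_c=|S\cap\cN^{\mathrm{c}}|$ and $m_\star=|S\cap\cN^\star|$. Items in $\cN^0$ have revenue $0$ and attraction $1$, so the argument of the logarithm becomes
\begin{align}
\frac{(1+m_0)+e^{-r_{\max}/\lambda}\,W}{(1+m_0)+W-a},\qquad W:=\tfrac{1}{K}m_c+\big(\tfrac1K+\epsilon\big)m_\star .
\end{align}
The plan is to show that this ratio is pointwise in $\lambda$ minimized by $S=\cN^\star$. Once that holds, the supremum over $\lambda$ is maximized there as well.

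\emph{Step 1 (dependence on $W$ with $m_0$ fixed).} The ratio has the form $(A+tW)/(A+W-a)$ with $t=e^{-r_{\max}/\lambda}\in[0,1)$ and $A=1+m_0\geq 1>a$. Its derivative in $W$ has the sign of $t(A-a)-A<0$, since $t<1$ and $a>0$ (for $\rho_0=0$ the quantity is $-(1-t)A<0$ as well). The ratio is therefore strictly decreasing in $W$. For fixed $m_0$, $W$ is maximized by taking $m_c+m_\star=K-m_0$ items, all drawn from $\cN^\star$. This uses $|\cN^\star|=K$ and the fact that items in $\cN^\star$ have larger attraction than those in $\cN^{\mathrm{c}}$.

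\emph{Step 2 (removing no-revenue items).} It remains to show that $m_0=0$ is optimal. Compare $m_0=j$, $W=(K-j)(1/K+\epsilon)$ against $m_0=0$, $W^\star=K(1/K+\epsilon)=1+K\epsilon$. Swapping one $\cN^0$ item for one $\cN^\star$ item changes the numerator by $-1+t(1/K+\epsilon)<0$. It changes the denominator by $-1+(1/K+\epsilon)\leq 0$ when $K\epsilon\leq K-1$, so the denominator does not increase. This alone does not settle the direction, so we compare the ratios directly. Writing $u=1/K+\epsilon\leq 1$, the ratio is $f(j)=\big(1+j+tu(K-j)\big)/\big(1+j+u(K-j)-a\big)$. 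Cross-multiplying $f(0)<f(j)$ reduces to an inequality that is linear in $j$. We check it as follows. The difference of numerators is $j(1-tu)$ and the difference of denominators is $j(1-u)$. Hence $f(j)\geq f(0)$ if and only if $(1-tu)\geq f(0)(1-u)$. Since $f(0)<1$ by Step 1's sign argument ($t<1$, $a<1$), this holds whenever $1-tu\geq 1-u$, which is true because $t\leq 1$. Strictness follows from $f(0)<1$.

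\emph{Step 3 (conclusion).} Steps 1 and 2 show that the ratio is pointwise minimized, uniquely, at $S=\cN^\star$. Taking $-\lambda\log$ and then the supremum over $\lambda$ gives $R_{\rho(\cdot;\cdot)}(S;\bv)\leq R_{\rho(\cdot;\cdot)}(\cN^\star;\bv)$, with equality only when $S=\cN^\star$. The main obstacle is making sure that the constant $a$ in the denominator does not reverse monotonicity. This requires the denominator to stay positive, i.e.\ $1+W-a>0$, and $a<1$, both of which follow from the range of $\rho_0$ in Theorem~\ref{thm:_lower_bound_new_1}. These facts must be verified carefully when the $\cN^0$ items are removed in Step 2.
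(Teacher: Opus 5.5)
Your Step 1 is correct and matches the paper's first reduction: for fixed $m_0$ the ratio is strictly decreasing in $W$, so all revenue slots should go to $\cN^{\star}$. The gap is in Step 2, at the claim that $f(0)<1$ for every $\lambda$.

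Step 1 only gives monotonicity in $W$. At $W=0$ the ratio equals $(1+m_0)/(1+m_0-a)>1$, so decreasing in $W$ does not force it below $1$. In fact $f(0)<1$ holds iff $a<(1-t)(1+K\epsilon)$. This fails as $t=e^{-r_{\max}/\lambda}\to 1$ whenever $a>0$, which is exactly the regime of Theorem~\ref{thm:_lower_bound_new_1}, where $a\geq\underline{c}_{\rho}$.

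Once $f(0)>1$, your own criterion $(1-tu)\geq f(0)(1-u)$ breaks down. As $t\to 1$, $f(j)\to X_j/(X_j-a)$ with $X_j=1+uK+j(1-u)$ increasing in $j$. So the ratio is minimized at $j=K$, i.e.\ by assortments contained in $\cN^{0}$. Numerically, $K=2$, $\epsilon=0.01$, $a=0.5$, $t=0.9$ gives $f(0)\approx 1.262>f(1)\approx 1.223$.

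Hence the pointwise-in-$\lambda$ dominance on which Step 3 rests is false. The constant $a$ does reverse the monotonicity in $m_0$ for large $\lambda$. This is the obstacle you flagged, and positivity of the denominator together with $a<1$ does not rule it out.

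The missing ingredient is a separate treatment of the large-$\lambda$ regime, which is what the paper does. It writes the ratio as a linear-fractional function of $m_0$ whose coefficient changes sign at a threshold $\lambda_0$. For $\lambda\leq\lambda_0$, $m_0=0$ is best. For $\lambda>\lambda_0$, the best choice is $m_0=K$, whose value is at most $R_{\rho(\cdot;\cdot)}(\cN^{0}_K;\bv,\br)=0$. Since the dual objective of $\cN^{\star}$ is positive for $\lambda<\lambda_+$, the supremum is attained in the first regime.

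Your argument can be repaired the same way, in two regimes.
\begin{itemize}
\item At every $\lambda$ with $f(0)\leq 1$, your criterion does give $f(j)\geq f(0)$, strictly for $j>0$ and $t<1$.
\item At every $\lambda$ with $f(0)>1$, i.e.\ $a>(1-t)(1+K\epsilon)$, any feasible $S$ has $W\leq 1+K\epsilon$. So $a>(1-t)W$, the ratio of $S$ exceeds $1$, and its dual objective is negative. That is below $R_{\rho(\cdot;\cdot)}(\cN^{\star};\bv,\br)$, which is positive because $f(0)<1$ for small $\lambda$.
\end{itemize}
To combine them: if $R_{\rho(\cdot;\cdot)}(S;\bv,\br)>0$, its supremum is attained at some $\lambda$ where the ratio of $S$ is below $1$. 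By the contrapositive of the second regime, $f(0)\leq 1$ there, so the first regime applies. Together this gives $R_{\rho(\cdot;\cdot)}(S;\bv,\br)<R_{\rho(\cdot;\cdot)}(\cN^{\star};\bv,\br)$ for every $S\neq\cN^{\star}$.
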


\begin{proof}[Proof of Lemma~\ref{lem: optimal robust assortment new}]
    By definition, for any instance $(\bv,\br)$ with partitions given by $(\cN^{\star},\cN^\mathrm{c}, \cN^0)$, the robust expected revenue of any assortment $S$ with $|S|\leq K$ is given by 
    \begin{align}
            R_{\rho(\cdot;\cdot)}(S;\bv, \br) &=\sup_{\lambda\geq 0}\left\{-\lambda\cdot\log\left(\frac{1+|S\cap \cN^0|+e^{-\frac{r_{\max}}{\lambda}}\cdot(1/K\cdot|S\cap \cN^{\mathrm{c}}| + (1/K+\epsilon)\cdot |S\cap \cN^{\star}|)}{1+|S\cap \cN^{0}| + 1/K\cdot|S\cap \cN^{\mathrm{c}}| + (1/K+\epsilon)\cdot |S\cap \cN^{\star}| - a^{\dagger}(\rho_0)}\right)\right\} \\
            & = \sup_{\lambda\geq 0}\left\{-\lambda\cdot\log\left(e^{-\frac{r_{\max}}{\lambda}} + \frac{(1+|S\cap \cN^0|)\cdot (1-e^{-\frac{r_{\max}}{\lambda}}) + a^{\dagger}(\rho_0)\cdot e^{-\frac{r_{\max}}{\lambda}}}{1+|S\cap \cN^{0}| + 1/K\cdot|S\cap \cN^{\mathrm{c}}| + (1/K+\epsilon)\cdot |S\cap \cN^{\star}| - a^{\dagger}(\rho_0)}\right)\right\}.
        \end{align}
        Obviously, the right hand side of the above equality is optimized (over $S$) when $S\cap \cN^{\mathrm{c}}=\emptyset$ and $|S\cap \cN^{\star}| = K - |S\cap \cN^0|$. 
        Therefore, it suffices to consider how to maximize
        \begin{align}
            \!\!\!\!\phi(|S\cap \cN^0|):=\sup_{\lambda\geq 0}\left\{-\lambda\cdot\log\left(e^{-\frac{r_{\max}}{\lambda}} + \frac{(1+|S\cap \cN^0|)\cdot (1-e^{-\frac{r_{\max}}{\lambda}}) + a^{\dagger}(\rho_0)\cdot e^{-\frac{r_{\max}}{\lambda}}}{1+|S\cap \cN^{0}| +   (1/K+\epsilon)\cdot (K-|S\cap \cN^{0}|) - a^{\dagger}(\rho_0)}\right)\right\},\label{eq: optimal robust assortment new proof 1}
        \end{align}
        over $|S\cap \cN^0|\leq K$. 
        To this end, consider that the term inside the logarithm  in \eqref{eq: optimal robust assortment new proof 1} can be analyzed by
        \begin{align}
            &\frac{(1+|S\cap \cN^0|)\cdot (1-e^{-\frac{r_{\max}}{\lambda}}) + a^{\dagger}(\rho_0)\cdot e^{-\frac{r_{\max}}{\lambda}}}{1+|S\cap \cN^{0}| +   (1/K+\epsilon)\cdot (K-|S\cap \cN^{0}|) - a^{\dagger}(\rho_0)} \\
            &\qquad =\frac{1-e^{-\frac{r_{\max}}{\lambda}}}{1-1/K-\epsilon} + \frac{1-(1-a)\cdot e^{-\frac{r_{\max}}{\lambda}} - (2+K\epsilon-a^{\dagger}(\rho_0))\cdot \frac{1-e^{-\frac{r_{\max}}{\lambda}}}{1-1/K-\epsilon}}{(1-1/K-\epsilon)\cdot |S\cap \cN^0| + 2+K\epsilon-a^{\dagger}(\rho_0)}.\label{eq: optimal robust assortment new proof 2}
        \end{align}
        We find that the sign of the numerator of the second term on the right hand side of \eqref{eq: optimal robust assortment new proof 2} is given by 
        \begin{align}
            1-(1-a)\cdot e^{-\frac{r_{\max}}{\lambda}} - \big(2+K\epsilon-a^{\dagger}(\rho_0)\big)\cdot \frac{1-e^{-\frac{r_{\max}}{\lambda}}}{1-1/K-\epsilon}\leq 0\,\,\, \Leftrightarrow \,\,\, \lambda \leq \lambda_0,\label{eq: sign}
        \end{align}
        where the threshold $\lambda_0$ is given by 
        $ \lambda_0:=r_{\max}/\log(((1+1/K-a^{\dagger}(\rho_0))(1+K\epsilon))/((1+1/K)(1+K\epsilon)-a^{\dagger}(\rho_0)))$.
        Thus we can further calculate the optimal value of \eqref{eq: optimal robust assortment new proof 1} via 
        \begin{align}
            \sup_{|S\cap \cN^0|\leq K}\phi(|S\cap \cN^0|)  &= \sup_{|S\cap \cN^0|\leq K}\max\Bigg\{\sup_{0\leq \lambda\leq \lambda_0}\left\{\psi(|S\cap \cN^0|;\lambda)\right\},  \sup_{\lambda>  \lambda_0}\left\{\psi(|S\cap \cN^0|;\lambda)\right\}\Bigg\}\\
            &= \max\Bigg\{\sup_{|S\cap \cN^0|\leq K}\sup_{0\leq \lambda\leq \lambda_0}\left\{\psi(|S\cap \cN^0|;\lambda)\right\},\sup_{|S\cap \cN^0|\leq K}\sup_{\lambda> \lambda_0}\left\{\psi(|S\cap \cN^0|;\lambda)\right\}\Bigg\},
        \end{align}
        where for notational simplicity we denote
        \begin{align}
            \psi(|S\cap \cN^0|;\lambda):=-\lambda\cdot\log\left(e^{-\frac{r_{\max}}{\lambda}} + \frac{(1+|S\cap \cN^0|)\cdot (1-e^{-\frac{r_{\max}}{\lambda}}) + a^{\dagger}(\rho_0)\cdot e^{-\frac{r_{\max}}{\lambda}}}{1+|S\cap \cN^{0}| +   (1/K+\epsilon)\cdot (K-|S\cap \cN^{0}|) - a^{\dagger}(\rho_0)}\right).
        \end{align}
        Notice that for $\lambda>\lambda_0$, according to \eqref{eq: sign}, we have that 
        \begin{align}
            \sup_{|S\cap \cN^0|\leq K}\sup_{\lambda>\lambda_0}\psi(|S\cap \cN^0|;\lambda) = \sup_{\lambda>\lambda_0}\psi(K;\lambda) \leq  \sup_{\lambda\geq 0}\psi(K;\lambda)=R_{\rho(\cdot;\cdot)}(\cN^0_K;\bv,\br)=0,
        \end{align}
        where the second equality uses the dual representation of the robust expected revenue of an assortment $\cN^0_K$ which denotes any assortment being a subset of $\cN^0$ and of size $K$, and the last equality uses the fact the any assortment containing items solely form $\cN^0$ has robust expected revenue $0$. 
        Meanwhile, for $\lambda\leq \lambda_0$, according to \eqref{eq: sign}, we have the following, 
        \begin{align}
            \sup_{|S\cap \cN^0|\leq K}\sup_{0\leq \lambda\leq \lambda_0}\psi(|S\cap \cN^0|;\lambda)= \sup_{0\leq \lambda\leq \lambda_0}\psi(0;\lambda).
        \end{align}
        Moreover, for $0\leq \lambda <\lambda_+:=r_{\max}/\log((1+K\epsilon)/(1+K\epsilon-a^{\dagger}(\rho_0))$, it holds that $\psi(0;\lambda)>0$. 
        Therefore, we can conclude that 
        \begin{align}
            \sup_{S\subseteq [N],|S|\leq K}R_{\rho(\cdot;\cdot)}(S;\bv,\br)=\sup_{|S\cap \cN^0|\leq K}\phi(|S\cap \cN^0|) = \sup_{0\leq \lambda<\lambda_+\wedge \lambda_0}\psi(0;\lambda),
        \end{align}
        and the optimal value is achieved when the assortment $S$ satisfies $S\cap \cN^0=\emptyset$ and $S\cap \cN^{\mathrm{c}}=\emptyset$, i.e., $S=\cN^{\star}$. 
        This completes the proof of Lemma~\ref{lem: optimal condition new}.
    \end{proof}

    Finally, we take the same construction of the observational data $\mathbb{D}$ as in \eqref{eq: offline dataset proof}, that is, $\mathbb{D}=\{(S_k,i_k)\}_{k=1}^n$.
    and each assortment  $S_k$ is given by
    \begin{align}
        S_k :=\left\{\left\lceil\frac{k}{n_{\min}}\right\rceil,4K+2,\cdots,5K\right\}.\label{eq: offline dataset proof repeated}
    \end{align}

    \paragraph{Step 2: establish the minimax lower bound.}
    For any $(\bv,\br)\in\cV$ and 
    any assortment $S\subseteq[N]$ with $|S|\leq K$, we need to lower bound the suboptimality of $S$ under the instance $(\bv,\br)\in\cV$.
    In the following, we show that it suffices to consider assortments $S\subseteq\cN^{\star}\cup \cN^{\mathrm{c}}=[4K]$ with $|S|=K$.
    This approach has been used when we prove Theorem~\ref{thm:_lower_bound_1}, but here we need more delicate proofs to showcase the correctness of the argument, which we conclude in the following lemma.
    \begin{lemma}\label{lem: aux 1}
        Under the same setups in this section, for any assortment $S\subseteq[N]$ with $|S|\leq K$, and for any instance $(\bv,\br)\in\cV$, it holds that 
        \begin{align}
            R_{\rho(\cdot;
            \cdot)}(S;\bv,\br)\leq R_{\rho(\cdot;\cdot)}(\widetilde{S}(S);\bv,\br),\quad\text{where}\quad \widetilde{S}(S):=(S\cap \cN^{\star})\cup \cN^{\mathrm{c}}_{K-|S\cap \cN^{\star}|}\subseteq\cN^{\star}\cup\cN^{\mathrm{c}}=[4K],
        \end{align}
        where $\cN^{\mathrm{c}}_{K-|S\cap \cN^{\star}|}$ is any $|K-|S\cap \cN^{\star}|$-sized subset of $\cN^{\mathrm{c}}$.
    \end{lemma}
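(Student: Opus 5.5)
The plan is to compare the two assortments through the dual representation and do all the work inside the logarithm, uniformly in $\lambda$. Under Example~\ref{exp: new}, for every $S$,
\begin{align}
R_{\rho(\cdot;\cdot)}(S;\bv,\br)=\sup_{\lambda\geq 0}\left\{-\lambda\log\left(\frac{A(S)+e^{-r_{\max}/\lambda}B(S)}{A(S)+B(S)-a^{\dagger}(\rho_0)}\right)\right\},
\end{align}
with $A(S):=1+|S\cap\cN^0|$, $B(S):=\frac1K|S\cap\cN^{\mathrm c}|+(\frac1K+\epsilon)|S\cap\cN^\star|$, and $a^{\dagger}(\rho_0)=(1-e^{-\rho_0})(1+v_{\mathrm{tot}})$; as in the proof of Lemma~\ref{lem: optimal robust assortment new}, the constraint on $\rho_0$ keeps the denominator positive. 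Since the expression is a supremum over $\lambda$, it suffices to prove the pointwise inequality $F_\lambda(S)\geq F_\lambda(\widetilde S(S))$ for every fixed $\lambda$, where $F_\lambda$ is the ratio inside the logarithm. For $\widetilde S=\widetilde S(S)$ we have $A(\widetilde S)=1$ and $B(\widetilde S)=B(S)+\frac1K m$, where $m:=|S\cap\cN^0|$. This uses $|\widetilde S\cap\cN^{\mathrm c}|=K-|S\cap\cN^\star|\geq |S\cap\cN^{\mathrm c}|+m$, which holds because $|S|\leq K$. If the inequality is strict we pad with further $\cN^{\mathrm c}$ items. Adding $\cN^{\mathrm c}$ items only increases $B$, and $F_\lambda$ is decreasing in $B$: its numerator grows at rate $t:=e^{-r_{\max}/\lambda}\leq 1$ and its denominator at rate $1$, so $\partial_B F_\lambda=\big[(A-a^\dagger)t-A\big]/(\cdot)^2<0$ whenever $a^\dagger\geq 0$.

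The core step is to replace $m$ items of $\cN^0$ (each with $v=1$, $r=0$) by $m$ items of $\cN^{\mathrm c}$ (each with $v=1/K$, $r=r_{\max}$). It suffices to show
\begin{align}
\frac{1+m+tB}{1+m+B-a^\dagger}\geq \frac{1+t(B+m/K)}{1+B+m/K-a^\dagger}.
\end{align}
I will cross-multiply, which is valid because both denominators are positive, and view the difference as a function of $m$. Expanding $(1+m+tB)(1+B+m/K-a^\dagger)-(1+tB+tm/K)(1+m+B-a^\dagger)$, the $m$-free part cancels. What remains is $m\big[(1+B-a^\dagger)-t(1+B-a^\dagger)\big]+\frac mK\big[1+tB-t-tB\big]+m^2(\frac1K-\frac tK)$ plus the $a^\dagger$ cross terms. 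After collecting, I expect the difference to equal $m(1-t)\big[(1+B+m/K)-a^\dagger+\frac1K(1-\ldots)\big]$. In particular it is a multiple of $(1-t)m$ times something positive, and positivity would follow from $1+B-a^\dagger>0$ together with $t\leq1$.

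The main obstacle is the sign bookkeeping in this expansion, especially the cross terms involving $a^\dagger$. In the varying-radius case the denominator subtracts $a^\dagger$, so removing no-purchase-like items from $\cN^0$ also shrinks the denominator. I would first test the algebra at $a^\dagger=0$, where it should reduce to the Example~\ref{exp: jin} simplification \eqref{eq: simplification jin}. Then I would check the boundary case $t=1$, where both sides equal $\frac{1+m+B}{1+m+B-a^\dagger}$ versus $\frac{1+B+m/K}{1+B+m/K-a^\dagger}$. Because $x/(x-a^\dagger)$ is decreasing in $x$, these boundary values are ordered the wrong way, so the inequality can only hold for $t<1$ up to an $a^\dagger$ correction. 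If a direct sign argument fails, the fallback is to restrict attention to the relevant range of $\lambda$. Since $\sup_\lambda$ is attained where the objective is positive, we may restrict to $\lambda<\lambda_+$ (defined as in the proof of Lemma~\ref{lem: optimal robust assortment new}), on which $t$ is bounded away from $1$. On that range, the $(1-t)$ gain dominates the $a^\dagger$ loss, using the upper bound on $\rho_0$ in Theorem~\ref{thm:_lower_bound_new_1} (i.e.\ $a^\dagger\leq 1-\overline c_\rho$). Taking the supremum over $\lambda$ on both sides then gives $R_{\rho(\cdot;\cdot)}(S)\leq R_{\rho(\cdot;\cdot)}(\widetilde S(S))$.
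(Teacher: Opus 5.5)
Your padding step is correct and matches the paper: adding $\cN^{\mathrm{c}}$ items decreases the ratio for every $\lambda$. The swap step, however, has a genuine gap. Carrying out the cross-multiplication you set up gives exactly
\[
(1+m+tB)\Big(1+B+\tfrac{m}{K}-a^{\dagger}\Big)-\Big(1+tB+\tfrac{tm}{K}\Big)(1+m+B-a^{\dagger})=m\Big[(1-t)\Big(B+\tfrac{1+m}{K}\Big)-a^{\dagger}\Big(1-\tfrac{t}{K}\Big)\Big].
\]
This is not a multiple of $(1-t)$. At $t=1$ it equals $-m\,a^{\dagger}(1-1/K)<0$, so the comparison cannot hold for every $\lambda$, as your boundary check suggested.

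Your fallback does not repair this. The threshold $\lambda_+$ in the proof of Lemma~\ref{lem: optimal robust assortment new} is the positivity threshold for $\cN^\star$ only, namely $(1-t)(1+K\epsilon)>a^{\dagger}$. On $\{\lambda<\lambda_+\}$ the bracket above can still be negative whenever $B+(1+m)/K$ is small compared with $1+K\epsilon$. For example, take $B\approx 1/2$, $m=1$, $a^{\dagger}=1/4$, $1-t=0.3$ and $K$ large: this $\lambda$ lies below $\lambda_+$ and the bracket is $\approx -0.1$, even though this $S$ has positive robust revenue. The bound $a^{\dagger}\le 1-\overline{c}_{\rho}$ does not help here.

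The missing idea is that the restriction must be specific to $S$. You only need the comparison at one point, the maximizer $\lambda^\star_S$ of $S$'s own dual objective.
\begin{itemize}
\item If $R_{\rho(\cdot;\cdot)}(S;\bv,\br)=0$ there is nothing to prove, since $R_{\rho(\cdot;\cdot)}(\widetilde S(S);\bv,\br)\ge 0$.
\item Otherwise $\lambda^\star_S>0$, and positivity of the value forces $F_{\lambda^\star_S}(S)\le 1$. This is exactly $(1-t)B\ge a^{\dagger}$ with $t=e^{-r_{\max}/\lambda^\star_S}$.
\item Then $(1-t)(B+(1+m)/K)\ge a^{\dagger}\ge a^{\dagger}(1-t/K)$, so the bracket is nonnegative.
\item Hence $R_{\rho(\cdot;\cdot)}(S;\bv,\br)=-\lambda^\star_S\log F_{\lambda^\star_S}(S)\le-\lambda^\star_S\log F_{\lambda^\star_S}(\widetilde S(S))\le R_{\rho(\cdot;\cdot)}(\widetilde S(S);\bv,\br)$.
\end{itemize}

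This is precisely the paper's argument. After padding to $|S|=K$, it writes the ratio at $\lambda^\star_S$ as $(a+x)/(b+y)$ with $x\ge y\ge 0$. It then uses nonnegativity of the robust revenue to get $a+x\le b+y$, hence $b\ge a$, and concludes $(a+x)/(b+y)\ge a/b$.
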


    \begin{proof}[Proof of Lemma~\ref{lem: aux 1}]
        We first prove that 
        \begin{align}
            R_{\rho(\cdot;
            \cdot)}(S;\bv,\br)\leq R_{\rho(\cdot;\cdot)}(S'(S);\bv,\br),\quad\text{where}\quad S'(S):=(S\cap \cN^{\star})\cup (S\cap \cN^0)\cap\cN^{\mathrm{c}}_{K-|S\cap \cN^{\star}|-|S\cap \cN^0|}.
        \end{align}
        This is because, by the dual representation, 
        \begin{align}
            &R_{\rho(\cdot;\cdot)}(S;\bv, \br) \\&\qquad =\sup_{\lambda\geq 0}\left\{-\lambda\cdot\log\left(e^{-\frac{r_{\max}}{\lambda}} + \frac{(1+|S\cap \cN^0|)\cdot (1-e^{-\frac{r_{\max}}{\lambda}}) + a^{\dagger}(\rho_0)\cdot e^{-\frac{r_{\max}}{\lambda}}}{1+|S\cap \cN^{0}| + 1/K\cdot|S\cap \cN^{\mathrm{c}}| + (1/K+\epsilon)\cdot |S\cap \cN^{\star}| - a^{\dagger}(\rho_0)}\right)\right\},
        \end{align}
        and by the fact that $|S\cap \cN^{\star}|+|S\cap \cN^{\mathrm{c}}|+|S\cap \cN^0|\leq K$, 
        \begin{align}
            &\frac{(1+|S\cap \cN^0|)\cdot (1-e^{-\frac{r_{\max}}{\lambda}}) + a^{\dagger}(\rho_0)\cdot e^{-\frac{r_{\max}}{\lambda}}}{1+|S\cap \cN^{0}| + 1/K\cdot|S\cap \cN^{\mathrm{c}}| + (1/K+\epsilon)\cdot |S\cap \cN^{\star}| - a^{\dagger}(\rho_0)} \\
            &\qquad \geq \frac{(1+|S\cap \cN^0|)\cdot (1-e^{-\frac{r_{\max}}{\lambda}}) + a^{\dagger}(\rho_0)\cdot e^{-\frac{r_{\max}}{\lambda}}}{1+|S\cap \cN^{0}| + 1/K\cdot(K-|S\cap\cN^{\star}| - |S\cap \cN^{0}|) + (1/K+\epsilon)\cdot |S\cap \cN^{\star}| - a^{\dagger}(\rho_0)}.
        \end{align}
        Thus $R_{\rho(\cdot;
            \cdot)}(S;\bv,\br)\leq R_{\rho(\cdot;\cdot)}(S'(S);\bv,\br)$.
        Now we further prove that $R_{\rho(\cdot;
            \cdot)}(S'(S);\bv,\br)\leq R_{\rho(\cdot;\cdot)}(\widetilde{S}(S);\bv,\br)$.
        To simplify the notations, let's now directly assume that $|S|=K$ and prove $R_{\rho(\cdot;
            \cdot)}(S;\bv,\br)\leq R_{\rho(\cdot;\cdot)}(\widetilde{S}(S);\bv,\br)$.
        To this end, by calculation, we have that
        \begin{align}
            R_{\rho(\cdot;
            \cdot)}(S;\bv,\br)=\sup_{\lambda\geq 0}\left\{-\lambda\cdot\log\left(\frac{1+e^{-\frac{r_{\max}}{\lambda}}\cdot(1+\epsilon\cdot|S\cap \cN^{\star}|)+(1-1/K\cdot e^{-\frac{r_{\max}}{\lambda}})\cdot|S\cap \cN^{0}|}{2+\epsilon\cdot|S\cap \cN^{\star}|+(1-1/K)\cdot|S\cap \cN^{0}|-a^{\dagger}(\rho_0)}\right)\right\},
        \end{align}
        where we have applied $|S\cap \cN^{\star}|+|S\cap \cN^{\mathrm{c}}|+|S\cap \cN^0|= K$.
        Meanwhile, 
        \begin{align}
            R_{\rho(\cdot;
            \cdot)}(\widetilde{S}(S);\bv,\br)=\sup_{\lambda\geq 0}\left\{-\lambda\cdot\log\left(\frac{1+e^{-\frac{r_{\max}}{\lambda}}\cdot(1+\epsilon\cdot|S\cap \cN^{\star}|)}{2+\epsilon\cdot|S\cap \cN^{\star}|-a^{\dagger}(\rho_0)}\right)\right\}.
        \end{align}
        To prove that $R_{\rho(\cdot;
            \cdot)}(S;\bv,\br)\leq R_{\rho(\cdot;\cdot)}(\widetilde{S}(S);\bv,\br)$, consider the following,
        \begin{align}
            R_{\rho(\cdot;
            \cdot)}(S;\bv,\br)&=\sup_{\lambda\geq 0}\left\{-\lambda\cdot\log\left(\frac{1+e^{-\frac{r_{\max}}{\lambda}}\cdot(1+\epsilon\cdot|S\cap \cN^{\star}|)+(1-1/K\cdot e^{-\frac{r_{\max}}{\lambda}})\cdot|S\cap \cN^{0}|}{2+\epsilon\cdot|S\cap \cN^{\star}|+(1-1/K)\cdot|S\cap \cN^{0}|-a^{\dagger}(\rho_0)}\right)\right\} \\
            & = -\lambda^{\star}_S\cdot\log\left(\frac{1+e^{-\frac{r_{\max}}{\lambda^{\star}}}\cdot(1+\epsilon\cdot|S\cap \cN^{\star}|)+(1-1/K\cdot e^{-\frac{r_{\max}}{\lambda^{\star}_S}})\cdot|S\cap \cN^{0}|}{2+\epsilon\cdot|S\cap \cN^{\star}|-a^{\dagger}(\rho_0)+(1-1/K)\cdot|S\cap \cN^{0}|}\right) \\
            &\leq -\lambda^{\star}_S\cdot\log\left(\frac{1+e^{-\frac{r_{\max}}{\lambda^{\star}}}\cdot(1+\epsilon\cdot|S\cap \cN^{\star}|)}{2+\epsilon\cdot|S\cap \cN^{\star}|-a^{\dagger}(\rho_0)}\right)\\
            & \leq \sup_{\lambda\geq 0}\left\{-\lambda\cdot\log\left(\frac{1+e^{-\frac{r_{\max}}{\lambda}}\cdot(1+\epsilon\cdot|S\cap \cN^{\star}|)}{2+\epsilon\cdot|S\cap \cN^{\star}|-a^{\dagger}(\rho_0)}\right)\right\}=   R_{\rho(\cdot;\cdot)}(\widetilde{S}(S);\bv,\br),
        \end{align}
        where the first inequality uses the inequality that 
        \begin{align}
            \frac{a+x}{b+y}\geq \frac{a}{b}\quad \mathrm{if}\quad \text{(i)}\,x\geq y\geq 0\,\,\mathrm{and}\,\,\text{(ii)}\,b\geq a\geq 0.
        \end{align}
        To satisfy the conditions of the above inequality, notice that (i) is trivial since 
        \begin{align}
            x=(1-1/K\cdot e^{-\frac{r_{\max}}{\lambda^{\star}_S}})\cdot|S\cap \cN^{0}| \geq (1-1/K)\cdot|S\cap \cN^{0}|=y.
        \end{align}
        To show the condition (ii), notice that by the optimality of $\lambda^{\star}_S$, we know that 
        \begin{align}
            \frac{a+x}{b+y} = \frac{1+e^{-\frac{r_{\max}}{\lambda^{\star}}}\cdot(1+\epsilon\cdot|S\cap \cN^{\star}|)+(1-1/K\cdot e^{-\frac{r_{\max}}{\lambda^{\star}_S}})\cdot|S\cap \cN^{0}|}{2+\epsilon\cdot|S\cap \cN^{\star}|-a^{\dagger}(\rho_0)+(1-1/K)\cdot|S\cap \cN^{0}|} \leq 1,
        \end{align}
        because we always have that $R_{\rho(\cdot;\cdot)}(S;\bv,\br)\geq 0$.
        This in turn gives that 
        \begin{align}
            a = a+x-x \leq b+y-x\leq b,
        \end{align}
        where the second inequality uses $y\leq x$.
        Thus both two conditions are satisfied, completing the proof.
    \end{proof}
    
    Now for any assortment $S\subseteq \cN^{\star}\cup\cN^{\mathrm{c}}=[4K]$ with $|S|= K$ (with abbreviation $S^{\star}_{\bv,\br}$ as $S^{\star}$ if no confusion), 
    \allowdisplaybreaks
    \begin{align}
        &\mathrm{SubOpt}_{\rho(\cdot;\cdot)}(S;\bv,\br)=R_{\rho(\cdot;\cdot)}(S^{\star};\bv,\br) - R_{\rho(\cdot;\cdot)}(S;\bv,\br) \\
        &\qquad=\sup_{\lambda\geq 0}\left\{-\lambda\cdot \log\left(\frac{1+\exp(-r_{\max}/\lambda)\cdot(1+K\epsilon)}{2+K\epsilon - a^{\dagger}(\rho_0)}\right)\right\}\\
        &\qquad\qquad  - \sup_{\lambda\geq 0}\left\{-\lambda\cdot \log\left(\frac{1+\exp(-r_{\max}/\lambda)\cdot(1+|S\cap S^{\star}|\cdot \epsilon)}{2+|S\cap S^{\star}|\cdot \epsilon -  a^{\dagger}(\rho_0)}\right)\right\}\\
        &\qquad \geq -\lambda^{\star}_{\bv,\br}(S)\cdot \log\left(\frac{1+\exp(-r_{\max}/\lambda^{\star}_{\bv,\br}(S))\cdot(1+K\epsilon)}{2+K\epsilon - a^{\dagger}(\rho_0)}\cdot \frac{2+|S\cap S^{\star}|\cdot \epsilon- a^{\dagger}(\rho_0)}{1+\exp(-r_{\max}/\lambda^{\star}_{\bv,\br}(S))\cdot(1+|S\cap S^{\star}|\cdot \epsilon)}\right)\\
        &\qquad\geq -\lambda^{\star}_{\bv,\br}(S)\cdot \left(\frac{1+\exp(-r_{\max}/\lambda^{\star}_{\bv,\br}(S))\cdot(1+K\epsilon)}{2+K\epsilon- a^{\dagger}(\rho_0)}\cdot \frac{2+|S\cap S^{\star}|\cdot \epsilon- a^{\dagger}(\rho_0)}{1+\exp(-r_{\max}/\lambda^{\star}_{\bv,\br}(S))\cdot(1+|S\cap S^{\star}|\cdot \epsilon)} - 1\right)\\
        &\qquad = \lambda^{\star}_{\bv,\br}(S)\cdot \Big(1-(1-a^{\dagger}(\rho_0))\cdot \exp(-r_{\max}/\lambda_{\bv,\br}^{\star}(S))\Big)\\
        &\qquad\qquad\cdot \frac{2+|S\cap S^{\star}|\cdot \epsilon-a^{\dagger}(\rho_0)}{1+\exp(-r_{\max}/\lambda^{\star}_{\bv,\br}(S))\cdot(1+|S\cap S^{\star}|\cdot \epsilon)}\cdot\left(\frac{1}{2+|S\cap S^{\star}|\cdot\epsilon - a^{\dagger}(\rho_0)} - \frac{1}{2+K\epsilon-a^{\dagger}(\rho_0)}\right)\\
        &\qquad  \geq \frac{\epsilon}{4\cdot (2+K\epsilon-a^{\dagger}(\rho_0))^2}\cdot \lambda^{\star}_{\bv,\br}(S)\cdot \Big(1-(1-a^{\dagger}(\rho_0))\cdot \exp(-r_{\max}/\lambda_{\bv,\br}^{\star}(S))\Big)\cdot \Delta_{\bv, \br}(S),\label{eq:_proof_lower_bound_new_2}
    \end{align}
 where the first equality uses the dual representation of the robust expected revenue (Proposition~\ref{prop:_dual}), the first inequality utilizes the notion of $\lambda^{\star}_{\bv,\br}(S)$ defined as 
    \begin{align}
        \lambda^{\star}_{\bv,\br}(S):=\argmax_{\lambda\geq 0}\left\{-\lambda\cdot \log\left(\frac{1+\exp(-r_{\max}/\lambda)\cdot(1+|S\cap S^{\star}|\cdot \epsilon)}{2+|S\cap S^{\star}|\cdot \epsilon - a^{\dagger}(\rho_0)}\right)\right\},\label{eq: dual proof new}
    \end{align}
    the second inequality $\log(1+x)\leq x$ for $x>-1$, and the last inequality uses the fact that 
    \begin{align}
        \frac{1}{2+|S\cap S^{\star}|\cdot\epsilon-a^{\dagger}(\rho_0)} - \frac{1}{2+K\epsilon-a^{\dagger}(\rho_0)}\geq \frac{\epsilon}{2\cdot (2+K\epsilon-a^{\dagger}(\rho_0))^2}\cdot \underbrace{\Delta(S^{\star}_{\bv,\br},S)}_{\displaystyle{:=\Delta_{\bv,\br}(S)}},
    \end{align}
    and the fact that 
    \begin{align}
        \frac{2+|S\cap S^{\star}|\cdot \epsilon-a^{\dagger}(\rho_0)}{1+\exp(-r_{\max}/\lambda^{\star}_{\bv,\br}(S))\cdot(1+|S\cap S^{\star}|\cdot \epsilon)} \geq \frac{1}{2}.
    \end{align}

\paragraph{Step 2.1: upper and lower bound the dual variable.} Now we need to upper and lower bound the dual variable $\lambda^{\star}_{\bv,\br}(S)$ defined in \eqref{eq: dual proof new}. Thanks to Lemma~\ref{lem: aux 1}, we only need to consider $\lambda^{\star}_{\bv,\br}(S)$ for $(\bv,\br)\in\cV$ and $S\subseteq[4K]$ with $|S|=K$.
    On the one hand, by Proposition~\ref{prop:_dual}, it follows directly that 
    \begin{align}
        \lambda^{\star}_{\bv,\br}(S) \leq \frac{r_{\max}}{\rho(\bv;S)} = \frac{r_{\max}}{\log\left(\frac{2+|S\cap S^{\star}|\cdot\epsilon}{2+|S\cap S^{\star}|\cdot\epsilon - a^{\dagger}(\rho_0)}\right)} \leq \frac{r_{\max}}{\log\left(\frac{2+K\cdot\epsilon}{2+K\cdot\epsilon - a^{\dagger}(\rho_0)}\right)}.\label{eq: dual upper bound new}
    \end{align}
    On the other hand, we now lower bound the dual variable as follows. 
    For simplicity, let's define the dual representation function as following, given $(\bv,\br;S)\in \cV\times 2^{[4K]}$ with $|S|=K$,
    \begin{align}
        h_{\bv,\br,S}(\lambda):= -\lambda\cdot \log\left(\frac{1+\exp(-r_{\max}/\lambda)\cdot(1+|S\cap S^{\star}|\cdot \epsilon)}{2+|S\cap S^{\star}|\cdot \epsilon - a^{\dagger}(\rho_0)}\right).
    \end{align}
    Similar to the proofs in Appendix~\ref{subsec:_proof_lower_bound_1} (\textbf{Step 2.1}), to prove $\lambda^{\star}_{\bv,\br}(S)\geq \underline{\lambda}$ for some $\underline{\lambda}>0$, it suffices to prove that $\partial_{\lambda} h_{\bv,\br,S}(\underline{\lambda}) \geq 0$.
    To this end, we first lower bound the derivative of $h_{\bv,\br,S}$ with respect to $\lambda$ as following, 
    \allowdisplaybreaks
    \begin{align}
        \partial_{\lambda} h_{\bv,\br,S}(\lambda)& =- \log\left(\frac{1+\exp(-r_{\max}/\lambda)\cdot(1+|S\cap S^{\star}|\cdot \epsilon) }{2+|S\cap S^{\star}|\cdot \epsilon -  a^{\dagger}(\rho_0)} \right) \\
        &\qquad - \frac{r_{\max}}{\lambda}\cdot\frac{\exp(-r_{\max}/\lambda)\cdot(1+|S\cap S^{\star}|\cdot \epsilon)}{1+\exp(-r_{\max}/\lambda)\cdot(1+|S\cap S^{\star}|\cdot \epsilon)}\\
        &= \log\left(2+|S\cap S^{\star}|\cdot \epsilon-  a^{\dagger}(\rho_0)\right)  - \log\Big(1+\exp(-r_{\max}/\lambda)\cdot(1+|S\cap S^{\star}|\cdot \epsilon)\Big)\\
        &\qquad - \frac{r_{\max}}{\lambda}\cdot\frac{\exp(-r_{\max}/\lambda)\cdot(1+|S\cap S^{\star}|\cdot \epsilon)}{1+\exp(-r_{\max}/\lambda)\cdot(1+|S\cap S^{\star}|\cdot \epsilon)}.\label{eq:_proof_lower_bound_new_3}
    \end{align}
    Now consider the same reparametrization as \eqref{eq: reparametrization jin}, we can further handle the right hand side of \eqref{eq:_proof_lower_bound_new_3} as 
    \begin{align}
        \partial_{\lambda} h_{\bv,\br,S}(\lambda)& = \log\left(1-a^{\dagger}(\rho_0) + x\right)-\log(1+u) + \log\left(\frac{u}{x}\right)\cdot\frac{u}{1+u} \\
        &\geq \log\left(1-a^{\dagger}(\rho_0) + x\right)-\log(1+u) + \log\left(\frac{u}{2}\right)\cdot u.\label{eq:_proof_lower_bound_new_4}
    \end{align}
    Taking $u = (c^{\clubsuit}_{\rho}\wedge 1/2)\cdot \log(1-a^{\dagger}(\rho_0)+x)\leq 2$, where $c^{\clubsuit}_{\rho}$ is given by solving the equation that 
    \begin{align}
        c^{\clubsuit}_{\rho}:=\argmax_{c\geq 0}\left\{c\cdot\log\left(\frac{(c\wedge 1/2)\cdot\log(1+\overline{c}_{\rho})}{2}\right) \geq -\frac{1}{2}\right\},\label{eq: c dagger new}
    \end{align}
    It holds that $c^{\clubsuit}_{\rho}$ is an absolute positive constant which only dependents on the constant $\overline{c}_{\rho}$. 
    We then lower bound the right hand side of \eqref{eq:_proof_lower_bound_new_4} by 
    \begin{align}
        \partial_{\lambda} h_{\bv,\br,S}(\lambda)&\geq \log\left(1-a^{\dagger}(\rho_0) + x\right)-\log\left(1+\frac{1}{2}\cdot \log(1-a^{\dagger}(\rho_0)+x)\right) \\
        &\qquad + c^{\clubsuit}_{\rho}\cdot\log\left(\frac{(c^{\clubsuit}_{\rho}\wedge 1/2)\cdot \log(1-a^{\dagger}(\rho_0)+x)}{2}\right)\cdot \log\left(1-a^{\dagger}(\rho_0)+x\right)\\
        &\geq \log\left(1-a^{\dagger}(\rho_0) + x\right)-\frac{1}{2}\cdot \log\left(1-a^{\dagger}(\rho_0)+x\right)\\
        &\qquad + c^{\clubsuit}_{\rho}\cdot\log\left(\frac{(c^{\clubsuit}_{\rho}\wedge 1/2)\cdot \log(1+\overline{c}_{\rho})}{2}\right)\cdot \log\left(1-a^{\dagger}(\rho_0)+x\right)\\
        &\geq \log\left(1-a^{\dagger}(\rho_0) + x\right)-\frac{1}{2}\cdot \log\left(1-a^{\dagger}(\rho_0)+x\right)-\frac{1}{2}\cdot \log\left(1-a^{\dagger}(\rho_0)+x\right)=0,
    \end{align}
    where the first inequality uses the choice of $u$, the second inequality uses the inequality that $\log(1+x)\geq x$ for $x>-1$, and the fact that $x\geq 1$, and that $1-a^{\dagger}(\rho_0)\geq \overline{c}_{\rho}$ under the condition on $\rho_0$ that
    \begin{align}
        \rho_0 \leq \log\left(\frac{1+v_{\mathrm{tot}}(\bv)}{\overline{c}_{\rho} + v_{\mathrm{tot}} (\bv)}\right)=\log\left(\frac{5+K\epsilon + N-4K}{\overline{c}_{\rho}+ 4+ K\epsilon + N-4K}\right).
    \end{align}
    The last inequality uses property \eqref{eq: c dagger new} that $c^{\clubsuit}_{\rho}$ satisfies.
    Therefore, with the above choice of $u$, the associated $\lambda$ satisfies $\partial h_{\bv,\br,S}(\lambda)\geq 0$. 
    Thus, using the reparametrization formula \eqref{eq: reparametrization jin}, we get back to the desired $\underline{\lambda}$ as  
    \begin{align}
        \underline{\lambda}:= \frac{r_{\max}}{\log\left(\frac{x}{u}\right)}=\frac{r_{\max}}{\log\left(\frac{x}{(c^{\clubsuit}_{\rho}\wedge 1/2)\cdot \log(1-a^{\dagger}(\rho_0)+x)}\right)}.
    \end{align}
    Consequently, we can conclude that the optimal dual variable is lower bounded by 
    \begin{align}
        \lambda^{\star}_{\bv,\br}(S)\geq \underline{\lambda}=\frac{r_{\max}}{\log\left(\frac{x}{(c^{\clubsuit}_{\rho}\wedge 1/2)\cdot \log(1-a^{\dagger}(\rho_0)+x)}\right)} \geq \frac{r_{\max}}{\log\left(\frac{2}{(c^{\clubsuit}_{\rho}\wedge 1/2)\cdot \log(1+\overline{c}_{\rho})}\right)}.\label{eq: lambda lower bound new}
    \end{align}

     \paragraph{Step 2.2: lower bound the minimax rate of the quantity $\Delta(S^{\star}_{\bv,\br},\pi(\mathbb{D}))$.} 
     This coincides with the \textbf{Step 2.2} in the proof for constant robust set size case. 
     See Lemma~\ref{lem: minimax rate 1} in Appendix~\ref{subsec:_proof_lower_bound_1}.

         \paragraph{Step 2.3: finishing the proof.}
    We can lower bound the expected suboptimality gap by 
    \begin{align}
        &\inf_{\pi(\cdot):(2^{[N]}\times[N])^n\mapsto2^{[N]}}\sup_{(\bv,\br)\in\cV}\mathbb{E}_{\mathbb{D}\sim \otimes_{k=1}^n\mathbb{P}_{\bv}(\cdot|S_k)}\big[\mathrm{SupOpt}_{\rho(\cdot;\cdot)}(\pi(\mathbb{D});\bv,\br) \big]\\
        &\qquad \geq \inf_{\pi(\cdot):(2^{[N]}\times[N])^n\mapsto2^{[N]}}\sup_{(\bv,\br)\in\cV}\mathbb{E}_{\mathbb{D}\sim \otimes_{k=1}^n\mathbb{P}_{\bv}(\cdot|S_k)}\big[\mathrm{SupOpt}_{\rho(\cdot;\cdot)}(\widetilde{S}(\pi(\mathbb{D}));\bv,\br) \big]\\
        &\qquad \geq \inf_{\pi(\cdot):(2^{[N]}\times[N])^n\mapsto2^{[N]}}\sup_{(\bv,\br)\in\cV}\bigg\{\frac{\epsilon}{4\cdot (2+K\epsilon-a^{\dagger}(\rho_0))^2}\cdot \mathbb{E}_{\mathbb{D}\sim \otimes_{k=1}^n\mathbb{P}_{\bv}(\cdot|S_k)}\bigg[\lambda^{\star}_{\bv,\br}(\widetilde{S}(\pi(\mathbb{D})))\\
        &\qquad\qquad \cdot \Big(1-(1-a^{\dagger}(\rho_0))\cdot \exp\big(-r_{\max}/\lambda_{\bv,\br}^{\star}(\widetilde{S}(\pi(\mathbb{D})))\big)\Big)\cdot \Delta(S^{\star}_{\bv,\br},\widetilde{S}(\pi(\mathbb{D})))\bigg]\bigg\}\\
        &\qquad \geq \inf_{S\subseteq[4k],|S|=K}\sup_{(\bv,\br)\in\cV}\bigg\{\frac{\epsilon}{4\cdot (2+K\epsilon-a^{\dagger}(\rho_0))^2}\cdot \lambda^{\star}_{\bv,\br}(S)\cdot \Big(1-(1-a^{\dagger}(\rho_0))\cdot \exp\big(-r_{\max}/\lambda_{\bv,\br}^{\star}(S)\big)\Big)\bigg\}\\
        &\qquad\qquad \cdot\inf_{\pi(\cdot):(2^{[N]}\times[N])^n\mapsto2^{[N]}}\sup_{(\bv,\br)\in\cV}\bigg\{ \mathbb{E}_{\mathbb{D}\sim \otimes_{k=1}^n\mathbb{P}_{\bv}(\cdot|S_k)}\big[\Delta(S^{\star}_{\bv,\br},\pi(\mathbb{D}))\big]\bigg\}\\
        &\qquad \geq \inf_{S\subseteq[4k],|S|=K}\sup_{(\bv,\br)\in\cV}\bigg\{\frac{\epsilon}{4\cdot (2+K\epsilon-a^{\dagger}(\rho_0))^2}\cdot \lambda^{\star}_{\bv,\br}(S)\cdot \Big(1-(1-a^{\dagger}(\rho_0))\cdot \exp\big(-r_{\max}/\lambda_{\bv,\br}^{\star}(S)\big)\Big)\bigg\}\\
        &\qquad\qquad \cdot \frac{1}{128}\cdot \sqrt{\frac{C_1}{5}}\cdot\frac{K}{\epsilon}\cdot\sqrt{\frac{1}{\min_{i\in S^{\star}_{\bv,\br}}n_i}},\label{eq: lower bound new conclude 1}
    \end{align}
    where the first inequality uses Lemma~\ref{lem: aux 1} to reduce any potential output $\pi(\mathbb{D})$ into an assortment $S(\pi(\mathbb{D}))\subseteq[4K]$, the second inequality uses \eqref{eq:_proof_lower_bound_new_2}, the third inequality uses that $\widetilde{S}(\cdot)\subseteq [4K]$ and the trivial fact that
    \begin{align}
        &\inf_{\pi(\cdot):(2^{[N]}\times[N])^n\mapsto2^{[N]}}\sup_{(\bv,\br)\in\cV}\bigg\{ \mathbb{E}_{\mathbb{D}\sim \otimes_{k=1}^n\mathbb{P}_{\bv}(\cdot|S_k)}\big[\Delta(S^{\star}_{\bv,\br},\widetilde{S}(\pi(\mathbb{D})))\big]\bigg\} \\
        &\qquad \geq \inf_{\pi(\cdot):(2^{[N]}\times[N])^n\mapsto2^{[N]}}\sup_{(\bv,\br)\in\cV}\bigg\{ \mathbb{E}_{\mathbb{D}\sim \otimes_{k=1}^n\mathbb{P}_{\bv}(\cdot|S_k)}\big[\Delta(S^{\star}_{\bv,\br},\pi(\mathbb{D}))\big]\bigg\},
    \end{align}
    and the last inequality uses Lemma~\ref{lem-KL-new-upper-bound-non-uniform} and the fact that $n_{\min}=\min_{i\in S^{\star}_{\bv,\br}}n_i$.
    To move on, using the upper and lower bounds on the dual variable, i.e., \eqref{eq: dual upper bound new} and \eqref{eq: lambda lower bound new}, we have that for any $S\subseteq[4K]$ with $|S|=K$,
    \begin{align}
        \frac{r_{\max}}{\log\left(\frac{2}{(c^{\clubsuit}_{\rho}\wedge 1/2)\cdot \log(1+\overline{c}_{\rho})}\right)} \leq \lambda^{\star}_{\bv,\br}(S)\leq  \frac{r_{\max}}{\log\left(\frac{2+K\cdot\epsilon}{2+K\cdot\epsilon - a^{\dagger}(\rho_0)}\right)},\label{eq: lower bound new conclude 2}
    \end{align}
    and consequently that
    \begin{align}
        1-(1-a^{\dagger}(\rho_0))\cdot \exp\big(-r_{\max}/\lambda_{\bv,\br}^{\star}(S)\big) &\geq 1 - (1-a^{\dagger}(\rho_0))\cdot\frac{2+K\epsilon - a^{\dagger}(\rho_0)}{2+K\epsilon}\\
        &\geq a^{\dagger}(\rho_0)\cdot\frac{2+K\epsilon-a^{\dagger}(\rho_0)}{2}\\
        &\geq \underline{c}_{\rho}\cdot\frac{2+K\epsilon-a^{\dagger}(\rho_0)}{2},\label{eq: lower bound new conclude 3}
    \end{align}
    where the last inequality follows from that $a^{\dagger}(\rho_0)\geq \underline{c}_{\rho}$ under the condition on the parameter $\rho_0$ that 
    \begin{align}
        \rho_0 \geq \log\left(\frac{1+v_{\mathrm{tot}}(\bv)}{1 + v_{\mathrm{tot}}(\bv) - \underline{c}_{\rho} }\right)=\log\left(\frac{5+K\epsilon + N-4K}{5+ K\epsilon + N-4K-\underline{c}_{\rho}}\right)
    \end{align}
    Therefore, we can conclude that the expected suboptimality is lower bounded by 
    \begin{align}
        &\inf_{\pi(\cdot):(2^{[N]}\times[N])^n\mapsto2^{[N]}}\sup_{(\bv,\br)\in\cV}\mathbb{E}_{\mathbb{D}\sim \otimes_{k=1}^n\mathbb{P}_{\bv}(\cdot|S_k)}\big[\mathrm{SupOpt}_{\rho(\cdot;\cdot)}(\pi(\mathbb{D});\bv,\br) \big]\\
        &\qquad\geq \frac{\underline{c}_{\rho}\cdot\sqrt{C_1}}{1024\sqrt{5}\cdot\log\left(\frac{2}{(c^{\clubsuit}_{\rho}\wedge 1/2)\cdot \log(1+\overline{c}_{\rho})}\right)}\cdot \frac{r_{\max}\cdot K}{2+K\epsilon - a^{\dagger}(\rho_0)}\cdot \sqrt{\frac{1}{\min_{i\in S^{\star}_{\bv,\br}}n_{i}}} \\
        &\qquad\geq \frac{\underline{c}_{\rho}\cdot\sqrt{C_1}}{2048\sqrt{5}\cdot\log\left(\frac{2}{(c^{\clubsuit}_{\rho}\wedge 1/2)\cdot \log(1+\overline{c}_{\rho})}\right)}\cdot \frac{r_{\max}\cdot K}{1+(1+K\epsilon)/2 - a^{\dagger}(\rho_0)}\cdot \sqrt{\frac{1}{\min_{i\in S^{\star}_{\bv,\br}}n_{i}}} \\
        &\qquad\geq \frac{\underline{c}_{\rho}\cdot\sqrt{C_1}}{2048\sqrt{10}\cdot\log\left(\frac{2}{(c^{\clubsuit}_{\rho}\wedge 1/2)\cdot \log(1+\overline{c}_{\rho})}\right)}\cdot \frac{r_{\max}\cdot K}{\sqrt{1+(1+K\epsilon)/2 - a^{\dagger}(\rho_0)}}\cdot \sqrt{\frac{1}{\min_{i\in S^{\star}_{\bv,\br}}n_{i}}} \\
        &\qquad \cong  \frac{ r_{\max}\cdot K}{\sqrt{1+\sum_{j\in S^{\star}_{\bv,\br}}v_j/2-(1-e^{-\rho_0})\cdot (1+v_{\mathrm{tot}}(\bv))}}\cdot \sqrt{\frac{1}{\min_{i\in S^{\star}_{\bv,\br}}n_{i}}},
    \end{align}
    where the first inequality combines \eqref{eq: lower bound new conclude 1}, \eqref{eq: lower bound new conclude 2}, and \eqref{eq: lower bound new conclude 3}, and the last inequality hides global constants and uses the fact that $\sum_{j\in S^{\star}_{\bv,\br}}v_j = 1+K\epsilon$.
    This completes the proof of Theorem~\ref{thm:_lower_bound_new_1} for general non-uniform revenue case.
\end{proof}

\begin{proof}[\textcolor{blue}{Proof of Theorem~\ref{thm:_lower_bound_new_1}: uniform revenue case}]
    The proof to the uniform revenue case resembles that for the general non-uniform revenue case, with the differences we highlight in the sequel.
    Firstly, we construct the hard instance class $\mathcal{V}_{\mathrm{uni}}$ and the offline dataset $\mathbb{D}$ following the same way as in the proof for Theorem~\ref{thm:_lower_bound_1}. We refer to Section~\ref{subsec:_proof_lower_bound_1} for details.
    For the ease of readers, we repeat the definition of $\cV_{\mathrm{uni}}$ here,
    \begin{align}
        \cV_{\mathrm{uni}} := \Big\{\text{$(\bv,\br)$ induced by $(\cN^{\star}, \cN^{\mathrm{c}}, \cN^0)$ via \eqref{eq:_proof_lower_bound_2}}\,\Big|\,\text{$\cN^{\star}\in \cF$ with $\cF$ given by Lemma~\ref{lem-packing-number}}\Big\}.\label{eq: hard instance class uniform repeated}
    \end{align}
    
    We have the following result for the optimal robust assortment associated with instances in the class $\cV_{\mathrm{uni}}$.
    \begin{lemma}[Optimal robust assortment for class \eqref{eq: hard instance class uniform repeated}]\label{lem: optimal robust new uniform}
    For any problem instance $(\bv,\br)\in\cV_{\mathrm{uni}}$ associated
with the division $(\cN^{\star},\cN^{\mathrm{c}},\cN^0)$, its optimal robust assortment $S^{\star}_{\bv,\br}$ in the sense of Example~\ref{exp: new} is $S^{\star}_{\bv,\br}=\cN^{\star}$.
        \end{lemma}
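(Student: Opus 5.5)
The plan is to reduce this to Proposition~\ref{prop:_optimal_set_uniform}, since every instance in $\cV_{\mathrm{uni}}$ has uniform revenues. By \eqref{eq:_proof_lower_bound_2}, every item has revenue $r_{\max}$. The items in $\cN^{\star}$ have attraction $1/K+\epsilon'$, and all other items (both $\cN^{\mathrm{c}}$ and $\cN^0$) have attraction $1/K$. Since $|\cN^{\star}|=K$, the $K$ items with the largest nominal attraction parameters are exactly $\cN^{\star}$, and this choice is unique because $\epsilon'>0$. Proposition~\ref{prop:_optimal_set_uniform} then gives $S^{\star}_{\bv,\br}=\cN^{\star}$. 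I would not rely on that citation alone, however. The argument in the proof of Proposition~\ref{prop:_optimal_set_uniform} for Example~\ref{exp: new} needs a positivity condition, and uniqueness is needed later in the Fano argument. So I would redo the short computation explicitly.

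Concretely, write $a^{\dagger}(\rho_0)=(1-e^{-\rho_0})(1+v_{\mathrm{tot}})$, where $v_{\mathrm{tot}}=1+K\epsilon'+(N-K)/K$ is the same for every instance. By the dual representation (as in the proof of Proposition~\ref{prop:_optimal_set_uniform}), for any $|S|\le K$,
\begin{align}
R_{\rho(\cdot;\cdot)}(S;\bv,\br)=\sup_{\lambda\ge 0}\left\{-\lambda\log\left(e^{-r_{\max}/\lambda}+\frac{1-e^{-r_{\max}/\lambda}\,(1-a^{\dagger}(\rho_0))}{1+v(S)-a^{\dagger}(\rho_0)}\right)\right\}.
\end{align}
The upper constraint on $\rho_0$ in Theorem~\ref{thm:_lower_bound_new_1} gives $1-a^{\dagger}(\rho_0)\ge \overline{c}_{\rho}>0$. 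It follows that $1+v(S)-a^{\dagger}(\rho_0)>0$ for every $S$, and that the numerator $1-e^{-r_{\max}/\lambda}(1-a^{\dagger}(\rho_0))$ is strictly positive for every $\lambda>0$. Hence, for each fixed $\lambda$, the objective is strictly increasing in $v(S)=\sum_{i\in S}v_i$. Taking the supremum over $\lambda$ preserves weak monotonicity, so $R$ is maximized by maximizing $v(S)$ over $|S|\le K$. That maximum is $1+K\epsilon'$, attained only at $S=\cN^{\star}$.

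The main obstacle is the strictness needed for uniqueness: a pointwise strict increase in $v(S)$ does not automatically give a strict inequality after the supremum. I would handle this the same way as \eqref{eq:_proof_lower_bound_new_2}, by evaluating the competitor's objective at its own optimal dual $\lambda^{\star}_S$. That $\lambda^{\star}_S$ is finite and positive by Proposition~\ref{prop:_dual} and the lower bound \eqref{eq: lambda lower bound new}. At that $\lambda^{\star}_S$, the $\cN^{\star}$ objective is strictly larger whenever $v(S)<v(\cN^{\star})$. Therefore $R(\cN^{\star})>R(S)$ for all $S\neq\cN^{\star}$.
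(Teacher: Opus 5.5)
Your proposal is correct and is essentially the paper's argument: the dual representation, rewriting the log argument as $e^{-r_{\max}/\lambda}$ plus a positive term with denominator $1+v(S)-a^{\dagger}(\rho_0)$, then maximizing $v(S)$; your handling of strictness after the supremum is more careful than the paper's bare claim that equality holds only at $S=\cN^{\star}$. One small slip is that \eqref{eq: lambda lower bound new} was derived only for $|S|=K$ subsets of $[4K]$ and fails e.g.\ for $S=\emptyset$ (where $\lambda^{\star}_{\emptyset}=0$), so argue positivity instead from the dual objective vanishing at $\lambda=0$: if $R_{\rho(\cdot;\cdot)}(S;\bv,\br)>0$ every maximizer is positive, and otherwise $R_{\rho(\cdot;\cdot)}(S;\bv,\br)=0<R_{\rho(\cdot;\cdot)}(\cN^{\star};\bv,\br)$ directly.
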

        \begin{proof}[Proof of Lemma~\ref{lem: optimal robust new uniform}]
         By definition, for any instance $(\bv,\br)\in\cV_{\mathrm{uni}}$ with partitions given by $(\cN^{\star},\cN^\mathrm{c}, \cN^0)$, the robust expected revenue of any assortment $S$ is given by 
    \begin{align}
            &R_{\rho(\cdot;\cdot)}(S;\bv, \br)\\
            &\qquad =\sup_{\lambda\geq 0}\left\{-\lambda\cdot\log\left(\frac{1+e^{-\frac{r_{\max}}{\lambda}}\cdot(1/K\cdot|S\cap \cN^0|+1/K\cdot|S\cap \cN^{\mathrm{c}}| + (1/K+\epsilon')\cdot |S\cap \cN^{\star}|)}{1+1/K\cdot|S\cap \cN^{0}| + 1/K\cdot|S\cap \cN^{\mathrm{c}}| + (1/K+\epsilon')\cdot |S\cap \cN^{\star}| - a^{\dagger}(\rho_0)}\right)\right\} \\
            &\qquad =\sup_{\lambda\geq 0}\left\{-\lambda\cdot\log\left(e^{-\frac{r_{\max}}{\lambda}}+\frac{1-e^{-\frac{r_{\max}}{\lambda}} + a^{\dagger}(\rho_0)\cdot e^{-\frac{r_{\max}}{\lambda}}}{1+1/K\cdot |S\cap\cN^0|+1/K\cdot|S\cap \cN^{\mathrm{c}}|+(1/K+\epsilon')\cdot|S\cap \cN^{\star}| - a^{\dagger}(\rho_0)}\right)\right\} \\
            &\qquad \leq \sup_{\lambda\geq 0}\left\{-\lambda\cdot\log\left(e^{-\frac{r_{\max}}{\lambda}}+\frac{1-e^{-\frac{r_{\max}}{\lambda}} + a^{\dagger}(\rho_0)\cdot e^{-\frac{r_{\max}}{\lambda}}}{1+(1/K+\epsilon')\cdot|\cN^{\star}| - a^{\dagger}(\rho_0)}\right)\right\}, 
        \end{align}
        and the inequality becomes equality only when $S=\cN^{\star}$, completing the proof of Lemma~\ref{lem: optimal robust assortment uniform}.
    \end{proof}
    The rest of the proof for the uniform revenue case aligns well with the proofs for the general non-uniform case, with exceptional difference we discuss in detail in the sequel. 

    With the same spirit of Lemma~\ref{lem: aux 1}, we first demonstrate that it suffices to consider assortments $S\subseteq[4K]$ satisfying $|S|=K$.
     This is relatively easy to prove for the uniform revenue instances. 
     To see this, consider that for any assortment $S\subseteq[N]$ with $|S|\leq K$, we have 
     \begin{align}
         &R_{\rho(\cdot;\cdot)}(S;\bv, \br)\\
            &\qquad =\sup_{\lambda\geq 0}\left\{-\lambda\cdot\log\left(e^{-\frac{r_{\max}}{\lambda}}+\frac{1-e^{-\frac{r_{\max}}{\lambda}} + a^{\dagger}(\rho_0)\cdot e^{-\frac{r_{\max}}{\lambda}}}{1+1/K\cdot |S\cap\cN^0|+1/K\cdot|S\cap \cN^{\mathrm{c}}|+(1/K+\epsilon')\cdot|S\cap \cN^{\star}| - a^{\dagger}(\rho_0)}\right)\right\} \\
            &\qquad \leq \sup_{\lambda\geq 0}\left\{-\lambda\cdot\log\left(e^{-\frac{r_{\max}}{\lambda}}+\frac{1-e^{-\frac{r_{\max}}{\lambda}} + a^{\dagger}(\rho_0)\cdot e^{-\frac{r_{\max}}{\lambda}}}{1+1/K\cdot(K-|S\cap \cN^{\star}|)+(1/K+\epsilon')\cdot|S\cap \cN^{\star}| - a^{\dagger}(\rho_0)}\right)\right\}\\
            &\qquad = R_{\rho(\cdot;\cdot)}(\widetilde{S};\bv, \br)\quad\text{where}\quad \widetilde{S}(S):=(S\cap \cN^{\star})\cup \cN^{\mathrm{c}}_{K-|S\cap \cN^{\star}|}\subseteq\cN^{\star}\cup\cN^{\mathrm{c}}=[4K].\label{eq:_proof_lower_bound_new_uniform_1}
     \end{align}
     Therefore, now we only need to lower bound the suboptimality of any assortment $S\subseteq[4K]$ with $|S|=K$.
     To this end, we have the following,
     \begin{align}
         &\mathrm{SubOpt}_{\rho(\cdot;\cdot)}(S;\bv,\br)=R_{\rho(\cdot;\cdot)}(S^{\star};\bv,\br) - R_{\rho(\cdot;\cdot)}(S;\bv,\br) \\
        &\qquad=\sup_{\lambda\geq 0}\left\{-\lambda\cdot \log\left(\frac{1+\exp(-r_{\max}/\lambda)\cdot(1+K\epsilon')}{2+K\epsilon - a^{\dagger}(\rho_0)}\right)\right\}\\
        &\qquad\qquad  - \sup_{\lambda\geq 0}\left\{-\lambda\cdot \log\left(\frac{1+\exp(-r_{\max}/\lambda)\cdot(1+|S\cap S^{\star}|\cdot \epsilon')}{2+|S\cap S^{\star}|\cdot \epsilon' -  a^{\dagger}(\rho_0)}\right)\right\}\\
        &\qquad  \geq \frac{\epsilon'}{4\cdot (2+K\epsilon'-a^{\dagger}(\rho_0))^2}\cdot \lambda^{\star}_{\bv,\br}(S)\cdot \Big(1-(1-a^{\dagger}(\rho_0))\cdot \exp(-r_{\max}/\lambda_{\bv,\br}^{\star}(S))\Big)\cdot \Delta_{\bv, \br}(S),\label{eq:_proof_lower_bound_new_uniform_2}
     \end{align}
    where the first inequality is by the dual representation of the robust expected revenue, the inequality follows the exactly the same argument as \eqref{eq:_proof_lower_bound_new_2}, since the left hand side of the inequality shares the same form with the left hand side of \eqref{eq:_proof_lower_bound_new_uniform_2}.
    Here the optimal dual $\lambda^{\star}_{\bv,\br}(S)$ is defined in the same way as \eqref{eq: dual proof new}, which means that it also enjoys a similar upper and lower bounds we derived in \eqref{eq: dual upper bound new} and \eqref{eq: lambda lower bound new}\footnote{Again, we remark that the difference in $\epsilon$ and $\epsilon'$ does not change the conclusion of the proofs for \eqref{eq:_proof_lower_bound_new_2} and  the bounds of $\lambda^{\star}_{\bv,\br}(S)$. Besides $\epsilon'$, the difference of the classes $\cV$ and $\cV_{\mathrm{uni}}$ only lies in the items in $\cN^{0}$, and it does not make a difference in proving the bounds of $\lambda_{\bv,\br}(S)$ for assortments $S\subseteq[4K]=[N]\setminus \cN^0$.}.
    Specifically, by the conditions on $\rho_0$ for the uniform revenue instances that 
    \begin{align}
       &\log\left(\frac{1+K\epsilon' + N/K}{1+ K\epsilon' + N/K-\underline{c}_{\rho}}\right) =\log\left(\frac{1+v_{\mathrm{tot}}(\bv)}{1 + v_{\mathrm{tot}}(\bv) - \underline{c}_{\rho} }\right)\\
       &\qquad \leq \rho_0 \leq \log\left(\frac{1+v_{\mathrm{tot}}(\bv)}{\overline{c}_{\rho} + v_{\mathrm{tot}} (\bv)}\right)=\log\left(\frac{1+K\epsilon' + N/K}{\overline{c}_{\rho}+ K\epsilon' + N/K}\right),
    \end{align}
    we have the that
    \begin{align}
        \frac{r_{\max}}{\log\left(\frac{2}{(c^{\clubsuit}_{\rho}\wedge 1/2)\cdot \log(1+\overline{c}_{\rho})}\right)} \leq \lambda^{\star}_{\bv,\br}(S) \leq \frac{r_{\max}}{\log\left(\frac{2+K\cdot\epsilon}{2+K\cdot\epsilon - a^{\dagger}(\rho_0)}\right)},\label{eq:_proof_lower_bound_new_uniform_3}
    \end{align}
    where $c^{\clubsuit}_{\rho}$ is a constant depending on $\overline{c}_{\rho}$, 
    and consequently that 
    \begin{align}
        1-(1-a^{\dagger}(\rho_0))\cdot \exp\big(-r_{\max}/\lambda_{\bv,\br}^{\star}(S)\big) \geq  \underline{c}_{\rho}\cdot\frac{2+K\epsilon'-a^{\dagger}(\rho_0)}{2},\label{eq:_proof_lower_bound_new_uniform_4}
    \end{align}
    following the same argument as \eqref{eq: lower bound new conclude 3}. 
    Therefore, we obtain from \eqref{eq:_proof_lower_bound_new_uniform_2}, \eqref{eq:_proof_lower_bound_new_uniform_3},  and \eqref{eq:_proof_lower_bound_new_uniform_4} that 
        \begin{align}
        &\inf_{\pi(\cdot):(2^{[N]}\times[N])^n\mapsto2^{[N]}}\sup_{(\bv,\br)\in\cV_{\mathrm{uni}}}\mathbb{E}_{\mathbb{D}\sim \otimes_{k=1}^n\mathbb{P}_{\bv}(\cdot|S_k)}\big[\mathrm{SupOpt}_{\rho(\cdot;\cdot)}(\pi(\mathbb{D});\bv,\br) \big]\\
        &\qquad \geq \inf_{\pi(\cdot):(2^{[N]}\times[N])^n\mapsto2^{[N]}}\sup_{(\bv,\br)\in\cV_{\mathrm{uni}}}\mathbb{E}_{\mathbb{D}\sim \otimes_{k=1}^n\mathbb{P}_{\bv}(\cdot|S_k)}\big[\mathrm{SupOpt}_{\rho(\cdot;\cdot)}(\widetilde{S}(\pi(\mathbb{D}));\bv,\br) \big]\\
        &\qquad \geq \inf_{\pi(\cdot):(2^{[N]}\times[N])^n\mapsto2^{[N]}}\sup_{(\bv,\br)\in\cV_{\mathrm{uni}}}\bigg\{\frac{\epsilon'}{4\cdot (2+K\epsilon'-a^{\dagger}(\rho_0))^2}\cdot \mathbb{E}_{\mathbb{D}\sim \otimes_{k=1}^n\mathbb{P}_{\bv}(\cdot|S_k)}\bigg[\lambda^{\star}_{\bv,\br}(\widetilde{S}(\pi(\mathbb{D})))\\
        &\qquad\qquad \cdot \Big(1-(1-a^{\dagger}(\rho_0))\cdot \exp\big(-r_{\max}/\lambda_{\bv,\br}^{\star}(\widetilde{S}(\pi(\mathbb{D})))\big)\Big)\cdot \Delta(S^{\star}_{\bv,\br},\widetilde{S}(\pi(\mathbb{D})))\bigg]\bigg\}\\
        &\qquad \geq \inf_{S\subseteq[4k],|S|=K}\sup_{(\bv,\br)\in\cV_{\mathrm{uni}}}\bigg\{\frac{\epsilon'}{4\cdot (2+K\epsilon'-a^{\dagger}(\rho_0))^2}\cdot \lambda^{\star}_{\bv,\br}(S)\cdot \Big(1-(1-a^{\dagger}(\rho_0))\cdot \exp\big(-r_{\max}/\lambda_{\bv,\br}^{\star}(S)\big)\Big)\bigg\}\\
        &\qquad\qquad \cdot\inf_{\pi(\cdot):(2^{[N]}\times[N])^n\mapsto2^{[N]}}\sup_{(\bv,\br)\in\cV_{\mathrm{uni}}}\bigg\{ \mathbb{E}_{\mathbb{D}\sim \otimes_{k=1}^n\mathbb{P}_{\bv}(\cdot|S_k)}\big[\Delta(S^{\star}_{\bv,\br},\pi(\mathbb{D}))\big]\bigg\}\\
        &\qquad \geq \frac{\underline{c}_{\rho}\cdot \epsilon'}{8\cdot \log\left(\frac{2}{(c^{\clubsuit}_{\rho}\wedge 1/2)\cdot \log(1+\overline{c}_{\rho})}\right)}\cdot \frac{r_{\max}}{2+K\epsilon'-a^{\dagger}(\rho_0)} \\
         &\qquad\qquad \cdot\inf_{\pi(\cdot):(2^{[N]}\times[N])^n\mapsto2^{[N]}}\sup_{(\bv,\br)\in\cV_{\mathrm{uni}}}\bigg\{ \mathbb{E}_{\mathbb{D}\sim \otimes_{k=1}^n\mathbb{P}_{\bv}(\cdot|S_k)}\big[\Delta(S^{\star}_{\bv,\br},\pi(\mathbb{D}))\big]\bigg\}.\label{eq: lower bound new uniform conclude 1}
    \end{align}
    Consequently, it remains to derive the minimax lower bound for the difference $\Delta(S^{\star}_{\bv,\br},\pi(\mathbb{D}))$ for the hard instance class $\cV_{\mathrm{uni}}$, for which we invoke Lemma~\ref{lem-KL-new-upper-bound-uniform}.
    This gives us 
    \begin{align}
        &\inf_{\pi(\cdot):(2^{[N]}\times[N])^n\mapsto2^{[N]}}\sup_{(\bv,\br)\in\cV_{\mathrm{uni}}}\mathbb{E}_{\mathbb{D}\sim \otimes_{k=1}^n\mathbb{P}_{\bv}(\cdot|S_k)}\big[\mathrm{SupOpt}_{\rho(\cdot;\cdot)}(\pi(\mathbb{D});\bv,\br) \big]\\
        &\qquad \geq \frac{\underline{c}_{\rho}\cdot \sqrt{C_1}\cdot \epsilon'}{1024\sqrt{5}\log\left(\frac{2}{(c^{\clubsuit}_{\rho}\wedge 1/2)\cdot \log(1+\overline{c}_{\rho})}\right)}\cdot \frac{r_{\max}}{2+K\epsilon'-a^{\dagger}(\rho_0)} \cdot\frac{\sqrt{K}}{\epsilon'\cdot\sqrt{n_{\min}}} \\
        &\qquad = \frac{\underline{c}_{\rho}\cdot \sqrt{C_1}}{1024\sqrt{5}\log\left(\frac{2}{(c^{\clubsuit}_{\rho}\wedge 1/2)\cdot \log(1+\overline{c}_{\rho})}\right)}\cdot \frac{r_{\max}\cdot \sqrt{K}}{2+K\epsilon'-a^{\dagger}(\rho_0)} \cdot\sqrt{\frac{1}{\min_{i\in S^{\star}_{\bv,\br}}n_i}} \\ 
        &\qquad \geq  \frac{\underline{c}_{\rho}\cdot \sqrt{C_1}}{2048\sqrt{10}\log\left(\frac{2}{(c^{\clubsuit}_{\rho}\wedge 1/2)\cdot \log(1+\overline{c}_{\rho})}\right)}\cdot \frac{r_{\max}\cdot \sqrt{K}}{\sqrt{1+(1+K\epsilon')/2-a^{\dagger}(\rho_0)}} \cdot\sqrt{\frac{1}{\min_{i\in S^{\star}_{\bv,\br}}n_i}}\\ 
        &\qquad \cong \frac{r_{\max}\cdot \sqrt{K}}{\sqrt{1+\sum_{j\in S^{\star}_{\bv,\br}}v_j/2-(1-e^{-\rho_0})\cdot(1+v_{\mathrm{tot}}(\bv))}} \cdot\sqrt{\frac{1}{\min_{i\in S^{\star}_{\bv,\br}}n_i}}, 
    \end{align}
    where the first inequality applies Lemma~\ref{lem-KL-new-upper-bound-uniform} to \eqref{eq: lower bound new uniform conclude 1}, and the last inequality hides the global constants and the fact that $\sum_{j\in S^{\star}_{\bv,\br}}v_j = 1+K\epsilon'$. This completes the proof of Theorem~\ref{thm:_lower_bound_new_1} for uniform revenue case.
\end{proof}

\newpage

\section{Technical Lemmas}\label{lem: technical}

\begin{lemma}[Monotonicity under optimal robust assortment (Example~\ref{exp: jin})]\label{lem:_monotonicity_1}
    Consider Example~\ref{exp: jin} and two sets of parameters $\bv$ and $\bv'$ such that $v_j\leq v_j'$ for all $j\in[N]$. 
    Then for the optimal robust assortment $S^{\star}_{\bv}$ associated with $\bv$, it holds that 
    \begin{align}
        R_{\rho}(S^{\star}_{\bv};\bv)\leq R_{\rho}(S^{\star}_{\bv};\bv').
    \end{align}
\end{lemma}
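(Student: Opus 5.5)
We work directly with the dual representation of Proposition~\ref{prop:_dual} for a constant radius $\rho$. Fix $S=S^{\star}_{\bv}$ and write
\begin{align}
R_\rho(S;\bv)=\sup_{\lambda\ge 0}\Big\{-\lambda\log\Big(\tfrac{1+\sum_{i\in S}v_ie^{-r_i/\lambda}}{1+\sum_{i\in S}v_i}\Big)-\lambda\rho\Big\}.
\end{align}
It suffices to show that for every fixed $\lambda$ the quantity $\frac{1+\sum_{i\in S}v_ie^{-r_i/\lambda}}{1+\sum_{i\in S}v_i}$ does not increase when the $v_i$ are raised to $v_i'$. Taking the supremum over $\lambda$ then gives the claim. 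This ratio is not monotone in general: increasing $v_i$ for an item with small $r_i$ can raise it. So we must use the optimality of $S$.

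\textbf{Step 1 (derivative in a single coordinate).} Put $A(\lambda)=1+\sum_{i\in S}v_ie^{-r_i/\lambda}$ and $B=1+\sum_{i\in S}v_i$. Then
\begin{align}
\partial_{v_j}(A/B)=\frac{e^{-r_j/\lambda}-A/B}{B}.
\end{align}
This is $\le 0$ exactly when $e^{-r_j/\lambda}\le A/B$. It is convenient to write $\tau(\lambda):=-\lambda\log(A/B)$, which is the nominal entropic certainty equivalent at temperature $\lambda$. The condition then becomes $r_j\ge \tau(\lambda)$.

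\textbf{Step 2 (optimality gives a threshold).} We show $r_j\ge -\lambda^\star\log(A/B)=R_\rho(S;\bv)+\lambda^\star\rho$ for every $j\in S$, where $\lambda^\star$ is the maximizing dual variable. The argument compares $S$ with $S\setminus\{j\}$ and uses the optimality of $S$: dropping $j$ cannot increase the robust revenue. Evaluating the dual objective of $S\setminus\{j\}$ at the same $\lambda^\star$ gives a lower bound on $R_\rho(S\setminus\{j\};\bv)$. Substituting this into $R_\rho(S\setminus\{j\};\bv)\le R_\rho(S;\bv)$ and rearranging yields $e^{-r_j/\lambda^\star}\le A(\lambda^\star)/B$. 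This is the same mechanism as in the proof of Proposition~\ref{prop:_optimal_set_unconstrained}.

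\textbf{Step 3 (only $\lambda^\star$ is needed).} We avoid needing the inequality for all $\lambda$. Instead we perturb along the path $\bv(s)=\bv+s(\bv'-\bv)$ and use an envelope or directional argument. We have
\begin{align}
R_\rho(S;\bv')\ge h(\lambda^\star;\bv'),
\end{align}
where $h$ is the dual objective. So it is enough to show $h(\lambda^\star;\bv')\ge h(\lambda^\star;\bv)$, that is, that $A/B$ at the fixed $\lambda^\star$ is nonincreasing along the path. Along the path, the sign of $\partial_{v_j}(A/B)$ is governed by comparing $e^{-r_j/\lambda^\star}$ with the current ratio. The first increase that is nonpositive lowers the ratio, and this only tightens the required condition, since raising $v_j$ with $e^{-r_j/\lambda}\le A/B$ keeps $A/B\ge e^{-r_j/\lambda}$ by the mediant property.

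Hence each condition $e^{-r_j/\lambda^\star}\le A/B$ must be maintained. The clean way is to write $A'/B'$ as a mediant of $A/B$ and the terms $e^{-r_j/\lambda^\star}$, weighted by the increments $v_j'-v_j$. Since every such term is $\le A/B$, the mediant is $\le A/B$. This is immediate, so the weak form suffices.

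\textbf{Main obstacle.} The hard step is Step 2: deriving $e^{-r_j/\lambda^\star}\le A/B$ from optimality under the cardinality constraint, with care about ties and the boundary cases $\lambda^\star=0$ or $\lambda^\star=r_{\max}/\rho$. At $\lambda^\star=0$ the objective reduces to $\min_{i\in S_+} r_i=0$, which is trivially monotone. Otherwise we rely on the removal comparison, noting that removal is always feasible under $|S|\le K$.
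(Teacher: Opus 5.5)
Your proposal is correct and follows essentially the paper's route. You fix the optimal dual $\lambda^\star$ and derive $e^{-r_j/\lambda^\star}\le A(\lambda^\star)/B$ for each $j\in S^\star_{\bv}$ by comparing with $S^\star_{\bv}\setminus\{j\}$, which is the paper's Lemma~\ref{lem: optimal condition jin}. You then conclude with the mediant bound taken against the original ratio $A/B$, which is exactly the clean version you settle on; the path-wise discussion before it is unnecessary.

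One caveat, shared with the paper: the removal comparison gives the threshold only for items with $v_j>0$. If $S^\star_{\bv}$ contains a zero-attraction item with positive $v_j'$, the inequality can fail for that choice of $S^\star_{\bv}$, so you should take $S^\star_{\bv}$ free of such items. This costs nothing, since dropping them leaves the $\bv$-objective unchanged.
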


\begin{proof}[Proof of Lemma~\ref{lem:_monotonicity_1}]
    To facilitate presentation, for any parameter $\bv$, we define 
    \begin{align}
        (S^{\star}_{\bv},\lambda^{\star}_{\bv}) \in \argmax_{S\subseteq[N], |S|\leq K, \lambda\geq 0} H_{\rho}(S,\lambda;\bv),\quad R^{\star}_{\rho}(\bv):= H_{\rho}(S^{\star}_{\bv},\lambda^{\star}_{\bv};\bv),
    \end{align}
    where we recall that $H_{\rho}(S,\lambda;\bv)$ is defined as 
    \begin{align}
        H_{\rho}(S,\lambda;\bv) = - \lambda\cdot\log\left(\frac{\sum_{i\in S_+}v_i\cdot\exp(-r_i/\lambda)}{\sum_{i\in S_+}v_i}\right)-\lambda\cdot\rho.\label{eq:_proof_monotonicity_1_2}
    \end{align}
    Then by Proposition~\ref{prop:_dual}, proving Lemma~\ref{lem:_monotonicity_1} is equivalent to showing that 
    \begin{align}
        \sup_{\lambda\geq 0} H_{\rho}(S^{\star}_{\bv},\lambda;\bv)\leq \sup_{\lambda\geq 0} H_{\rho}(S^{\star}_{\bv},\lambda;\bv').
    \end{align}
    To this end, consider the following upper bound, 
    \begin{align}
        \sup_{\lambda\geq 0} H_{\rho}(S^{\star}_{\bv},\lambda;\bv)- \sup_{\lambda\geq 0} H_{\rho}(S^{\star}_{\bv},\lambda;\bv') \leq H_{\rho}(S^{\star}_{\bv},\lambda^{\star}_{\bv};\bv) - H_{\rho}(S^{\star}_{\bv},\lambda^{\star}_{\bv};\bv'), \label{eq:_proof_monotonicity_1_2+}
    \end{align}
    where we utilize the optimality of $\lambda^{\star}_{\bv}$ under $S^{\star}_{\bv}$ and the property of supremum operator. 
    Now given definition \eqref{eq:_proof_monotonicity_1_2}, to prove that the right hand side of \eqref{eq:_proof_monotonicity_1_2+} is non-positive, it suffices to prove that 
        \begin{align}
        \frac{\sum_{i\in S^{\star}_{\bv,+}}v_i'\cdot\exp(-r_i/\lambda^{\star}_{\bv}) }{\sum_{i\in S^{\star}_{\bv,+}}v_i' } \leq \frac{\sum_{i\in S^{\star}_{\bv,+}}v_i\cdot\exp(-r_i/\lambda^{\star}_{\bv})}{\sum_{i\in S^{\star}_{\bv,+}}v_i}. \label{eq:_proof_monotonicity_1_3}
    \end{align}
    Now we invoke Lemma~\ref{lem: optimal condition jin}, which shows that if an item $j$ belongs to the optimal assortment $S^{\star}_{\bv}$, then it holds
    \begin{align}
        r_j \geq R_{\rho}^{\star}(\bv) + \lambda^{\star}_{\bv}\cdot\rho,
    \end{align}
    which is equivalent to the following inequality,
    \begin{align}
        \exp(-r_j/\lambda^{\star}_{\bv})\leq  \frac{\sum_{i\in S^{\star}_{\bv,+}}v_i\cdot\exp(-r_i/\lambda^{\star}_{\bv})}{\sum_{i\in S^{\star}_{\bv,+}}v_i}. \label{eq:_proof_monotonicity_1_4}
    \end{align}
    Then with \eqref{eq:_proof_monotonicity_1_4}, considering the following arguments, 
    \begin{align}
        \frac{\sum_{i\in S^{\star}_{\bv,+}}v_i'\cdot\exp(-r_i/\lambda^{\star}_{\bv}) }{\sum_{i\in S^{\star}_{\bv,+}}v_i' } &= \frac{\sum_{i\in S^{\star}_{\bv,+}}v_i\cdot\exp(-r_i/\lambda^{\star}_{\bv})  + \sum_{i\in S^{\star}_{\bv,+}}(v_i'-v_i)\cdot \exp(-r_i/\lambda^{\star}_{\bv})}{\sum_{i\in S^{\star}_+}v_i + \sum_{i\in S_{\bv,+}^{\star}}v_i'-v_i } \\
        &\leq \frac{\sum_{i\in S^{\star}_{\bv,+}}v_i\cdot\exp(-r_i/\lambda^{\star}_{\bv})  + \sum_{i\in S^{\star}_{\bv,+}}(v_i'-v_i)\cdot \frac{\sum_{i\in S^{\star}_{\bv,+}}v_i\cdot\exp(-r_i/\lambda^{\star}_{\bv})}{\sum_{i\in S^{\star}_{\bv,+}}v_i}}{\sum_{i\in S^{\star}_{\bv,+}}v_i + \sum_{i\in S_{\bv,+}^{\star}}v_i'-v_i }\\
        &= \frac{\sum_{i\in S^{\star}_{\bv,+}}v_i\cdot\exp(-r_i/\lambda^{\star}_{\bv})}{\sum_{i\in S^{\star}_{\bv,+}}v_i},\label{eq:_proof_monotonicity_1_5}
    \end{align}
    where the inequality follows from \eqref{eq:_proof_monotonicity_1_4} and the last equality is from the equation that $(a + c\cdot (a/b))/(b+c) = a/b$.
    This proves \eqref{eq:_proof_monotonicity_1_3} and thus completes the proof of Lemma~\ref{lem:_monotonicity_1}.
\end{proof}

\begin{lemma}[Condition for items in optimal robust assortment (Example~\ref{exp: jin})]\label{lem: optimal condition jin}
    Consider Example~\ref{exp: jin} with fixed parameter $\bv$. If an item $j\in[N]$ belongs to the optimal robust assortment $S^{\star}$, then it holds that 
    \begin{align}
        r_j\geq R_{\rho}(S^{\star};\bv) + \lambda^{\star}\cdot\rho,\quad (S^{\star},\lambda^{\star})\in \argmax_{S\subseteq[N],|S|\leq K, \lambda\geq 0} H_{\rho}(S,\lambda;\bv)\,\, \text{with }j\in S^{\star}.
    \end{align}
\end{lemma}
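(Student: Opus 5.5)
We argue by a first-order optimality (exchange) argument on the joint maximization of $H_\rho(S,\lambda;\bv)$ over $(S,\lambda)$. Write $R^\star:=R_\rho(S^\star;\bv)=H_\rho(S^\star,\lambda^\star;\bv)$, with $\lambda^\star>0$. For $\lambda=0$ the objective is $\min_{i\in S_+}r_i=0$, so any nontrivial case has $\lambda^\star>0$; the degenerate case $R^\star=0$ we treat separately, since then $r_j\ge 0$ holds only up to the $\lambda^\star\rho$ term, and we will argue that one can then take $\lambda^\star$ so that the claim still holds, or that $S^\star$ can be taken to contain only items satisfying it. We use the identity
\begin{align}
H_\rho(S,\lambda^\star;\bv)\ge R^\star \iff \frac{\sum_{i\in S_+}v_i e^{-r_i/\lambda^\star}}{\sum_{i\in S_+}v_i}\le e^{-(R^\star+\lambda^\star\rho)/\lambda^\star},
\end{align}
which follows by exponentiating $-\lambda\log(\cdot)-\lambda\rho\ge R^\star$. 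At the optimum $(S^\star,\lambda^\star)$ this holds with equality; denote the common value by $m:=e^{-(R^\star+\lambda^\star\rho)/\lambda^\star}$.

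The key step is the removal of item $j$. Suppose $j\in S^\star$ with $r_j<R^\star+\lambda^\star\rho$, i.e.\ $e^{-r_j/\lambda^\star}>m$. Set $S':=S^\star\setminus\{j\}$; it still satisfies $|S'|\le K$. The weighted average $m=\sum_{i\in S^\star_+}v_ie^{-r_i/\lambda^\star}/\sum_{i\in S^\star_+}v_i$ is a convex combination of the average over $S'_+$ and of $e^{-r_j/\lambda^\star}$, with weights $\sum_{S'_+}v_i$ and $v_j$. Since $e^{-r_j/\lambda^\star}>m$ and $v_j>0$, the average over $S'_+$ is strictly less than $m$ (this is the mediant inequality $(a+c)/(b+d)$ versus $a/b$). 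Because $S'_+$ contains $0$, it is nonempty. Hence $H_\rho(S',\lambda^\star;\bv)>R^\star$, and therefore $R_\rho(S';\bv)\ge H_\rho(S',\lambda^\star;\bv)>R^\star$, which contradicts the optimality of $S^\star$. This gives $r_j\ge R^\star+\lambda^\star\rho$.

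The remaining care points are these. First, we must ensure $v_j>0$; if $v_j=0$, the item is irrelevant to the objective and may be dropped WLOG. Second, we should verify that the dual in Proposition~\ref{prop:_dual} attains its supremum at a finite $\lambda^\star$ within $[0,r_{\max}/\rho]$, so that the pair $(S^\star,\lambda^\star)$ in the statement exists. Third, the strictness in the exchange requires $\lambda^\star>0$. I expect this boundary case ($\lambda^\star=0$, or $R^\star=0$) to be the main obstacle. I would handle it by noting that if $R^\star>0$ then $\lambda^\star>0$ necessarily, while if $R^\star=0$ the inequality $r_j\ge \lambda^\star\rho$ can be arranged by choosing $\lambda^\star=0$ among the maximizers.
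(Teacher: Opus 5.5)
Your argument is correct and is essentially the paper's proof. Assuming $r_j<R^\star+\lambda^\star\rho$, you delete $j$ and use the mediant (convex-combination) inequality to get $H_\rho(S^\star\setminus\{j\},\lambda^\star;\bv)>H_\rho(S^\star,\lambda^\star;\bv)$, contradicting joint optimality; this is exactly the step the paper compresses into ``by calculation.''

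Your boundary discussion is more cautious than it needs to be. The exchange step works for every $\lambda^\star>0$ whether or not $R^\star=0$. If $\lambda^\star=0$, then $R^\star=H_\rho(S^\star,0;\bv)=0$ and the claim reduces to $r_j\ge 0$. So there is no need to ``choose'' $\lambda^\star=0$, which in any case would only cover one maximizer rather than all of them. Your $v_j>0$ caveat, on the other hand, is a genuine hypothesis that the paper leaves implicit.
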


\begin{proof}[Proof of Lemma~\ref{lem: optimal condition jin}]
    We prove the conclusion by contradiction. 
    If there exists an item $j$ such that $j\in S^{\star}$ for some $(S^{\star}, \lambda^{\star})$ satisfying
    \begin{align}
        (S^{\star}, \lambda^{\star})\in \argmax_{S\subseteq[N],|S|\leq K, \lambda\geq 0} H_{\rho}(S,\lambda;\bv),\label{eq: proof lem optimal condition jin 1}
    \end{align}
    but $r_j < R_{\rho}(S^{\star};\bv) + \lambda^{\star}\rho$.
    Then considering the new assortment $S^{\star}_{-j}:=S^{\star}\setminus\{j\}$ which satisfies the cardinality constraint $|S^{\star}_{-j}|\leq K$. 
    By calculation, we can obtain that $H_{\rho}(S^{\star}_{-j},\lambda^{\star};\bv) > H_{\rho}(S^{\star},\lambda^{\star};\bv)$, which contradicts \eqref{eq: proof lem optimal condition jin 1}.
    Thus we must have $r_j \geq R_{\rho}(S^{\star};\bv) + \lambda^{\star}\rho$. 
\end{proof}

\begin{lemma}[Monotonicity under optimal robust assortment (Example~\ref{exp: new})] \label{lem:_monotonicity_2} Consider Example~\ref{exp: new} and two sets of parameters $\bv$ and $\bv'$ such that $v_j\leq v_j'$ for all $j\in[N]$. 
Then for the assortment $S^{\star}_{\bv}$ defined as 
\begin{align}
    S^{\star}_{\bv}\in \argmax_{S\subseteq[N],|S|\leq K} \sup_{\lambda\geq 0} \left\{\widetilde{H}(S,\lambda;\bv)\right\},
\end{align}
the following inequality holds, 
\begin{align}
    \sup_{\lambda\geq 0} \left\{\widetilde{H}(S^{\star}_{\bv},\lambda;\bv)\right\} \leq \sup_{\lambda\geq 0} \left\{\widetilde{H}(S^{\star}_{\bv},\lambda;\bv')\right\}.
\end{align}
\end{lemma}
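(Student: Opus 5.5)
We follow the template of the proof of Lemma~\ref{lem:_monotonicity_1}. Write $V_S(\bv):=\sum_{i\in S_+}v_i$, $E_S(\lambda;\bv):=\sum_{i\in S_+}v_i\exp(-r_i/\lambda)$ and $c_0:=(1-e^{-\rho_0})(1+v_{\mathrm{tot}})$. The total attraction $v_{\mathrm{tot}}$ is held fixed as a known constant inside $\widetilde H$, so $c_0$ does not depend on whether $\bv$ or $\bv'$ is plugged in. Then, as in Appendix~\ref{subsubsec: algorithm design},
\begin{align}
\widetilde{H}(S,\lambda;\bv)=-\lambda\log\left(\frac{E_S(\lambda;\bv)}{V_S(\bv)-c_0}\right).
\end{align}
Let $(S^{\star}_{\bv},\lambda^{\star}_{\bv})$ be a joint maximizer of $\widetilde H(S,\lambda;\bv)$, and set $R^{\star}:=\widetilde H(S^{\star}_{\bv},\lambda^{\star}_{\bv};\bv)$. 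As in \eqref{eq:_proof_monotonicity_1_2+}, it suffices to show $\widetilde H(S^{\star}_{\bv},\lambda^{\star}_{\bv};\bv)\le \widetilde H(S^{\star}_{\bv},\lambda^{\star}_{\bv};\bv')$, i.e.
\begin{align}
\frac{E_{S^\star}(\lambda^\star;\bv')}{V_{S^\star}(\bv')-c_0}\le \frac{E_{S^\star}(\lambda^\star;\bv)}{V_{S^\star}(\bv)-c_0}=:q:=e^{-R^{\star}/\lambda^{\star}}.
\end{align}
Here $q\le 1$ because $R^{\star}\ge 0$. We also need $V_{S^{\star}}(\bv)-c_0>0$, which holds under the standing range of $\rho_0$.

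\textbf{Step 1 (optimality condition).} The analogue of Lemma~\ref{lem: optimal condition jin} should read: every $j\in S^{\star}_{\bv}$ satisfies $\exp(-r_j/\lambda^\star)\le q$, i.e.\ $r_j\ge R^{\star}$. We prove it by contradiction. Suppose $e^{-r_j/\lambda^\star}>q$ and remove $j$. The new ratio is
\begin{align}
\frac{E-v_je^{-r_j/\lambda^\star}}{V-c_0-v_j},
\end{align}
which is strictly smaller than $E/(V-c_0)=q$ by the mediant inequality (subtracting a term with ratio larger than $q$ lowers the ratio). This requires $V-c_0-v_j>0$, which again follows from the range of $\rho_0$, since $1-c_0>0$ when $\rho_0<\log(1+1/v_{\mathrm{tot}})$. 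The strict decrease gives $\widetilde H(S^\star\setminus\{j\},\lambda^\star;\bv)>R^\star$, contradicting optimality. This is essentially the argument in the proof of Proposition~\ref{prop:_optimal_set_unconstrained}, inequality \eqref{eq: unconstrained equiva}.

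\textbf{Step 2 (mediant argument).} Write $\Delta_i:=v_i'-v_i\ge 0$ for $i\in S^\star$, and $\Delta_0=0$. Then
\begin{align}
E_{S^\star}(\lambda^\star;\bv')=E+\sum_i\Delta_i e^{-r_i/\lambda^\star}\le E+q\sum_i\Delta_i=q(V-c_0)+q\sum_i\Delta_i=q\bigl(V_{S^\star}(\bv')-c_0\bigr),
\end{align}
where the inequality uses Step 1. Dividing by the positive quantity $V_{S^\star}(\bv')-c_0\ge V_{S^\star}(\bv)-c_0>0$ gives the claim. Taking logarithms and multiplying by $-\lambda^\star$ yields the lemma.

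\textbf{Expected obstacle.} The only delicate point is the case $\lambda^\star=0$ or $\lambda^\star=\infty$, where the joint maximizer might not be attained. I would handle this either by the bound $\lambda\le B(S,\bv)$ from Proposition~\ref{prop:_dual} together with continuity, or by running the argument with an $\eta$-optimal pair $(S^\star_{\bv},\lambda_\eta)$ and letting $\eta\to0$. In that approximate version, Step 1 holds up to an $\eta$-perturbation, and the same chain of inequalities goes through in the limit.
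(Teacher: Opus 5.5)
Your proposal is correct and follows the paper's proof step for step. It fixes the optimal dual variable $\lambda^\star_{\bv}$ and uses the item-removal optimality condition, which is the paper's Lemma~\ref{lem: optimal condition new}; that lemma reduces to your $r_j\ge R^\star$ once $\frac{e^{\rho_0}\sum_i v_i e^{-r_i/\lambda}}{e^{\rho_0}\sum_i v_i-(e^{\rho_0}-1)(1+v_{\mathrm{tot}})}$ is rewritten as $E/(V-c_0)$. It then concludes with the same mediant identity. The paper leaves implicit your checks that $V-c_0-v_j\ge 1-c_0>0$ and your handling of a possibly unattained $\lambda^\star$, so these only make the argument more complete.
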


\begin{proof}[Proof of Lemma~\ref{lem:_monotonicity_2}]
        To facilitate presentation, for any parameter $\bv$, we define 
    \begin{align}
        (S^{\star}_{\bv},\lambda^{\star}_{\bv}) \in \argmax_{S\subseteq[N], |S|\leq K, \lambda\geq 0} \left\{\widetilde{H}(S,\lambda;\bv)\right\},
    \end{align}
    with which we can come to the following upper bound,
    \begin{align}
        \sup_{\lambda\geq 0} \left\{\widetilde{H}(S^{\star}_{\bv},\lambda;\bv)\right\} - \sup_{\lambda\geq 0} \left\{\widetilde{H}(S^{\star}_{\bv},\lambda;\bv')\right\} \leq \widetilde{H}(S^{\star}_{\bv},\lambda_{\bv}^{\star};\bv) - \widetilde{H}(S^{\star}_{\bv},\lambda_{\bv}^{\star};\bv'),\label{eq:_proof_monotonicity_2_1}
    \end{align}
    where we utilize the optimality of $\lambda^{\star}_{\bv}$ under $S^{\star}_{\bv}$ and the property of supremum operator. 
    Now given definition \eqref{eq: h tilde} of $\widetilde{H}$, to prove that the right hand side of \eqref{eq:_proof_monotonicity_2_1} is non-positive, it suffices to prove that 
    \begin{align}
        &-\lambda^{\star}_{\bv}\cdot\log\left(\frac{\sum_{i\in S^{\star}_{\bv,+}}v_i\cdot\exp(-r_i/\lambda^{\star}_{\bv})}{\sum_{i\in S^{\star}_{\bv,+}}v_i}\right) + \lambda_{\bv}^{\star}\cdot \log\left(e^{\rho_0} - \frac{(e^{\rho_0}-1)\cdot (1+v_{\mathrm{tot}})}{\sum_{i\in S^{\star}_{\bv,+}}v_i}\right)
        \\
        &\qquad \leq -\lambda^{\star}_{\bv}\cdot\log\left(\frac{\sum_{i\in S^{\star}_{\bv,+}}v_i'\cdot\exp(-r_i/\lambda^{\star}_{\bv})}{\sum_{i\in S^{\star}_{\bv,+}}v_i'}\right) + \lambda_{\bv}^{\star}\cdot \log\left(e^{\rho_0} - \frac{(e^{\rho_0}-1)\cdot (1+v_{\mathrm{tot}})}{\sum_{i\in S^{\star}_{\bv,+}}v_i'}\right).\label{eq:_proof_monotonicity_2_2}
    \end{align}
    This is further equivalent to proving the following, 
    \begin{align}
        \frac{\sum_{i\in S^{\star}_{\bv,+}}v_i'\cdot\exp(-r_i/\lambda^{\star}_{\bv})}{e^{\rho_0}\cdot\sum_{i\in S^{\star}_{\bv,+}}v_i' - (e^{\rho_0} - 1)\cdot (1+v_{\mathrm{tot}})}
        \leq \frac{\sum_{i\in S^{\star}_{\bv,+}}v_i\cdot\exp(-r_i/\lambda^{\star}_{\bv})}{e^{\rho_0}\cdot\sum_{i\in S^{\star}_{\bv,+}}v_i - (e^{\rho_0} - 1)\cdot (1+v_{\mathrm{tot}})}.
    \end{align}
    Now we invoke Lemma~\ref{lem: optimal condition new}, which shows that for any item $i\in S^{\star}_{\bv}$, it holds that 
    \begin{align}
        r_i \geq  -\lambda^{\star}_{\bv}\cdot  \log\left(\frac{e^{\rho_0}\cdot\sum_{i\in S_{\bv,+}^{\star}}v_i\exp(-r_i/\lambda^{\star}_{\bv})}{e^{\rho_0}\cdot\sum_{i\in S_{\bv,+}^{\star}}v_i - (e^{\rho_0}-1)\cdot (1+v^{\mathrm{tot}})}\right),
    \end{align}
    which is equivalent to the following inequality, 
    \begin{align}
        \exp(-r_i/\lambda^{\star}_{\bv}) \leq \frac{e^{\rho_0}\cdot\sum_{i\in S_{\bv,+}^{\star}}v_i\exp(-r_i/\lambda^{\star}_{\bv})}{e^{\rho_0}\cdot\sum_{i\in S_{\bv,+}^{\star}}v_i - (e^{\rho_0}-1)\cdot (1+v^{\mathrm{tot}})}.\label{eq:_proof_monotonicity_2_3}
    \end{align}
    Then with \eqref{eq:_proof_monotonicity_2_3}, we consider the following arguments, 
    \allowdisplaybreaks
    \begin{align}
        &\frac{\sum_{i\in S^{\star}_{\bv,+}}v_i'\cdot\exp(-r_i/\lambda^{\star}_{\bv})}{e^{\rho_0}\cdot\sum_{i\in S^{\star}_{\bv,+}}v_i' - (e^{\rho_0} - 1)\cdot (1+v_{\mathrm{tot}})} \\
        &\qquad = \frac{\sum_{i\in S^{\star}_{\bv,+}}v_i\cdot\exp(-r_i/\lambda^{\star}_{\bv}) + \sum_{i\in S^{\star}_{\bv,+}}(v_i'-v_i)\cdot\exp(-r_i/\lambda^{\star}_{\bv}) }{e^{\rho_0}\cdot\sum_{i\in S^{\star}_{\bv,+}}v_i + e^{\rho_0}\cdot\sum_{i\in S^{\star}_{\bv,+}}(v_i'-v_i) - (e^{\rho_0} - 1)\cdot (1+v_{\mathrm{tot}})}\\
        &\qquad \leq \frac{\sum_{i\in S^{\star}_{\bv,+}}v_i\cdot\exp(-r_i/\lambda^{\star}_{\bv}) + e^{\rho_0}\cdot \sum_{i\in S^{\star}_{\bv,+}}(v_i'-v_i)\cdot\frac{\sum_{i\in S_{\bv,+}^{\star}}v_i\exp(-r_i/\lambda^{\star}_{\bv})}{e^{\rho_0}\cdot\sum_{i\in S_{\bv,+}^{\star}}v_i - (e^{\rho_0}-1)\cdot (1+v^{\mathrm{tot}})}}{e^{\rho_0}\cdot\sum_{i\in S^{\star}_{\bv,+}}v_i + e^{\rho_0}\cdot\sum_{i\in S^{\star}_{\bv,+}}(v_i'-v_i) - (e^{\rho_0} - 1)\cdot (1+v_{\mathrm{tot}})} \\
        &\qquad = \frac{\sum_{i\in S_{\bv,+}^{\star}}v_i\exp(-r_i/\lambda^{\star}_{\bv})}{e^{\rho_0}\cdot\sum_{i\in S_{\bv,+}^{\star}}v_i - (e^{\rho_0}-1)\cdot (1+v^{\mathrm{tot}})},
    \end{align}
        where the inequality follows from \eqref{eq:_proof_monotonicity_2_3} and the last equality is from the equation that $(a + c\cdot (a/b))/(b+c) = a/b$.
    This proves \eqref{eq:_proof_monotonicity_2_2} and thus completes the proof of Lemma~\ref{lem:_monotonicity_2}.
\end{proof}

\begin{lemma}[Optimal condition for items in optimal robust assortment (Example~\ref{exp: new})]\label{lem: optimal condition new}
    Consider Example~\ref{exp: new} with fixed parameter $\bv$. If an item $j\in[N]$ belongs to the robust assortment $S^{\star}$ in the following sense,
    \begin{align}
        j\in S^{\star},\quad (S^{\star},\lambda^{\star})\in \argmax_{S\subseteq[N], |S|\leq K, \lambda\geq 0} \left\{\widetilde{H}(S,\lambda;\bv)\right\},
    \end{align}
    then the following inequality holds,
    \begin{align}
        r_j\geq  -\lambda^{\star}\cdot  \log\left(\frac{e^{\rho_0}\cdot\sum_{i\in S_{+}^{\star}}v_i\exp(-r_i/\lambda^{\star})}{e^{\rho_0}\cdot\sum_{i\in S_{+}^{\star}}v_i - (e^{\rho_0}-1)\cdot (1+v^{\mathrm{tot}})}\right).
    \end{align}
\end{lemma}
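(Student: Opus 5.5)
We argue by contradiction, mirroring the proof of Lemma~\ref{lem: optimal condition jin}. Write $\widetilde{H}(S,\lambda;\bv) = -\lambda\log\big(A(S,\lambda)/D(S)\big)$, where
\begin{align}
A(S,\lambda) := e^{\rho_0}\sum_{i\in S_+}v_i\exp(-r_i/\lambda),\qquad D(S) := e^{\rho_0}\sum_{i\in S_+}v_i - (e^{\rho_0}-1)(1+v_{\mathrm{tot}}).
\end{align}
This follows from \eqref{eq: h tilde} by combining the three logarithms. The claimed inequality is then exactly $\exp(-r_j/\lambda^{\star})\le A(S^{\star},\lambda^{\star})/D(S^{\star})$. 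Suppose instead that $\exp(-r_j/\lambda^{\star}) > A/D$, where we abbreviate $A:=A(S^{\star},\lambda^{\star})$ and $D:=D(S^{\star})$. We will show that removing $j$ strictly increases the objective at the same dual variable.

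Let $S^{\star}_{-j}:=S^{\star}\setminus\{j\}$. It still satisfies $|S^{\star}_{-j}|\le K$. Moreover $A(S^{\star}_{-j},\lambda^{\star}) = A - e^{\rho_0}v_j e^{-r_j/\lambda^{\star}}$ and $D(S^{\star}_{-j}) = D - e^{\rho_0}v_j$. Set $a=A$, $b=D$, and $c=e^{\rho_0}v_j$. Then
\begin{align}
\frac{a - c\,e^{-r_j/\lambda^{\star}}}{b-c} < \frac{a - c\,(a/b)}{b-c} = \frac{a}{b},
\end{align}
which is the mediant identity $(a-c(a/b))/(b-c)=a/b$ used in Lemma~\ref{lem:_monotonicity_2}. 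This requires $b-c>0$ and $\lambda^{\star}>0$. Given the ratio inequality, $-\lambda^{\star}\log(\cdot)$ is strictly larger at $S^{\star}_{-j}$, so $\widetilde{H}(S^{\star}_{-j},\lambda^{\star};\bv) > \widetilde{H}(S^{\star},\lambda^{\star};\bv)$. This contradicts the joint maximality of $(S^{\star},\lambda^{\star})$.

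The main obstacle is justifying the positivity conditions and the degenerate cases. For $D(S^{\star}_{-j})=D-c>0$: every $S\subseteq[N]$ has $\sum_{i\in S_+}v_i\ge 1$, and the standing range $\rho_0<\log(1+1/v_{\mathrm{tot}})$ gives $e^{\rho_0}-(e^{\rho_0}-1)(1+v_{\mathrm{tot}})>0$. Hence $D(S)>0$ for every $S$, including $S=\emptyset$, and $\widetilde{H}$ stays finite after removal, as the paper notes below \eqref{eq: dual new}.

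For $\lambda^{\star}=0$, we interpret $\exp(-r_j/0)=0$ when $r_j>0$, so the claim holds trivially. If $r_j=0$, the right-hand side is nonpositive, because the value $\widetilde{H}$ is nonnegative at the optimum: $\widetilde{H}(\emptyset,\cdot)$ is already nonnegative since $D(\emptyset)\le e^{\rho_0}$. Again the claim holds.

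We also check that $a>0$, so the logarithms are well defined. This holds since $v_0=1$ contributes $e^{\rho_0}e^{0}>0$.

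With these checks, the strict inequality chain completes the contradiction and proves the lemma.
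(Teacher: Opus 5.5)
Your proof is correct and follows the paper's route: assume the inequality fails, delete $j$ while keeping $\lambda^{\star}$ fixed, and compare the $A/D$ ratios via $(a-c\,e^{-r_j/\lambda^{\star}})/(b-c)<a/b$, which is exactly the step the paper leaves as ``by calculations''; your check that $D(S)>0$ for every $S$ under $\rho_0<\log(1+1/v_{\mathrm{tot}})$ supplies a positivity argument the paper omits. Two minor remarks: the strict inequality also needs $c=e^{\rho_0}v_j>0$ (positive attraction, implicit in the MNL setup), and your $\lambda^{\star}=0$ aside has its signs reversed ($\widetilde{H}(\emptyset,\lambda;\bv)\le 0$, while the right-hand side of the claim equals $\widetilde{H}(S^{\star},\lambda^{\star};\bv)\ge 0$), although that case is trivial anyway because at $\lambda^{\star}=0$ the right-hand side is $0\le r_j$.
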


\begin{proof}[Proof of Example~\ref{lem: optimal condition new}]
    The proof follows from a similar strategy as the proof of Lemma~\ref{lem: optimal condition jin}.
     If there exists an item $j$ such that $j\in S^{\star}$ for some $(S^{\star}, \lambda^{\star})$ satisfying that
    \begin{align}
        (S^{\star}, \lambda^{\star})\in \argmax_{S\subseteq[N], |S|\leq K, \lambda\geq 0} \left\{\widetilde{H}(S,\lambda;\bv)\right\},\label{eq: proof lem optimal condition new 1}
    \end{align}
    but also satisfying that
    \begin{align}
        r_j < -\lambda^{\star}\cdot  \log\left(\frac{e^{\rho_0}\cdot\sum_{i\in S_{+}^{\star}}v_i\exp(-r_i/\lambda^{\star})}{e^{\rho_0}\cdot\sum_{i\in S_{+}^{\star}}v_i - (e^{\rho_0}-1)\cdot (1+v^{\mathrm{tot}})}\right).\label{eq: proof lem optimal condition new 1+}
    \end{align}
    Then considering the new assortment $S^{\star}_{-j}:=S^{\star}\setminus\{j\}$ which satisfies the cardinality constraint $|S^{\star}_{-j}|\leq K$. 
    The value of $\widetilde{H}(S^{\star}_{-j},\lambda^{\star};\bv)$ would be 
    \begin{align}
        \!\!\!\!\!\!\!\!\!\!\!\!\!\!\!\widetilde{H}(S^{\star}_{-j},\lambda^{\star};\bv)=-\lambda^{\star}\cdot\log\left(\frac{\sum_{i\in S^{\star}_{-j,+}}v_i\cdot\exp(-r_i/\lambda)}{\sum_{i\in S^{\star}_{-j,+}}v_i}\right) - \lambda^{\star} \rho_0 +\lambda^{\star}\cdot  \log\left(e^{\rho_0} - \frac{(e^{\rho_0} - 1)(1+v_{\mathrm{tot}})}{\sum_{i\in S_{-j,+}^{\star}}v_i}\right).\label{eq: proof lem optimal condition new 2}
    \end{align}
    In the sequel, we are going to show that the above $\widetilde{H}(S^{\star}_{-j},\lambda^{\star};\bv)$ is larger than 
    \begin{align}
        \!\!\!\!\!\!\!\!\!\widetilde{H}(S^{\star},\lambda^{\star};\bv)=-\lambda^{\star}\cdot\log\left(\frac{\sum_{i\in S^{\star}_{+}}v_i\cdot\exp(-r_i/\lambda)}{\sum_{i\in S^{\star}_{+}}v_i}\right) - \lambda^{\star} \rho_0 +\lambda^{\star}\cdot  \log\left(e^{\rho_0} - \frac{(e^{\rho_0} - 1) (1+v_{\mathrm{tot}})}{\sum_{i\in S_{+}^{\star}}v_i}\right),\label{eq: proof lem optimal condition new 3}
    \end{align}
    causing a contradiction.
    To this end, in view of \eqref{eq: proof lem optimal condition new 2} and \eqref{eq: proof lem optimal condition new 3}, it suffices to prove that 
    \begin{align}
        \frac{\sum_{i\in S^{\star}_{\bv,+}\setminus\{j\}}v_i\cdot\exp(-r_i/\lambda^{\star}_{\bv})}{e^{\rho_0}\cdot\sum_{i\in S^{\star}_{\bv,+}\setminus\{j\}}v_i - (e^{\rho_0} - 1)\cdot (1+v_{\mathrm{tot}})}
        \leq \frac{\sum_{i\in S^{\star}_{\bv,+}}v_i\cdot\exp(-r_i/\lambda^{\star}_{\bv})}{e^{\rho_0}\cdot\sum_{i\in S^{\star}_{\bv,+}}v_i - (e^{\rho_0} - 1)\cdot (1+v_{\mathrm{tot}})}.
    \end{align}
    By calculations, this is equivalent to the Condition~\eqref{eq: proof lem optimal condition new 1+}.
    Therefore, \eqref{eq: proof lem optimal condition new 1+} can not hold for any $j\in S^{\star}$ satisfying \eqref{eq: proof lem optimal condition new 1}. This completes the proof of Lemma~\ref{lem: optimal condition new}.
\end{proof}

\begin{lemma}[Concentration 1 \citep{saha2024stop}]\label{lem:_concentration}
    With probability at least $1-3Ne^{-\delta}$, it holds simultaneously for all $i\in[N]$ that $v^{\mathrm{LCB}}_i\leq v_i$ and that one of the following two inequalities is satisfied,
    \begin{align}
        \tau_{i,0} < 69\delta\cdot(1+v_i),
    \end{align}
    or 
    \begin{align}
        v_i - v_i^{\mathrm{LCB}}\leq  4(1+v_i)\cdot \sqrt{\frac{2v_i\delta}{\tau_{i,0}}} + \frac{22\delta\cdot(1+v_i)^2}{\tau_{i,0}}.
    \end{align}
\end{lemma}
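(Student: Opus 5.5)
The approach is to reduce the statement to a concentration bound for the conditional pairwise-comparison probabilities $p_i = v_i/(1+v_i)$, and then push that bound through the monotone map $p\mapsto p/(1-p)$. Here $\delta$ plays the role of $\log(1/\delta)$ in Algorithms~\ref{alg: jin} and~\ref{alg: new}.

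\textbf{Step 1: reduction to Bernoulli trials.} Fix $i$. Condition on the sequence of assortments and on the event $\{i_k\in\{0,i\},\, i\in S_k\}$. By the MNL structure \eqref{eq:_mnl} and the conditional independence assumption on the data, each such round is a Bernoulli$(p_i)$ trial. The trial outcome is independent of everything except the count $\tau_{i,0}$, and $\tau_i$ counts the successes. Since $\tau_{i,0}$ is random, I would not condition on it directly. Instead I would apply an anytime (martingale) empirical Bernstein inequality to the partial sums indexed by the successive comparison rounds, for example Freedman's inequality with a union/peeling step over $\tau_{i,0}$, or the specific bound used in \cite{saha2024stop}. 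This gives, with probability at least $1-3e^{-\delta}$,
\begin{align}
|\widehat p_i - p_i| \le \sqrt{\frac{2\widehat p_i(1-\widehat p_i)\delta}{\tau_{i,0}}} + \frac{\delta}{\tau_{i,0}} .
\end{align}
A union bound over $i\in[N]$ then yields the probability $1-3Ne^{-\delta}$. The lower deviation immediately gives $p_i^{\mathrm{LCB}}\le p_i$, and since $p\mapsto p/(1-p)$ is increasing, $v_i^{\mathrm{LCB}}\le v_i$.

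\textbf{Step 2: sharpen the width to the true variance.} To get the upper bound on $v_i - v_i^{\mathrm{LCB}}$, I would first replace $\widehat p_i(1-\widehat p_i)$ by the true $p_i(1-p_i)$, up to constants. Using the same concentration, $\widehat p_i(1-\widehat p_i)\le 2p_i(1-p_i) + O(\delta/\tau_{i,0})$. This gives
\begin{align}
p_i - p_i^{\mathrm{LCB}} \le C\sqrt{\frac{p_i(1-p_i)\delta}{\tau_{i,0}}} + \frac{C'\delta}{\tau_{i,0}} .
\end{align}

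\textbf{Step 3: transfer to $v$.} Write
\begin{align}
v_i - v_i^{\mathrm{LCB}} = \frac{p_i - p_i^{\mathrm{LCB}}}{(1-p_i)(1-p_i^{\mathrm{LCB}})} .
\end{align}
Since $p_i^{\mathrm{LCB}}\le p_i$, we have $1-p_i^{\mathrm{LCB}}\ge 1-p_i = 1/(1+v_i)$, so the denominator is at least $(1+v_i)^{-2}$. Substituting $p_i(1-p_i) = v_i/(1+v_i)^2$, the square-root term becomes $(1+v_i)^2\sqrt{v_i\delta/\tau_{i,0}}/(1+v_i)$. This gives the form $4(1+v_i)\sqrt{2v_i\delta/\tau_{i,0}}$, and the linear term becomes $22\delta(1+v_i)^2/\tau_{i,0}$ after tracking constants.

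\textbf{Case split (main obstacle).} The main obstacle is the denominator $1-p_i^{\mathrm{LCB}}$ near the truncation at $0$, together with the constant bookkeeping. This is where the dichotomy in the statement enters. If $\tau_{i,0}<69\delta(1+v_i)$, there is nothing to prove beyond $v_i^{\mathrm{LCB}}\le v_i$, which already holds from Step 1. Otherwise $\tau_{i,0}\ge 69\delta(1+v_i)$, so $\delta/\tau_{i,0}$ is small relative to $1-p_i$. This is exactly what lets Step 2 absorb the lower-order terms with the stated constants. Since the constants come from \cite{saha2024stop}, I would ultimately cite their concentration lemma for the precise values rather than rederive them.
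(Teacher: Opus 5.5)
Your route is essentially the paper's. The paper proves this lemma only by citing Lemma~1 of \cite{saha2024stop}. Your reduction to Bernoulli$(p_i)$ comparisons via rank-breaking, the empirical Bernstein step and the transfer through $p\mapsto p/(1-p)$ are the mechanism behind that lemma, and you also fall back on it for the constants. Two parts of your sketch need correcting, though.

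\textbf{The union/peeling over $\tau_{i,0}$.} This would not give the stated probability. A union over the values of $\tau_{i,0}$ costs a factor up to $n$, and peeling costs a factor $\log n$. You would therefore get something like $1-3nNe^{-\delta}$, not $1-3Ne^{-\delta}$. Since the analysis conditions on $\{S_k\}_{k=1}^n$ and treats them as fixed, as the paper does, you should in fact condition on $\tau_{i,0}$. Given the indicators $\mathbf{1}\{i_k\in\{0,i\}\}$ over rounds with $i\in S_k$, the outcomes $\mathbf{1}\{i_k=i\}$ on the selected rounds are i.i.d.\ Bernoulli$(p_i)$. Hence $\tau_i\mid\tau_{i,0}=m\sim\mathrm{Bin}(m,p_i)$, and a fixed-sample inequality applies with no union over sample sizes.

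A related caution concerns the linear term of the width. Your Step~1 uses $\delta/\tau_{i,0}$, whereas the textbook empirical Bernstein bound (the one with failure probability $3e^{-x}$) has $3\delta/\tau_{i,0}$. The conclusion $v_i^{\mathrm{LCB}}\le v_i$ for the algorithm's width depends on exactly this constant. You should check that the inequality you invoke really has coefficient $1$ there; Freedman-type tools will not give it.

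\textbf{The case split.} This is not where the difficulty lies. For a lower confidence bound, on the event $p_i^{\mathrm{LCB}}\le p_i$ you have $1-p_i^{\mathrm{LCB}}\ge 1-p_i$, as your own Step~3 shows. Truncation at $0$ is the most favourable case, since it makes this factor equal to $1$. Carry Steps~2--3 through with the Step~1 width, using $\hat p_i(1-\hat p_i)\le 2p_i(1-p_i)+4\delta/\tau_{i,0}$ and $p_i-p_i^{\mathrm{LCB}}\le 2\bigl(\sqrt{2\hat p_i(1-\hat p_i)\delta/\tau_{i,0}}+\delta/\tau_{i,0}\bigr)$. For every $\tau_{i,0}\ge 1$ this gives
\[
v_i-v_i^{\mathrm{LCB}}\le 4(1+v_i)\sqrt{\frac{v_i\delta}{\tau_{i,0}}}+\frac{(2+4\sqrt{2})(1+v_i)^2\delta}{\tau_{i,0}},
\]
which already implies the second alternative. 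The threshold $69\delta(1+v_i)$ is never used. That dichotomy is inherited from the upper-confidence-bound version in \cite{saha2024stop}, where $1-p^{\mathrm{UCB}}$ must be kept away from $0$. Your conclusion is unaffected, since proving one branch of the disjunction suffices, but the ``main obstacle'' you identify is not real.
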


\begin{proof}[Proof of Lemma~\ref{lem:_concentration}]
    See Lemma 1 in \cite{saha2024stop} for a detailed proof.
\end{proof}

\begin{lemma}[Concentration 2]\label{lem:_concentration_2}
    Conditioning on the assortment set $\{S_k\}_{k=1}^n$ and the event that 
    \begin{align}
        n_i:= \sum_{k=1}^n\mathbf{1}\{i \in S_k\}\geq 138\cdot\max\left\{1,\frac{256}{138v_i}\right\}\cdot (1+v_{\max})(1+Kv_{\max})\log(3N/\delta),\quad \forall i\in S^{\star},\label{eq:_condition_n_i}
    \end{align}
    with probability at least $1-2\delta$, it holds that for any item $i\in[N]$,
    \begin{align}
        v_i - v_i^{\mathrm{LCB}} \leq 8\cdot \sqrt{\frac{v_i(1+v_i)\log(3N/\delta)}{\sum_{k=1}^n\mathbf{1}\{i_k\in S_k\}\cdot (1+\sum_{j\in S_k} v_j)^{-1}}} + \frac{44(1+v_i)\log(3N/\delta)}{\sum_{k=1}^n\mathbf{1}\{i_k\in S_k\}\cdot (1+\sum_{j\in S_k} v_j)^{-1}}.
    \end{align}
    As a corollary, under also the Condition~\eqref{eq:_condition_n_i}, we have that for any item $i\in[N]$,
    \begin{align}
        v_i\leq 2v_i^{\mathrm{LCB}}.
    \end{align}
\end{lemma}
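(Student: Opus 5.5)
The bound will be derived from Lemma~\ref{lem:_concentration}, which is the high-probability deviation bound of \cite{saha2024stop} for the rank-breaking estimate with threshold $\delta$. The plan is to instantiate that lemma with $\delta\leftarrow\log(3N/\delta)$, so the failure probability becomes $3Ne^{-\log(3N/\delta)}=\delta$. Then we lower bound the random counts $\tau_{i,0}$ by their conditional means, which gives the $\sum_k \mathbf 1\{i\in S_k\}(1+v(S_k))^{-1}$ denominators.

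\textbf{Step 1: lower bound $\tau_{i,0}$.} Conditioning on $\{S_k\}$, the indicators $\mathbf 1\{i_k\in\{0,i\},\,i\in S_k\}$ are independent Bernoulli variables. Their means are $(1+v_i)/(1+v(S_k))$ whenever $i\in S_k$. Hence
\begin{align}
\mathbb E[\tau_{i,0}] = (1+v_i)\,m_i,\qquad m_i:=\sum_{k}\mathbf 1\{i\in S_k\}(1+v(S_k))^{-1}.
\end{align}
A multiplicative Chernoff bound, union-bounded over $i\in[N]$ with failure probability $\delta$, yields $\tau_{i,0}\ge \tfrac12(1+v_i)m_i$. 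This holds as soon as $(1+v_i)m_i\gtrsim \log(3N/\delta)$. Since $m_i\ge n_i/(1+Kv_{\max})$, condition \eqref{eq:_condition_n_i} ensures this, with the constant $138$ arranged so that $\tfrac12(1+v_i)m_i\ge 69(1+v_i)\log(3N/\delta)$. As a result, the first alternative $\tau_{i,0}<69\log(3N/\delta)(1+v_i)$ in Lemma~\ref{lem:_concentration} is excluded.

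\textbf{Step 2: plug in.} On the intersection of the two events, which has probability at least $1-2\delta$, Lemma~\ref{lem:_concentration} gives
\begin{align}
v_i-v_i^{\mathrm{LCB}}\le 4(1+v_i)\sqrt{\frac{2v_i\log(3N/\delta)}{\tfrac12(1+v_i)m_i}}+\frac{22(1+v_i)^2\log(3N/\delta)}{\tfrac12(1+v_i)m_i},
\end{align}
which simplifies to $8\sqrt{v_i(1+v_i)\log(3N/\delta)/m_i}+44(1+v_i)\log(3N/\delta)/m_i$.

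\textbf{Step 3: corollary $v_i\le 2v_i^{\mathrm{LCB}}$.} It suffices to show that the right-hand side is at most $v_i/2$. Using $m_i\ge n_i/((1+v_{\max})(1+Kv_{\max}))$ up to the $(1+v_i)\le 1+v_{\max}$ factor, we need
\begin{align}
8\sqrt{(1+v_i)/(v_i m_i)}\sqrt{\log}\le 1/4\quad\text{and}\quad 44(1+v_i)\log/(v_i m_i)\le 1/4.
\end{align}
Both follow from $n_i\ge 256(1+v_{\max})(1+Kv_{\max})\log(3N/\delta)/v_i$, which is the second branch of the $\max$ in \eqref{eq:_condition_n_i}; the first branch covers $v_i\ge 256/138$.

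The main obstacle is the bookkeeping. First, the statement is claimed for all $i\in[N]$ while the sample condition is only imposed for $i\in S^\star$, so we should read it as applying to items satisfying \eqref{eq:_condition_n_i}, which is all that the suboptimality proofs use. Second, the numerical constants in Steps 1 and 3 must be matched exactly; if they don't match, we will adjust the Chernoff factor, for example to $\tfrac12$.
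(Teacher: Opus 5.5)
Your derivation of the main inequality is the paper's argument. You instantiate Lemma~\ref{lem:_concentration} at $L:=\log(3N/\delta)$ and apply a multiplicative Chernoff bound with factor $\tfrac12$, giving $\tau_{i,0}\ge\tfrac12(1+v_i)m_i$. That one bound both excludes the alternative $\tau_{i,0}<69L(1+v_i)$ and converts the denominator. A union bound then gives $1-2\delta$. Your simplification to the constants $8$ and $44$ is correct, and restricting the claim to items satisfying \eqref{eq:_condition_n_i} is the right reading.

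The gap is Step~3. Put $y:=\sqrt{(1+v_i)L/(v_i m_i)}$, so the displayed bound reads $(v_i-v_i^{\mathrm{LCB}})/v_i\le 8y+44y^2$. Since $\max\{138,256/v_i\}\ge 256/v_i$, the sample condition gives $m_i\ge n_i/(1+Kv_{\max})\ge 256(1+v_{\max})L/v_i$. That only yields $y\le 1/16$, whereas your first requirement $8y\le 1/4$ needs $y\le 1/32$. So the claim ``both follow from $n_i\ge 256(\cdots)/v_i$'' is off by a factor of $4$. This is not merely slack in your split. Take $v_i=v_{\max}\le 256/138$ with $m_i$ at its allowed lower bound: the displayed inequality then gives only $v_i-v_i^{\mathrm{LCB}}\le(\tfrac12+\tfrac{44}{256})v_i\approx 0.67\,v_i$. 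Hence $v_i\le 2v_i^{\mathrm{LCB}}$ cannot be read off from it with these constants. The paper's proof only asserts this verification, so the same issue is latent there, but your write-up claims an inequality that is false.

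To repair it, do not plug into the displayed bound; sharpen the Chernoff step instead. Because $\mathbb{E}[\tau_{i,0}]=(1+v_i)m_i\ge 138L$, you can take $\mu=\sqrt{2/138}\approx 0.12$ and still have per-item failure probability $e^{-L}=\delta/(3N)$. On this event $\tau_{i,0}\ge 0.879(1+v_i)m_i\ge 121(1+v_i)L$, which still rules out the first alternative of Lemma~\ref{lem:_concentration}. Substituting gives
\[
v_i-v_i^{\mathrm{LCB}}\le 6.04\sqrt{\frac{v_i(1+v_i)L}{m_i}}+25.1\,\frac{(1+v_i)L}{m_i}.
\]
This implies the stated bound, since $6.04\le 8$ and $25.1\le 44$. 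With $y\le 1/16$ it also gives $(v_i-v_i^{\mathrm{LCB}})/v_i\le 6.04/16+25.1/256<0.48<\tfrac12$, which is the corollary. Alternatively, keep the factor $\tfrac12$ and raise the constant $256$ in the condition to about $414$ (or to $1024$ for your two-halves split). Your fallback of ``adjusting the Chernoff factor to $\tfrac12$'' is vacuous, since that is the factor you already used. The factor must move from $\tfrac12$ toward $1$, and the existing condition permits that.
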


\begin{proof}[Proof of Lemma~\ref{lem:_concentration_2}]
    The proof of Lemma~\ref{lem:_concentration_2} is based upon Lemma~\ref{lem:_concentration}. 
    By definition, we have that 
    \begin{align}
        \tau_{i,0} = \sum_{k=1}^{n}\mathbf{1}\big\{i_k \in\{0,i\}\big\}\cdot \mathbf{1}\big\{i\in S_k\big\}.
    \end{align}
    Notice that conditioning on the assortments $\{S_k\}_{k=1}^n$, for any sample $k$ such that $i\in S_k$, it holds that 
    \begin{align}
        \mathbf{1}\big\{i_k \in\{0,i\}\big\} \sim \mathrm{Bernoulli}\left(\frac{1+v_i}{1+\sum_{j\in S_k}v_j}\right).
    \end{align}
    Therefore, by applying Chernoff's bound, we can derive that
    \begin{align}
        \PP\left(\tau_{i,0} \leq (1-\mu)\cdot \sum_{k=1}^n\mathbf{1}\big\{i\in S_k\big\}\cdot\frac{1+v_i}{1+\sum_{j\in S_k}v_j}\right) \leq \exp\left(-\frac{\mu^2}{2}\cdot \sum_{k=1}^n\mathbf{1}\big\{i\in S_k\big\}\cdot\frac{1+v_i}{1+\sum_{j\in S_k}v_j}\right).
    \end{align}
    Choosing $\mu=1/2$, applying the Condition~\eqref{eq:_condition_n_i}, we can derive that,
    \begin{align}
        \PP\Big(\tau_{i,0}\leq 69(1+v_i)\log(3N/\delta)\Big) \leq  \PP\left(\tau_{i,0} \leq (1-\mu)\cdot \sum_{k=1}^n\mathbf{1}\big\{i\in S_k\big\}\cdot\frac{1+v_i}{1+\sum_{j\in S_k}v_j}\right)  \leq \frac{\delta}{N}.
    \end{align}
    Therefore, in view of Lemma~\ref{lem:_concentration}, by applying a union bound argument, with probability at least $1-2\delta$, 
    \begin{align}
        v_i - v_i^{\mathrm{LCB}} \leq 8\cdot \sqrt{\frac{v_i(1+v_i)\log(3N/\delta)}{\sum_{k=1}^n\mathbf{1}\{i_k\in S_k\}\cdot (1+\sum_{j\in S_k} v_j)^{-1}}} + \frac{44(1+v_i)\log(3N/\delta)}{\sum_{k=1}^n\mathbf{1}\{i_k\in S_k\}\cdot (1+\sum_{j\in S_k} v_j)^{-1}}. 
    \end{align}
    Finally, to prove the corollary, we just need to verify that under Condition~\eqref{eq:_condition_n_i},
    \begin{align}
        v_i - v_i^{\mathrm{LCB}} \leq 8\cdot \sqrt{\frac{v_i(1+v_i)\log(3N/\delta)}{\sum_{k=1}^n\mathbf{1}\{i_k\in S_k\}\cdot (1+\sum_{j\in S_k} v_j)^{-1}}} + \frac{44(1+v_i)\log(3N/\delta)}{\sum_{k=1}^n\mathbf{1}\{i_k\in S_k\}\cdot (1+\sum_{j\in S_k} v_j)^{-1}} \leq \frac{v_i}{2}.
    \end{align}
    This completes the proof of Lemma~\ref{lem:_concentration_2}.
\end{proof} 

\begin{lemma}\label{lem:_function_upper_bound}
    For any $\lambda\in[c\cdot r_{\max}, r_{\max}/\rho]$, it holds that 
    \begin{align}
        \lambda\cdot\exp(r_{\max}/\lambda)\cdot \big(1 - \exp(-r_{\max}/\lambda) \big)\leq c\cdot r_{\max} \cdot\big(\exp(1/c)-1\big).
    \end{align}
\end{lemma}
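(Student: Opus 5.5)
The plan is to reduce the claim to a one-variable inequality. Substitute $s := r_{\max}/\lambda$. As $\lambda$ ranges over $[c\cdot r_{\max}, r_{\max}/\rho]$, the variable $s$ ranges over $[\rho, 1/c]$; this interval is nonempty precisely because $c\in[0,1/\rho]$, which is the range used in the proof of Theorem~\ref{thm:_algorithm_design_1_general}. The left-hand side then simplifies:
\begin{align}
    \lambda\cdot\exp(r_{\max}/\lambda)\cdot\big(1-\exp(-r_{\max}/\lambda)\big) = \lambda\cdot\big(\exp(r_{\max}/\lambda)-1\big) = r_{\max}\cdot\frac{e^{s}-1}{s}.
\end{align}
The claim is therefore equivalent to
\begin{align}
    \frac{e^{s}-1}{s}\leq c\cdot\big(e^{1/c}-1\big)=\frac{e^{1/c}-1}{1/c}\qquad \text{for all } s\in(0,1/c].
\end{align}

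The key step is to show that $\varphi(s):=(e^{s}-1)/s$ is nondecreasing on $(0,\infty)$. This gives $\varphi(s)\leq\varphi(1/c)$ on the whole interval, which is exactly the right-hand side. Two routes to monotonicity are available.
\begin{itemize}
    \item \emph{Power series.} Expand $\varphi(s)=\sum_{k\geq 0}s^{k}/(k+1)!$. Every coefficient is positive, so $\varphi$ is increasing for $s>0$.
    \item \emph{Derivative.} Compute $\varphi'(s)=\big(se^{s}-e^{s}+1\big)/s^{2}$. The numerator $g(s)=(s-1)e^{s}+1$ satisfies $g(0)=0$ and $g'(s)=se^{s}\geq 0$, so $g\geq 0$ and hence $\varphi'\geq 0$.
\end{itemize}

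I expect no real obstacle here. The only care needed is the degenerate endpoints. When $c=0$ the right-hand side is read as $+\infty$, since $c(e^{1/c}-1)\to\infty$ as $c\to 0^{+}$, so the bound holds trivially. The point $\lambda=0$ is excluded because $c\cdot r_{\max}>0$ whenever $c>0$. The lower endpoint $\lambda\geq c\cdot r_{\max}$ is the only constraint actually used; the upper bound $r_{\max}/\rho$ plays no role beyond making the interval well defined.
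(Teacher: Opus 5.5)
Your proof is correct and is essentially the paper's argument. The paper shows that $\psi(\lambda)=\lambda(e^{r_{\max}/\lambda}-1)$ is nonincreasing via $\psi'(\lambda)=e^{r_{\max}/\lambda}(1-r_{\max}/\lambda)-1\le 0$; under your substitution $s=r_{\max}/\lambda$ this is exactly your condition $g(s)=(s-1)e^{s}+1\ge 0$, so the supremum is attained at $\lambda=c\,r_{\max}$. Your treatment of the degenerate case $c=0$ is a minor addition that the paper omits.
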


\begin{proof}[Proof of Lemma~\ref{lem:_function_upper_bound}]
    We define the following function, $$\psi(\lambda) := \lambda\cdot\exp(r_{\max}/\lambda)\cdot (1 - \exp(-r_{\max}/\lambda) ).$$
    Then we can verify that the derivative of $\psi(\lambda)$ is non-positive, i.e., 
    \begin{align}
        \psi'(\lambda) = \exp(r_{\max} /\lambda)\cdot(1-r_{\max}/\lambda)-1\leq 0,\quad \forall \lambda> 0.
    \end{align}
    Therefore, we can derive the following result,
    \begin{align}
        \sup_{\lambda\in[c\cdot r_{\max},r_{\max}/\rho]}\psi(\lambda) = \psi(c\cdot r_{\max}) = c\cdot r_{\max}\cdot \big(\exp(1/c)-1\big).
    \end{align}
    This completes the proof of Lemma~\ref{lem:_function_upper_bound}.
\end{proof}

\begin{lemma}[Fano's lemma]\label{lem-fano} 
Let $ \Gamma:=  \{\theta^1,\dots, \theta^M\}\subset \Theta$ be a $2\delta$-separated set under some metric $\rho$ over $\Theta,$ each associated with a distribution $\mathbb{P}_{\theta}$ over some set $\mathcal{X}$. Then it holds that \begin{align*}
    \inf_{\pi:\cX\mapsto\Theta} \sup_{k\in [M]}\E_{D\sim P_{\theta_k}}[\rho(\pi(D),\theta_k) ] \geq \delta \cdot\left(1 - \frac{\frac{1}{M^2}\sum_{i,j = 1}^M D_{\mathrm{KL}}(\mathbb{P}_{\theta_i}\lVert \mathbb{P}_{\theta_j})  +\log 2}{\log M}\right).
\end{align*}
\end{lemma}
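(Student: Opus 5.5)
This is the standard Fano inequality, and the plan is to reduce estimation to multiple hypothesis testing and then bound the testing error with the chain rule of mutual information.

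\emph{Step 1 (reduction to testing).} Fix any estimator $\pi$. Define the test $\psi(D):=\argmin_{k\in[M]}\rho(\pi(D),\theta^k)$, breaking ties arbitrarily. Suppose $\psi(D)\neq k$. Then by the triangle inequality,
\begin{align*}
2\delta\le\rho(\theta^{\psi(D)},\theta^k)\le\rho(\pi(D),\theta^{\psi(D)})+\rho(\pi(D),\theta^k)\le 2\rho(\pi(D),\theta^k),
\end{align*}
so $\rho(\pi(D),\theta^k)\ge\delta$. Markov's inequality then gives
\begin{align*}
\E_{\theta^k}[\rho(\pi(D),\theta^k)]\ge\delta\cdot\PP_{\theta^k}(\psi\neq k).
\end{align*}
Bounding the supremum over $k$ below by the average over $k$ yields
\begin{align*}
\sup_k\E_{\theta^k}[\rho(\pi(D),\theta^k)]\ge\delta\cdot\frac1M\sum_k\PP_{\theta^k}(\psi\neq k).
\end{align*}

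\emph{Step 2 (Fano's inequality).} Let $J$ be uniform on $[M]$ and $D\mid J=k\sim\PP_{\theta^k}$. By the data-processing form of Fano's inequality applied to the Markov chain $J\to D\to\psi(D)$,
\begin{align*}
\PP(\psi\neq J)\ge 1-\frac{I(J;D)+\log 2}{\log M}.
\end{align*}
Here $\PP(\psi\neq J)$ is exactly the average error from Step 1. To see why the inequality holds, note that $H(J\mid\psi)\le \log 2+\PP(\psi\neq J)\log(M-1)$ by the standard Fano bound, while $H(J\mid\psi)\ge \log M-I(J;D)$.

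\emph{Step 3 (mutual information versus pairwise KL).} Write $I(J;D)=\frac1M\sum_i D_{\mathrm{KL}}(\PP_{\theta^i}\|\bar\PP)$, where $\bar\PP=\frac1M\sum_j\PP_{\theta^j}$ is the mixture. By convexity of the KL divergence in its second argument, $D_{\mathrm{KL}}(\PP_{\theta^i}\|\bar\PP)\le\frac1M\sum_jD_{\mathrm{KL}}(\PP_{\theta^i}\|\PP_{\theta^j})$. Averaging over $i$ gives
\begin{align*}
I(J;D)\le\frac{1}{M^2}\sum_{i,j}D_{\mathrm{KL}}(\PP_{\theta^i}\|\PP_{\theta^j}).
\end{align*}
Substituting this into Step 2 and combining with Step 1, then taking the infimum over $\pi$, gives the claim.

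The only delicate point is Step 1's tie-breaking and the use of the separation, and that is routine. No earlier result of the paper is needed, since the statement is classical; one could alternatively cite it directly (e.g., Yu's \emph{Assouad, Fano, and Le Cam}).
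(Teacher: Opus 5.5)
Your proof is correct and follows the same route as the paper, which simply cites Proposition~15.12 of Wainwright's \emph{High-Dimensional Statistics}. That result is proved exactly by your nearest-point reduction from estimation to $M$-ary testing, then Fano's inequality for the chain $J\to D\to\psi(D)$, then the convexity bound $I(J;D)\le M^{-2}\sum_{i,j}D_{\mathrm{KL}}(\mathbb{P}_{\theta_i}\|\mathbb{P}_{\theta_j})$; the only implicit requirement, which the statement also needs, is $M\ge 2$ so that $\log M>0$.
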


\begin{proof}[Proof of Lemma~\ref{lem-fano}]
    Please refer to Proposition 15.12 in \cite{wainwright2019high}.
\end{proof}

\end{document}